\documentclass[11pt,reqno]{amsart}

\usepackage{amsmath,amssymb,amsthm,mathrsfs}
\usepackage{mathtools}
\usepackage{bm}
\usepackage[utf8]{inputenc}
\usepackage[T1]{fontenc}

\usepackage[protrusion=true,expansion=false]{microtype}

\usepackage{geometry}
\usepackage{hyperref}
\hypersetup{
	colorlinks=true,
	linkcolor=blue!60!black,
	citecolor=blue!60!black,
	urlcolor=blue!60!black
}
\usepackage{cleveref}
\usepackage{enumitem}
\usepackage{graphicx}
\usepackage{tikz}

\usetikzlibrary{positioning,calc,arrows.meta,fit,backgrounds}
\usepackage{booktabs}
\usepackage{array}
\usepackage{float}
\usepackage{xcolor}
\usepackage[numbers,sort&compress]{natbib}

\theoremstyle{plain}
\newtheorem{theorem}{Theorem}[section]
\newtheorem{lemma}[theorem]{Lemma}
\newtheorem{proposition}[theorem]{Proposition}
\newtheorem{corollary}[theorem]{Corollary}

\theoremstyle{definition}
\newtheorem{definition}[theorem]{Definition}

\newtheorem{remark}[theorem]{Remark}

\theoremstyle{remark}
\newtheorem{observation}[theorem]{Observation}

\DeclareMathOperator{\Ker}{Ker}
\DeclareMathOperator{\Bas}{Bas}
\DeclareMathOperator{\Spt}{Spt}
\DeclareMathOperator{\Image}{Im}

\DeclareMathOperator{\median}{med}

\newcommand{\R}{\mathbb{R}}
\newcommand{\Z}{\mathbb{Z}}
\newcommand{\Rbar}{\overline{\mathbb{R}}}
\newcommand{\Fun}{\mathrm{Fun}}

\newcommand{\Lq}{\mathscr{F}_q}

\newcommand{\ConvEros}[1]{\varepsilon^{\mathrm{Conv}}_{#1}}

\newcommand{\MaxPool}[1]{\delta^{\mathrm{MP}}_{#1}}
\newcommand{\ReLUop}{\delta^{\mathrm{ReLU}}}
\newcommand{\APD}{\Psi^{\mathrm{APD}}}
\newcommand{\APMO}{\Psi^{\mathrm{APMO}}}
\newcommand{\SigSpec}{\Sigma^{\mathrm{Spec}}}
\newcommand{\SigSpecDual}{\Sigma^{\mathrm{Spec},\prime}}

\newcommand{\PsiMorpho}{\Psi^{\mathrm{M}}}

\newcommand{\decEros}[2]{\varepsilon^{\downarrow #1}_{#2}}
\newcommand{\intDil}[2]{\delta^{*\uparrow #1}_{#2}}
\newcommand{\decDil}[2]{\delta^{\downarrow #1}_{#2}}
\newcommand{\intEros}[2]{\varepsilon^{*\uparrow #1}_{#2}}

\newcommand{\medInf}{\sqcap}
\newcommand{\medSup}{\sqcup}
\newcommand{\medOrd}{\preceq}
\newcommand{\ErosMed}[1]{\varepsilon^{\mathrm{Med}}_{#1}}
\newcommand{\DilMed}[1]{\delta^{*\mathrm{Med}}_{#1}}
\newcommand{\OpenMed}[1]{\gamma^{\mathrm{Med}}_{#1}}

\begin{document}
	
	\title[Lattice Theory and Algebraic Models for Deep Convolutional Networks]{
		Lattice theory and algebraic models \\ for deep convolutional learning \\ 
		based on mathematical morphology}

	\author{Gustavo (Jes\'{u}s) Angulo}
	\address{Mines Paris, PSL University, \\
		CMA-Center for Applied Mathematics, Sophia-Antipolis, France}
	\email{jesus.angulo\_lopez@minesparis.psl.eu}
	
	\date{April 28th, 2026}
	
	\keywords{Mathematical morphology, lattice theory, complete
		inf-semilattice, universal representation theory, morphological
		operators, convolutional neural networks, ReLU nets, max-pooling,
		fixed-points, adjunctions, ResNet, UNet,
		self-dual operators}
	
	\subjclass[2020]{%
		68T07,  
		06B23,  
		68U10,  
		06A15,  
		42B10   
	}

	\begin{abstract}
		We develop a rigorous algebraic framework for deep convolutional
		architectures, CNNs, ResNets, and encoder--decoder networks
		such as UNet, grounded in lattice theory and
		mathematical morphology.
		The central tool is the Matheron--Maragos--Banon--Barrera (MMBB) 
		universal representation theory for translation-invariant operators,
		which we apply systematically to every layer of a standard
		deep network.
		
		The principal finding is that the standard CNN pipeline
		(linear convolution~$+$ ReLU~$+$ flat max-pooling) is a
		\emph{cross-lattice} operator: the convolution is an erosion
		in the Fourier inf-semilattice while ReLU is a lattice-join
		closing and max-pooling is a dilation in the pointwise max-plus
		lattice, and their composition is a morphological opening in neither.
		A second finding is that the upper adjoint of ReLU in the
		pointwise lattice is a global (non-local) operator,
		the identity on globally non-negative functions and $-\infty$
		otherwise, so no local morphological erosion can form an
		adjunction pair with ReLU.
		These two results together provide the precise algebraic reason
		why depth in standard CNNs introduces genuine representational
		power: the composed layer is not idempotent.
		
		Three layer designs that \emph{are} genuine
		idempotent openings are identified and fully characterised:
		the pure max-plus morphological layer (Type~I, pointwise
		lattice), the spectral Wiener layer (Type~II, Fourier lattice,
		idempotent in the limit $\epsilon\to 0$), and the self-dual
		morphological layer (Type~III, median inf-semilattice for
		signed feature maps).
		For Type~I layers we establish a complete fixed-point and
		convergence theory: the morphological ResNet block converges
		in one step by idempotency, while the naive residual iteration
		$\Phi(f)+f$ diverges.
		
		The framework also unifies max-pooling, strided convolution,
		and the Laplacian pyramid under the Goutsias--Heijmans adjoint
		pyramid theory, and gives the Activation--Pooling Dilation (APD)
		factorisation with its correct adjoint (piecewise-constant
		upsampling by stride~$R$).
		The UResNet architecture is proposed, in which skip connections
		carry residues of openings rather than concatenated features,
		achieving exact scale-by-scale reconstruction.
		
		Building on prior work by the author, this paper provides the 
		lattice-specific operator-theoretic
		content that both categorical and computational approaches to
		deep learning currently lack.
	\end{abstract}

	\maketitle
	
	\tableofcontents
	
	\section{Introduction}
	\label{sec:intro}
	
	Understanding the mathematical structure of deep neural networks
	beyond their empirical performance has become one of the central
	challenges of mathematical data science.
	Several research programmes have emerged to address this gap.
	Tropical geometry identifies the piecewise-linear (PWL) geometry
	of ReLU networks with the combinatorics of tropical
	polynomials~\cite{zhang2018tropical,montufar2014regions,
		maragosTPWL2021}.
	Harmonic analysis studies spectral bias and the interaction of
	convolutional layers with Fourier
	representations~\cite{rahaman2019spectral}.
	Alternatively, category theory has been proposed as an abstract
	skeleton for gradient-based learning
	systems~\cite{cruttwell2022categorical,fong2019backprop},
	identifying backpropagation as a functor and network composition
	as a morphism.
	
	The present paper pursues a different, but complementary,
	direction: \emph{lattice theory and mathematical
		morphology} as the algebraic foundation for convolutional
	architectures.
	Mathematical morphology, developed for binary and grey-scale
	image analysis by Matheron, Serra, Maragos, and their
	schools~\cite{matheron1975,serra1982,maragos1989},
	provides a powerful operator-theoretic calculus on complete
	lattices.
	One of its central results, the MMBB universal representation theory, 
	named for Matheron, Maragos, Banon, and Barrera, states that
	any translation-invariant, increasing operator on a
	complete lattice is representable as a supremum of erosions,
	providing a constructive universal decomposition analogous to
	the basis representation in linear algebra. There is a more 
	general form for any nonlinear translation-invariant operator
	using two families of basis with erosions and anti-dilations.
	
	Our premise is that the operators composing a convolutional
	deep network, convolution, ReLU, max-pooling, strided
	convolution, skip connections, are all special instances, or
	close approximations, of morphological operators on appropriate
	lattices.
	Recognising this structure yields the following kinds of dividends:
	algebraic \emph{factorisation} (which operators can be fused or
	commuted); algebraic \emph{inversion} (the correct adjoint of
	each layer, which is lattice-dependent); \emph{fixed-point and
		idempotency} characterisation (which composed layers are
	idempotent openings, and critically which are \emph{not}, the
	standard CNN layer is not idempotent); algebraic
	\emph{design} principles (how morphological pipelines inspire
	new architectures such as the UResNet); and the
	\emph{max-times and self-dual extensions} for a mathematically 
	consistent computational frameworks for either positive (or probabilistic) 
	or signed feature maps.
	
	\bigskip
	\paragraph{State of the art on morphological neural networks.}
	The intersection of mathematical morphology and neural networks
	has a history reaching back to the morphological/rank
	perceptrons of Ritter--Sussner~\cite{ritter1996} and
	Pessoa--Maragos~\cite{pessoa2000}, and to the morphological
	neural network frameworks initiated by
	Davidson--Ritter~\cite{davidson1992}.
	The deep learning era has brought a resurgence of this field.
	Franchi, Fehri, and Yao~\cite{franchi2020} demonstrated the
	potential of deep hybrid morphological--linear networks for
	image analysis.
	Dimitriadis and Maragos~\cite{dimitriadis2021} showed the
	remarkable pruning amenability of morphological networks and
	the importance of shape constraints on learnable structuring
	elements; the same group extended this line in~\cite{dimitriadis2023}.
	Maragos~\cite{maragosTPWL2021} provided a unifying treatment
	connecting morphological networks, max-plus algebra, and
	tropical geometry.
	More recently, work by Fotopoulos and
	Maragos~\cite{fotopoulos2025} represents a significant
	step: they demonstrate that ``linear-like'' activation functions
	facilitate training deep morphological networks, and that
	residual connections substantially improve their
	generalisation, a finding that our algebraic analysis in
	\Cref{sec:cnn_models} sheds new theoretical light on.
	Blusseau, Velasco-Forero, Angulo, and
	Bloch~\cite{blusseau2022adjunctions} showed that morphological
	adjunctions can be represented as matrices in max-plus algebra,
	providing a computationally tractable framework for the design
	of morphological layers.
	Blusseau~\cite{blusseau2024training} analysed the gradient flow
	and convergence behaviour of morphological layers trained by
	gradient descent, identifying conditions under which training
	is well-posed. 
	Hermary, Tochon, Puybareau, Kirszenberg, and
	Angulo~\cite{hermary2022smooth} introduced smooth morphological
	layers (SMorph) as differentiable approximations of max-plus
	erosions and dilations via $p$-convolution, enabling gradient
	descent on morphological parameters; Bottenmuller, Tochon,
	Hermary, Puybareau, and Angulo~\cite{bottenmuller2025dgmm}
	subsequently improved these layers.
	Dimitrova, Blusseau, and Velasco-Forero~\cite{dimitrova2025}
	study the influence of initialisation and layer
	differentiability on the morphological representations learned
	during training.
	We show in \Cref{sec:self_dual,sec:perspectives} that training
	a morphological layer is equivalent to implicitly learning an
	MMBB basis: the fixed-point set of the learned layer equals the
	image of the corresponding opening, which is precisely the set
	of signals representable by that basis.
	Initialisation and differentiability determine which basin of
	attraction in basis space is reached by gradient descent. 	
	Marcondes and Barrera~\cite{marcondes2024lattice} proposed the
	lattice overparameterisation paradigm for learning lattice
	operators, which complements the MMBB-basis learning perspective
	developed here.
	
	The equivariance programme, making morphological networks
	equivariant to group actions beyond translation, has been
	advanced by Sangalli, Blusseau, Velasco-Forero, and
	Angulo~\cite{sangalli2021}, who introduced morphological
	scale-spaces for scale-equivariant CNNs.
	Penaud-Polge, Velasco-Forero, and Angulo~\cite{penaud2024}
	developed group-equivariant morphological networks via group
	morphology (Roerdink's theory), a programme subsequently extended to a
	full theoretical treatment in~\cite{penaud2025siam}.
	
	\emph{Important scope clarification.}
	The present paper is \emph{not} a contribution to morphological
	neural networks in the computational sense: we do not design,
	train, or benchmark new morphological layers.
	Rather, we provide the \emph{algebraic theoretical framework}
	that underpins both morphological neural networks and standard
	deep convolutional architectures viewed morphologically.
	Our central claim is that this framework provides a unified
	language for analysing CNN, ResNet, and UNet in terms of
	lattice operators, adjunctions, and morphological bases,
	independently of whether the practitioner explicitly uses
	morphological layers.
	We also propose new architectures that are theoretically
	well-founded and will require computational implementation
	and empirical study.
	
	\bigskip
	\paragraph{Fixed points, idempotency, and stability.}
	A key theoretical enrichment of the present paper, compared to
	earlier work~\cite{angulo2021morpho}, is an
	explicit and precise treatment of fixed-point operators and
	idempotency within the morphological model of deep architectures.
	A principal finding is that the standard CNN layer
	(convolution $+$ ReLU $+$ max-pooling) is \emph{not}
	idempotent, because it is a cross-lattice composition.
	Idempotency, the defining property of morphological openings
	and closings, holds only for algebraically rigorous layer
	designs where both operators inhabit the same lattice and
	form an adjoint pair.
	We characterise precisely which layer designs satisfy this
	condition, determine their fixed-point sets, and prove
	convergence results for morphological ResNet and UNet blocks.
	This analysis connects to the author's recent work on
	group-morphology fixed-point layers~\cite{angulo2025gsi},
	specialised here from the group-equivariant to the
	translation-equivariant setting.
	
	\bigskip
	\paragraph{Relation to other prior work of the author.}
	This paper consolidates and substantially extends a research
	programme presented in preliminary form at the DGMM 2021
	conference~\cite{angulo2021morpho}, explored computationally
	jointly with
	Velasco-Forero~\cite{velasco2022siims,velasco2022morphoact,velasco2022bmvc}.
	The author's DGMM 2025 contribution~\cite{angulo2025dgmm}
	studies the Fourier-domain (spectral) side of the same
	programme, the morphological interpretation of scattering
	networks, and is the object of an ongoing research line;
	references to the spectral lattice appear here mostly in the
	Fourier perspectives subsection of \Cref{sec:perspectives}.
	
	\bigskip
	\paragraph{Relation to category-theoretic approaches.}
	The categorical frameworks of Cruttwell et al.\
	\cite{cruttwell2022categorical} and Fong et
	al.~\cite{fong2019backprop} identify the \emph{compositional
		structure} of learning.
	Our framework is orthogonal and complementary: it furnishes the
	\emph{operator-theoretic content} of each layer via the MMBB
	constructive universal representation theory, with explicit structuring
	functions, computable bases, and quantifiable approximation
	errors.
	The adjunction between erosion and dilation, the central
	algebraic pair of any morphological framework, is a concrete categorical
	adjunction, but enriched by the MMBB universality theorems that
	categorical approaches lack.

	\bigskip
	\paragraph{Main contributions.}
	\label{par:contributions}
	We make the following contributions, building on
	prior work by the author.
	
	\begin{enumerate}[leftmargin=*,label=\textbf{C\arabic*.}]
		\item \textbf{Max-times algebra and log-domain equivalence.}
		We present the max-times adjunction (multiplicative
		structuring functions) as a second canonical example of
		morphological adjunction alongside the classical max-plus,
		and prove their algebraic equivalence via the
		log/exp isomorphism.
		This furnishes a natural model for layers operating on
		positive feature maps (e.g.\ after softmax or sigmoid
		activations).
		
		\item \textbf{Banon--Barrera representation for
			non-increasing neural network operators.}
		We include the full Banon--Barrera (1993)
		sup-generating decomposition for TI operators that are
		\emph{not} necessarily increasing, which decomposes any
		such mapping as a supremum of inf-combinations of an erosion
		and an anti-dilation.
		This extends the representational scope of the MMBB
		framework to signed CNN layers with general (non-positive)
		kernels.
		
		\item \textbf{MMBB basis of convolution and learnable layers.}
		The morphological basis of a positive normalised
		convolution kernel $k$ with support of size $N$ is
		isomorphic to the $(N{-}1)$-dimensional hyperplane in
		$\R^N$ orthogonal to $k$ (Maragos--Schafer), computed via
		the characteristic matrix $A_k$ (Khosravi--Schafer).
		We develop the virtual basis theory for quantised signals
		and propose the MMBB layer as a finite, learnable
		truncation.
		
		\item \textbf{Pooling as morphological pyramid; cross-lattice
			structure.}
		Max-pooling, strided convolution, and the Laplacian pyramid
		are unified as cases of the Goutsias--Heijmans adjoint
		pyramid.
		We prove that the standard CNN layer is a
		\emph{cross-lattice} operator and is generically
		\emph{not} idempotent.
		
		\item \textbf{ReLU adjoint and non-locality of its adjoint
			erosion.}
		We prove that the upper adjoint of ReLU in the pointwise
		lattice $(\Fun(E,\Rbar),\leq)$ is \emph{not} a pointwise
		(local) operator: it is the identity on functions that are
		globally non-negative, and $-\infty$ on all others.
		This global character distinguishes ReLU from max-plus
		dilations, whose adjoints are always local morphological
		erosions.
		As a consequence, no local morphological erosion can
		form an adjunction pair with ReLU in the pointwise lattice,
		providing a second independent algebraic explanation
		(complementing the cross-lattice argument) for why
		$\MaxPool{R} \circ \ReLUop$ cannot be a morphological
		opening.
		The same non-locality propagates to the APD, whose adjoint
		is a global piecewise-constant upsampling by stride~$R$
		followed by a bias shift $-\alpha$.
		
		\item \textbf{Morphological activation family and APD
			factorisation.}
		We introduce the \emph{morphological activation family}
		$\sigma^{\mathrm{M}}_{\mathcal{B},c}$ as a supremum of
		Banon--Barrera sup-generating operators sharing a common
		cap, derived via complete distributivity of $(\Rbar,\leq)$
		($b\wedge\bigvee_g a_g = \bigvee_g(b\wedge a_g)$, valid
		in any completely distributive lattice).
		This family unifies ReLU and its morphological
		generalisations under a single algebraic framework.
		We prove the \emph{Activation-Pooling Dilation} (APD)
		factorisation: decimated ReLU and max-pooling compose into
		a single dilation $\APD_{R;\alpha}(f)(n) = \sup_{y\in
			W_R}\max(0,f(Rn{-}y)+\alpha)$ on the coarser output grid,
		whose adjoint is a global piecewise-constant upsampling
		by factor $R$ followed by a bias shift $-\alpha$.
		
		\item \textbf{Self-dual operators, signed activations,
			and UResNet.}
		The median complete inf-semilattice and its self-dual
		erosion and opening provide a third family of idempotent
		layers, extending the fixed-point analysis to signed
		feature maps (a design appropriate for ResNet residuals,
		normalised features, and Fourier coefficients).
		Standard ReLU is \emph{not} a median-lattice dilation;
		Leaky/Parametric ReLU are; the self-dual opening
		$\OpenMed{W}$ is idempotent in the median lattice.
		We propose the UResNet, whose residual skip connections are
		algebraically motivated as top-hat transforms of the
		encoder's adjoint opening.	
		
		\item \textbf{Three idempotent layer designs.}
		Three families of CNN-like layers that \emph{are} genuine
		morphological openings are identified:
		(I)~the \emph{pure morphological} (max-plus) layer
		$\gamma^{\mathrm{M}}_b = \delta_{{b}^{*}} \circ \varepsilon_b$,
		where both erosion and dilation use the same structuring
		function in the pointwise lattice;
		(II)~the \emph{spectral Wiener} layer
		$\gamma^{\mathrm{F}}_k = \delta^{*,\mathrm{Conv}}_k \circ
		\varepsilon^{\mathrm{Conv}}_k$ (convolution followed by
		Tikhonov-regularised Wiener deconvolution), an opening in
		the Fourier lattice, exactly idempotent as $\epsilon\to 0$;
		and
		(III)~the \emph{self-dual opening} $\OpenMed{W}$ in the
		median inf-semilattice, idempotent and with fixed-point set
		closed under negation.
		Types~(I) and~(III) converge to their fixed-point set in
		one step; Type~(II) does so in the limit $\epsilon\to 0$.
		
	\end{enumerate}
	
	\bigskip
	\paragraph{Organisation.}
	\Cref{sec:lattice} reviews complete lattice theory, the
	max-plus and max-times adjunctions, and the full MMBB
	representation theory from Matheron--Maragos to the 
	Banon--Barrera extension.
	\Cref{sec:ovch} connects Ovchinnikov's lattice polynomial
	representation to tropical geometry and ReLU networks, with
	explicit lattice polynomial equations for single-layer and
	deep networks.
	\Cref{sec:conv_basis} develops the morphological basis of
	convolution, the characteristic matrix, the virtual basis,
	and the MMBB representation of signed convolution.
	\Cref{sec:pyramids} establishes the adjoint pyramid framework
	for pooling, strided convolution, and the Laplacian pyramid.
	\Cref{sec:activations} analyses ReLU, max-pooling, and
	morphological activations as lattice operators.
	\Cref{sec:cnn_models} synthesises the preceding theory into
	morphological models of CNN, ResNet, UNet, and the proposed
	UResNet.
	\Cref{sec:fixed_points} develops the fixed-point and
	idempotency theory, identifying the two disciplined
	idempotent designs in the pointwise and Fourier lattices
	(Types~I and~II) and the correct
	convergence behaviour for morphological ResNet and UNet blocks;
	Type~III (self-dual opening) is treated in \Cref{sec:self_dual}.
	\Cref{sec:self_dual} introduces the median inf-semilattice,
	self-dual operators, and their architectural implications for
	signed activations and symmetric pooling.
	\Cref{sec:perspectives} concludes with a summary of
	contributions, open problems, and future directions.
	Additionally, the connection to category theory is preliminarily
	explored in \Cref{sec:category_theory}.
	
	\bigskip
	\paragraph{Road map of main results.}
	Each section opens with a summary table of its principal results
	(Theorems, Propositions, Corollaries, Lemmas), with a one-line statement
	of each and its role in the overall argument.

	\section{Complete Lattices and the MMBB Representation Theory}
	\label{sec:lattice}
	
	We recall the algebraic framework of mathematical morphology on
	complete lattices, following Matheron~\cite{matheron1975},
	Maragos~\cite{maragos1989}, Heijmans--Ronse~\cite{Heijmans90} and Banon--Barrera~\cite{bannon1993}.

	\begin{table}[ht]
		\centering
		\caption{Principal results of \S\ref{sec:lattice} (Complete Lattices and MMBB). Results marked $(\star)$ are the paper's principal findings.}
		\label{tab:summary_lattice}
		\renewcommand{\arraystretch}{1.38}
		\footnotesize
		\begin{tabular}{p{0.82cm}p{2.9cm}p{8.4cm}}
			\toprule
			\textbf{Ref.} & \textbf{Name} & \textbf{Statement and role} \\
			\midrule
			Prop~\ref{prop:adjunction_props} &  $\:$ Adjunction properties & Six structural consequences of any adjunction $(\varepsilon,\delta)$: inf/sup commutativity, anti-extensivity, extensivity, opening and closing idempotency. Algebraic foundation for all subsequent results. \\
			Prop~\ref{prop:maxtimes_adjunction} &  $\:$ Max-times adjunction & $(\varepsilon^\times_b,\delta^\times_{b^*})$ is an adjunction on positive functions; natural morphological model for layers after softmax or sigmoid activations. \\
			Prop~\ref{prop:logexp_isom} &  $\:$ Log/exp isomorphism & Max-times and max-plus algebras are isomorphic via $\log/\exp$; the entire MMBB theory transfers to positive layers without modification. \\
			Thm~\ref{thm:mmbb} $(\star)$ &   $\:$ MMBB-Increasing & Any TI increasing USC operator equals $\sup_{g\in\Bas(\Psi)}\varepsilon_g f$ exactly. Universal constructive decomposition for all TI increasing CNN layers. \\
			Thm~\ref{thm:bannon_barrera} $(\star)$ & $\:$ MMBB-General & Any TI USC operator (not necessarily increasing) equals $\sup_i\psi_{g^-_i,g^+_i}f$. Extends MMBB representation to signed CNN kernels via sup-generating operators. \\
			\bottomrule
		\end{tabular}
	\end{table}
	
	\subsection{Complete lattices and adjunctions}
	\label{subsec:lattice_basic}
	
	A \emph{complete lattice} $(\mathcal{L}, \leq)$ is a partially
	ordered set in which every subset $S \subseteq \mathcal{L}$
	admits a supremum $\bigvee S$ and an infimum $\bigwedge S$.
	The canonical examples in our setting are:
	\begin{itemize}[leftmargin=*]
		\item The power set $\mathcal{P}(E)$ ordered by inclusion,
		where $E = \R^n$ or $\Z^n$.
		\item The set of functions $\mathscr{L} =
		\Fun(E, \Rbar)$ with $\Rbar = \R \cup \{-\infty,+\infty\}$
		ordered pointwise: $f \leq g \iff f(x) \leq g(x)$,
		$\forall x \in E$.
	\end{itemize}
	
	\begin{definition}[Adjunction]
		\label{def:adjunction}
		Let $(\mathcal{L}_1,\leq_1)$ and $(\mathcal{L}_2,\leq_2)$ be
		complete lattices. A pair of operators
		$(\varepsilon, \delta)$ with
		$\varepsilon:\mathcal{L}_1 \to \mathcal{L}_2$ and
		$\delta:\mathcal{L}_2 \to \mathcal{L}_1$ forms an
		\emph{adjunction} (or \emph{Galois connection}) if
		\[
		\forall f \in \mathcal{L}_1,\; g \in \mathcal{L}_2:\quad
		\varepsilon(f) \leq_2 g
		\iff
		f \leq_1 \delta(g).
		\]
		In this case $\varepsilon$ is called the \emph{erosion} and
		$\delta$ the \emph{dilation}.
	\end{definition}
	
	Adjunctions immediately yield the following structural
	properties, which are used throughout the paper.
	
	\begin{proposition}[Properties of adjunctions~\cite{Heijmans90}]
		\label{prop:adjunction_props}
		If $(\varepsilon, \delta)$ is an adjunction between complete
		lattices, then:
		\begin{enumerate}[label=(\roman*)]
			\item $\varepsilon$ commutes with infima:
			$\varepsilon\!\left(\bigwedge_i f_i\right)
			= \bigwedge_i \varepsilon(f_i)$.
			\item $\delta$ commutes with suprema:
			$\delta\!\left(\bigvee_i g_i\right)
			= \bigvee_i \delta(g_i)$.
			\item $\varepsilon$ is increasing and anti-extensive
			(when $b(0) \leq 0$, e.g.\ $b(0)=0$):
			$\varepsilon(f) \leq f$.
			\item $\delta$ is increasing and extensive:
			$g \leq \delta(g)$.
			\item The composition $\gamma = \delta \circ \varepsilon$ is
			an \emph{opening}: idempotent, increasing, and
			anti-extensive.
			\item The composition $\varphi = \varepsilon \circ \delta$ is
			a \emph{closing}: idempotent, increasing, and extensive.
		\end{enumerate}
	\end{proposition}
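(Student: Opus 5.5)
The plan is to derive everything from the defining equivalence $\varepsilon(f)\le_2 g \iff f\le_1\delta(g)$ of Definition~\ref{def:adjunction}, through its two immediate specialisations. Taking $g=\varepsilon(f)$ gives $f\le_1 \delta\varepsilon(f)$ for all $f\in\mathcal{L}_1$. Taking $f=\delta(g)$ gives $\varepsilon\delta(g)\le_2 g$ for all $g\in\mathcal{L}_2$. So $\delta\circ\varepsilon$ is extensive on $\mathcal{L}_1$ and $\varepsilon\circ\delta$ is anti-extensive on $\mathcal{L}_2$. This is the Galois-connection orientation fixed by the definition as written, with $\varepsilon$ the lower adjoint. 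I will state all consequences relative to this orientation and, for each of (iii)--(vi), note which composite is anti-extensive and which is extensive.

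\textbf{Monotonicity.} I first prove that $\varepsilon$ is increasing. Let $f\le f'$. Then $f\le f'\le\delta\varepsilon(f')$, and the adjunction gives $\varepsilon(f)\le\varepsilon(f')$. The dual argument shows that $\delta$ is increasing.

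\textbf{Items (i)--(ii).} For the commutation with lattice operations I use the standard ``test against all $g$'' argument. The lower adjoint preserves suprema:
\[
\varepsilon\Bigl(\bigvee_i f_i\Bigr)\le g \iff \bigvee_i f_i\le\delta(g) \iff \forall i,\ f_i\le\delta(g) \iff \forall i,\ \varepsilon(f_i)\le g \iff \bigvee_i\varepsilon(f_i)\le g .
\]
Since this holds for every $g$, the two sides are equal. The upper adjoint $\delta$ preserves infima by the dual chain. In morphological language, the operator commuting with infima is the erosion and the one commuting with suprema is the dilation. The labelling in (i)--(ii) therefore corresponds to reading the defining equivalence in the convention $\delta(f)\le g\iff f\le\varepsilon(g)$. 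I will make this explicit and prove (i)--(ii) in that convention, since the computation is symmetric.

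\textbf{Items (v)--(vi).} For idempotency, I first show $\delta\varepsilon\delta=\delta$. Apply the increasing map $\delta$ to $\varepsilon\delta(g)\le g$ to get one inequality. Apply the extensivity of $\delta\varepsilon$ to the element $\delta(g)$ to get the other. Symmetrically, $\varepsilon\delta\varepsilon=\varepsilon$. Then
\[
(\delta\varepsilon)(\delta\varepsilon)=(\delta\varepsilon\delta)\varepsilon=\delta\varepsilon,
\]
and likewise for $\varepsilon\delta$. Both composites are increasing, being composites of increasing maps. Combined with the extensivity and anti-extensivity obtained at the start, this gives (v) and (vi): one composite is an opening (increasing, idempotent, anti-extensive) and the other is a closing (increasing, idempotent, extensive).

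\textbf{Items (iii)--(iv): the main obstacle.} Anti-extensivity of $\varepsilon$ and extensivity of $\delta$ are \emph{not} consequences of an abstract adjunction. They need $\mathcal{L}_1=\mathcal{L}_2$ together with a hypothesis on the operators; for example, a shift erosion is increasing but not anti-extensive. As the parenthetical ``$b(0)\le 0$'' indicates, I will therefore prove these two items only for the flat and max-plus operators
\[
\varepsilon_b f(x)=\inf_y\{f(x+y)-b(y)\},\qquad \delta_{b}g(x)=\sup_y\{g(x-y)+b(y)\}.
\]
For these, the term $y=0$ gives $\varepsilon_b f(x)\le f(x)-b(0)\le f(x)$ when $b(0)\ge 0$. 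Under the paper's sign convention, with $b(0)=0$ as the typical case, the same term gives $\delta_b g\ge g$. The remaining step is to fix the precise sign condition compatible with the paper's definition of $\varepsilon_b$, and then present (iii)--(iv) as valid under that structuring-function hypothesis rather than for arbitrary adjunctions.
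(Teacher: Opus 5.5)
Each of your computations is correct, and this is the standard Heijmans--Ronse argument that the paper relies on. The paper gives no proof of its own; its appendix uses the same triangle identity $\varepsilon\delta\varepsilon=\varepsilon$ for idempotency. As organised, however, the proposal does not establish (v)--(vi) as stated, because you change orientation midway.

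You prove (i)--(ii) after switching to $\delta(f)\le g\iff f\le\varepsilon(g)$. Yet the (anti-)extensivity you feed into (v)--(vi) is the one obtained at the start from the literal Definition~\ref{def:adjunction}, where $\varepsilon$ is the lower adjoint and $f\le\delta\varepsilon(f)$. In that orientation $\delta\circ\varepsilon$ is extensive, i.e.\ a closing. So ``this gives (v)'' is false for the item as written, which names $\delta\circ\varepsilon$ as the opening. Saying ``one composite is an opening and the other a closing'' does not prove the statement.

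The missing observation is that (i), (ii), (v), (vi) all hold for the orientation $\delta(f)\le g\iff f\le\varepsilon(g)$ (erosion $=$ upper adjoint), and all fail in general for the literal one. The definition simply has the roles reversed. Fix this convention once, at the outset. Taking $g=\delta(f)$ and $f=\varepsilon(g)$ gives $\delta\varepsilon\le\mathrm{id}\le\varepsilon\delta$. Your monotonicity argument, the test-against-all-elements argument, and the identities $\delta\varepsilon\delta=\delta$, $\varepsilon\delta\varepsilon=\varepsilon$ then go through verbatim with the letters exchanged. This yields all four items exactly as stated, with no hedging about which composite is which.

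Your handling of (iii)--(iv) is right. Monotonicity holds for every adjunction, as you showed. Anti-extensivity of $\varepsilon$ and extensivity of $\delta$ do not follow from adjointness (a translation and its inverse form an adjunction), and they only make sense for concrete operators on a single lattice.

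There is no ``remaining step'', though. Your own $y=0$ computation already shows that for the operators \eqref{eq:erosion}--\eqref{eq:dilation} the right hypothesis is $b(0)\ge 0$, for both $\varepsilon_b f\le f$ and $g\le\delta_b g$. It is also necessary for anti-extensivity of $\varepsilon_b$. If $b<+\infty$, take the function equal to $0$ at a point $x_0$ and $+\infty$ elsewhere; it has $\varepsilon_b f(x_0)=-b(0)$. So the paper's ``$b(0)\le 0$'' is a sign slip, and $b(0)=0$ is the only value where it agrees with the correct condition.
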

	
	\subsection{The canonical max-plus adjunction}
	\label{subsec:maxplus}
	
	The classical morphological erosion and dilation on
	$\Fun(E,\Rbar)$ by a structuring function
	$b : E \to \Rbar$ are:
	\begin{align}
		(\varepsilon_b f)(x) &= \inf_{y \in E}\{f(x+y) - b(y)\},
		\label{eq:erosion}\\
		(\delta_b f)(x) &= \sup_{y \in E}\{f(x-y) + b(y)\}.
		\label{eq:dilation}
	\end{align}
	The pair $(\varepsilon_b, \delta_{b^*})$ with
	$b^*(x) = b(-x)$ forms an adjunction in the pointwise lattice
	$(\Fun(E,\Rbar),\leq)$~\cite{maragos1989,heijmans1994}.
	
	\subsection{The max-times adjunction and its equivalence to
		max-plus via log/exp}
	\label{subsec:maxtimes}
	
	A less known second canonical adjunction arises naturally for operators
	on \emph{positive} functions, modelling layers that follow a
	softmax, sigmoid, or other positivity-enforcing nonlinearity.
	Replace addition in the max-plus algebra by multiplication and
	subtraction by division.
	
	\begin{definition}[Max-times erosion and dilation]
		\label{def:maxtimes}
		Let $f, b : E \to (0,+\infty)$ be strictly positive functions.
		The \emph{max-times} (or multiplicative morphological) erosion
		and dilation by structuring function $b$ are:
		\begin{align}
			(\varepsilon^{\times}_b f)(x)
			&= \inf_{y \in E}\frac{f(x+y)}{b(y)},
			\label{eq:maxtimes_erosion}\\
			(\delta^{\times}_b f)(x)
			&= \sup_{y \in E} f(x-y) \cdot b(y).
			\label{eq:maxtimes_dilation}
		\end{align}
	\end{definition}
	
	\begin{proposition}[Max-times adjunction]
		\label{prop:maxtimes_adjunction}
		The pair $(\varepsilon^{\times}_b,\, \delta^{\times}_{b^*})$,
		with $b^*(x) = b(-x)$, forms an adjunction on
		$(\Fun(E,(0,+\infty)),\leq)$:
		\[
		(\varepsilon^{\times}_b f) \leq g
		\iff
		f \leq (\delta^{\times}_{b^*} g).
		\]
		The composition
		$\gamma^{\times}_b = \delta^{\times}_{b^*} \circ
		\varepsilon^{\times}_b$
		is a morphological opening on positive functions: increasing,
		anti-extensive, and idempotent.
	\end{proposition}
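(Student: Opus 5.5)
The plan is to verify the adjunction inequality directly, pointwise, and then read off the opening properties from Proposition~\ref{prop:adjunction_props}(v). We use that $b$ is strictly positive and finite, so multiplication and division by $b(y)$ are order-preserving bijections of $(0,+\infty)$.

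For the forward direction, assume $\varepsilon^{\times}_b f \leq g$. Fix $z\in E$ and set $x = z - y$ for arbitrary $y$. Then
\[
\frac{f(z)}{b(y)} = \frac{f(x+y)}{b(y)} \ge \inf_{y'} \frac{f(x+y')}{b(y')} = (\varepsilon^\times_b f)(x) \le g(x) .
\]
Because $\inf_{y'} f(x+y')/b(y') \le g(x)$ alone does not bound $f(z)/b(y)$ from above, the cleaner route is to work on the dilation side first and recover this direction from the adjunction structure.

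Concretely, I would first prove the equivalence
\[
\delta^\times_{b^*} g \ge f \iff \forall x,y:\ g(x-y)\,b^*(y) \ge \dots ,
\]
which needs care. The cleanest route is to reduce everything to max-plus. Write $f = e^{F}$, $g = e^{G}$ and $b = e^{B}$, with $F,G,B$ real-valued. Then
\[
\varepsilon^\times_b f = \exp(\varepsilon_B F), \qquad \delta^\times_{b^*} g = \exp(\delta_{B^*} G),
\]
since $\exp$ commutes with $\inf$ and $\sup$ and turns sums into products. Because $\exp:\Rbar\to[0,+\infty]$ is an order isomorphism, $\varepsilon^\times_b f\le g \iff \varepsilon_B F \le G$. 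By the max-plus adjunction of \S\ref{subsec:maxplus}, this is equivalent to $F \le \delta_{B^*} G$, that is, to $f \le \delta^\times_{b^*} g$. This is essentially the log/exp transfer, done inline.

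The main obstacle is the lattice-theoretic bookkeeping. The set $\Fun(E,(0,+\infty))$ is not a complete lattice, because infima can reach $0$ and suprema can reach $+\infty$, so erosions and dilations may leave the set. I would handle this by working on $\Fun(E,[0,+\infty])$, which corresponds under $\log$ to $\Fun(E,\Rbar)$, with the max-plus conventions for $\pm\infty$ (in particular $(+\infty)-(+\infty)$ and $-\infty+\infty$ in the erosion) transported to $0\cdot\infty$ and $\infty/\infty$. The adjunction then holds on this complete lattice, and it restricts to positive functions whenever the outputs remain positive and finite. With the adjunction in hand, idempotency, increasingness and anti-extensivity of $\gamma^\times_b=\delta^\times_{b^*}\varepsilon^\times_b$ follow from Proposition~\ref{prop:adjunction_props}(v). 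If one prefers a direct check, $\gamma\le\mathrm{id}$ comes from taking $g=\varepsilon^\times_b f$ in the adjunction, and $\gamma\gamma=\gamma$ from the identity $\varepsilon\delta\varepsilon=\varepsilon$.
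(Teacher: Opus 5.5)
The step that fails is your appeal to the max-plus adjunction in the form $\varepsilon_B F\le G \iff F\le \delta_{B^*}G$. That is not the max-plus Galois connection. The correct one is $\delta_B F\le G\iff F\le\varepsilon_B G$: the dilation is the lower adjoint, the erosion is the upper adjoint, and the same $B$ appears on both sides.

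Your log/exp transfer is itself sound, so it simply carries a false max-plus claim over to a false max-times claim: the displayed equivalence does not hold. Take $E=\Z$ and $b\equiv 1$, so that $\varepsilon^{\times}_b f\equiv\inf f$ and $\delta^{\times}_{b^*}g\equiv\sup g$.
\begin{itemize}
\item With $f(0)=10$, $f\equiv 1$ elsewhere, and $g\equiv 5$, you get $\varepsilon^{\times}_b f\le g$ but $f(0)>\delta^{\times}_{b^*}g(0)$.
\item With $f\equiv 3$, $g(0)=10$, $g\equiv1$ elsewhere, the converse fails.
\end{itemize}
Your first attempt had already found the obstruction: $\inf_{y'}f(x+y')/b(y')\le g(x)$ gives no upper bound on any individual value $f(z)$. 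You should have taken that as decisive rather than routing around it. The paper's own pointwise argument hits the same wall. It chooses, for each $x$, some $y_0$ with $f(x+y_0)/b(y_0)\le g(x)$, which need not exist if the infimum is not attained. ``Translating'' then bounds $f$ only at the points $x+y_0(x)$, not at every $z$.

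The opening part has a matching problem. If the displayed equivalence held, substituting $g=\varepsilon^{\times}_b f$ would give $f\le\delta^{\times}_{b^*}\varepsilon^{\times}_b f$, i.e.\ extensivity. So your ``direct check'' of $\gamma\le\mathrm{id}$ actually yields the opposite inequality.

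Moreover, with the reflected function $b^*$ the composition is not anti-extensive for non-symmetric $b$. On $\Z$ take $b(1)=2$, $b\equiv1$ elsewhere, and $f(0)=1$, $f\equiv100$ elsewhere. Then $\varepsilon^{\times}_b f(1)=1$, hence $\delta^{\times}_{b^*}(\varepsilon^{\times}_b f)(0)\ge b(1)\,\varepsilon^{\times}_b f(1)=2>f(0)$.

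What is true, and what your argument proves once the direction is corrected, is $\delta^{\times}_b f\le g\iff f\le\varepsilon^{\times}_b g$. Indeed, $\sup_y f(x-y)b(y)\le g(x)$ for all $x$ iff $f(z)\,b(y)\le g(z+y)$ for all $z,y$ iff $f\le\varepsilon^{\times}_b g$. Consequently:
\begin{itemize}
\item $\delta^{\times}_b\circ\varepsilon^{\times}_b$ is the opening: anti-extensive by putting $f=\varepsilon^{\times}_b g$, idempotent from $\varepsilon\delta\varepsilon=\varepsilon$.
\item $\varepsilon^{\times}_b\circ\delta^{\times}_b$ is the closing.
\end{itemize}
Your remark that one must work on $\Fun(E,[0,+\infty])$ to have a complete lattice is correct and is still needed in the corrected version.
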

	
	\begin{proof}
		$(\varepsilon^{\times}_b f)(x) \leq g(x)$ for all $x$
		iff $\inf_y f(x+y)/b(y) \leq g(x)$ for all $x$
		iff there exists $y_0$ with $f(x+y_0)/b(y_0) \leq g(x)$,
		i.e., $f(x+y_0) \leq g(x) b(y_0)$.
		Translating: $f(z) \leq g(z-y_0) b(y_0) \leq \sup_y
		g(z-y) b(y) = (\delta^{\times}_{b^*} g)(z)$.
		The reverse implication is symmetric.
		Idempotency of the opening follows from
		Proposition~\ref{prop:adjunction_props}(v).
	\end{proof}
	
	\begin{proposition}[Log/exp isomorphism between max-plus and
		max-times]
		\label{prop:logexp_isom}
		The maps $\log : (0,+\infty) \to \R$ and
		$\exp : \R \to (0,+\infty)$ are mutually inverse lattice
		isomorphisms between the max-times algebra on $(0,+\infty)$
		and the max-plus algebra on $\R$:
		\begin{align}
			\log(\varepsilon^{\times}_b f)(x)
			&= \inf_{y}\{\log f(x+y) - \log b(y)\}
			= (\varepsilon_{\log b}\, \log f)(x),
			\label{eq:log_erosion}\\
			\log(\delta^{\times}_b f)(x)
			&= \sup_{y}\{\log f(x-y) + \log b(y)\}
			= (\delta_{\log b}\, \log f)(x).
			\label{eq:log_dilation}
		\end{align}
		That is, the max-times erosion (resp.\ dilation) by $b$ on
		positive functions is the max-plus erosion (resp.\ dilation)
		by $\log b$ on log-domain functions:
		\begin{equation}
			\varepsilon^{\times}_b = \exp \circ\; \varepsilon_{\log b}
			\circ \log, \qquad
			\delta^{\times}_b = \exp \circ\; \delta_{\log b} \circ \log.
			\label{eq:logexp_isom}
		\end{equation}
		Consequently, the max-times opening $\gamma^{\times}_b$ is
		algebraically equivalent to the max-plus opening
		$\gamma_{\log b}$ composed with log/exp:
		$\gamma^{\times}_b = \exp \circ\; \gamma_{\log b} \circ \log$.
	\end{proposition}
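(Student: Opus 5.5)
The plan is to reduce everything to two elementary facts: $\log:(0,+\infty)\to\R$ is a strictly increasing bijection with inverse $\exp$, and it turns products into sums, $\log(uv)=\log u+\log v$ and $\log(u/v)=\log u-\log v$. Since $\log$ is a strictly increasing bijection, it is an order isomorphism. Hence it commutes with arbitrary suprema and infima: $\log(\sup_y a_y)=\sup_y \log a_y$ and $\log(\inf_y a_y)=\inf_y \log a_y$.

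To make this valid for arbitrary (possibly unattained or unbounded) families, I will work in the extended reals. I extend $\log$ to $[0,+\infty]\to\Rbar$ by $\log 0=-\infty$ and $\log(+\infty)=+\infty$, and $\exp$ correspondingly. The extension is still an order isomorphism, and the only issue is checking the boundary values, which is the main (minor) obstacle. If one insists on staying in $(0,+\infty)$, one may instead assume the infima and suprema remain in that range, as is implicit in Definition~\ref{def:maxtimes}.

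\textbf{Step 1 (erosion).} Fix $x$. Applying $\log$ to the definition \eqref{eq:maxtimes_erosion} and using commutation with infima gives $\log(\varepsilon^{\times}_b f)(x)=\inf_y \log\bigl(f(x+y)/b(y)\bigr)=\inf_y\{\log f(x+y)-\log b(y)\}$. By \eqref{eq:erosion} with structuring function $\log b$ applied to $\log f$, this is exactly $(\varepsilon_{\log b}\log f)(x)$, which is \eqref{eq:log_erosion}.

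\textbf{Step 2 (dilation).} The same computation with suprema and products applied to \eqref{eq:maxtimes_dilation} yields \eqref{eq:log_dilation}. Composing both sides with $\exp$ then gives the operator identities \eqref{eq:logexp_isom}.

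\textbf{Step 3 (opening).} Observe first that $\log(b^*)=(\log b)^*$, since $\log b(-x)=(\log b)(-x)$. Then
\[
\gamma^{\times}_b=\delta^{\times}_{b^*}\circ\varepsilon^{\times}_b=\exp\circ\,\delta_{(\log b)^*}\circ\log\circ\exp\circ\,\varepsilon_{\log b}\circ\log .
\]
The inner factor $\log\circ\exp$ is the identity and cancels, leaving $\exp\circ\,\gamma_{\log b}\circ\log$, where $\gamma_{\log b}=\delta_{(\log b)^*}\circ\varepsilon_{\log b}$ is the max-plus opening from the adjunction of \S\ref{subsec:maxplus}.

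As a consistency check, conjugation by an order isomorphism preserves the properties of being increasing, anti-extensive and idempotent. This recovers Proposition~\ref{prop:maxtimes_adjunction} independently, and justifies the phrase ``lattice isomorphism'' in the statement.
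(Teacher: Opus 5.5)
Your proposal is correct and follows essentially the paper's route: the identities come from $\log$ acting as an order isomorphism that commutes with infima and suprema and turns quotients and products into differences and sums, which is exactly the paper's ``direct computation.'' Two small differences, both in your favour on rigour. You spell out the conjugation $\gamma^{\times}_b=\exp\circ\,\delta_{(\log b)^*}\circ\log\circ\exp\circ\,\varepsilon_{\log b}\circ\log$, and you extend $\log$ to $[0,+\infty]\to\Rbar$ so that infima equal to $0$ and suprema equal to $+\infty$ are covered. The paper instead only notes that order isomorphisms transfer the adjunction $(\varepsilon_{\log b},\delta_{(\log b)^*})$ to $(\varepsilon^{\times}_b,\delta^{\times}_{b^*})$.
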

	
	\begin{proof}
		Direct computation: $\log(\varepsilon^{\times}_b f)(x) =
		\log \inf_y \frac{f(x+y)}{b(y)} = \inf_y (\log f(x+y) -
		\log b(y))$, which is the max-plus erosion $\varepsilon_{\log b}$
		applied to $\log f$.
		The dilation identity is analogous.
		Since both log and exp are order-isomorphisms (strictly
		increasing bijections), they interleave adjunctions:
		$(\varepsilon^{\times}_b, \delta^{\times}_{b^*})$ is an
		adjunction on positive functions iff $(\varepsilon_{\log b},
		\delta_{\log b^*})$ is an adjunction on $\R$-valued functions,
		which holds by the standard max-plus theory.
	\end{proof}
	
	\begin{remark}[Deep learning relevance of max-times]
		\label{rem:maxtimes_dl}
		The max-times adjunction is the natural morphological
		framework for layers that operate on probability vectors,
		attention weights, or outputs of sigmoid/softmax activations,
		all of which are constrained to $(0,1)$ or more generally
		$(0,+\infty)$.
		In these settings, the log-domain equivalence
		(\Cref{prop:logexp_isom}) means that any max-times
		morphological analysis translates immediately to max-plus
		results via the change of variable $\tilde{f} = \log f$,
		$\tilde{b} = \log b$.
		In particular, the MMBB representation theory discussed below in
		(\Cref{thm:mmbb}) applies to positive TI increasing operators
		via the log-domain isomorphism, and the virtual basis of a
		positive convolution kernel (\Cref{sec:conv_basis}) transfers
		to the multiplicative setting without modification.
	\end{remark}

	\subsection{MMBB universal representation of translation-invariant
		increasing operators}
	\label{subsec:mmbb}
	
	Let $E = \R^n$ or $E = \Z^n$.
	An operator $\Psi:\Fun(E,\Rbar)\to\Fun(E,\Rbar)$ is
	\emph{translation-invariant} (TI) if
	$\Psi[f(\cdot - y)](x) = [\Psi f](x-y)$ for all $y \in E$,
	and \emph{increasing} if $f \leq g \Rightarrow \Psi f \leq \Psi g$.
	
	\begin{definition}[Kernel and basis]
		\label{def:kernel_basis}
		The \emph{kernel} of a TI operator $\Psi$ is
		\[
		\Ker(\Psi) = \{f \in \Fun(E,\Rbar) : [\Psi f](0) \geq 0\}.
		\]
		The \emph{morphological basis} is the set of minimal elements
		of $\Ker(\Psi)$ under the pointwise order $\leq$:
		\[
		\Bas(\Psi) = \{g \in \Ker(\Psi) :
		[f \in \Ker(\Psi),\, f \leq g] \Rightarrow f = g\}.
		\]
	\end{definition}
	
	The following is the central representation theorem, combining
	results of Matheron~\cite{matheron1975} (set case) and
	Maragos~\cite{maragos1989} (function case). We include a sketch 
	of the proof to emphasize the role of the assumptions.
	
	\begin{theorem}[MMBB-Increasing -- Matheron 1975, Maragos 1989]
		\label{thm:mmbb}
		Let $\Psi:\Fun(E,\Rbar)\to\Fun(E,\Rbar)$ be a TI, increasing,
		and upper semi-continuous operator. Then $\Bas(\Psi)$ exists
		and $\Psi$ is represented exactly as
		\[
		[\Psi f](x)
		= \sup_{g \in \Bas(\Psi)} (\varepsilon_g f)(x)
		= \sup_{g \in \Bas(\Psi)} \inf_{y \in E}\{f(x+y)-g(y)\}.
		\]
		Equivalently, using the basis of the dual operator
		$\bar\Psi(f) = -\Psi(-f)$,
		\[
		[\Psi f](x)
		= \inf_{h \in \Bas(\bar\Psi)} (\delta_{h^*} f)(x).
		\]
		The representation is also valid for any superset of the basis
		(redundant but correct), and truncated to a finite sub-basis
		$\mathcal{B} \subset \Bas(\Psi)$ gives lower and upper
		approximations:
		\[
		\sup_{g \in \mathcal{B}} \varepsilon_g f
		\;\leq\; \Psi f
		\;\leq\; \inf_{h \in \bar{\mathcal{B}}} \delta_{h^*} f.
		\]
	\end{theorem}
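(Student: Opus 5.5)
The plan is the classical kernel argument: first represent $\Psi$ as a supremum over its whole kernel, then use upper semi-continuity to shrink the kernel to its minimal elements, and finally dualise. Translation invariance gives, for every $x$ and $t\in\R$, the equivalence $[\Psi f](x)\geq t \iff f(\cdot+x)-t\in\Ker(\Psi)$, using also the convention that adding constants commutes with $\Psi$ (vertical translation invariance, which I take to be part of ``TI'' in the function setting, as in Maragos).

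\emph{Step 1 (sup over the kernel).} If $g\in\Ker(\Psi)$, put $t=(\varepsilon_g f)(x)=\inf_y\{f(x+y)-g(y)\}$. Then $f(\cdot+x)-t\geq g$, so by monotonicity $[\Psi f](x)-t=[\Psi(f(\cdot+x)-t)](0)\geq[\Psi g](0)\geq 0$. This gives $\sup_{g\in\Ker}\varepsilon_g f\leq\Psi f$. Conversely, take $g=f(\cdot+x)-[\Psi f](x)$ (handle $\pm\infty$ by approximating with $t<[\Psi f](x)$). This $g$ lies in $\Ker(\Psi)$ and $(\varepsilon_g f)(x)\geq[\Psi f](x)$, so equality holds with $\Ker(\Psi)$ as index set. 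The same computation shows that any subset $\mathcal{B}\subseteq\Ker(\Psi)$ gives $\sup_{\mathcal{B}}\varepsilon_g f\leq\Psi f$. This already yields the ``redundant superset'' statement and the lower bound in the truncation inequality.

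\emph{Step 2 (reduction to the basis), the main obstacle.} Since $\varepsilon_g f$ is decreasing in $g$, it suffices to show that every $k\in\Ker(\Psi)$ dominates some minimal element $g\in\Bas(\Psi)$. I would use Zorn's lemma on chains in $\Ker(\Psi)$ ordered by $\geq$. Let $C$ be a chain with pointwise infimum $h$. Upper semi-continuity of $\Psi$, in the sense $\Psi(\bigwedge_i f_i)=\bigwedge_i\Psi(f_i)$ along decreasing families (or $\limsup$ along such nets), gives $[\Psi h](0)=\inf_{c\in C}[\Psi c](0)\geq 0$. Hence $h\in\Ker(\Psi)$ is a lower bound of the chain, and Zorn provides a minimal element below $k$. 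This is exactly where the USC hypothesis enters. Without it the infimum of a chain can leave the kernel and $\Bas(\Psi)$ may be empty or insufficient. I expect some care is needed to match the paper's precise notion of USC with closure of the kernel under decreasing limits.

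\emph{Step 3 (dual form and upper bound).} Apply Steps 1--2 to $\bar\Psi(f)=-\Psi(-f)$, which is again TI, increasing and USC (assuming the USC notion is self-dual in the relevant sense, which I would state explicitly). This gives $\bar\Psi f=\sup_{h\in\Bas(\bar\Psi)}\varepsilon_h f$. Negating, and using $-\varepsilon_h(-f)(x)=\sup_y\{f(x+y)+h(y)\}=(\delta_{h^*}f)(x)$, we obtain $\Psi f=\inf_{h\in\Bas(\bar\Psi)}\delta_{h^*}f$. The upper bound for a finite sub-basis $\bar{\mathcal{B}}\subseteq\Bas(\bar\Psi)$ is the dual of the Step 1 inequality: an infimum over fewer terms is larger. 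Chaining the two truncations gives the displayed sandwich.
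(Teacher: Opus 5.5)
Your proposal is correct and follows essentially the same route as the paper's sketch: kernel elements give the lower bound via monotonicity and translation invariance, the kernel element $f(\cdot+x)-[\Psi f](x)$ gives the reverse inequality, Zorn's lemma plus upper semi-continuity produces a basis element below any kernel element, and the dual form comes from passing through $\bar\Psi$. The points you make explicit are all warranted refinements of the paper's argument, not gaps in yours: vertical translation invariance (which the paper's step $h_{f,y}\in\Ker(\Psi)$ uses without saying so), USC along chains rather than sequences, and the caveat that $\bar\Psi$ inherits lower rather than upper semi-continuity from $\Psi$, so the dual-basis form genuinely needs that extra hypothesis.
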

	
	\begin{proof}[Proof sketch]
		We outline the three main steps; full details are
		in~\cite{maragos1989,heijmans1994}.
		
		\emph{Step 1: Existence of the basis.}
		The kernel $\Ker(\Psi) = \{f : [\Psi f](0) \geq 0\}$ is a
		non-empty collection of functions: the constant function
		$f \equiv +\infty$ always belongs to $\Ker(\Psi)$, since any
		TI increasing operator maps $+\infty$ to $+\infty$, giving
		$[\Psi(+\infty)](0) = +\infty \geq 0$.
		Ordered by $\leq$, $\Ker(\Psi)$ is a partially ordered set.
		Upper semi-continuity of $\Psi$ implies that any decreasing
		sequence $(f_n) \subset \Ker(\Psi)$ with $f_n \downarrow f$
		satisfies $f \in \Ker(\Psi)$, ensuring the existence of
		minimal elements.
		By Zorn's lemma (or its constructive counterpart for USC
		operators), the set $\Bas(\Psi)$ of minimal elements of
		$\Ker(\Psi)$ is non-empty.
		
		\emph{Step 2: Representation by supremum of erosions.}
		For each $g \in \Ker(\Psi)$, translation-invariance gives
		$[\Psi f_y](0) = [\Psi f](y)$ where $f_y(x) = f(x+y)$ is a
		translate.
		Since $\Psi$ is increasing: if $f_y \geq g$ pointwise (i.e.,
		$f(x+y) \geq g(x)$ for all $x$, equivalently $f(x+y)-g(x)
		\geq 0$ for all $x$), then $[\Psi f_y](0) \geq [\Psi g](0)
		\geq 0$, so $f_y \in \Ker(\Psi)$. More precisely, the
		condition $[\Psi f](y) \geq 0$ holds for all translates
		satisfying $f_y \geq g$, i.e., $f(x+y) \geq g(x)$,
		i.e., $f(y+x) - g(x) \geq 0$ for all $x$, i.e.,
		$\inf_x\{f(y+x) - g(x)\} \geq 0$, i.e.,
		$(\varepsilon_g f)(y) \geq 0$.
		Hence $[\Psi f](y) \geq 0$ whenever $(\varepsilon_g f)(y)
		\geq 0$, which by the definition of $\Ker$ means
		$\varepsilon_g f \leq \Psi f$ pointwise.
		Taking the supremum over $g \in \Bas(\Psi)$ gives
		$\sup_{g \in \Bas(\Psi)} \varepsilon_g f \leq \Psi f$.
		For the reverse inequality, we show that for each $y$, the value
		$[\Psi f](y)$ is achieved by some basis element.
		Consider the function $h_{f,y}(x) = f(x+y) - [\Psi f](y)$; by
		definition of the kernel and translation-invariance,
		$h_{f,y} \in \Ker(\Psi)$.
		By the minimality of basis elements (established in Step~1 via
		Zorn's lemma applied to the downward-directed set below $h_{f,y}$
		in $\Ker(\Psi)$), there exists $g^* \in \Bas(\Psi)$ with
		$g^* \leq h_{f,y}$ pointwise, giving
		$(\varepsilon_{g^*} f)(y) = \inf_x\{f(x+y)-g^*(x)\}
		\geq \inf_x\{f(x+y)-h_{f,y}(x)\} = [\Psi f](y)$.
		Hence $\sup_{g\in\Bas(\Psi)}(\varepsilon_g f)(y) \geq [\Psi f](y)$,
		completing the equality.
		
		\emph{Step 3: Dual representation.}
		Applying the argument to the dual operator
		$\bar\Psi(f) = -\Psi(-f)$, which is also TI, increasing, and
		USC, gives $[\bar\Psi f](x) = \sup_{h \in \Bas(\bar\Psi)}
		(\varepsilon_h f)(x)$.
		Substituting $f \to -f$ and negating yields the dual
		representation as an infimum of dilations.
	\end{proof}
	
	\begin{remark}
		The MMBB-increasing theorem is a nonlinear counterpart to the spectral
		representation of linear shift-invariant operators.
		Whereas the latter decomposes any LSI filter into sinusoidal
		components (Fourier basis), MMBB-increasing decomposes any TI increasing
		operator into a (possibly infinite) supremum of erosions, each
		parameterised by a basis function.
		In the deep learning context, learning a layer is equivalent to
		learning a finite subset of this basis,
		see \Cref{sec:conv_basis}.
	\end{remark}

	\subsection{MMBB universal representation of translation-invariant
		general operators}
	\label{subsec:bannon_barrera}
	
	The MMBB-increasing theorem requires the operator to be \emph{increasing},
	which in the CNN setting corresponds to non-negative kernels
	(see below Proposition~\ref{prop:conv_conditions}).
	Real CNN kernels are signed; as we will see, that can be handled
	by decomposing $k = k^+ - k^-$.
	However, there is a more fundamental extension: the
	Banon--Barrera (1993)~\cite{bannon1993} representation theorem, which covers
	\emph{any} TI operator, whether or not it is increasing.
	We state it here for completeness, as it provides the
	theoretical ceiling of the MMBB programme of nonlinear universal representation.
	
	\begin{definition}[Sup-generating operator and anti-dilation]
		\label{def:sup_generating}
		An \emph{anti-dilation} by structuring function $c$ is the
		operator
		\[
		(\alpha_c f)(x)
		= \inf_{y \in E}\{-f(x+y) + c(y)\}
		= -(\varepsilon_{-c}(f))(x).
		\]
		It is decreasing (increasing inputs give smaller outputs,
		hence the ``anti'') and dualises the erosion in a different involution than the complement or the Galois adjunction.
		A \emph{sup-generating operator} associated to a pair
		$(g^-, g^+)$ of structuring functions is
		\begin{equation}
			(\psi_{g^-,g^+} f)(x)
			= (\varepsilon_{g^-} f)(x) \wedge (\alpha_{g^+} f)(x)
			= \min\!\left\{
			\inf_{y}\{f(x+y)-g^-(y)\},\;
			\inf_{y}\{-f(x+y)+g^+(y)\}
			\right\}.
			\label{eq:sup_generating}
		\end{equation}
	\end{definition}
	
	\begin{theorem}[MMBB-General -- Banon--Barrera 1993]
		\label{thm:bannon_barrera}
		Let $\Psi:\Fun(E,\Rbar)\to\Fun(E,\Rbar)$ be a TI and upper
		semi-continuous operator (not necessarily increasing).
		Then there exists a family of pairs
		$\{(g^-_i, g^+_i)\}_{i \in I}$ of structuring functions such
		that
		\begin{equation}
			[\Psi f](x)
			= \sup_{i \in I} (\psi_{g^-_i,\, g^+_i} f)(x)
			= \sup_{i \in I}
			\min\!\left\{
			\inf_{y}\{f(x+y)-g^-_i(y)\},\;
			\inf_{y}\{-f(x+y)+g^+_i(y)\}
			\right\}.
			\label{eq:bb_representation}
		\end{equation}
		The structuring pairs $(g^-_i, g^+_i)$ are the extremities of
		the closed set-intervals contained in the kernel of $\Psi$;
		the collection can be taken minimal (the \emph{sup-generating
			basis}).
		When $\Psi$ is increasing, the anti-dilation component becomes
		vacuous ($g^+_i = +\infty$) and the representation reduces to
		the MMBB Theorem~\ref{thm:mmbb}.
	\end{theorem}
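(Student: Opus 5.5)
The plan follows the classical Banon--Barrera argument: interval (hit-or-miss) decomposition of the kernel, run pointwise at the origin and then transported by translation invariance. I interpret the kernel as $\Ker(\Psi)=\{f:[\Psi f](0)\ge 0\}$, as in Definition~\ref{def:kernel_basis}, and use level shifts $t\in\Rbar$ as in Step~2 of the proof of Theorem~\ref{thm:mmbb}. For a pair $g^-\le g^+$ write $[g^-,g^+]=\{h: g^-\le h\le g^+\}$.

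\emph{Step 1: the sup-generating operator tests interval membership.} I first record the elementary identity
\[
(\psi_{g^-,g^+}f)(x)\ge t \iff g^-+t\le f(x+\cdot)\le g^+-t .
\]
This says that $(\psi_{g^-,g^+}f)(x)$ is the largest level $t$ for which the translate $f_x=f(x+\cdot)$ lies in the shrunken interval $[g^-+t,\,g^+-t]$. It follows directly from the two infima in \eqref{eq:sup_generating}, in the same way the erosion inequality was unpacked in Step~2 of Theorem~\ref{thm:mmbb}.

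\emph{Step 2: lower bound $\sup_i\psi_i\le\Psi$.} To handle the level $t$ uniformly, I work with the family of level-shifted kernels $K_t=\{h:[\Psi h](0)\ge t\}$. The index set will be all pairs $(g^-,g^+)$ satisfying the following condition:
\[
[g^-+t,\;g^+-t]\subseteq K_t \quad\text{for every } t.
\]
Then $(\psi f)(x)\ge t$ gives $f_x\in K_t$, and translation invariance yields $[\Psi f](x)=[\Psi f_x](0)\ge t$. Taking the supremum over $t$ gives $\psi_{g^-,g^+}f\le\Psi f$.

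\emph{Step 3: upper bound $\Psi\le\sup_i\psi_i$.} Fix $f$ and $x$, and set $s=[\Psi f](x)$. The trivial choice is the degenerate interval $g^-=g^+=f_x-s$, which gives $\psi f(x)\ge s$. This choice is admissible only if the degenerate interval is compatible with the shifted kernels $K_t$. For $t\ne s$ that compatibility fails in general. This is exactly where the shift $t$ enters only through $g^\pm\pm t$, which encodes a vertical-shift structure that $\Psi$ need not have.

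I expect this mismatch to be the main obstacle. I will first try to resolve it by assuming, or deriving from translation invariance on $\Fun(E,\Rbar)$, that $\Psi$ commutes with addition of constants; this is the function-lattice meaning of TI used by Banon--Barrera. Under that assumption $K_t=K_0+t$. The condition in Step~2 then collapses to $[g^-,g^+]\subseteq\Ker(\Psi)$, and the degenerate interval $\{f_x-s\}\subseteq\Ker(\Psi)$ is admissible, so the equality follows.

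\emph{Step 4: minimality.} To shrink the family to the maximal closed intervals contained in $\Ker(\Psi)$, I use upper semi-continuity. Under USC, an increasing chain of intervals, meaning $g^-$ decreasing and $g^+$ increasing, has its limit interval still inside the kernel. The closure comes from the same limit argument as in Step~1 of Theorem~\ref{thm:mmbb}, now applied on both ends of the interval. Zorn's lemma then shows that every interval in the kernel is contained in a maximal one. Enlarging an interval only increases $\psi$, so the maximal intervals suffice.

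\emph{Step 5: the increasing case.} If $\Psi$ is increasing, the kernel is an upper set. Every maximal interval therefore has $g^+\equiv+\infty$, so $\alpha_{+\infty}\equiv+\infty$ and $\psi_{g^-,+\infty}=\varepsilon_{g^-}$. The representation then reduces to Theorem~\ref{thm:mmbb}.
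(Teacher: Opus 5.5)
\textbf{The gap is in Step~3.} You rightly flag this step as the crux. Note that the paper's sketch does not settle it either: it only verifies the level-zero test $(\psi_{g^-,g^+}f)(0)\ge 0\iff g^-\le f\le g^+$ and then asserts the equality by analogy with Theorem~\ref{thm:mmbb}. Both of your moves in Step~3 fail.
\begin{itemize}
\item The degenerate pair $g^-=g^+=f_x-s$ gives $(\psi f)(x)=\min\{s,-s\}=-|s|$, so it misses $s$ whenever $s>0$. Indeed, by your own Step~1, $(\psi_{g^-,g^+}f)(x)\ge s$ forces $g^+-g^-\ge 2s$.
\item Vertical shift invariance does not make the level conditions collapse. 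We have $\varepsilon_{g^-}(f+c)=\varepsilon_{g^-}(f)+c$ but $\alpha_{g^+}(f+c)=\alpha_{g^+}(f)-c$. So with $K_t=K_0+t$, your admissibility condition $[g^-+t,g^+-t]\subseteq K_t$ becomes $[g^-,\,g^+-2t]\subseteq\Ker(\Psi)$ for every $t$. Letting $t\to-\infty$, this puts essentially the whole cone $\{h\ge g^-\}$ into the kernel, which is an increasing-type requirement. Hence $\{f_x-s\}$ is in general not admissible when $\Psi$ is not increasing, and the fix only recovers your Step~5.
\end{itemize}

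\textbf{No choice of family can repair this.} Each $\varepsilon_g$ and $\alpha_g$ is non-expansive for $\|\cdot\|_\infty$, hence so is every $\sup_i\psi_{g^-_i,g^+_i}$. Correspondingly, your Step~3 at levels $t<s$ requires exactly $[\Psi h](0)\ge[\Psi f_x](0)-\|h-f_x\|_\infty$. Consider $[\Psi f](x)=2f(x)-f(x-1)$, i.e.\ convolution by a signed kernel of sum $1$; it is spatially and vertically TI and continuous. It violates this inequality already on real-valued inputs: take $f\equiv 0$ and $h=\delta\,\mathbf{1}_{\{0\}}$. So it admits no representation of the form \eqref{eq:bb_representation}. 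This is consistent with the paper's own remark in \S\ref{subsec:bb_convolution} that the sup-generating form only recovers the sign of $f*k$.

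\textbf{What your framework does prove} is the equality for TI non-expansive $\Psi$. Use the ``ball'' pairs $g^\mp=f_x\mp s$ (so $g^->g^+$ when $s<0$). For these, $(\psi f)(x)=s$, and $[g^-+t,g^+-t]$ is the sup-norm ball of radius $s-t$ about $f_x$, which lies in $K_t$ by non-expansiveness. These pairs are not maximal kernel intervals, so Step~4 and the interval description in the statement would also have to change.

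\textbf{For a general TI operator}, the missing idea is to let the level enter through the output rather than through the structuring functions. For each $t$ and each interval $[a,b]\subseteq K_t$, take the flat test $\lambda^t_{a,b}$ with $(\lambda^t_{a,b}f)(x)=t$ if $a\le f_x\le b$ and $(\lambda^t_{a,b}f)(x)=-\infty$ otherwise. The supremum of these over $t$ and $[a,b]$ is exactly $\Psi$, but it is not of the max-plus form \eqref{eq:bb_representation}.
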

	
	\begin{proof}[Proof sketch]
		We outline the extension from the MMBB-Increasing proof;
		full details are in~\cite{bannon1993}.
		
		\emph{Step 1: Generalised kernel and interval representation.}
		For a non-increasing TI operator $\Psi$, the kernel
		$\Ker(\Psi) = \{f : [\Psi f](0) \geq 0\}$ is still
		well-defined and non-empty (it contains $f \equiv +\infty$
		when $\Psi$ maps this to a non-negative value).
		However, $\Ker(\Psi)$ is no longer a $\sup$-closed set
		(since $\Psi$ is not increasing, the supremum of two kernel
		functions need not be in the kernel).
		The key observation is that any function $f \in \Ker(\Psi)$
		determines a \emph{closed interval}
		$[\underline{f}, \overline{f}] := \{h : \underline{f} \leq h
		\leq \overline{f}\}$, where $\underline{f} = \inf\{h \leq f :
		h \in \Ker(\Psi)\}$ and $\overline{f} = \sup\{h \geq f :
		h \in \Ker(\Psi)\}$ are the lower and upper extremities of
		the maximal interval in $\Ker(\Psi)$ containing $f$.
		
		\emph{Step 2: Sup-generating operators from interval extremities.}
		Each interval $[\underline{f}, \overline{f}] \subset \Ker(\Psi)$
		contributes a sup-generating operator $\psi_{\underline{f},
			\overline{f}}$ whose erosion component is parameterised by the
		lower extremity $g^- = \underline{f}$ (in the role of the MMBB-Increasing
		basis function) and whose anti-dilation component is
		parameterised by the upper extremity $g^+ = \overline{f}$
		(controlling the maximum permissible input value at each lag):
		\begin{align*}
			(\psi_{g^-, g^+} f)(0) &= \min\{(\varepsilon_{g^-} f)(0),
			(\alpha_{g^+} f)(0)\} \\
			&= \min\!\bigl\{
			\inf_y\{f(y) - g^-(y)\},\;
			\inf_y\{-f(y) + g^+(y)\}
			\bigr\} \geq 0
		\end{align*}
		if and only if $g^- \leq f \leq g^+$ pointwise, i.e., $f$
		lies in the interval $[g^-, g^+]$.
		
		\emph{Step 3: Completeness and minimality.}
		The supremum of $\psi_{g^-_i, g^+_i}$ over all interval
		extremity pairs recovers $\Psi f$ exactly, by an argument
		analogous to Step~2 of the MMBB-Increasing proof: the TI property
		translates the condition at $0$ to any point $x$, and the
		USC property guarantees the existence of minimal interval
		extremities (the \emph{sup-generating basis}).
		When $\Psi$ is increasing, the kernel $\Ker(\Psi)$ is closed
		under downward passage (i.e., if $h \in \Ker(\Psi)$ and
		$f \leq h$, then $f \in \Ker(\Psi)$ by monotonicity of $\Psi$),
		so every interval in $\Ker(\Psi)$ has the form $(-\infty, g^+_i]$
		with no lower bound constraint.
		In this case, taking $g^-_i \to -\infty$ makes the erosion
		component $\varepsilon_{g^-_i}f(x) = \inf_y\{f(x+y)-g^-_i(y)\}
		\to +\infty$ (since subtracting $-\infty$ gives $+\infty$),
		so the minimum in~\eqref{eq:sup_generating} is determined
		entirely by the anti-dilation:
		$\psi_{-\infty, g^+}f = (+\infty) \wedge (\alpha_{g^+}f)
		= \alpha_{g^+}f$.
		The condition $\alpha_{g^+}f(x) \geq 0$ is equivalent to
		$\inf_y\{-f(x+y)+g^+(y)\} \geq 0$, i.e., $f(x+y) \leq g^+(y)$
		for all $y$, i.e., $(\varepsilon_{g^+}f)(x) \leq 0$ in the
		MMBB-Increasing sense with $g^+ = g$ as the basis element.
		Taking the supremum over $g^+ \in \Bas(\Psi)$ recovers the
		MMBB-Increasing representation.
		Note that here $g^+$ plays the role of the MMBB-Increasing basis
		element $g$ (the two coincide when $\Psi$ is increasing, since
		the anti-dilation upper bound $g^+$ in the Banon--Barrera sense
		reduces to the MMBB basis element by the diagonal condition
		$g^-=g^+=g$, consistent with
		Theorem~\ref{thm:mmbb}).
	\end{proof}
	
	\begin{remark}[Deep learning context]
		\label{rem:bb_dl_forward}
		The Banon--Barrera representation theorem applies directly to signed
		CNN convolution layers, which are TI but not increasing.
		The explicit characterisation of the sup-generating basis for
		discrete finite kernels, including its relationship to the
		positive and negative MMBB-Increasing bases and the computation of the
		structuring pairs from the characteristic matrices, is
		developed in \Cref{subsec:bb_convolution}.
	\end{remark}

	\section{Ovchinnikov's Lattice Polynomial Representation and ReLU Networks}
	\label{sec:ovch}
	
	The Ovchinnikov representation~\cite{ovchinnikov2002pwl} connects the 
	MMBB operator theory to the representation of functions themselves, 
	providing a direct algebraic bridge to PWL ReLU networks.
	
	\begin{theorem}[Ovchinnikov 2002]
		\label{thm:ovch_pwl}
		Let $f$ be a piecewise-linear function on a closed convex
		domain $\Omega \subset \R^n$, with linear components
		$\{g_1,\ldots,g_d\}$.
		There exists a family $\{K_i\}_{i \in \mathcal{I}}$ of subsets
		of $\{1,\ldots,d\}$ such that
		\begin{equation}
			f(x) = \sup_{i \in \mathcal{I}}\;\inf_{j \in K_i} g_j(x),
			\qquad x \in \Omega.
			\label{eq:ovch}
		\end{equation}
		Conversely, any such \emph{lattice polynomial} in the $g_j$
		defines a PWL function.
	\end{theorem}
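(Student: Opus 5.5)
The plan follows Ovchinnikov's original argument: a separation lemma, then a max--min formula. Throughout I use that $f$ is continuous, since a piecewise-linear function on the convex domain $\Omega$ is continuous by convention. I also assume finitely many closed convex pieces $\Omega_1,\dots,\Omega_m$ cover $\Omega$, and on each piece $f$ coincides with one of the linear components $g_j$.

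For each $x \in \Omega$, I set
\[
K_x=\{\,j : g_j(x)\ge f(x)\,\}\quad\text{and take } \mathcal I=\Omega,\ K_i=K_x .
\]
Since only finitely many subsets of $\{1,\dots,d\}$ arise, the supremum is effectively a finite maximum over distinct sets. The proof then reduces to one key separation claim.

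\textbf{Key lemma.} For every $x,y\in\Omega$ there is $j$ with $g_j(x)\ge f(x)$ and $g_j(y)\le f(y)$.

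Granting the lemma, the equality in \eqref{eq:ovch} follows in two directions.
\begin{itemize}[leftmargin=*]
\item Fix $y$ and any index $x$. The lemma gives $j\in K_x$ with $g_j(y)\le f(y)$. Hence $\inf_{j\in K_x}g_j(y)\le f(y)$, so the supremum over $x$ is at most $f(y)$.
\item Taking $x=y$, every $j\in K_y$ satisfies $g_j(y)\ge f(y)$. Hence $\inf_{j\in K_y}g_j(y)\ge f(y)$, and the supremum attains $f(y)$.
\end{itemize}
The nonemptiness of $K_x$ is automatic, since the component active at $x$ belongs to it.

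\textbf{Proof of the lemma (main obstacle).} This is the step I expect to be hardest, and it is where convexity enters. Consider the segment $[x,y]\subset\Omega$. Restricted to it, $f$ is a continuous piecewise-linear function of one variable $t\in[0,1]$, with $f(x)$ at $t=0$ and $f(y)$ at $t=1$, and each $g_j$ restricts to an affine function of $t$.

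If the component active at $x$ already satisfies $g_j(y)\le f(y)$, we are done, and similarly if the component active at $y$ satisfies $g_j(x)\ge f(x)$. Otherwise I argue by contradiction. Assume every component $g_j$ that coincides with $f$ on some subsegment satisfies $g_j(x)<f(x)$ or $g_j(y)>f(y)$. Then I walk along the breakpoints $0=t_0<t_1<\dots<t_r=1$, with piece $g_{j_k}$ active on $[t_{k-1},t_k]$.
\begin{itemize}[leftmargin=*]
\item Call a piece ``type A'' if $g_{j_k}(x)<f(x)$, that is, its affine extension lies below $f$ at $t=0$.
\item Otherwise the piece is ``type B'': its extension lies above $f$ at $t=1$.
\end{itemize}
The first piece is not type A, since it equals $f$ at $t=0$, so it is type B. Symmetrically, the last piece is type A. Hence there is a consecutive pair in which a type-B piece is followed by a type-A piece, meeting at a breakpoint $s$ where both equal $f(s)$.

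The contradiction should come from comparing the two affine functions on either side of $s$. The B piece $g$ and the A piece $h$ agree at $s$, with $g$ active just left of $s$ and $h$ active just right of it. I would then combine
\begin{itemize}[leftmargin=*]
\item the type-A inequality $h(0)<f(0)$ and the type-B inequality $g(1)>f(1)$,
\item the extension of the induction hypothesis (that all earlier pieces are B and all later pieces are A) to compare $f(0)$ with $g(0)$ and $f(1)$ with $h(1)$.
\end{itemize}
This should yield $g(0)\ge h(0)$ and $g(1)\le h(1)$ in incompatible directions. If that bookkeeping resists, I would fall back on the slicker original route: pick $j$ maximizing $g_j(x)-f(x)$ subject to $g_j(y)\le f(y)$, and use continuity along the segment.

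For the converse, a lattice polynomial in the $g_j$ is a finite max--min of affine functions. The max and min of two PWL functions are again PWL, because the regions where one dominates are polyhedral and refine the pieces. Induction on the expression then finishes the converse.
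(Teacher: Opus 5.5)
Your reduction is sound. With $K_x=\{j: g_j(x)\ge f(x)\}$, the formula follows from the separation lemma exactly as you show. This is Ovchinnikov's own architecture; the paper itself only cites the result and gives no proof.

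\textbf{The gap is in the proof of the lemma.} At a B-to-A transition you have $g(s)=h(s)$ with $s\in(0,1)$, and $g(0)\ge f(0)>h(0)$. The affine function $g-h$ therefore vanishes at $s$ and is positive at $0$, so it is strictly decreasing, which gives $g(1)<h(1)$. The pair ``$g(0)\ge h(0)$, $g(1)\le h(1)$'' that you expect to be incompatible is exactly what any two lines crossing at an interior point satisfy. Combined with $g(1)>f(1)$, it only yields $h(1)>f(1)$, which contradicts nothing.

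The hypothesis ``all earlier pieces are B, all later pieces are A'' is also not available, because the types can alternate. For example, on $[0,1]$ take $\phi=1$ on $[0,\tfrac12]$, $\phi=t+\tfrac12$ on $[\tfrac12,\tfrac34]$, and $\phi=5-5t$ on $[\tfrac34,1]$. The first piece is B and the second is A, yet the pair is perfectly consistent. The witness is the third piece ($5\ge 1$, $0\le 0$), which no argument confined to the crossing pair can detect. Your fallback (maximise $g_j(x)-f(x)$ subject to $g_j(y)\le f(y)$) only restates the lemma: proving that this maximum is $\ge 0$ is the whole content.

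\textbf{The missing ingredient is an induction along the breakpoints.} You should compare a piece not with its neighbour but with a witness for the remaining subsegment. Write $\phi=h_k$ on $[t_{k-1},t_k]$. Prove by downward induction on $k$ that some $h_m$ with $m>k$ satisfies $h_m(t_k)\ge\phi(t_k)$ and $h_m(1)\le\phi(1)$.
\begin{itemize}
\item For $k=r-1$, take $h_r$.
\item For the step, if $h_k(1)\le\phi(1)$, then $h_k$ itself works at $t_{k-1}$, since $h_k(t_{k-1})=\phi(t_{k-1})$.
\item Otherwise $d=h_m-h_k$ satisfies $d(t_k)\ge 0>d(1)$ with $t_k<1$. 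So $d$ is strictly decreasing, and $h_m(t_{k-1})>h_k(t_{k-1})=\phi(t_{k-1})$.
\end{itemize}
At $k=0$ this is the lemma.

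In your own language the repair reads as follows. Take the \emph{last} B piece $g$, ending at $s$. By induction on the number of pieces, $[s,1]$ contains a later piece $h'$ with $h'(s)\ge f(s)$ and $h'(1)\le f(1)$. Then $h'-g$ is $\ge 0$ at $s$ and $<0$ at $1$, so $h'(0)>g(0)\ge f(0)$. This contradicts $h'$ being of type A.

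A minor point on the converse: you need $\mathcal I\neq\emptyset$ and every $K_i\neq\emptyset$. Otherwise the lattice polynomial takes the value $-\infty$ or $+\infty$ and is not PL.
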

	
	This theorem is the function-level counterpart to
	\Cref{thm:mmbb} and has immediate implications for deep
	learning.
	
	\begin{proposition}[ReLU networks as lattice polynomials]
		\label{prop:relu_lattice}
		Every function computed by a deep ReLU network is a continuous
		PWL function~\cite{arora2018relu,montufar2014regions}.
		By \Cref{thm:ovch_pwl}, it admits a max-min lattice polynomial
		representation in its affine components.
		Consequently, every ReLU network is representable within the
		MMBB calculus as a finite supremum of morphological erosions by
		affine structuring functions.
	\end{proposition}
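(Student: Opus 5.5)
The proof has three claims, taken in order: (a) a ReLU network computes a continuous PWL function; (b) that function is a max--min lattice polynomial in its affine pieces; (c) each inner infimum is an erosion by an affine structuring function.

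\emph{Step (a).} Induct on depth. An affine map is continuous and PWL. $\max(0,\cdot)$ is continuous PWL. Compositions and finite sums of continuous PWL maps are again continuous PWL, since the refined polyhedral subdivisions have finitely many cells. This gives the cited fact of \cite{arora2018relu,montufar2014regions} for each scalar output $F:\Omega\to\R$. I will take $\Omega$ to be a closed convex domain (a box, or $\R^n$) so that \Cref{thm:ovch_pwl} applies.

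\emph{Step (b).} Let $g_1,\dots,g_d$ be the distinct affine components of $F$; there are finitely many. \Cref{thm:ovch_pwl} gives $F=\sup_{i\in\mathcal I}\inf_{j\in K_i}g_j$. Since $\mathcal I\subseteq 2^{\{1,\dots,d\}}$, the supremum is finite. The converse direction of \Cref{thm:ovch_pwl} is not needed.

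\emph{Step (c).} This is where the precise meaning of ``erosion'' matters. I would read the network as a finite-input operator: the input is a signal $f$ on a finite index set $\{1,\dots,N\}$, viewed as a point $x\in\R^N$. Write $g_j(x)=\langle a_j,x\rangle+c_j$. The minimum $\inf_{j\in K_i}g_j(x)$ is a min-plus combination of affine functionals, not literally $\inf_y\{f(x+y)-b(y)\}$ applied to $f$. I would handle this by a change of variables. Lift to the family of affine features $\phi_j(x)=\langle a_j,x\rangle$, collected into a function $\Phi x$ on the finite index set $\{1,\dots,d\}$. Then
\[
\inf_{j\in K_i}g_j(x)=\inf_{j}\{(\Phi x)(j)-b_i(j)\},\qquad b_i(j)=\begin{cases}-c_j,& j\in K_i,\\ +\infty,& j\notin K_i.\end{cases}
\]
With the convention that subtracting $+\infty$ gives $+\infty$, this is exactly an erosion, evaluated at the origin, of $\Phi x$ by the structuring function $b_i$. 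So $F=\sup_i(\varepsilon_{b_i}\Phi x)(0)$, a finite supremum of erosions whose structuring functions encode the affine offsets. When the network is convolutional, hence TI, translation invariance carries the evaluation at $0$ to every location $x$. This matches the form of \Cref{thm:mmbb}, with a finite family in place of $\Bas(\Psi)$.

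\textbf{Main obstacle.} The difficulty is step (c). The affine pieces have generally non-unit linear parts $a_j$, so the inner infima are erosions only after composing with the linear lift $\Phi$. If the linear parts were pure coordinate projections, they would be erosions of $f$ itself. I would first try to make this explicit by splitting $a_j=a_j^+-a_j^-$, as in the signed-kernel discussion of \Cref{sec:conv_basis}. If that does not stay within the MMBB calculus, I would state the proposition precisely as ``a finite supremum of erosions of linear (convolutional) features by structuring functions with values in $\{-c_j\}\cup\{+\infty\}$.''
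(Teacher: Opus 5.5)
Your steps (a) and (b) match the paper's proof, which also cites the PWL property and applies \Cref{thm:ovch_pwl} component-wise. The two routes split at step (c).

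The paper declares each affine piece $g_j$ to be a structuring function and writes $(\varepsilon_{-g_j}f)(x)=\inf_y\{f(x+y)+g_j(y)\}$. It then asserts that the max--min formula is a finite supremum of such erosions. It never says which signal $f$ is being eroded, and it never checks that such an expression returns $g_j(x)$. So it does not supply the step you are worried about.

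Your lift $x\mapsto\Phi x$ is a genuinely different device, and it makes the identity checkable. But it proves a different conclusion: a finite supremum of erosions of the linear features $\Phi x$, by structuring functions on the feature index set that encode only the offsets $c_j$. That is not a supremum of erosions of the network input by affine structuring functions.

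You are right to restate the proposition. The $a_j=a_j^+-a_j^-$ splitting will not recover the literal version. Erosions are increasing, so any finite supremum of erosions of the input is increasing in the input. The one-neuron network $x\mapsto\mathrm{ReLU}(-x)$ is non-increasing and non-constant, so it cannot have that form. This is why the paper's own \Cref{rem:tropical_mmbb}(iii) restricts the operator reading to non-negative weights and zero biases, and defers signed weights to Banon--Barrera. As a proof of the statement as written, your proposal therefore stops exactly where you say it does. If the input signal is what gets eroded, no argument without such a lift can close it.

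Two points in your write-up need repair.

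\emph{The sign of the infinite value.} The convention ``subtracting $+\infty$ gives $+\infty$'' is false: $t-(+\infty)=-\infty$, and with that value every inner infimum would be $-\infty$. To switch off the indices $j\notin K_i$ you need $b_i(j)=-\infty$, so that $(\Phi x)(j)-b_i(j)=+\infty$. This is also the paper's convention: structuring and basis functions are $-\infty$ off their support, as in \Cref{thm:conv_basis}. Your restated proposition should read ``values in $\{-c_j\}\cup\{-\infty\}$''.

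\emph{The translation-invariance sentence.} It conflates two variables. Your erosion runs over the feature index $j$, which carries no translation structure. For a convolutional network, translation invariance replaces $\Phi x$ by the features of the patch at each location $p$, while the $j$-erosion is still evaluated at $0$. The result is an infimum across channels at each $p$, not an erosion over spatial offsets as in \Cref{thm:mmbb}. Either drop the claimed match with that theorem or state it only as an analogy.
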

	
	\begin{proof}
		A ReLU network with $d$ layers computes a continuous PWL
		function $F:\R^n \to \R^m$ (Arora et al.~\cite{arora2018relu},
		Montufar et al.~\cite{montufar2014regions}).
		Apply \Cref{thm:ovch_pwl} component-wise.
		Each affine component $g_j(x) = \langle a_j, x\rangle + b_j$
		is a structuring function defining an erosion
		$(\varepsilon_{-g_j} f)(x)
		= \inf_y\{f(x+y) + g_j(y)\}$
		(with $b_j$ playing the role of the bias).
		The max-min formula~\eqref{eq:ovch} then realises $F$ as a
		finite supremum of such erosions.
	\end{proof}
	
	\begin{remark}
		The tropical geometry viewpoint~\cite{zhang2018tropical}
		arrives at the same conclusion via min-plus algebra, which is
		isomorphic to max-plus (the algebra underlying
		morphological erosion-dilation) by sign change.
		Our approach recovers tropical representations as a special
		case of MMBB.
		This connection, noted implicitly
		in~\cite{maragosTPWL2021}, is made
		explicit below.
	\end{remark}

	\begin{table}[ht]
		\centering
		\caption{Principal results of \S\ref{sec:ovch} (Ovchinnikov and ReLU Networks).} 
		\label{tab:summary_ovch}
		\renewcommand{\arraystretch}{1.38}
		\footnotesize
		\begin{tabular}{p{0.82cm}p{2.9cm}p{8.4cm}}
			\toprule
			\textbf{Ref.} & \textbf{Name} & \textbf{Statement and role} \\
			\midrule
			Thm~\ref{thm:ovch_pwl} & Ovchinnikov 2002 & Every PWL function on a convex domain equals a lattice polynomial $\sup_i\inf_{j\in K_i}g_j$; bridges the MMBB operator calculus to ReLU network geometry. \\
			Prop~\ref{prop:relu_lattice} & ReLU nets as lattice polynomials & Every deep ReLU network computes a continuous PWL function and hence admits a finite MMBB-Increasing representation as a supremum of erosions by affine structuring functions. \\
			\bottomrule
		\end{tabular}
	\end{table}
	
	\subsection{Explicit lattice polynomial equations for ReLU
		networks}
	\label{subsec:relu_explicit}
	
	We now make the lattice polynomial structure of ReLU networks
	fully explicit, tracing through the activation, composition,
	and representation steps.
	
	\bigskip
	\paragraph{Elementary ReLU as a lattice polynomial.}
	The ReLU unit applied to a linear form
	$g(x) = \langle a, x \rangle + b$ satisfies
	\begin{equation}
		\mathrm{ReLU}(g(x))
		= \max(0,\, \langle a,x\rangle + b)
		= g(x) \vee 0,
		\label{eq:relu_lattice_poly}
	\end{equation}
	a lattice polynomial of degree~1 in $\{g, 0\}$ with the
	single sup-term $\sup\{g(x), 0\}$.
	A full neuron with $K$ weighted inputs and bias $b$ computes
	\begin{equation}
		\mathrm{ReLU}\!\left(\textstyle\sum_{k=1}^K w_k \langle a_k,x\rangle + b\right)
		= \left(\textstyle\sum_{k=1}^K w_k \langle a_k,x\rangle + b\right)
		\vee 0,
		\label{eq:neuron_lattice}
	\end{equation}
	a single sup of a linear function and the constant~$0$.
	
	\bigskip
	\paragraph{Single hidden layer.}
	A network $F(x) = W_2\,\sigma(W_1 x + b_1) + b_2$ with $M$
	ReLU neurons has pre-activations
	$h_m(x) = \langle w_m^{(1)}, x\rangle + b_m^{(1)}$.
	The $\ell$-th output is
	\begin{equation}
		F_\ell(x)
		= \sum_{m=1}^M w_{\ell m}^{(2)}\,(h_m(x) \vee 0) + b_\ell^{(2)}.
		\label{eq:output_neuron}
	\end{equation}
	For non-negative weights $w_{\ell m}^{(2)} \geq 0$,
	each term $w_{\ell m}^{(2)} (h_m \vee 0)$ is a dilation of a
	single-neuron opening; the sum is an MMBB-Increasing erosion applied to
	the sum of activations.
	For general signed weights, the MMBB-General
	(\Cref{thm:bannon_barrera}) yields
	\begin{equation}
		F_\ell(x)
		= \sup_{i \in \mathcal{I}} \min\!\left\{
		\inf_{y}\{F_\ell(x{+}y) - g^-_i(y)\},\;
		\inf_{y}\{-F_\ell(x{+}y) + g^+_i(y)\}
		\right\},
		\label{eq:single_layer_bb}
	\end{equation}
	a supremum of sup-generating operators
	(\Cref{def:sup_generating}).
	
	\bigskip
	\paragraph{Deep network: linear region decomposition.}
	A depth-$L$ ReLU network partitions $\R^n$ into convex
	polytopes (linear regions), at most
	$\prod_{\ell=1}^L\binom{n_\ell}{n}$ of
	them~\cite{montufar2014regions}.
	Let the regions be $\{R_1,\ldots,R_P\}$ with
	$F|_{R_p}(x) = \langle c_p, x\rangle + d_p$.
	By \Cref{thm:ovch_pwl}, the global network output is the
	lattice polynomial
	\begin{equation}
		F(x)
		= \sup_{i \in \mathcal{I}}\;\inf_{j \in K_i}
		\underbrace{(\langle c_j, x\rangle + d_j)}_{=:g_j(x)},
		\qquad x \in \R^n.
		\label{eq:deep_net_lattice}
	\end{equation}
	Each affine piece $g_j$ determines a structuring function
	$s_j(y) = -\langle c_j, y\rangle - d_j$ and an erosion
	\begin{equation}
		(\varepsilon_{s_j} F)(x)
		= \inf_{y}\{F(x+y) - s_j(-y)\}
		= \inf_{y}\{F(x+y) + \langle c_j,y\rangle + d_j\}
		= g_j(x),
		\label{eq:erosion_affine}
	\end{equation}
	when $F$ agrees with the $j$-th affine piece in a neighbourhood
	of $x$.
	Hence \eqref{eq:deep_net_lattice} is the MMBB representation of
	$F$ with morphological basis $\{s_j\}_{j=1}^P$.
	
	\begin{remark}[Unification of three algebraic frameworks]
		\label{rem:tropical_mmbb}
		Three algebraic frameworks reach the same
		representation~\eqref{eq:deep_net_lattice}:
		
		\smallskip\noindent
		\textbf{(i) Tropical geometry~\cite{zhang2018tropical}.}
		A ReLU network computes a tropical rational function (a
		max-plus polynomial).
		Linear regions are the cells of the tropical hypersurface.
		The min-plus (tropical) formulation is related to max-plus by
		the sign-change isomorphism $(\min,+) \leftrightarrow (\max,-)$:
		the tropical polynomial $\bigoplus_j (c_j \odot x) =
		\max_j(\langle c_j, x\rangle + d_j)$ becomes a lattice supremum
		of linear forms.
		
		\smallskip\noindent
		\textbf{(ii) Ovchinnikov / lattice polynomials~\cite{ovchinnikov2002pwl}.}
		Every PWL function is a max-min polynomial of its affine
		components (\Cref{thm:ovch_pwl}), giving the global
		representation~\eqref{eq:deep_net_lattice} directly from
		linear regions \emph{without knowing the network weights}.
		The family $\{K_i\}$ encodes the combinatorial adjacency
		structure of the linear regions.
		
		\smallskip\noindent
		\textbf{(iii) MMBB operator theory.}
		The network map $x \mapsto F(x)$ is (for non-negative weights and
		\emph{zero biases}, so that the map is genuinely translation-invariant
		in the vertical sense of Proposition~\ref{prop:conv_conditions}(ii))
		a TI increasing operator whose morphological basis consists
		of the affine structuring functions $\{s_j\}$.
		For general (signed) weights or non-zero biases,
		(\Cref{thm:bannon_barrera}) provides a supremum of
		sup-generating operators.
		
		\smallskip
		The key relationship between frameworks~(ii) and~(iii) is:
		Ovchinnikov is the \emph{function representation theorem}
		(representing the computed function as a lattice polynomial),
		while MMBB is the \emph{operator representation theorem}
		(representing the operator $f \mapsto F \circ f$ as a supremum
		of erosions or a supremum of sup-generating operators).
		The two are related by the identity
		$\inf_{j \in K_i} g_j(x) = (\varepsilon_{s_j} F)(x)$ from
		\eqref{eq:erosion_affine}: the infimum of affine functions
		in the lattice polynomial is exactly one erosion term in the
		MMBB expansion.
		
		The number of MMBB basis functions $P$ (the number of linear
		regions) grows polynomially with width and exponentially with
		depth~\cite{montufar2014regions}, quantifying the
		\emph{representational cost} of a ReLU network in morphological
		terms: depth creates exponentially many affine structuring
		functions, each corresponding to a hyperplane in $\R^n$.
	\end{remark}	
	

	\section{Morphological Basis of Convolution}
	\label{sec:conv_basis}
	
	We analyse the linear convolution operator within the MMBB
	frameworks.
	This section extends the foundational results of Maragos and
	Schafer~\cite{maragos1987morpho} and Khosravi and
	Schafer~\cite{khosravi1994}, with an eye toward deep learning
	applications. We provide complete proof
	sketches and discuss the Banon--Barrera representation of
	signed convolution in the discrete case, which has not yet be 
	considered in the literature.
	
	\subsection{Conditions for the MMBB-Increasing framework to apply}
	\label{subsec:conv_conditions}
	
	Let $\mathscr{F} = \Fun(\R^n,\R)$.
	The \emph{linear convolution operator} $\Phi_k:\mathscr{F}
	\to \mathscr{F}$ with kernel $k \in \mathscr{F}$ is
	\[
	(\Phi_k f)(x) = (f*k)(x)
	= \int_{\R^n} f(y)\,k(x-y)\,dy.
	\]
	To apply \Cref{thm:mmbb}, three conditions must hold.
	
	\begin{proposition}[Maragos--Schafer 1987~\cite{maragos1987morpho}]
		\label{prop:conv_conditions}
		Let $\Phi_k$ a convolution with kernel $k$.
		\begin{enumerate}[label=(\roman*)] 
			\item $\Phi_k$ is \emph{increasing} if and only if
			$k(x) \geq 0$ for all $x$.
			\item $\Phi_k$ is \emph{translation-equivariant} (vertical)
			if and only if $\int_{\R^n} k(x)\,dx = 1$.
			\item $\Phi_k$ is \emph{upper semi-continuous} if $\Spt(k)$
			is compact, where
			$\Spt(k) = \{x : k(x) \neq 0\}$.
		\end{enumerate}
	\end{proposition}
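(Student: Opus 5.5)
I will treat the three items separately, working directly from the integral formula $(\Phi_k f)(x)=\int f(y)k(x-y)\,dy$ and linearity of $\Phi_k$. Throughout I assume $k$ is integrable, so that $\Phi_k f$ is well defined on the class of (say bounded, locally integrable) functions under consideration.

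\emph{Item (i).} For sufficiency, if $k\geq 0$ and $f\leq g$, then $\Phi_k g-\Phi_k f=\Phi_k(g-f)$ by linearity. This is an integral of the non-negative integrand $(g-f)(y)k(x-y)$, hence it is non-negative. For necessity I argue by contraposition. If $k$ is negative on a set of positive measure $A$, I take $h=\mathbf 1_{-A}$, the indicator of the reflected set (or a small non-negative bump concentrated there). Then $h\geq 0$ but $(\Phi_k h)(0)=\int h(y)k(-y)\,dy<0$. So $0\leq h$ while $\Phi_k 0=0>(\Phi_k h)(0)$, which contradicts increasingness. In the discrete setting $E=\Z^n$, I would simply take $h=\mathbf 1_{\{-x_0\}}$ with $k(x_0)<0$.

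\emph{Item (ii).} Here ``vertical'' translation-equivariance means $\Phi_k(f+c)=\Phi_k f+c$ for constants $c$. By linearity, $\Phi_k(f+c)=\Phi_k f+c\int k$. Hence the property holds for all $c$ if and only if $c\int k=c$ for all $c$, that is, $\int k=1$. For the necessity direction, one constant $c\neq 0$ suffices. I will also note that horizontal translation-invariance is automatic for any convolution, by the change of variables $y\mapsto y-z$.

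\emph{Item (iii).} This is the step I expect to be the main obstacle, because upper semi-continuity must be taken in the sense used by Maragos, namely along monotone sequences. Concretely, if $f_m\downarrow f$ pointwise, then I need $\Phi_k f_m\downarrow\Phi_k f$ (or at least $\limsup_m \Phi_k f_m\leq\Phi_k f$). My plan is to use compactness of $\Spt(k)$ to restrict the integral at each $x$ to the compact set $x-\Spt(k)$. On that set, $|f_m k(x-\cdot)|$ is dominated by $\max(|f_1|,|f|)\,|k(x-\cdot)|$, which is integrable when $f_1$ and $f$ are locally bounded (or locally integrable). Dominated (or monotone) convergence then gives pointwise convergence $\Phi_k f_m(x)\to\Phi_k f(x)$. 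When $k\geq 0$, the convergence is moreover monotone.

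The delicate points in item (iii) are twofold. First, I must specify the function class so that the dominating function exists; compact support is exactly what localises the integral and makes local integrability suffice. Second, I must remark that without compact support the tails could prevent the interchange of limit and integral. In the discrete case, compact support means a finite sum, and the limit interchange is trivial.
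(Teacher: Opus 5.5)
The paper gives no argument of its own for this proposition; it simply defers to Maragos--Schafer, so there is no competing route to compare with. Your self-contained proof is correct and is the standard one. Linearity settles (i) and (ii) exactly as you describe. Dominated convergence on the window $x-\Spt(k)$ settles (iii) in the monotone-sequence sense. That is precisely the form of upper semi-continuity used in Step~1 of the proof sketch of Theorem~\ref{thm:mmbb}. Since that theorem is only applied when $k\geq 0$, your observation that $\Phi_k f_m\downarrow\Phi_k f$ monotonically is the conclusion that matters.

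Three points would tighten it.

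First, in the continuous case your necessity argument gives $k\geq 0$ almost everywhere. That is the most anyone can get, because $\Phi_k$ depends only on the a.e.-class of $k$. So the statement's ``for all $x$'' should be read a.e.; it is literal on $\Z^n$, which is the setting the paper uses afterwards.

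Second, fix one function class for all three items. Your preamble takes bounded inputs, but then compact support is superfluous. Indeed, $f\leq f_m\leq f_1$ gives $|f_m|\leq M$, and $M\,|k(x-\cdot)|\in L^1(\R^n)$ dominates on the whole space, so your remark that tails could block the interchange would be false there. Compact support earns its keep exactly when you pass to locally bounded inputs, as you do in (iii), or to arbitrary inputs on $\Z^n$. Without it, $\Phi_k$ need not even be finite. For example, $k(x)=e^{-|x|}$ and $f_m=e^{2|x|}\mathbf{1}_{\{|x|\geq m\}}\downarrow 0$ give $\Phi_k f_m\equiv+\infty$, while $\Phi_k 0=0$. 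The class of locally bounded measurable functions contains constants and indicators, so (i) and (ii) go through there unchanged, and $\Phi_k$ maps that class to itself.

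Third, the parenthetical ``or locally integrable'' in (iii) needs $k\in L^\infty$. For $f\in L^1_{\mathrm{loc}}$ and $k$ merely integrable, $\max(|f_1|,|f|)\,|k(x-\cdot)|$ need not be integrable.
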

	
	\begin{remark}[Deep learning kernels]
		\label{rem:dl_kernels}
		In practice, CNN kernels are not constrained to be
		non-negative.
		A general kernel $k$ decomposes as $k = k^+ - k^-$ with
		$k^+ = \max(0,k)$ and $k^- = \max(0,-k)$.
		Each part, after normalisation, satisfies the conditions of
		\Cref{prop:conv_conditions}, so that the convolution
		decomposes into two MMBB-representable operators
		(cf.\ \Cref{thm:general_kernel} below).
		The learned weights in a trained network are thus implicitly
		parameterising a (possibly truncated) morphological basis.
		This observation is developed further in \Cref{subsec:dl_perspective}.
	\end{remark}

	\begin{table}[ht]
		\centering
		\caption{Principal results of \S\ref{sec:conv_basis} (Morphological Basis of Convolution). Results marked $(\star)$ are the paper's principal findings.}
		\label{tab:summary_conv_basis}
		\renewcommand{\arraystretch}{1.38}
		\footnotesize
		\begin{tabular}{p{0.82cm}p{2.9cm}p{8.4cm}}
			\toprule
			\textbf{Ref.} & \textbf{Name} & \textbf{Statement and role} \\
			\midrule
			Prop~\ref{prop:conv_conditions} & Convolution as MMBB erosion & $f\mapsto f*k$ is TI and increasing iff $k\geq 0$; it is then an erosion in the pointwise lattice and admits a MMBB-Increasing basis decomposition. \\
			Thm~\ref{thm:conv_basis} $(\star)$ & Maragos--Schafer basis & For $k\geq 0$ normalised with finite support of size $N$: $\Bas(k)\cong\R^{N-1}$ (the hyperplane in $\R^N$ orthogonal to $k$); convolution equals an exact sup of erosions by basis elements. \\
			Thm~\ref{thm:virtual_basis} & $\:$ Virtual basis & For $Q$-level quantised signals, $\Bas(k)$ is finite with cardinality $\leq Q^{N-1}$; a learnable MMBB layer is a finite truncation of this virtual basis. \\
			Prop~\ref{prop:mmbb_approx} & $\:$  MMBB approximation & A sub-basis $\mathcal{B}\subset\Bas(\Psi)$ of size $L$ gives a lower approximation $\sup_{g\in\mathcal{B}}\varepsilon_g f\leq\Psi f$ with quantifiable error; basis elements are the learnable layer parameters. \\
			\bottomrule
		\end{tabular}
	\end{table}
	
	\subsection{Kernel and morphological basis of a convolution}
	\label{subsec:conv_basis}
	
	Assuming $k(x) \geq 0$, $\sum_x k(x) = 1$, and $\Spt(k)$
	finite (discrete case), the MMBB-Increasing basis of $\Phi_k$ admits an
	explicit characterisation.
	
	\begin{theorem}[Maragos--Schafer 1987~\cite{maragos1987morpho}]
		\label{thm:conv_basis}
		Let $k:\Z^n \to \R$ have finite support
		$\Spt(k) = \{x_1,\ldots,x_N\}$, with $k \geq 0$ and
		$\sum k = 1$.
		For $x \in \Z^n$, write $\Spt(k)_x = \{y \in \Z^n : x + y \in \Spt(k)\}
		= \Spt(k) - x$ for the translate of the support centred at $x$.
		The morphological basis of $\Phi_k$ is
		\begin{equation}
			\Bas(k)
			= \Bigl\{g \in \Fun(\Z^n,\Rbar) :
			\textstyle\sum_{y \in \Spt(k)} k(y)\,g(-y) = 0,\;
			g(x) = -\infty \Leftrightarrow k(-x) = 0
			\Bigr\},
			\label{eq:basis_def}
		\end{equation}
		and the convolution admits the exact representation
		\begin{equation}
			(f*k)(x)
			= \sup_{g \in \Bas(k)}
			\inf_{y \in \Spt(k)_x} \{f(x+y) - g(y)\}.
			\label{eq:conv_sup_inf}
		\end{equation}
	\end{theorem}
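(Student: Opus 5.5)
The plan is to prove \eqref{eq:conv_sup_inf} directly, with a two-sided inequality, instead of invoking \Cref{thm:mmbb} abstractly. Along the way we identify $\Ker(\Phi_k)$ and its minimal elements. The key observation is that the convolution value is a $k$-weighted average, and an infimum never exceeds a weighted average.

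\textbf{Step 1 (kernel).} By definition, $(f*k)(0)=\sum_{y} f(y)\,k(-y)$, so
\[
\Ker(\Phi_k)=\Bigl\{f:\textstyle\sum_{y\in -\Spt(k)} k(-y)f(y)\ge 0\Bigr\}.
\]
Only the values of $f$ on $-\Spt(k)$ enter this condition. Lowering $f$ to $-\infty$ off $-\Spt(k)$ therefore keeps it in the kernel. Lowering $f$ on $-\Spt(k)$ keeps it in the kernel only as long as the weighted sum stays $\ge 0$.

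\textbf{Step 2 (basis).} A kernel element is minimal exactly when two things hold: it equals $-\infty$ off $-\Spt(k)$, and its weighted sum is exactly $0$. If the sum were strictly positive, we could decrease one value $g(-x_i)$ with $k(x_i)>0$ and stay in the kernel. Conversely, any $h\le g$ with $h\ne g$ must strictly decrease some finite value $g(-x_i)$, since the values off the support are already $-\infty$; this makes the sum negative. This gives \eqref{eq:basis_def}. Parametrising by the $N$ finite values $g(-x_i)$ identifies $\Bas(k)$ with the hyperplane $\{v\in\R^N:\langle k,v\rangle=0\}$.

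\textbf{Step 3 (representation).} In the erosion $\inf_y\{f(x+y)-g(y)\}$, the terms with $g(y)=-\infty$ equal $+\infty$, so the infimum effectively runs over the finite support $-\Spt(k)$ of $g$. I read the index set written in \eqref{eq:conv_sup_inf} as this support, translated according to the paper's convention. For the upper bound, fix $g\in\Bas(k)$. Since $k\ge0$ and $\sum k=1$,
\[
\inf_{y\in-\Spt(k)}\{f(x+y)-g(y)\}\le \sum_{y}k(-y)\bigl(f(x+y)-g(y)\bigr)=(f*k)(x)-0.
\]
For the lower bound (attainment), define
\[
g_x(y)=f(x+y)-(f*k)(x)\ \text{ on } -\Spt(k),\qquad g_x(y)=-\infty \text{ elsewhere}.
\]
Its weighted sum is $(f*k)(x)-(f*k)(x)=0$, so $g_x\in\Bas(k)$. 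Every term $f(x+y)-g_x(y)$ equals $(f*k)(x)$, so the erosion by $g_x$ attains the value exactly and the supremum is a maximum.

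\textbf{Main obstacle.} The delicate point is bookkeeping rather than depth. The reflections ($k(-y)$ versus $k(y)$, and $\Spt(k)$ versus $-\Spt(k)$) must match both the kernel definition in \Cref{def:kernel_basis} and the erosion convention \eqref{eq:erosion}. In the same step, the $\pm\infty$ arithmetic must be handled: $f$ is assumed real-valued, and the rule $f-(-\infty)=+\infty$ is what makes these terms drop out of the infimum. I would fix the convention first from $(f*k)(0)=\sum_y f(y)k(-y)$ and derive everything else from it.
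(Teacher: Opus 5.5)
Your proof is correct and is essentially the paper's. The basis is the hyperplane $\sum_i k(x_i)g(-x_i)=0$ (with $g=-\infty$ off $-\Spt(k)$), which the paper phrases as $\Image(A_k)$; your attaining element $g_x(y)=f(x+y)-(f*k)(x)$ is exactly the paper's $g^*(-x_i)=f(x-x_i)-\sum_j k(x_j)f(x-x_j)$, i.e.\ $A_k$ applied to the patch $(f(x-x_i))_i$; and your ``infimum $\le$ weighted average'' bound supplies the inequality the paper leaves to the general MMBB theorem.

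Your reinterpretation of the index set is also right, and you should state it outright rather than hedge it. The infimum must run over $-\Spt(k)$, independently of $x$, because $\Spt(k)-x$ taken literally already fails for $k=\delta_0$ and $x\neq 0$: it gives $f(0)-(-\infty)=+\infty$ instead of $f(x)$.
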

	
	\begin{proof}
		See Maragos--Schafer~\cite{maragos1987morpho}.
		The key step is to show that $\Bas(k)$ equals the image of
		$\R^N$ under the characteristic matrix $A_k$ defined in
		\Cref{subsec:char_matrix}, and then to verify that $g^* \in
		\Bas(k)$ defined by $g^*(-x_i) = f(x-x_i) - \sum_j k(x_j)
		f(x-x_j)$ achieves the supremum in~\eqref{eq:conv_sup_inf}
		for each $x$.
	\end{proof}
	
	\begin{theorem}[General kernel -- Maragos--Schafer 1987
		\cite{maragos1987morpho}]
		\label{thm:general_kernel}
		For an absolutely summable kernel $k:\Z^n\to\R$ with
		decomposition $k = G^+ \bar{k}^+ - G^- \bar{k}^-$
		(with $\bar{k}^\pm$ normalised non-negative kernels and
		gains $G^\pm = \sum_{\Spt(k^\pm)} k^\pm$),
		\begin{equation}
			(f*k)(x)
			= G^+\!\!\sup_{g^+ \in \Bas(\bar{k}^+)}\!\!
			(\varepsilon_{g^+} f)(x)
			- G^-\!\!\sup_{g^- \in \Bas(\bar{k}^-)}\!\!
			(\varepsilon_{g^-} f)(x).
			\label{eq:general_conv}
		\end{equation}
	\end{theorem}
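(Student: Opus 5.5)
The plan is to reduce the statement to \Cref{thm:conv_basis} by linearity. First I will split $k$ into positive and negative parts, $k^+ = \max(0,k)$ and $k^- = \max(0,-k)$, so that $k = k^+ - k^-$ pointwise.

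Absolute summability of $k$ makes both gains $G^\pm = \sum k^\pm$ finite. Whenever $G^\pm>0$ I set $\bar k^\pm = k^\pm/G^\pm$. These are non-negative and sum to $1$, so $k = G^+\bar k^+ - G^-\bar k^-$. If one of the gains vanishes, the corresponding term is simply absent and I adopt the convention that it contributes zero.

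Next I use bilinearity of discrete convolution. Since $k\in\ell^1$, the series defining $f*k^\pm$ converge absolutely, say for bounded $f$ or more generally whenever $f*|k|$ is finite. Under that condition
\[
f*k = G^+\,(f*\bar k^+) - G^-\,(f*\bar k^-).
\]
This identity is purely linear and is where the finiteness of $G^\pm$ enters: it rules out the $\infty-\infty$ ambiguity.

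The core step applies \Cref{thm:conv_basis} to each normalised part. That gives $(f*\bar k^\pm)(x) = \sup_{g\in\Bas(\bar k^\pm)}(\varepsilon_g f)(x)$. Here I read the erosion over $\Spt(\bar k^\pm)_x$, which agrees with $\varepsilon_g$ because $g=-\infty$ off the reflected support, so those terms contribute $+\infty$ to the infimum. Substituting both representations into the linear identity gives \eqref{eq:general_conv}.

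The main obstacle is that \Cref{thm:conv_basis} is stated for finite support, while here $k$ is only absolutely summable, so $\Spt(k^\pm)$ may be infinite.

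I would handle this by truncation. Let $k_M^\pm$ be the restriction of $k^\pm$ to $\{|x|\le M\}$, renormalised. For each $M$ the finite-support theorem holds. In the limit, the "$\le$" direction (sup of erosions is at most the convolution) survives directly: for any $g$ with $\sum_y \bar k(y)g(-y)=0$, the infimum $\inf_y\{f(x+y)-g(y)\}$ is at most the $\bar k$-weighted average of $f(x+y)-g(y)$, which equals $(f*\bar k)(x)$.

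For the "$\ge$" direction I would verify directly that the Maragos--Schafer optimiser still works with infinite support. That optimiser is $g^*(-y) = f(x-y) - (f*\bar k)(x)$ on the support. It satisfies the weighted-sum-zero condition by absolute convergence, and it attains equality since $f(x+y)-g^*(y)$ is constant on the support. This constant-residual argument avoids the limit entirely, so I would use it as the primary route and keep truncation only as a fallback.
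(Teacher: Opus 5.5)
Your proposal is correct and follows essentially the route the paper intends: the paper gives no separate proof, but \Cref{rem:dl_kernels} and the sketch for \Cref{thm:conv_basis} amount to your split into normalised positive and negative parts, bilinearity of convolution, and the constant-residual optimiser $g^*(-y)=f(x-y)-(f*\bar k)(x)$. Your direct two-sided verification also extends this to the infinite-support case, which an absolutely summable $k$ permits but \Cref{thm:conv_basis} does not cover; the only slip is that the restriction agreeing with $\varepsilon_g$ is to the reflected support $-\Spt(\bar k^\pm)$, not to $\Spt(\bar k^\pm)_x=\Spt(\bar k^\pm)-x$ as printed in \eqref{eq:conv_sup_inf}.
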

	
	\Cref{thm:general_kernel} shows that any CNN convolution
	layer, with no constraint on the sign of
	weights, decomposes as a difference of two MMBB-Increasing
	representations. This is the morphological analogue of
	decomposing a signed measure into its positive and negative
	parts.

	\begin{remark}[Geometric interpretation]
		Both $k$ and $g \in \Bas(k)$ can be viewed as vectors in
		$\R^N$.
		Since $k$ is a probability vector (non-negative, sum 1), it
		lies on the unit simplex.
		The basis condition $\sum k_i g(-x_i) = 0$ means $g$ is
		orthogonal to $k$ (inner product zero at the origin).
		Hence $\Bas(k)$ is isomorphic to the hyperplane
		$\R^{N-1}$ orthogonal to $k$ at the origin in $\R^N$.
		This geometry directly informs the design of learnable
		morphological layers in \Cref{subsec:dl_perspective}.
	\end{remark}

	\subsection{MMBB-General representation of signed convolution}
	\label{subsec:bb_convolution}
	
	We now develop the application of Theorem~\ref{thm:bannon_barrera}
	to discrete signed convolutions, providing a direct and compact
	alternative to the $k^+ - k^-$ decomposition of
	Theorem~\ref{thm:general_kernel}.
	This is the principal setting in which the Banon--Barrera
	extension is genuinely needed: CNN kernels are signed, and
	the full operator is TI but not increasing.
	
	Let $k = G^+\bar k^+ - G^-\bar k^-$ as above, and let
	$A = G^+\sup_{g^-\in\Bas(\bar k^+)}(\varepsilon_{g^-}f)(x)$
	and $B = G^-\sup_{g^+\in\Bas(\bar k^-)}(\varepsilon_{g^+}f)(x)$,
	so that $(f*k)(x) = A - B$ by Theorem~\ref{thm:general_kernel}.
	
	The Banon--Barrera sup-generating representation characterises
	the \emph{sign} of $(f*k)(x)$ exactly:
	\begin{equation}
		(f*k)(x) \geq 0
		\;\iff\;
		\exists\, (g^-,g^+) \in \Bas(\bar k^+)\times\Bas(\bar k^-)
		\text{ s.t. }
		\psi_{g^-,g^+}(f)(x) \geq 0,
		\label{eq:bb_sign_test}
	\end{equation}
	where
	\begin{equation}
		\psi_{g^-,g^+}(f)(x)
		= \min\!\left\{
		\inf_{y \in \Spt(k^+)}\{f(x{+}y) - g^-(y)\},\;
		\frac{G^-}{G^+}
		\inf_{y \in \Spt(k^-)}\{-f(x{+}y) + g^+(y)\}
		\right\}.
		\label{eq:bb_psi_signed}
	\end{equation}
	The condition $\psi_{g^-,g^+}(f)(x) \geq 0$ encodes the
	inequality $G^+(\varepsilon_{g^-}f)(x) \geq
	G^-(\varepsilon_{g^+}f)(x)$, i.e., the positive erosion
	dominates the negative erosion at the point $x$.
	
	However, \emph{the actual value} of $(f*k)(x)$ is given by
	Theorem~\ref{thm:general_kernel}, not by the sup-generating formula:
	\begin{equation}
		(f*k)(x) = A - B
		= G^+ \sup_{g^- \in \Bas(\bar k^+)} (\varepsilon_{g^-}f)(x)
		- G^- \sup_{g^+ \in \Bas(\bar k^-)} (\varepsilon_{g^+}f)(x).
		\label{eq:signed_conv_value}
	\end{equation}
	The Banon--Barrera sup-generating representation
	$\sup_{g^-,g^+}\psi_{g^-,g^+}(f)(x)$ recovers the sign of
	$(f*k)(x)$ but not its magnitude in general, because
	$\sup_{g^-,g^+}\min\{A_{g^-},B_{g^+}\} \neq A - B$ for
	signed values (the minimax inequality).
	
	\medskip
	For the signed convolution, the Banon--Barrera framework provides:
	\begin{enumerate}[label=(\roman*),leftmargin=*]
		\item A \emph{thresholding interpretation}: the sup-generating
		test $\psi_{g^-,g^+}(f)(x) \geq 0$ replaces the signed
		inequality $A \geq B$ with an inf-combination of erosion
		and anti-dilation, giving a morphological criterion for the
		positivity of $(f*k)(x)$.
		\item A \emph{compact single-operator form}: instead of two
		separate MMBB-Increasing suprema, the signed convolution's positivity
		is encoded by a single family of sup-generating operators
		parameterised by pairs $(g^-,g^+)$.
	\end{enumerate}
	
	\begin{remark}[Geometric structure in the discrete case]
		\label{rem:bb_conv_geometry}
		The sup-generating basis of a discrete signed convolution has
		a product structure: it is indexed by pairs $(g^-, g^+)$
		with $g^- \in \Bas(\bar{k}^+) \cong \R^{N^+-1}$ and
		$g^+ \in \Bas(\bar{k}^-) \cong \R^{N^--1}$ (by the
		characteristic matrix geometry of \Cref{subsec:char_matrix}).
		The full sup-generating basis is therefore parameterised by
		the product space $\R^{N^+-1} \times \R^{N^--1} \cong
		\R^{N-2}$.
		This has a clear interpretation: the $N-1$ degrees of freedom
		of the MMBB-Increasing basis (one constraint per basis function, namely
		orthogonality to $k$ in $\R^N$) split as $(N^+-1)$ degrees
		for the positive component and $(N^--1)$ degrees for the
		negative component, with one degree consumed by the
		normalisation $G^+ - G^- = \int k$.
	\end{remark}
	
	\begin{proposition}[Diagonal sup-generating pairs for non-negative
		convolution]
		\label{prop:bb_nonneg_diagonal}
		Let $k \geq 0$, $\sum k = 1$, with finite support of size $N$.
		In the Banon--Barrera representation of the convolution
		$\Phi_k$ (Theorem~\ref{thm:bannon_barrera}):
		\begin{enumerate}[label=(\roman*)]
			\item Every sup-generating pair $(g^-, g^+)$ in the
			sup-generating basis of $\Phi_k$ satisfies
			$g^-(x) \leq g^+(x)$ for all $x$ (i.e., $g^- \leq g^+$
			pointwise).
			\item The \emph{minimal} sup-generating pairs satisfy
			$g^- = g^+$ pointwise.
			Such pairs are called \emph{diagonal}.
			\item The diagonal sup-generating elements are precisely the
			elements of the MMBB-Increasing basis:
			\begin{equation}
				\{(g, g) : (g^-, g^+) \text{ minimal}\}
				= \{(g, g) : g \in \Bas(k)\}.
				\label{eq:diagonal_bb}
			\end{equation}
			\item In the non-negative case, MMBB-General representation of $\Phi_k$ therefore
			\emph{degenerates} to the MMBB-Increasing representation: each
			sup-generating element $\psi_{g,g}$ reduces to the erosion
			$\varepsilon_g$, and
			\begin{equation}
				(f * k)(x)
				= \sup_{g \in \Bas(k)} \varepsilon_g(f)(x)
				= \sup_{(g,g) \text{ diagonal}} \psi_{g,g}(f)(x),
				\label{eq:bb_degenerate}
			\end{equation}
			confirming that the Maragos theorem is exactly the diagonal
			specialisation of the Banon--Barrera theorem.
		\end{enumerate}
	\end{proposition}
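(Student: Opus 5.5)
The plan is to exploit the one structural fact that distinguishes the non-negative case: by \Cref{prop:conv_conditions}(i), $\Phi_k$ is increasing. Hence $\Ker(\Phi_k)$ is an \emph{upper set}: if $h\in\Ker(\Phi_k)$ and $f\geq h$, then $(f*k)(0)\geq(h*k)(0)\geq 0$. (The kernel is upward closed, not downward closed as the sketch of \Cref{thm:bannon_barrera} phrases it.) All four items will be read off from how closed intervals $[g^-,g^+]$ sit inside an upper set. The overall order is: (i) non-emptiness of intervals; (ii) reduction to lower extremities and minimality; (iii) matching with $\Bas(k)$ via \Cref{thm:conv_basis}; (iv) the representation formula.

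\emph{Item (i).} The sup-generating elements of \Cref{thm:bannon_barrera} are extremities of intervals \emph{contained in} the kernel. The interval $\{h: g^-\leq h\leq g^+\}$ is non-empty if and only if $g^-\leq g^+$. Moreover, $\psi_{g^-,g^+}(f)(x)\geq 0$ is exactly the membership test $g^-\leq f(x+\cdot)\leq g^+$ (Step~2 of that proof). If $g^-\not\leq g^+$, the test is never satisfied and the pair contributes nothing positive to the supremum, so it can be discarded from the basis. This gives (i).

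\emph{Item (ii).} Since the kernel is an upper set, membership of $[g^-,g^+]$ in $\Ker(\Phi_k)$ depends only on $g^-$: the interval lies in the kernel if and only if $g^-\in\Ker(\Phi_k)$. Moreover $\psi_{g^-,g^+}\leq\varepsilon_{g^-}$ pointwise. So among all pairs sharing a lower extremity, the constraint coming from $g^+$ is redundant, and the pair is determined by its lower extremity. Minimality of the pair then reduces to minimality of $g^-$ in $\Ker(\Phi_k)$, and I would take $g^+$ to be the smallest admissible choice, namely $g^+=g^-$. This is the diagonal normalisation asserted in (ii).

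\emph{Items (iii) and (iv).} Minimal elements of $\Ker(\Phi_k)$ are by definition $\Bas(\Phi_k)$, which \Cref{thm:conv_basis} identifies with the hyperplane \eqref{eq:basis_def}; this gives (iii). Combining with the exact formula \eqref{eq:conv_sup_inf} gives the first equality in (iv).

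\emph{Expected obstacle.} The main difficulty is the second equality in (iv), i.e.\ making ``$\psi_{g,g}$ reduces to $\varepsilon_g$'' precise. Taken literally, $\psi_{g,g}(f)(x)=-\sup_y|f(x+y)-g(y)|\leq 0$, which is \emph{not} $\varepsilon_g f$. So the identification has to be made at the level of the \emph{kernel test} rather than of values: for increasing $\Psi$, the anti-dilation constraint is vacuous, i.e.\ effectively $g^+=+\infty$ as in Step~3 of the proof of \Cref{thm:bannon_barrera}. The diagonal pair $(g,g)$ should then be read as ``the interval generated by its minimal element $g$,'' whose sup-generating operator is $\psi_{g,+\infty}=\varepsilon_g$. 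I would first try to state this convention explicitly, and then verify value-level equality by the translation argument of Step~2 of \Cref{thm:mmbb}: for fixed $x$, the minimiser $g^*$ from the proof of \Cref{thm:conv_basis} attains the supremum. If the literal pointwise reading of $\psi_{g,g}$ is insisted upon, (iv) can only hold as an equivalence of kernel tests, and I would restate it in that form.
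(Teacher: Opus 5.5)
Your treatment of (i), the identification of minimal lower extremities with $\Bas(k)$, and the first equality in (iv) are sound and essentially the paper's. $\Ker(\Phi_k)$ is an upper set (you are right that the sketch of Theorem~\ref{thm:bannon_barrera} says ``downward'' where it should say ``upward''). An interval in it is non-empty iff $g^-\le g^+$, and $\sup_{g\in\Bas(k)}\varepsilon_g f=f*k$ is Theorem~\ref{thm:conv_basis}.

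The gap is the link between (ii)--(iii) and (iv). Your (ii), read as a statement about pairs minimal in the componentwise order, is fine: those pairs are the $(g,g)$ with $g\in\Bas(k)$. But these are not the pairs that enter the Banon--Barrera representation. Since $\alpha_{g^+}$, and hence $\psi_{g^-,g^+}$, is increasing in $g^+$, the redundancy of the $g^+$-constraint means $g^+$ should be taken as \emph{large} as possible. The basis consists of the maximal intervals in $\Ker(\Phi_k)$, and for an upper set these are $[g,+\infty]$ with $g\in\Bas(k)$. This is exactly what Theorem~\ref{thm:bannon_barrera} itself says for increasing operators. Passing to $g^+=g^-$ selects the \emph{smallest} operator with a given lower extremity, not an equivalent one. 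The paper does not bridge this either: its step for (ii) takes $g^+$ to be the smallest element above $g^-$ in $\Ker(\Phi_k)^c$, but an upper set contains everything above $g^-$, so there is no such element.

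You are right that the literal second equality in (iv) fails, but your fallback (equivalence of kernel tests) fails too. Take the infima where $g$ is finite, as in \eqref{eq:bb_psi_signed}; over all of $E$ the anti-dilation is even $-\infty$. Then $\psi_{g,g}(f)(x)=-\max_{y\in\Spt(k)}|f(x-y)-g(-y)|$. For $g\in\Bas(k)$, orthogonality gives $(f*k)(x)=\sum_{y\in\Spt(k)}k(y)\bigl(f(x-y)-g(-y)\bigr)$, hence $|(f*k)(x)|\le-\psi_{g,g}(f)(x)$. Equality holds at the Maragos--Schafer element $g^*(-y)=f(x-y)-(f*k)(x)$, for which $\varepsilon_{g^*}f(x)=(f*k)(x)$ but $\alpha_{g^*}f(x)=-(f*k)(x)$. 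Therefore
\[
\sup_{g\in\Bas(k)}\psi_{g,g}(f)(x)=-\bigl|(f*k)(x)\bigr| .
\]
So the second equality in (iv) is false whenever $(f*k)(x)\neq 0$; take $f\equiv 1$. Moreover, the sign test of the diagonal family is $(f*k)(x)=0$, not $(f*k)(x)\ge 0$, so it does not even reproduce $\Ker(\Phi_k)$.

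The same bound, $\psi_{g,g}f\le-|\varepsilon_g f|$, shows that the paper's step ``$\min\{\varepsilon_g f,\alpha_g f\}=\varepsilon_g f$ whenever $f\ge g$ near $x$'' fails. For $f\ge g$ the minimum is at most $-\varepsilon_g f(x)\le 0$, and it equals $\varepsilon_g f(x)$ only if $f=g$ near $x$.

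Thus no reading of ``minimal pair'' makes (ii) and (iv) hold together. Componentwise-minimal pairs are diagonal but do not represent $\Phi_k$, while the pairs that do represent it are $(g,+\infty)$. Re-reading $\psi_{g,g}$ as $\psi_{g,+\infty}$ is a change of statement, not a convention. What your argument actually proves, and what you should state, is the corrected version: the sup-generating basis of $\Phi_k$ is $\{(g,+\infty): g\in\Bas(k)\}$, $\psi_{g,+\infty}=\varepsilon_g$, and $f*k=\sup_{g\in\Bas(k)}\psi_{g,+\infty}f$.
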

	
	\begin{proof}
		(i) Since $\Phi_k$ is increasing (Proposition~\ref{prop:conv_conditions}),
		its generalised kernel $\Ker(\Phi_k) = \{f : (f*k)(0) \geq 0\}$
		is upward-closed: if $f \in \Ker(\Phi_k)$ and $h \geq f$,
		then $(h*k)(0) \geq (f*k)(0) \geq 0$, so $h \in \Ker(\Phi_k)$.
		For an upward-closed set, every maximal interval $[g^-,g^+]
		\subset \Ker(\Phi_k)$ satisfies $g^+ = +\infty$; equivalently,
		the anti-dilation constraint $\alpha_{g^+}$ is vacuous for all
		sup-generating pairs.
		In particular, any valid upper bound satisfies $g^+ \geq g^-$
		(the interval $[g^-,g^+]$ is non-empty).
		
		(ii) For an increasing TI operator, the Banon--Barrera
		proof (Theorem~\ref{thm:bannon_barrera}, Step~1) shows that
		intervals in $\Ker(\Phi_k)$ are determined by their lower
		extremity $g^-$ alone.
		The tightest (minimal) interval containing any $f \in
		\Ker(\Phi_k)$ is $[g^-, +\infty)$ parameterised by the
		greatest lower bound $g^- = \inf\{h \leq f : h \in
		\partial\Ker(\Phi_k)\}$, where $\partial\Ker(\Phi_k) =
		\{f : (f*k)(0) = 0\}$ is the kernel boundary.
		The upper extremity $g^+$ of a minimal pair is the
		smallest element above $g^-$ in $\Ker(\Phi_k)^c$,
		which for an upward-closed kernel is $g^- + \epsilon$ as
		$\epsilon\to 0^+$; in the limit, $g^+ = g^-$.
		
		(iii) The kernel boundary is $\partial\Ker(\Phi_k) =
		\{f : \sum_i k(x_i)f(-x_i) = 0,\ f = -\infty \text{ off
		}\Spt(k)\} = \Bas(k)$ (Theorem~\ref{thm:conv_basis}).
		So the minimal sup-generating lower extremities are exactly
		the MMBB basis elements, and the diagonal pairs are $(g,g)$
		for $g \in \Bas(k)$.
		
		(iv) For a diagonal pair $(g,g)$, the sup-generating operator
		$\psi_{g,g}(f)(x) = \min\{\varepsilon_g(f)(x),
		\alpha_g(f)(x)\}$ where $\alpha_g(f)(x) =
		\inf_y\{-f(x+y)+g(y)\} = -\sup_y\{f(x+y)-g(y)\} =
		-\delta_g(f)(x)$.
		Since $\varepsilon_g(f)(x) \leq \delta_g(f)(x)$ for all $f$
		(erosion $\leq$ dilation), we have $-\alpha_g(f)(x) =
		\delta_g(f)(x) \geq \varepsilon_g(f)(x)$, so
		$\alpha_g(f)(x) = -\delta_g(f)(x) \leq -\varepsilon_g(f)(x)$.
		At a kernel boundary point where $f = g$ (so
		$\varepsilon_g(g)(x) = 0$): $\psi_{g,g}(g)(x) =
		\min\{0, -\delta_g(g)(x)\} = \min\{0, -\delta_g(g)(x)\}$.
		Since $\delta_g(g)(x) \geq g(x)$ (dilation is extensive)
		and $g(0) = 0$ (normalisation of the basis), $\psi_{g,g}(g)
		= 0 = \varepsilon_g(g)$.
		For general $f$, $\min\{\varepsilon_g(f), \alpha_g(f)\} =
		\varepsilon_g(f)$ whenever $\varepsilon_g(f)(x) \leq
		\alpha_g(f)(x)$, i.e., whenever $f \geq g$ in a
		neighbourhood of $x$ (which holds $\Phi_k$-a.e.\ by the
		MMBB basis construction).
		The equality $\sup_{g \in \Bas(k)}\psi_{g,g}(f) =
		\sup_{g \in \Bas(k)}\varepsilon_g(f) = (f*k)$ then follows
		from Theorem~\ref{thm:conv_basis}.
	\end{proof}
	
	\begin{remark}[Geometric and architectural significance of
		diagonality]
		\label{rem:diagonal_significance}
		Proposition~\ref{prop:bb_nonneg_diagonal} gives a precise
		geometric meaning to the relationship between the MMBB-Increasing and
		MMBB-General frameworks for positive kernels as follows.
		
		\medskip
		\noindent\textbf{Geometric picture.}
		The MMBB basis $\Bas(k) \cong \R^{N-1}$ is a hyperplane in
		$\R^N$ (the orthogonal complement of $k$ at the origin.
		Each basis element $g \in \Bas(k)$ sits on the boundary of
		the half-space $\Ker(\Phi_k) = \{f : \langle k, f\rangle
		\geq 0\}$ (where $\langle k, f\rangle = \sum k_i f(-x_i)$).
		The Banon--Barrera intervals collapse to single points on
		this hyperplane boundary, precisely the diagonal condition
		$g^- = g^+ = g$.
		For a signed kernel, the kernel boundary is no longer a
		hyperplane but the intersection of two half-spaces (one for
		$k^+$, one for $k^-$), and the intervals no longer collapse;
		this is exactly where $g^- \neq g^+$ and the signed
		sup-generating pairs become genuinely non-diagonal.
		
		\medskip
		\noindent\textbf{Architectural consequence in non-negative networks.}
		For a non-negative convolutional layer, the MMBB-Increasing erosion
		basis fully characterises the operator there is no
		additional expressive content in the MMBB-General pairing.
		The anti-dilation $\alpha_{g^+}$ carries no information
		beyond $g^-$ when $g^+ = g^-$.
		This means that neural network layers with
		non-negative weights (e.g., after a softplus or non-negative
		weight constraint) on non-negative signals are fully characterised by a single family
		of structuring functions, not a pair.
		For signed weights, the pairing becomes essential: the
		anti-dilation models the \emph{inhibitory} influence of
		negative kernel components, and $g^-\neq g^+$ measures the
		signed asymmetry of the kernel.
	\end{remark}
	
	\begin{remark}[Comparison: MMBB-Increasing decomposition vs.\ MMBB-General form]
		\label{rem:bb_vs_mmbb}
		Our theory offers two complementary representations of the same signed 
		convolution:
		\begin{itemize}[leftmargin=*]
			\item \textbf{MMBB-Increasing decomposition} ($k = k^+ - k^-$):
			two separate suprema of erosions, combined by subtraction.
			Each supremum operates on a different (positive or negative)
			part of the kernel, and the combination is a difference
			of two lattice expressions.
			This form is more natural for implementation as two
			separate morphological branches and gradient-based
			learning of each basis independently.
			\item \textbf{MMBB-General form} (sup-generating pairs):
			a single supremum, each element being a
			min-combination of an erosion (positive part) and an
			anti-dilation (negative part) acting \emph{jointly} on $f$.
			This form is more compact and avoids explicit sign
			decomposition; each pair $(g^-, g^+)$ models the joint
			influence of positive and negative kernel components
			at a given input configuration.
			It is the natural form for analysing the fixed-point
			structure of signed layers (cf.\ \Cref{sec:fixed_points}),
			since the joint min-combination encodes both the
			excitatory (erosion) and inhibitory (anti-dilation) effects
			of the kernel in a single operator.
		\end{itemize}
		The two forms are theoretically equivalent for absolutely summable kernels;
		the choice between them is guided by the computational implementation.
	\end{remark}

	\subsection{Characteristic matrix and virtual basis for
		quantised signals (binary, ternary, and low-bit
		networks)}
	\label{subsec:char_matrix}
	
	For quantised signals, the infinite basis $\Bas(k)$ restricts
	to a finite \emph{virtual basis}, computable via a structured
	matrix~\cite{khosravi1994}.
	
	\begin{definition}[Characteristic matrix]
		\label{def:char_matrix}
		For a finite kernel $k$ with support
		$\Spt(k) = \{x_1,\ldots,x_N\}$ and values
		$k_i = k(x_i)$, the \emph{characteristic matrix}
		$A_k \in \R^{N\times N}$ is
		\begin{equation}
			(A_k)_{\ell i} = \delta_{\ell i} - k_i,
			\label{eq:char_matrix}
		\end{equation}
		where $\delta_{\ell i}$ is the Kronecker delta.
	\end{definition}
	
	We collect the key algebraic properties of the characteristic
	matrix $A_k$, which underpin the geometric interpretation of the MMBB-Increasing basis 
	and the virtual basis computations.
	
	\begin{proposition}[Properties of $A_k$]
		\label{prop:char_matrix_props}
		Let $A_k \in \R^{N \times N}$ be defined by~\eqref{eq:char_matrix}
		for a kernel $k$ with $\sum k_i = 1$, $k_i \geq 0$.
		\begin{enumerate}[label=(\roman*)]
			\item $\mathrm{null}(A_k) = \mathrm{span}(\mathbf{1})$,
			where $\mathbf{1} = (1,\ldots,1)^T$.
			\item $\Image(A_k) = \{g \in \R^N :
			\langle k, g \rangle = 0\}$,
			the hyperplane orthogonal to $k$.
			\item $\Bas(k) \cong \Image(A_k) \cong \R^{N-1}$:
			the morphological basis is isomorphic to an
			$(N{-}1)$-dimensional subspace.
			\item For quantisation sets related by an affine
			transformation $\mathcal{V}_2 = a\mathcal{V}_1 + b$,
			the virtual bases satisfy
			$\mathcal{V}_2(k) = a \cdot \mathcal{V}_1(k)$
			(scale equivariance, translation-invariant).
		\end{enumerate}
	\end{proposition}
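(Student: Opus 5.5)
The whole proof rests on one rewriting of the characteristic matrix. Entrywise, $(A_k)_{\ell i}=\delta_{\ell i}-k_i$ is the matrix $A_k = I - \mathbf{1}k^T$. Hence for any $g\in\R^N$,
\[
A_k g = g - \langle k,g\rangle\,\mathbf{1}.
\]
Using $\langle k,\mathbf{1}\rangle=\sum k_i=1$, I will first note that $A_k^2 = I - 2\mathbf{1}k^T + \mathbf{1}(k^T\mathbf{1})k^T = A_k$. So $A_k$ is a (generally oblique) projection, and items (i) and (ii) will be read off from this.

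For (i), suppose $A_k g=0$. Then $g=\langle k,g\rangle\mathbf{1}$, so $g$ is constant. Conversely, $A_k(c\mathbf{1}) = c\mathbf{1}-c\langle k,\mathbf{1}\rangle\mathbf{1}=0$. This gives $\mathrm{null}(A_k)=\mathrm{span}(\mathbf{1})$.

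For (ii), I compute $\langle k, A_k g\rangle = \langle k,g\rangle - \langle k,g\rangle\langle k,\mathbf{1}\rangle = 0$, so $\Image(A_k)\subseteq k^\perp$. Rank--nullity together with (i) gives $\dim\Image(A_k)=N-1$. Since $k\neq 0$ (its entries sum to $1$), $\dim k^\perp=N-1$ as well, so the inclusion is an equality. Equivalently, every $h\in k^\perp$ is fixed by the projection, $A_k h = h$.

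For (iii), I will invoke \Cref{thm:conv_basis}. A basis element $g$ equals $-\infty$ exactly off the reflected support $-\Spt(k)$. It is therefore determined by the finite vector $(g(-x_1),\dots,g(-x_N))\in\R^N$, and the defining constraint $\sum_i k(x_i)g(-x_i)=0$ says precisely that this vector lies in $k^\perp$. The map $g\mapsto (g(-x_i))_i$ is thus a bijection from $\Bas(k)$ onto $k^\perp=\Image(A_k)\cong\R^{N-1}$. One point to check here is that the basis values are finite on the reflected support (they are finite real numbers by the characterisation). The only care needed is the index bookkeeping between $x_i$ and $-x_i$.

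For (iv), the main obstacle is that the virtual basis has not yet been formally defined at this point in the paper. I will adopt the Khosravi--Schafer definition $\mathcal{V}(k)=\{A_k v : v\in\mathcal{V}^N\}$, the image under $A_k$ of the quantised configurations. If $\mathcal{V}_2=a\mathcal{V}_1+b$, then every $w\in\mathcal{V}_2^N$ has the form $w=av+b\mathbf{1}$ with $v\in\mathcal{V}_1^N$, and by linearity and (i),
\[
A_k w = aA_kv + bA_k\mathbf{1} = aA_kv .
\]
Hence $\mathcal{V}_2(k)=a\cdot\mathcal{V}_1(k)$: the offset $b$ is annihilated, which is the translation invariance, and the factor $a$ scales, which is the scale equivariance. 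If the paper's later definition instead takes minimal elements among these images, I will add that scaling by $a>0$ preserves the pointwise order, so minimality transfers as well.
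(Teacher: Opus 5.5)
Your proof is correct and takes essentially the paper's route: both rest on the row structure $e_\ell - k$ of $A_k$, i.e.\ $A_k g = g - \langle k,g\rangle\mathbf{1}$, which gives $A_k\mathbf{1}=0$, $A_k h = h$ for $h\in k^\perp$, and $k^T A_k = 0$, with (iv) following from linearity. Your version is more complete than the paper's sketch on three points: you prove the reverse inclusion $\mathrm{null}(A_k)\subseteq\mathrm{span}(\mathbf{1})$ (the paper only checks $A_k\mathbf{1}=0$), you make the identification of $\Bas(k)$ with $k^\perp$ explicit via \Cref{thm:conv_basis}, and you show that the offset $b$ is annihilated by $A_k\mathbf{1}=0$, whereas the paper handles only the scale factor $a$.
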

	
	\begin{proof}
		(i)--(iii): The $\ell$-th row of $A_k$ is $e_\ell - k$
		(standard basis vector minus $k$), so
		$A_k \mathbf{1} = \mathbf{1} - (\sum_i k_i)\mathbf{1}
		= \mathbf{1} - \mathbf{1} = \mathbf{0}$.
		For $g$ with $\langle k, g\rangle = 0$: $(A_k g)_\ell =
		g_\ell - \langle k,g\rangle = g_\ell$, so $A_k g = g$
		and $g \in \Image(A_k)$.
		Conversely, for $g = A_k r$: $\langle k, g\rangle
		= \langle k, A_k r\rangle = \langle k,r\rangle -
		\langle k,k\rangle\langle k,r\rangle$... in fact
		$(A_k)^T k = k - (\sum k_i) k = k - k = 0$, so
		$\langle k, A_k r\rangle = (k^T A_k) r = 0$.
		Hence $\Image(A_k) = \ker(k^T) = \{g : \langle k,g\rangle =0\}
		\cong \R^{N-1}$.
		(iv) follows from linearity: $A_k(a\mathcal{M}) =
		a A_k\mathcal{M}$.
	\end{proof}

	\begin{theorem}[Khosravi--Schafer 1994~\cite{khosravi1994}]
		\label{thm:virtual_basis}
		Let $\Lq$ denote the class of signals quantised to a finite
		set $\mathcal{V} = \{v_1,\ldots,v_Q\}$.
		The \emph{virtual basis} $\mathcal{V}(k)$ of $\Phi_k$
		restricted to $\Lq$ satisfies
		$\mathcal{V}(k) = A_k(\mathcal{M})$,
		where $\mathcal{M}$ is any set of representatives of
		$\mathcal{V}^N$ modulo the null-space direction
		$(1,\ldots,1)^T$ of $A_k$.
		For all $f \in \Lq$,
		\begin{equation}
			(f*k)(x)
			= \max_{g \in \mathcal{V}(k)}
			\min_{x_j \in \Spt(k)}\{f(x-x_j) - g(-x_j)\}.
			\label{eq:virtual_basis_rep}
		\end{equation}
	\end{theorem}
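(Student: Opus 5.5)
The plan is to prove the two inequalities separately at each fixed $x$, with quantised signals, working in coordinates $f_j := f(x-x_j)$ for $j=1,\dots,N$ and $k_j := k(x_j)$. In these coordinates
\[
(f*k)(x)=\sum_j k_j f_j=\langle k,\mathbf f\rangle ,
\]
and the inner term of \eqref{eq:virtual_basis_rep} is $\min_j\{f_j-g_j\}$, where $g_j=g(-x_j)$.

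\textbf{Upper bound.} Every $g\in\mathcal V(k)=A_k(\mathcal M)$ lies in $\Image(A_k)$. By \Cref{prop:char_matrix_props}(ii) this means $\langle k,g\rangle=0$, so $g\in\Bas(k)$ by \eqref{eq:basis_def}. Since $k\ge 0$ and $\sum k_j=1$, a minimum never exceeds a convex combination:
\[
\min_j\{f_j-g_j\}\le\sum_j k_j(f_j-g_j)=\langle k,\mathbf f\rangle-\langle k,g\rangle=(f*k)(x).
\]
Hence the max over $\mathcal V(k)$ is at most $(f*k)(x)$. This is just the easy half of \Cref{thm:conv_basis}, restricted to a sub-basis.

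\textbf{Attainment.} This is the key step. I take the Maragos--Schafer optimiser from the proof of \Cref{thm:conv_basis}, namely $g^*_i=f_i-\sum_j k_jf_j$, i.e.\ $g^*=A_k\mathbf f$. It makes every term equal: $f_i-g^*_i=\langle k,\mathbf f\rangle$ for all $i$, so the min equals $(f*k)(x)$ exactly.

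It remains to show $g^*\in\mathcal V(k)$. Because $f\in\Lq$, the vector $\mathbf f=(f_1,\dots,f_N)$ lies in $\mathcal V^N$. Let $m\in\mathcal M$ be the chosen representative of the class of $\mathbf f$ modulo $\mathrm{span}(\mathbf 1)$, so $\mathbf f=m+c\mathbf 1$ for some $c$. By \Cref{prop:char_matrix_props}(i), $A_k\mathbf 1=0$, hence $A_k\mathbf f=A_km\in A_k(\mathcal M)=\mathcal V(k)$. So the maximum in \eqref{eq:virtual_basis_rep} is attained, which also justifies writing $\max$ rather than $\sup$.

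\textbf{Remaining points and the main obstacle.} The main obstacle is the bookkeeping in the step above, not any analysis. I must confirm that the representative set $\mathcal M$ covers every class of $\mathcal V^N$ hit by the local windows of $f$; this holds by definition, since $\mathcal M$ represents all of $\mathcal V^N$. I must also match the index convention $g(-x_j)$ with the role of $g^*(-x_i)$ in \Cref{thm:conv_basis}. Finiteness then follows: $|\mathcal V(k)|\le|\mathcal V^N/\mathbf 1|$, which gives the stated bound $Q^{N-1}$ once one notes that shifting by $\mathbf 1$ can identify vectors differing by a constant. That cardinality bound is not needed for \eqref{eq:virtual_basis_rep} itself.
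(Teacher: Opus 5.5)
Your proof is correct and is essentially the argument the paper relies on. The paper gives no separate proof of this theorem (it cites Khosravi--Schafer), but its sketch of \Cref{thm:conv_basis} uses exactly your ingredients: admissible $g$ lie in $\Image(A_k)=\{g:\langle k,g\rangle=0\}$, and $g^*=A_k\mathbf f$ attains the supremum; your remark that $A_k\mathbf 1=0$ gives $A_k(\mathcal M)=A_k(\mathcal V^N)\ni A_k\mathbf f$ is precisely what the quantised restriction needs.

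One caution about the aside you rightly call unnecessary: each class of $\mathcal V^N$ modulo $\mathrm{span}(\mathbf 1)$ has \emph{at most} $Q$ elements, so counting gives $|\mathcal V(k)|\ge Q^{N-1}$, not $\le$. For example, $\mathcal V=\{0,1\}$, $N=2$, $k=(\tfrac12,\tfrac12)$ yields the three elements $(0,0)$ and $\pm(\tfrac12,-\tfrac12)$, and binary inputs give $2^N-1$ elements in general, so do not assert the $Q^{N-1}$ bound.
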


	\bigskip
	
	The virtual basis theory of \Cref{thm:virtual_basis} becomes
	especially concrete when the signal alphabet $\mathcal{V}$ is
	small, as in binary and ternary neural networks.
	These architectures, which constrain weights and/or activations
	to $\{0,1\}$, $\{-1,+1\}$, $\{-1,0,+1\}$, or a small set of
	levels, have attracted significant interest for
	hardware-efficient inference.
	The MMBB-Increasing framework provides an exact algebraic characterisation
	of their representational capacity.
	Throughout this subsection we assume $k \geq 0$, $\sum k = 1$,
	$|\Spt(k)| = N$ unless otherwise stated.
	
	\medskip
	
	\paragraph{Binary signals ($Q=2$, $\mathcal{V} = \{0,1\}$).}
	For binary inputs $f \in \{0,1\}^E$ and a positive normalised
	kernel $k$, the convolution $(f*k)(x) = \sum_{i=1}^N k(x_i)
	f(x+x_i)$ is a \emph{weighted average} of binary values,
	taking values in the finite set
	$\bigl\{\sum_{i \in S} k(x_i) : S \subseteq \{1,\ldots,N\}\bigr\}
	\subset [0,1]$.
	This is \emph{not} in general binary-valued; it is $[0,1]$-valued
	with at most $2^N$ distinct output levels.
	
	The virtual basis $\mathcal{V}(k)$ over $\{0,1\}$ is the image
	$A_k(\{0,1\}^N)$ in $\R^{N-1}$, which has cardinality at most
	$2^{N-1}$.
	The MMBB-Increasing representation from \Cref{thm:virtual_basis} gives,
	for all $f \in \{0,1\}^E$:
	\begin{equation}
		(f*k)(x)
		= \max_{g \in \mathcal{V}(k)}
		\min_{j=1}^N \{f(x - x_j) - g(-x_j)\},
		\quad f \in \{0,1\}^E,
		\label{eq:binary_conv}
	\end{equation}
	an exact max-min expression with a \emph{finite} basis of at
	most $2^{N-1}$ elements.
	The sign convention follows Theorem~\ref{thm:virtual_basis}
	exactly: $g$ is evaluated at $-x_j$ (the reflected support
	points).
	
	\medskip
	
	\paragraph{Ternary signals ($Q=3$,
		$\mathcal{V} = \{-1,0,1\}$).}
	For ternary inputs (relevant to signed binary networks and
	features with zero-padding), the virtual basis
	$\mathcal{V}(k) = A_k(\{-1,0,1\}^N)$ has cardinality at most
	$3^{N-1}$.
	
	For a \emph{signed} kernel $k = G^+\bar k^+ - G^-\bar k^-$
	with ternary input $f \in \{-1,0,1\}^E$,
	Theorem~\ref{thm:general_kernel} gives the decomposition:
	\begin{equation}
		(f*k)(x)
		= G^+ \!\max_{h \in \mathcal{V}(\bar k^+)}\!
		\min_{i \in \Spt(k^+)}\!\{f(x{-}x_i) - h(-x_i)\}
		- G^- \!\max_{h \in \mathcal{V}(\bar k^-)}\!
		\min_{j \in \Spt(k^-)}\!\{f(x{-}x_j) - h(-x_j)\},
		\label{eq:ternary_conv}
	\end{equation}
	where $\mathcal{V}(\bar k^\pm) = A_{\bar k^\pm}(\{-1,0,1\}^{N^\pm})$
	are the virtual bases of the positive and negative normalised
	kernels.
	Each sub-basis has cardinality at most $3^{N^\pm - 1}$, and
	the virtual basis elements $h \in \mathcal{V}(\bar k^\pm)$ take
	values in $A_{\bar k^\pm}(\{-1,0,1\}^{N^\pm}) \subset \R$
	(generally not in $\{-1,0,1\}$).
	
	\medskip
	
	\paragraph{Low-bit quantisation ($Q = 2^B$).}
	For $B$-bit quantised activations ($Q = 4$ for 2-bit,
	$Q = 8$ for 3-bit, $Q = 16$ for 4-bit), the virtual basis
	$\mathcal{V}(k) = A_k(\mathcal{V}^N)$ has cardinality at most
	$Q^{N-1} = 2^{B(N-1)}$.
	
	\begin{proposition}[Representational capacity under quantisation]
		\label{prop:quantised_capacity}
		Let $\Phi_k$ be a convolution with $k \geq 0$, $\sum k = 1$,
		$|\Spt(k)| = N$, and inputs quantised to $Q = |\mathcal{V}|$
		levels.
		\begin{enumerate}[label=(\roman*)]
			\item The virtual basis has cardinality
			$|\mathcal{V}(k)| \leq Q^{N-1}$.
			\item Removing one bit of precision ($Q \to Q/2$) reduces
			the basis bound from $Q^{N-1}$ to $(Q/2)^{N-1}$, a
			reduction by a factor of $2^{N-1}$ per bit removed.
			\item For binary inputs ($Q=2$): $|\mathcal{V}(k)| \leq
			2^{N-1}$, and the MMBB representation~\eqref{eq:binary_conv}
			is \emph{exact} with this finite basis.
			\item For a $3\times3$ kernel ($N=9$):
			$Q=2$ gives $|\mathcal{V}(k)| \leq 2^8 = 256$;
			$Q=16$ (4-bit) gives $|\mathcal{V}(k)| \leq 16^8
			= 2^{32} \approx 4.3\times10^9$.
		\end{enumerate}
	\end{proposition}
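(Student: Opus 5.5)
The bound in (i) follows from the description of the virtual basis in \Cref{thm:virtual_basis}: $\mathcal{V}(k) = A_k(\mathcal{M})$, where $\mathcal{M}$ is a set of representatives of $\mathcal{V}^N$ modulo the null direction $\mathbf{1}$. By \Cref{prop:char_matrix_props}(i), $\mathrm{null}(A_k)=\mathrm{span}(\mathbf{1})$, so $A_k r = A_k r'$ whenever $r-r'\in\mathrm{span}(\mathbf{1})$. The cardinality of $\mathcal{V}(k)$ is therefore at most the number of classes of $\mathcal{V}^N$ under this relation, and it suffices to bound that number by $Q^{N-1}$. This is the only step needing an argument.

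To bound the classes, I use a normalisation map. First I try fixing one coordinate: every class containing an element with $r_N = v_1$ is represented by such an element, and there are at most $Q^{N-1}$ of those. However, a class need not contain any vector with $r_N=v_1$. Shifting by $c\mathbf{1}$ stays inside $\mathcal{V}^N$ only for special alphabets, for example equally spaced levels. So the counting must be done more carefully. I map each $r\in\mathcal{V}^N$ to its difference vector $(r_1-r_N,\dots,r_{N-1}-r_N)$, which determines $A_k r$: writing $r = r_N\mathbf{1} + (r - r_N\mathbf{1})$ gives $A_k r = A_k(r - r_N\mathbf{1})$, and the last coordinate of $r - r_N\mathbf{1}$ is $0$.

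So $A_k r$ depends only on the tuple of differences $(r_i - r_N)_{i<N}$. For a fixed value of $r_N$ there are at most $Q^{N-1}$ such tuples, but as $r_N$ varies the difference sets $\mathcal{V}-v$ change. I expect this to be the main obstacle. For general non-uniform alphabets the honest count gives at most $Q^N$, and $Q^{N-1}$ holds when $\mathcal{V}$ is an arithmetic progression, where classes are orbits of shifts. I would handle this by reading \Cref{thm:virtual_basis} as the paper states it: $\mathcal{M}$ is \emph{any} set of representatives. I would then argue that choosing a representative with a fixed last coordinate is possible for each class. If that fails, I would state (i) under the quotient-dimension interpretation, namely that $\mathcal{M}$ injects into the $(N-1)$-dimensional image, and note the uniform-grid assumption that covers the binary, ternary and $2^B$-level cases used here.

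Parts (ii)--(iv) are then arithmetic. For (ii), the ratio is $Q^{N-1}/(Q/2)^{N-1}=2^{N-1}$. For (iii), setting $Q=2$ gives the bound $2^{N-1}$, and exactness is \eqref{eq:binary_conv}, i.e.\ \Cref{thm:virtual_basis} specialised to $\mathcal{V}=\{0,1\}$, where the shift argument is clean because every class meets $\{r: r_N=0\}$ after possibly complementing. For (iv), with $N=9$ the bounds are $2^8=256$ and $16^8=2^{32}$.
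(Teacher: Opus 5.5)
Your reduction is sound and is the same as the paper's. Since $A_k r=r-\langle k,r\rangle\mathbf{1}$ and $\ker A_k=\mathrm{span}(\mathbf{1})$, the cardinality $|A_k(\mathcal{V}^N)|$ equals the number of classes of $\mathcal{V}^N$ modulo $\mathrm{span}(\mathbf{1})$. It also equals the number of distinct difference tuples $(r_i-r_N)_{i<N}$. The step that fails is the final count, and no normalisation can rescue it, because the bound $Q^{N-1}$ is false.

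Your own difference map shows this for $\mathcal{V}=\{0,1\}$: the tuples range over $\{0,1\}^{N-1}\cup\{-1,0\}^{N-1}$, which has $2^N-1$ elements. Concretely, for $N=2$ one gets $A_k(0,0)=A_k(1,1)=0$, $A_k(1,0)=(k_2,-k_1)$ and $A_k(0,1)=(-k_2,k_1)$, which is three elements, not two. In general each class has at most $Q$ elements, and the constants form a class of exactly $Q$. Any pattern using both the smallest and the largest level is a singleton class. Hence there are strictly more than $Q^{N-1}$ classes whenever $N,Q\geq 2$. For equally spaced levels the count is exactly $Q^N-(Q-1)^N$, taking one representative per class whose minimum is the smallest level. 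For a binary $3\times 3$ kernel this gives $511$, not $256$.

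This disposes of each of your repair routes:
\begin{itemize}
\item The hope that arithmetic progressions give $Q^{N-1}$ fails, because shifts leave the cube at the boundary.
\item The binary repair fails. Complementing $r\mapsto\mathbf{1}-r$ is not a shift by $c\mathbf{1}$; it sends $A_kr$ to $-A_kr$. So $(0,\dots,0,1)$ is a singleton class with $r_N=1$.
\item Choosing another set of representatives $\mathcal{M}$ changes nothing, since $|A_k(\mathcal{M})|$ is always the number of classes.
\item The quotient-dimension fallback bounds a dimension, not a cardinality.
\end{itemize}

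Nor can a smaller virtual basis inside $\Bas(k)$ keep \eqref{eq:binary_conv} exact. Take a pattern $r$ and any $g$ with $\langle k,g\rangle=0$. Since a minimum lies below a weighted mean and all $k_i>0$, we get $\min_i(r_i-g_i)\leq\langle k,r-g\rangle=\langle k,r\rangle$, with equality iff $g=A_kr$. So every element of $A_k(\mathcal{V}^N)$ is needed.

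The paper's own proof does not avoid this either. From ``each class has at most $Q$ elements'' it concludes ``at most $Q^N/Q$ classes''. That reverses the inequality: an upper bound on class size yields a lower bound on the number of classes. What is actually true, for $N,Q\geq 2$, is $Q^{N-1}<|\mathcal{V}(k)|\leq Q^N-Q+1$. Exactness in (iii) holds, but with the $(2^N-1)$-element basis $A_k(\{0,1\}^N)$. Parts (ii) and (iv) are correct only as arithmetic about the stated, incorrect bound.
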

	
	\begin{proof}
		(i) The characteristic matrix $A_k$ has rank $N-1$ with
		null space $\mathrm{span}(\mathbf{1})$
		(Proposition~\ref{prop:char_matrix_props}(i)(ii)).
		The map $A_k : \mathcal{V}^N \to \R^{N-1}$ identifies patterns
		$v, v' \in \mathcal{V}^N$ satisfying $A_k v = A_k v'$, i.e.,
		$v - v' \in \ker(A_k) = \mathrm{span}(\mathbf{1})$.
		Since $v - v' = c\,\mathbf{1}$ for $c \in \R$ requires
		$v_i - v'_i = c$ for all $i$, the equivalence classes under
		this identification have size at most $Q$ (the number of
		admissible constant shifts $c$ such that both $v$ and
		$v - c\mathbf{1}$ remain in $\mathcal{V}^N$).
		Hence $|\mathcal{V}(k)| \leq Q^N / Q = Q^{N-1}$.
		Parts (ii)--(iv) follow by substituting $Q = 2^B$ and
		the stated values of $N$ and $B$.
	\end{proof}
	
	\begin{remark}[Binary morphological networks]
		\label{rem:binary_morpho}
		Proposition~\ref{prop:quantised_capacity}(iii) has a direct
		architectural interpretation for binary-activation networks:
		a convolutional layer with a \emph{non-negative} kernel
		($k \geq 0$) and binary inputs ($f \in \{0,1\}^E$) can be
		represented exactly as a max-min (MMBB-Increasing) network with at most
		$2^{N-1}$ structuring elements.
		For a $3\times3$ kernel this is $256$ structuring elements.
		
		For binary \emph{weight} networks (weights in $\{-1,+1\}$,
		which gives a signed kernel $k = k^+ - k^-$), the MMBB
		representation requires the signed decomposition of
		Theorem~\ref{thm:general_kernel}: two separate max-min
		branches (one for $k^+$, one for $k^-$) with binary-virtual
		bases of sizes at most $2^{N^+-1}$ and $2^{N^--1}$
		respectively.
		This is strictly different from the non-negative case and
		requires the full signed formulation.
		
		The morphological neural networks studied by Ritter and
		Sussner~\cite{ritter1996} and Davidson and
		Ritter~\cite{davidson1992} correspond precisely to finite-basis
		MMBB-Increasing networks over binary alphabets; the MMBB framework of
		\Cref{thm:virtual_basis} provides the exact representation
		theorem that was implicit in that earlier work.
	\end{remark}
	
	\subsection{Deep learning perspective: learnable morphological
		bases}
	\label{subsec:dl_perspective}
	
	\Cref{thm:conv_basis,thm:virtual_basis} suggest a new class of
	neural network layers.
	Consider replacing the linear convolution in a standard CNN
	layer by a learnable finite subset of the morphological basis.
	Concretely, define the \emph{MMBB layer} with $L$ basis
	elements and $J$ groups:
	\begin{equation}
		\APMO(f)(x)
		= \sum_{j=1}^J w_j
		\Bigl[\max_{1 \leq i \leq L}
		(\varepsilon_{g_{i,j}} f)(x)\Bigr],
		\label{eq:mmbb_layer}
	\end{equation}
	where $\{g_{i,j}\}$ are the learnable structuring functions and
	$w_j$ are learnable weights.
	By \Cref{thm:general_kernel}, this is a truncated MMBB
	expansion of a general (signed) convolution, with
	approximation error controlled by the number $L$ of basis
	elements.
	
	\begin{proposition}[Approximation by MMBB layers]
		\label{prop:mmbb_approx}
		Let $\Phi_k$ be a convolution with absolutely summable kernel
		$k$.
		For any $\epsilon > 0$, there exists a finite collection
		$\{g_{i,j}\}_{i,j}$ of structuring functions and weights
		$\{w_j\}$ such that
		\[
		\sup_{f : \|f\|_\infty \leq 1}
		\Bigl\|
		\Phi_k(f) - \APMO(f)
		\Bigr\|_\infty < \epsilon.
		\]
	\end{proposition}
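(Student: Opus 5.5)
Our strategy is to reduce to the exact representation of \Cref{thm:general_kernel} and then truncate each of the two suprema to finitely many basis elements. The natural choice is $J=2$, $w_1 = G^+$, $w_2 = -G^-$, with $\{g_{i,1}\}\subset\Bas(\bar k^+)$ and $\{g_{i,2}\}\subset\Bas(\bar k^-)$. If $\Phi_k$ has infinite support, we first replace $k$ by a finitely supported truncation $k_M$ (restriction to a large box). By absolute summability, $\|\Phi_k f-\Phi_{k_M} f\|_\infty\le \|k-k_M\|_1\,\|f\|_\infty<\epsilon/3$ uniformly on the unit ball. This reduces the problem to finite supports of sizes $N^\pm$, where the bases are the explicit hyperplanes of \Cref{thm:conv_basis} and \Cref{prop:char_matrix_props}.

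For a normalised non-negative finite kernel $\bar k$ with support $\{x_1,\dots,x_N\}$, we use the maximiser from the proof of \Cref{thm:conv_basis}. At each $x$, the element $g^*(-x_i)=f(x-x_i)-\sum_j \bar k(x_j) f(x-x_j)$ attains the supremum. For $\|f\|_\infty\le 1$, this $g^*$ ranges over the compact set $K=\{g\in\Image(A_{\bar k}) : \|g\|_\infty\le 2\}$. We choose a finite $\eta$-net $\mathcal{B}=\{g_1,\dots,g_L\}\subset K$ in the sup-norm on $\R^N$. Each erosion $\varepsilon_g f(x)=\min_i\{f(x-x_i)-g(-x_i)\}$ is $1$-Lipschitz in $g$ for the sup-norm. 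Hence for every $x$,
\[
(f*\bar k)(x)-\eta\le \max_{g\in\mathcal{B}}(\varepsilon_g f)(x)\le (f*\bar k)(x).
\]
The upper inequality is the lower-approximation bound of \Cref{thm:mmbb}, since $\mathcal{B}\subset\Bas(\bar k)$. The lower inequality follows by taking the net point nearest to $g^*$. Because the net is independent of $f$ and $x$, the error bound is uniform.

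Finally we combine the pieces. Applying the net construction to $\bar k^+$ and $\bar k^-$ with $\eta=\epsilon/(3(G^++G^-))$ and subtracting gives $\|\Phi_{k_M}f-\APMO(f)\|_\infty\le (G^++G^-)\eta<\epsilon/3$. Together with the truncation step this yields a total error below $\epsilon$. If the two bases use different numbers of elements, we pad the smaller net by repetition so that both groups share a common $L$.

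The main obstacle is the uniformity in $f$. It rests on the explicit form of the maximising basis element, which places all needed basis elements in a fixed compact subset of the hyperplane. One delicate point is that the finitely supported truncation $k_M$ has slightly different gains $G^\pm_M$. This is harmless, since the argument is applied to $k_M$ itself. Another point is the $-\infty$ convention off the support, which we handle by restricting all infima to the finite support, as in \eqref{eq:conv_sup_inf}.
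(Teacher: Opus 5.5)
Your proof is correct and follows the paper's route. Both use the signed decomposition of Theorem~\ref{thm:general_kernel} with $J=2$, $w_1=G^+$, $w_2=-G^-$, and truncate each supremum to a finite sub-basis using continuity of $\varepsilon_g f$ in $g$ and boundedness of $f$; you also fill in two points the paper's sketch leaves implicit: the compact set $\{g\in\Image(A_{\bar k}) : \|g\|_\infty\le 2\}$ containing the maximisers, which makes the $\eta$-net and the uniformity in $f$ rigorous, and the $\ell^1$ truncation $k\to k_M$, which is needed because the paper's proof tacitly assumes $\Spt(k)$ finite while the statement only assumes absolute summability.
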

	
	\begin{proof}
		By \Cref{thm:general_kernel}, $\Phi_k(f) = G^+ \sup_{g^+}
		\varepsilon_{g^+} f - G^- \sup_{g^-} \varepsilon_{g^-} f$.
		The suprema range over (infinite) bases of $\bar{k}^\pm$.
		Since $\Spt(k)$ is finite, signals are bounded, and
		$\varepsilon_{g} f$ is a continuous (in $g$) family of
		operators in the sup-norm, the suprema can be approximated
		uniformly over $\|f\|_\infty \leq 1$ by finite sub-bases of
		cardinality $L = L(\epsilon)$.
		Setting $J = 2$, $w_1 = G^+$, $w_2 = -G^-$ and taking the
		appropriate finite sub-bases yields the result.
	\end{proof}
	
	This positions~\eqref{eq:mmbb_layer} as a well-founded,
	morphologically-grounded alternative to standard convolutional
	layers, and raises the open question of whether gradient-based
	learning efficiently discovers elements of $\Bas(k)$.

	\section{Complete Lattice Structure of Pooling and Morphological
		Pyramids}
	\label{sec:pyramids}
	
	We now address the algebraic structure of the down/up-sampling
	operations that define pooling, unpooling, strided convolution,
	and encoder--decoder skip connections in modern architectures.
	The key insight is that
	these operations are adjoint pairs in a complete inf-semilattice,
	and that the classical morphological pyramid theory of
	Goutsias and Heijmans~\cite{goutsias2000} provides the correct
	algebraic framework.

	\begin{table}[ht]
		\centering
		\caption{Principal results of \S\ref{sec:pyramids} (Pooling, Pyramids, Multi-scale). Results marked $(\star)$ are the paper's principal findings.}
		\label{tab:summary_pyramids}
		\renewcommand{\arraystretch}{1.38}
		\footnotesize
		\begin{tabular}{p{0.82cm}p{2.9cm}p{8.4cm}}
			\toprule
			\textbf{Ref.} & \textbf{Name} & \textbf{Statement and role} \\
			\midrule
			Prop~\ref{prop:adjoint_pyramid} $(\star)$ & Goutsias--Heijmans pyramid & Erosion-decimation $\varepsilon^{\downarrow R}_b$ and dilation-interpolation $\delta^{*\uparrow R}_b$ form an adjunction; composition is an opening. Unifies max-pooling, strided convolution, and the Laplacian pyramid. \\
			Thm~\ref{thm:maxpool} & Max-pooling as pyramid & Flat-structuring Heijmans pyramid ($b\equiv 0$) equals decimated max-pooling exactly; its adjoint is piecewise-constant unpooling by stride $R$. \\
			Cor~\ref{cor:strided} & Strided convolution & Strided convolution with stride $R$ and kernel $k$ is a Goutsias--Heijmans erosion pyramid; transposed convolution is its adjoint synthesis operator. \\
			Prop~\ref{prop:laplacian_skeleton} & $\:$ Laplacian pyramid as skeleton & Laplacian pyramid residues equal the morphological top-hat of the Gaussian pyramid; morphological justification for multi-scale encoder--decoder architectures. \\
			\bottomrule
		\end{tabular}
	\end{table}
	
	\subsection{Abstract pyramid structure and the adjunction
		condition}
	\label{subsec:abstract_pyramid}
	
	Let $\mathscr{L}^d$ denote the space of functions on a
	discrete grid $\Z^d$.
	
	\begin{definition}[Analysis--synthesis pyramid]
		\label{def:pyramid}
		A \emph{pyramid} is a pair of operators
		$(\psi^\downarrow, \psi^\uparrow)$ with
		$\psi^\downarrow:\mathscr{L}^d \to \mathscr{L}^d$ (analysis,
		coarser) and $\psi^\uparrow:\mathscr{L}^d \to \mathscr{L}^d$
		(synthesis, finer), satisfying the
		\emph{pyramid condition}:
		\begin{equation}
			\psi^\downarrow \circ \psi^\uparrow = \mathrm{Id}.
			\label{eq:pyramid_cond}
		\end{equation}
		The \emph{approximation sequence} is defined recursively by
		$f_0 = f$, $f_j = \psi^\downarrow(f_{j-1})$,
		and the \emph{detail signal} at level $j$ is
		$d_j = f_j - \psi^\uparrow(f_{j+1})$.
	\end{definition}
	
	For a downsampling factor $R \in \Z_{>1}$, define the
	\emph{downsampling} and \emph{upsampling} operators:
	\begin{align}
		(\sigma^\downarrow_R f)(n) &= f(R\cdot n),
		\label{eq:downsample}\\
		(\sigma^{\uparrow,c}_R f)(m) &=
		\begin{cases}
			f(n) & \text{if } m = R \cdot n,\\
			c     & \text{otherwise,}
		\end{cases}
		\label{eq:upsample}
	\end{align}
	for a fill value $c \in \Rbar$ (typically $c = -\infty$ or
	$c = 0$).
	
	The analysis and synthesis operators take the general form
	\[
	\psi^\downarrow(f) = \sigma^\downarrow_R(\eta(f)),
	\qquad
	\psi^\uparrow(f) = \xi(\sigma^{\uparrow,c}_R(f)),
	\]
	where $\eta$ and $\xi$ are pre- and post-processing operators.
	
	\begin{proposition}[Goutsias--Heijmans 2000~\cite{goutsias2000}]
		\label{prop:adjoint_pyramid}
		If $(\eta, \xi)$ form an adjunction (i.e., $\eta = \varepsilon$
		is an erosion and $\xi = \delta^*$ is its adjoint dilation),
		then $(\psi^\downarrow, \psi^\uparrow)$ is also an adjunction.
		Furthermore, if $\eta$ and $\xi$ are equivariant to
		translation by $\tau$, then for $R = 2$:
		\[
		\psi^\downarrow \circ \tau_2 = \tau \circ \psi^\downarrow,
		\qquad
		\psi^\uparrow \circ \tau = \tau_2 \circ \psi^\uparrow,
		\]
		where $\tau_2 = \tau \circ \tau$ denotes double translation.
	\end{proposition}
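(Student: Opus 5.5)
The plan is to show that the adjunction of $(\eta,\xi)$ transfers through the decimation/interpolation pair, and then to obtain the equivariance statement by chasing translations through the definitions. The key preliminary observation is that $(\sigma^\downarrow_R,\sigma^{\uparrow,c}_R)$ is itself a Galois connection between pointwise function lattices for a suitable fill value. The adjunction of Definition~\ref{def:adjunction} then composes.

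\textbf{Step 1: the sampling pair is a Galois connection.} To match the orientation of Definition~\ref{def:adjunction}, with erosion on the left, I will check the case where the lower operator is upsampling with fill $c=-\infty$:
\[
\sigma^{\uparrow,-\infty}_R f \le g \iff f(n)\le g(Rn)\ \forall n \iff f\le \sigma^\downarrow_R g .
\]
So $\sigma^{\uparrow,-\infty}_R$ is a dilation and $\sigma^\downarrow_R$ is its upper adjoint, which is an erosion. The proof will therefore use fill value $c=-\infty$, which is the value stated as typical.

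\textbf{Step 2: composition of adjunctions.} Take an adjunction between $\eta=\varepsilon$ and $\xi=\delta^*$, together with the sampling Galois connection of Step~1. Then $\psi^\downarrow=\sigma^\downarrow_R\circ\varepsilon$ and $\psi^\uparrow=\delta^*\circ\sigma^{\uparrow,-\infty}_R$ satisfy
\[
\psi^\uparrow(g)\le f \iff \sigma^{\uparrow}_R g\le \varepsilon(f) \iff g\le \sigma^\downarrow_R\varepsilon(f)=\psi^\downarrow(f).
\]
Each equivalence is one of the two adjunctions. Once the statement is read with the roles correctly oriented, this is exactly the claimed adjunction between $\psi^\downarrow$ and $\psi^\uparrow$. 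Proposition~\ref{prop:adjunction_props} then gives, as a byproduct, that $\psi^\uparrow\psi^\downarrow$ is an opening and $\psi^\downarrow\psi^\uparrow$ is a closing.

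\textbf{Step 3: equivariance.} Let $\tau$ be the unit translation, $\tau f(n)=f(n-1)$, and $\tau_2=\tau\circ\tau$. The sampling maps satisfy
\[
\sigma^\downarrow_2\circ\tau_2=\tau\circ\sigma^\downarrow_2,
\]
since $f(2n-2)=f(2(n-1))$. Likewise $\sigma^{\uparrow}_2\circ\tau=\tau_2\circ\sigma^{\uparrow}_2$, because the upsampled support shifts by $2$ and the fill value is translation invariant. I then combine these with $\eta\tau_2=\tau_2\eta$ and $\xi\tau_2=\tau_2\xi$. These follow from translation equivariance of $\eta$ and $\xi$ by iterating once. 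This gives
\[
\psi^\downarrow\tau_2=\sigma^\downarrow_2\tau_2\eta=\tau\psi^\downarrow,
\qquad
\psi^\uparrow\tau=\xi\tau_2\sigma^\uparrow_2=\tau_2\psi^\uparrow .
\]

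\textbf{Main obstacle.} The difficulty is bookkeeping, not depth. The statement labels $\psi^\downarrow$ as containing the erosion, but in the adjunction it plays the upper role in the sense of Definition~\ref{def:adjunction}. I must also fix the fill value $c=-\infty$; with $c=0$ the upsampling is not a dilation, since it fails to preserve the empty supremum. I will state these conventions explicitly at the start. For general $R$, the same computation gives $\tau_R$ in place of $\tau_2$.
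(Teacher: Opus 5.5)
Your proof is correct. The paper does not prove this proposition; it only cites Goutsias--Heijmans. Your route is the standard argument from that reference: compose the sampling adjunction $\sigma^{\uparrow,-\infty}_R \dashv \sigma^\downarrow_R$ with $\delta^*\dashv\varepsilon$, then push $\tau_2$ through $\sigma^\downarrow_2$ and $\sigma^{\uparrow,c}_2$.

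The conventions you fix are necessary, not cosmetic.
\begin{itemize}
\item For $c=0$ the statement fails already with $\eta=\xi=\mathrm{id}$. Adjoints are unique, and $\sigma^{\uparrow,0}_R$ is neither the lower adjoint $\sigma^{\uparrow,-\infty}_R$ nor the upper adjoint $\sigma^{\uparrow,+\infty}_R$ of $\sigma^\downarrow_R$. So $(\sigma^\downarrow_R,\sigma^{\uparrow,0}_R)$ is not an adjunction in either order.
\item The orientation clash comes from Definition~\ref{def:adjunction}, which puts the erosion on the lower (sup-preserving) side. That is inconsistent with Proposition~\ref{prop:adjunction_props}(i). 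In the standard convention, your Step~2 shows exactly that $\psi^\downarrow$ is the erosion and $\psi^\uparrow$ its adjoint dilation.
\end{itemize}

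You should also state the dual case. The paper later applies this proposition with $c=+\infty$ and $\eta=\delta_b$ a dilation, for max-pooling in \Cref{thm:maxpool}. That case follows from the same two-line composition, now using the other sampling adjunction $\sigma^\downarrow_R \dashv \sigma^{\uparrow,+\infty}_R$. It gives $\sigma^\downarrow_R\,\delta_b \dashv \varepsilon^*_b\,\sigma^{\uparrow,+\infty}_R$. Your translation computation in Step~3 is independent of $c$, so it covers both cases unchanged.
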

	
	This proposition shows that adjoint pyramid pairs are precisely
	those that arise from morphological adjunctions, and that they
	inherit the equivariance properties essential for image
	processing.
	
	\subsection{Three canonical morphological pyramids}
	\label{subsec:canonical_pyramids}
	
	We present three classical constructions
	from~\cite{goutsias2000}, and show that max-pooling is a
	special case of the third.
	
	\subsubsection{Goutsias--Heijmans erosion pyramid}
	\label{subsubsec:gh_erosion}
	
	Let $b:\Z^d \to \Rbar$ be a centred structuring function with
	support $W$.
	The \emph{Goutsias--Heijmans erosion pyramid} is defined by:
	\begin{align}
		\decEros{R}{b}(f)(x)
		&= \sigma^\downarrow_R(\varepsilon_b(f))(x),
		\quad\text{where }\;
		(\varepsilon_b f)(x)
		= \min_{y \in W}\{f(y) - b(y-x)\};
		\label{eq:gh_eros}\\
		\intDil{R}{b}(f)(x)
		&= \delta^*_b(\sigma^{\uparrow,-\infty}_R(f))(x).
		\label{eq:gh_dil}
	\end{align}
	
	\begin{proposition}[Reconstruction by opening]
		\label{prop:gh_opening}
		The reconstruction operator of the Goutsias--Heijmans erosion
		pyramid is a morphological opening:
		\begin{equation}
			\gamma^{\downarrow R \uparrow}_b(f)
			= \intDil{R}{b}\bigl(\decEros{R}{b}(f)\bigr)
			= \delta^*_b\bigl(\sigma^{\uparrow,-\infty}_R
			(\sigma^\downarrow_R(\varepsilon_b(f)))\bigr).
			\label{eq:gh_opening}
		\end{equation}
		This is anti-extensive ($\gamma^{\downarrow R \uparrow}_b(f)
		\leq f$) and idempotent.
	\end{proposition}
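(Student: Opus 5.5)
The plan is to show that the analysis operator $\decEros{R}{b}=\sigma^\downarrow_R\circ\varepsilon_b$ and the synthesis operator $\intDil{R}{b}=\delta^*_b\circ\sigma^{\uparrow,-\infty}_R$ form an adjunction in the pointwise lattice. Once that is done, \Cref{prop:adjunction_props}(v) gives at once that $\gamma^{\downarrow R\uparrow}_b$ is increasing, anti-extensive and idempotent. This is the Goutsias--Heijmans mechanism of \Cref{prop:adjoint_pyramid}. I would, however, verify it directly, because the proposition as stated there is phrased loosely.

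\textbf{Step 1: the sampling pair is an adjunction.} I would check that $(\sigma^\downarrow_R,\sigma^{\uparrow,-\infty}_R)$ is an adjunction with $\sigma^{\uparrow,-\infty}_R$ as the lower (dilation) side. Indeed, $\sigma^{\uparrow,-\infty}_R g\le f$ holds iff $g(n)\le f(Rn)$ for all $n$, since the off-lattice constraints $-\infty\le f(m)$ are vacuous. This in turn is equivalent to $g\le\sigma^\downarrow_R f$. So $\sigma^{\uparrow,-\infty}_R$ is the dilation and $\sigma^\downarrow_R$ the erosion in the sense $\delta(g)\le f\iff g\le\varepsilon(f)$. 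The choice of fill value $c=-\infty$ is essential here; any finite $c$ would break the equivalence.

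\textbf{Step 2: compose with the max-plus pair.} By \Cref{subsec:maxplus}, $(\varepsilon_b,\delta^*_b)$ is an adjunction: $\delta^*_b(h)\le f\iff h\le\varepsilon_b(f)$. Here I must fix the reading of the paper's indexing in \eqref{eq:gh_eros}, namely $\varepsilon_b f(x)=\min_{y\in W}\{f(y)-b(y-x)\}$, and take $\delta^*_b$ to be its adjoint dilation $\delta^*_b h(y)=\max_x\{h(x)+b(y-x)\}$. Adjunctions compose, so chaining the two equivalences gives
\[
\intDil{R}{b}(g)\le f\iff\sigma^{\uparrow,-\infty}_R g\le\varepsilon_b f\iff g\le\sigma^\downarrow_R\varepsilon_b f=\decEros{R}{b}(f).
\]

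\textbf{Step 3: conclude.} With the lower adjoint $\delta:=\intDil{R}{b}$ and the upper adjoint $\varepsilon:=\decEros{R}{b}$, the composite $\delta\circ\varepsilon$ is anti-extensive, because $\varepsilon f\le\varepsilon f$ gives $\delta\varepsilon f\le f$. It is idempotent by the standard identity $\varepsilon\delta\varepsilon=\varepsilon$. That identity follows from extensivity of $\varepsilon\delta$ applied to $\varepsilon f$, together with anti-extensivity and monotonicity of $\varepsilon$. This is exactly \Cref{prop:adjunction_props}(v).

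The main obstacle is bookkeeping rather than mathematics. I expect two pitfalls. The first is getting the orientation of each adjunction right, since the paper's \Cref{def:adjunction} uses the opposite inequality convention from the one I need. The second is making sure the reflection $b$ versus $b^*$ in $\delta^*_b$ matches the nonstandard indexing in \eqref{eq:gh_eros}. I would state the adjunction in the form $\delta(g)\le f\iff g\le\varepsilon(f)$ and derive $\delta^*_b$ from it, rather than rely on \Cref{def:adjunction} literally.
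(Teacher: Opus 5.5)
Your proof is correct and follows the paper's route. The paper gives no separate proof of this proposition: it relies on the Goutsias--Heijmans adjunction of \Cref{prop:adjoint_pyramid} followed by \Cref{prop:adjunction_props}(v), the same two-step argument it writes out for \Cref{prop:skip_adjoint}, whereas you verify that adjunction directly by composing the sampling pair (which does force the fill value $-\infty$) with the max-plus pair. Fixing the orientation $\delta(g)\le f\iff g\le\varepsilon(f)$ and deriving $\delta^*_b$ from \eqref{eq:gh_eros}, rather than quoting \Cref{def:adjunction} or the $(\varepsilon_b,\delta_{b^*})$ pairing of \Cref{subsec:maxplus} literally, is the right call; in the idempotency step, the anti-extensivity you need is that of $\delta\circ\varepsilon$ (already established), not of $\decEros{R}{b}$, which maps the fine grid to the coarse one.
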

	
	\subsubsection{Heijmans dilation pyramid}
	\label{subsubsec:heijmans_dilation}
	
	The dual construction replaces erosion by dilation and uses
	$c = +\infty$:
	\begin{align}
		\decDil{R}{b}(f)(x)
		&= \sigma^\downarrow_R(\delta_b(f))(x),
		\quad\text{where }\;
		(\delta_b f)(x)
		= \max_{y \in W}\{f(y) + b(x-y)\};
		\label{eq:h_dil}\\
		\intEros{R}{b}(f)(x)
		&= \varepsilon^*_b(\sigma^{\uparrow,+\infty}_R(f))(x).
		\label{eq:h_eros}
	\end{align}
	
	\begin{proposition}[Reconstruction by closing]
		\label{prop:heijmans_closing}
		The reconstruction operator of the Heijmans dilation pyramid
		is a morphological closing:
		\begin{equation}
			\varphi^{\downarrow R \uparrow}_b(f)
			= \varepsilon^*_b\bigl(\sigma^{\uparrow,+\infty}_R
			(\sigma^\downarrow_R(\delta_b(f)))\bigr).
		\end{equation}
		This is extensive ($f \leq \varphi^{\downarrow R \uparrow}_b
		(f)$) and idempotent.
	\end{proposition}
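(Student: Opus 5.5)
The natural route is to exhibit $\varphi^{\downarrow R \uparrow}_b$ as a composition $\varepsilon\circ\delta$ coming from an adjunction $(\varepsilon,\delta)$. Extensivity and idempotency then follow from Proposition~\ref{prop:adjunction_props}(vi), with no need to compute anything directly. So the whole argument comes down to proving that
\[
\bigl(\intEros{R}{b},\; \decDil{R}{b}\bigr)
=\bigl(\varepsilon^*_b\circ\sigma^{\uparrow,+\infty}_R,\;\sigma^\downarrow_R\circ\delta_b\bigr)
\]
is an adjunction. It goes from the fine lattice to the coarse one and back, with $\decDil{R}{b}$ as the lower adjoint (the dilation) and $\intEros{R}{b}$ as the upper adjoint (the erosion). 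Since $\varphi^{\downarrow R \uparrow}_b=\intEros{R}{b}\circ\decDil{R}{b}$, this is the closing pattern $\varepsilon\circ\delta$. I will use the Galois-connection form of Definition~\ref{def:adjunction}, oriented so that the lower adjoint is $\delta$.

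First I split the adjunction into two pieces and compose them. Adjunctions compose: if $(\varepsilon_1,\delta_1)$ and $(\varepsilon_2,\delta_2)$ are adjunctions with compatible domains, then $(\varepsilon_1\varepsilon_2,\delta_2\delta_1)$ is one as well. This is a one-line chain of equivalences.

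\emph{Piece 1.} $(\varepsilon^*_b,\delta_b)$ is an adjunction on $\Fun(\Z^d,\Rbar)$. I read $\varepsilon^*_b$ as the erosion adjoint to $\delta_b$ (reflected structuring function), which is the standard max-plus adjunction of \S\ref{subsec:maxplus}.

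\emph{Piece 2.} $(\sigma^{\uparrow,+\infty}_R,\sigma^\downarrow_R)$ is an adjunction between the coarse and fine grids. This is a direct pointwise check. $\sigma^\downarrow_R f\le g$ means $f(Rn)\le g(n)$ for all $n$. That is exactly $f\le\sigma^{\uparrow,+\infty}_R g$, because off the lattice $R\Z^d$ the upsampled function equals $+\infty$ and the constraint is vacuous there.

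Composing the two pieces gives $\sigma^\downarrow_R\delta_b f\le g \iff \delta_b f\le \sigma^{\uparrow,+\infty}_R g \iff f\le \varepsilon^*_b\sigma^{\uparrow,+\infty}_R g$. This is precisely the adjunction we want, and it is the dual of Proposition~\ref{prop:adjoint_pyramid}. Alternatively, the whole statement follows from Proposition~\ref{prop:gh_opening} by the duality $f\mapsto -f$. Negation swaps $\min$ with $\max$ and the fill value $-\infty$ with $+\infty$, and it turns an opening into a closing. I would mention this as a cross-check.

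Finally, Proposition~\ref{prop:adjunction_props}(vi) gives directly that $\varepsilon\circ\delta$ is increasing, extensive and idempotent. For a self-contained account I would also write out the two standard facts behind it. Extensivity comes from taking $g=\delta f$ in the adjunction. Idempotency comes from the identity $\delta\varepsilon\delta=\delta$, together with monotonicity.

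The main obstacle is conventions rather than mathematics. The paper's $\varepsilon^*_b$ has to be the true adjoint of $\delta_b$, with the correct reflection $b^*$ and with the index convention $b(x-y)$ of \eqref{eq:h_dil}. Also, $\Rbar$ arithmetic with $+\infty$ fill values must follow the erosion convention, $+\infty-(-\infty)=+\infty$, so that points outside the support of $b$ and off-grid points impose no constraint. I would fix these conventions explicitly before running the chain of equivalences.
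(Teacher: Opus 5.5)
Your proof is correct and follows the same route the paper relies on. The paper gives no separate proof: it treats the statement as the dilation-side dual of Proposition~\ref{prop:adjoint_pyramid} combined with Proposition~\ref{prop:adjunction_props}(vi), and your factorisation of $(\intEros{R}{b},\decDil{R}{b})$ into the max-plus adjunction and the sampling adjunction $(\sigma^{\uparrow,+\infty}_R,\sigma^\downarrow_R)$ simply makes that transfer explicit. You were also right to take $\delta$ as the lower adjoint ($\delta f\le g\iff f\le\varepsilon g$), since that is the orientation under which Proposition~\ref{prop:adjunction_props}(vi) yields a closing, whereas Definition~\ref{def:adjunction} as printed swaps the roles.
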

	
	\subsubsection{Max-pooling as a flat-structuring-function
		Heijmans pyramid}
	\label{subsubsec:maxpool}
	
	\begin{theorem}[Max-pooling as morphological pyramid]
		\label{thm:maxpool}
		Let $b \equiv 0$ on a rectangular window
		$W_{R\times R} = \{0,\ldots,R-1\}^2$ (flat structuring
		function). Then the Heijmans dilation pyramid with this $b$ is
		exactly max-pooling:
		\begin{equation}
			\MaxPool{R}(f)(x)
			= \decDil{R}{b}(f)(x)
			= \max_{y \in W_{R\times R}} f(Rx - y).
			\label{eq:maxpool_morpho}
		\end{equation}
		Its adjoint (unpooling) is the interpolation operator
		\begin{equation}
			\intEros{R}{b}(f)(m)
			= \begin{cases}
				f(n) & \text{if } m = R \cdot n,\\
				+\infty & \text{otherwise.}
			\end{cases}
		\end{equation}
	\end{theorem}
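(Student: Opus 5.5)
The proof is a direct unfolding of the definitions \eqref{eq:h_dil}--\eqref{eq:h_eros} with the flat structuring function, followed by a computation of the upper adjoint. There are two parts: the identification of the analysis operator with max-pooling, and the identification of its adjoint.

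\textbf{Analysis operator.} Set $b\equiv 0$ on $W=W_{R\times R}$ and $b\equiv-\infty$ off $W$. Then the dilation becomes
\[
(\delta_b f)(x)=\sup_{y}\{f(x-y)+b(y)\}=\max_{y\in W}f(x-y).
\]
This is the sliding-window maximum; only the terms with $b(y)=0$ survive, and no $+\infty-\infty$ ambiguity arises for real-valued $f$. Applying $\sigma^\downarrow_R$ evaluates it on the coarse grid, giving $\max_{y\in W}f(Rx-y)$, which is \eqref{eq:maxpool_morpho}. The window $\{0,\dots,R-1\}^2$ tiles $\Z^2$ under the shifts $Rx$. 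Hence these are the non-overlapping $R\times R$ blocks of standard max-pooling, up to the reflection convention $y\mapsto -y$, which I will note explicitly.

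\textbf{Adjoint.} By Proposition~\ref{prop:adjoint_pyramid}, whose dual form I apply to the dilation $\delta_b$ and its erosion partner $\varepsilon^*_b=\varepsilon_{b^*}$, the pair $(\decDil{R}{b},\intEros{R}{b})$ is an adjunction. The synthesis operator is therefore the composition of the adjoints, taken in reverse order: the adjoint of $\sigma^\downarrow_R$ in the pointwise lattice, followed by $\varepsilon^*_b$. I will compute both.

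The first adjoint is exactly $\sigma^{\uparrow,+\infty}_R$. This follows because $\sigma^\downarrow_R f\le g$ constrains only the values on $R\Z^2$, so $f\le \sigma^{\uparrow,+\infty}_R g$ is the equivalent condition. The stated formula for $\intEros{R}{b}(f)(m)$, namely $f(n)$ at $m=Rn$ and $+\infty$ otherwise, is this interpolation step.

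To justify the stated formula, I will verify the adjunction inequality directly, restricted to $\sigma^{\uparrow,+\infty}_R$. The claim is that
\[
\max_{y\in W}f(Rn-y)\le g(n)\ \ \forall n \iff f(m)\le\bigl(\sigma^{\uparrow,+\infty}_R g\bigr)\text{-type bound }\forall m.
\]
The left side says that every $m$ in block $n$ has $f(m)\le g(n)$. For this to hold, the relevant upper bound must be $g(n)$ on the entire block $Rn-W$. That is precisely what $\varepsilon_{b^*}$ produces from $\sigma^{\uparrow,+\infty}_R g$ when $b$ is flat: it takes the minimum over the window, which sees exactly one finite sample $g(n)$.

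\textbf{Main obstacle.} I expect the main difficulty to be reconciling the full composite adjoint, which is piecewise-constant upsampling, with the displayed formula, which shows only the $+\infty$-filled upsampling. I would present the displayed operator as the interpolation stage of $\intEros{R}{b}$, that is, the stage before $\varepsilon^*_b$, and remark that composing it with the flat erosion yields the block-constant unpooling. I would then check that the composite $\intEros{R}{b}\circ\decDil{R}{b}$ is a closing: it dominates $f$ blockwise and is idempotent, as Proposition~\ref{prop:heijmans_closing} requires.
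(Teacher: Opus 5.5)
Your identification of the analysis operator (flat dilation followed by $\sigma^\downarrow_R$) is exactly the paper's argument. For the adjoint, the paper only invokes Proposition~\ref{prop:adjoint_pyramid} with $c=+\infty$. You instead compute the composite upper adjoint, and your computation is correct: $\MaxPool{R}(f)\le g$ holds iff $f(m)\le g(\lceil m/R\rceil)$ for every $m$ (componentwise ceiling). So the adjoint is block-constant unpooling.

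The gap is in how you handle the discrepancy. It is not an obstacle to be ``reconciled'': the second display of the statement is false for $R>1$. Calling it ``the interpolation stage of $\intEros{R}{b}$'' does not rescue it. By \eqref{eq:h_eros}, $\intEros{R}{b}=\varepsilon^*_b\circ\sigma^{\uparrow,+\infty}_R$, and a flat erosion over a window that is a complete residue system mod $R$ replaces every $+\infty$ by the finite sample of its block. A one-line counterexample: take $R=2$, $g\equiv 0$, and $f=0$ on $2\Z^2$, $f=1$ elsewhere. Then $f\le\sigma^{\uparrow,+\infty}_2 g$, yet $\MaxPool{2}(f)\equiv 1\not\le g$.

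So the $+\infty$-filled upsampling is the upper adjoint of $\sigma^\downarrow_R$ alone, not of $\MaxPool{R}$. The paper's proof skips precisely the erosion stage you computed. The paper itself later describes the adjoint as piecewise-constant upsampling (Proposition~\ref{prop:apd_factorisation}, Remark~\ref{rem:apd_adjoint}). You should state the corrected formula together with the counterexample, rather than reinterpret the displayed one.

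Your planned closing check confirms this. With the block-constant adjoint, the composite returns the block maximum, which is extensive and idempotent. With the displayed operator, the composite is $+\infty$ off the grid, and a second application gives $+\infty$ everywhere, so it is not idempotent.

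There is also a convention detail to fix in the same pass. With the paper's erosion formula $(\varepsilon_b g)(x)=\inf_y\{g(x+y)-b(y)\}$, the upper adjoint of $f\mapsto\max_{y\in W}f(x-y)$ is $\inf_{y\in W}g(x+y)$. That is $\varepsilon_b$ with the \emph{unreflected} flat $b$, and it delivers $g(n)$ on the block $Rn-W$. Taking $\varepsilon_{b^*}$ literally, as you write, gives $\inf_{y\in W}g(x-y)$, which returns $g(\lfloor m/R\rfloor)$. That is the adjoint for the blocks $Rn+W$, not those of $\MaxPool{R}$; the floor in \eqref{eq:apd_adjoint} has the same slip. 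Your direct ``block'' verification, $\MaxPool{R}(f)\le g \iff f(m)\le g(n)$ for all $m\in Rn-W$, sidesteps this and is the cleanest proof of the corrected statement.
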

	
	\begin{proof}
		With $b \equiv 0$, $(\delta_b f)(x) = \max_{y \in W}f(x-y)$
		is a flat dilation (max-filter).
		Applying $\sigma^\downarrow_R$ gives $\max_{y \in
			W_{R\times R}} f(Rx - y)$, which is the standard
		max-pooling operation with pool size $R$ and stride $R$.
		The adjoint follows from \Cref{prop:adjoint_pyramid} with $c =
		+\infty$.
	\end{proof}
	
	\begin{corollary}[Strided convolution and dilated convolution]
		\label{cor:strided}
		Strided convolution with stride $R$ and kernel $k$ corresponds
		to the Goutsias--Heijmans erosion pyramid with
		pre-processing the convolution 
		$\psi^\downarrow(f) = \sigma^\downarrow_R(f * k)$.
		The corresponding transposed convolution (unpooling) is the
		adjoint synthesis operator of \Cref{eq:gh_dil}.
	\end{corollary}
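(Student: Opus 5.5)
The plan is to reduce the statement to \Cref{prop:adjoint_pyramid}, using the convolution's own structure as the pre-processing step $\eta$. First we identify the analysis operator. By \Cref{prop:conv_conditions}, for $k\geq 0$ with $\sum k = 1$ and finite support, $\Phi_k(f)=f*k$ is translation-invariant, increasing and upper semi-continuous. \Cref{thm:conv_basis} then writes it as $\sup_{g\in\Bas(k)}\varepsilon_g f$. Strided convolution is literally $\sigma^\downarrow_R(f*k)$, i.e.\ the analysis operator $\psi^\downarrow=\sigma^\downarrow_R\circ\eta$ with $\eta=\Phi_k$. Matching this with \eqref{eq:gh_eros} is a matter of replacing the single erosion $\varepsilon_b$ by $\eta=\Phi_k$. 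This is purely syntactic and requires only the factorisation $\psi^\downarrow=\sigma^\downarrow_R\circ\eta$ of \Cref{def:pyramid}.

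Second, we produce the adjoint. The key point is that $\Phi_k$ must be an erosion, i.e.\ commute with infima, in a suitable complete lattice. In the pointwise lattice this fails: a supremum of erosions is not an erosion. I would therefore work in the lattice in which the paper's abstract says the convolution is an erosion, namely the Fourier inf-semilattice, where $\Phi_k=\ConvEros{k}$. There the adjoint $\delta^{*}$ exists by the general theory: it is defined as $\delta^*(h)=\bigvee\{f:\Phi_k f\le h\}$, and the inf-preservation of $\Phi_k$ guarantees the adjunction. Next we check that upsampling $\sigma^{\uparrow,c}_R$ is the adjoint of $\sigma^\downarrow_R$ for the appropriate fill value. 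The fill value is $c=-\infty$ (the bottom element) in the order used, which is a one-line check on the defining inequality. \Cref{prop:adjoint_pyramid} then gives that $\psi^\uparrow=\delta^*\circ\sigma^{\uparrow,c}_R$ is adjoint to $\psi^\downarrow$. Translation covariance for $R=2$ follows from the same proposition, since $\Phi_k$ commutes with translations.

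Third, we identify $\delta^*\circ\sigma^{\uparrow}_R$ with transposed convolution, namely zero-insertion upsampling followed by convolution with the reflected kernel $\check k(x)=k(-x)$. In the Fourier picture, convolution by $k$ is multiplication by $\hat k$. The adjoint of a multiplication operator with respect to the order structure used for the lattice is multiplication by $\overline{\hat k}$, which is the transform of $\check k$. This is exactly what the transposed convolution computes after upsampling (with $c=0$ playing the role of the neutral fill in the linear setting).

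The main obstacle is this second-to-third step: making precise in which lattice $\Phi_k$ is genuinely an erosion, and checking that the lattice-theoretic adjoint there coincides with the linear transpose rather than with a Wiener-type pseudo-inverse. If this cannot be closed exactly, I would fall back on stating the correspondence at the level of the factorisation of \Cref{def:pyramid}, with $\eta=\Phi_k$ and $\xi$ its adjoint. Under that fallback the transposed convolution is the synthesis operator up to the choice of fill value and lattice.
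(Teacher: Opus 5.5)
Your reduction is the one the paper intends: the corollary is stated without proof, as an instance of \Cref{prop:adjoint_pyramid} with $\eta=\ConvEros{k}$. Your factorisation $\psi^\downarrow=\sigma^\downarrow_R\circ\Phi_k$ is fine. The step you flag as the main obstacle, however, does not just resist closing: your third step is false. In $(L^n,\leq_F)$ the order is the per-frequency complex order of \Cref{prop:complex_order} (same phase, smaller modulus, bottom $0$). For $K(\omega)\neq 0$, multiplication by $K(\omega)$ is an order-automorphism of $(\mathbb{C},\leq_{\mathbb{C}})$, so $Ka\leq_{\mathbb{C}}b\iff a\leq_{\mathbb{C}}b/K$. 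The lattice adjoint is therefore the inverse filter $\hat h\mapsto\hat h/K$, i.e.\ the $\epsilon\to0$ limit of the Wiener filter used in \Cref{thm:two_openings}(ii), not multiplication by $\overline{K}$. Indeed $a\leq_{\mathbb{C}}\overline{K}b$ means $|a|\leq|K|\,|b|$, whereas the adjunction requires $|a|\leq|b|/|K|$. The two agree only for all-pass kernels ($|K|\equiv1$, e.g.\ a pure shift). Where $K(\omega)=0$ there is no adjoint in either direction: every $\hat f(\omega)$, of any phase, satisfies $K\hat f\leq_{\mathbb{C}}\hat h$, and $(\mathbb{C},\leq_{\mathbb{C}})$ has no top. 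Transposed convolution is instead the $\ell^2$ adjoint, $\langle\sigma^\downarrow_R(f*k),g\rangle=\langle f,\check k*\sigma^{\uparrow,0}_Rg\rangle$, which is where your fill value $c=0$ really comes from. That is a linear adjoint, not a Galois connection, and $\check k*\sigma^{\uparrow,0}_R(\sigma^\downarrow_R(f*k))$ is not an opening.

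Your second step has two further defects. First, $\bigvee\{f:\Phi_kf\leq h\}$ defines an upper adjoint, which exists only for sup-preserving maps. An inf-preserving map has the lower adjoint $\bigwedge\{f:h\leq\Phi_kf\}$, and in an inf-semilattice the supremum you write need not exist. Second, and more seriously, \Cref{prop:adjoint_pyramid} needs $\sigma^\downarrow_R$ and $\sigma^{\uparrow,c}_R$ to be adjoint in the same lattice as $(\eta,\xi)$. In $(L^n,\leq_F)$ the bottom is the zero function, not $-\infty$. Worse, spatial decimation is not even increasing there, since in frequency it is aliasing, a sum of folded spectral values of different phases. For example, the aliased pairs $(1,-\tfrac12)\leq_F(1,-1)$ have sums $\tfrac12$ and $0$, yet $\tfrac12\not\leq_{\mathbb{C}}0$. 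In the pointwise lattice, where $\sigma^{\uparrow,-\infty}_R$ and $\sigma^\downarrow_R$ are adjoint, $f\mapsto f*k$ has no adjoint, as you correctly observe (cf.\ \Cref{lem:conv_adjoint}). So no single lattice carries adjoints for both factors of strided convolution; this is the same cross-lattice obstruction as in \Cref{thm:cnn_not_opening}. Your fallback (``up to the choice of fill value and lattice'') concedes exactly what was to be shown. What your argument can establish rigorously is the factorisation together with the $\ell^2$ identity above. A genuine Galois-adjoint synthesis in $(L^n,\leq_F)$ would require replacing decimation by spectral truncation, whose adjoint is zero-padding of the spectrum, and transposed convolution by inverse filtering.
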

	
	\subsection{Morphological skeleton and the Laplacian pyramid}
	\label{subsec:skeleton_laplacian}
	
	The Laplacian pyramid, ubiquitous in image processing and
	appearing implicitly in multi-scale CNN architectures, admits
	a morphological interpretation as a \emph{skeleton by opening}.
	
	\begin{definition}[Morphological skeleton]
		Given a family of erosions $\{\varepsilon_i\}$ of increasing
		size (i.e., $\varepsilon_i = \varepsilon_{iB}$ for a base
		structuring element $B$), the \emph{skeleton decomposition}
		of $f$ is:
		\begin{equation}
			S_i(f) = \varepsilon_i(f) - \gamma_B(\varepsilon_i(f))
			= \varepsilon_i(f) - \delta^*_B(\varepsilon_{i+1}(f)),
			\label{eq:skeleton}
		\end{equation}
		with reconstruction $f = \bigcup_i \delta_i(S_i(f))$.
	\end{definition}
	
	\begin{proposition}[Laplacian pyramid as skeleton~\cite{keshet2000}]
		\label{prop:laplacian_skeleton}
		Let $g_\sigma$ be a Gaussian kernel and let
		$\mathrm{Gauss}_i(f) = \varepsilon^{\downarrow 2, \circ i}_{g_{\sigma}}(f)$
		denote the Gaussian pyramid at scale $i$.
		The Laplacian pyramid
		\[
		\mathrm{Lapl}_i(f)
		= \mathrm{Gauss}_i(f)
		- \intDil{2}{g_\sigma}(\mathrm{Gauss}_{i+1}(f))
		\]
		is the residue by opening (top-hat transform) of
		$\mathrm{Gauss}_i(f)$:
		\begin{equation}
			\mathrm{Lapl}_i(f)
			= \Gamma^{\downarrow 2 \uparrow}_{g_\sigma}
			(\mathrm{Gauss}_i(f))
			= S_i(f),
			\label{eq:laplacian_skeleton}
		\end{equation}
		where $\Gamma^{\downarrow 2 \uparrow}_{g_\sigma}(h)
		= h - \gamma^{\downarrow 2 \uparrow}_{g_\sigma}(h)$
		is the top-hat with respect to the pyramid opening.
	\end{proposition}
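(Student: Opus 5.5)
The plan is to prove the two equalities in \eqref{eq:laplacian_skeleton} by unfolding definitions. Every identity involved is a rewriting of the recursive definition of the Gaussian pyramid, so no analytic estimate is needed.

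The first step is to write $h := \mathrm{Gauss}_i(f)$. By definition, $\mathrm{Gauss}_i(f)$ is the $i$-fold iterate of the analysis operator $\decEros{2}{g_\sigma}$. Hence the next level is obtained by one more analysis step:
\[
\mathrm{Gauss}_{i+1}(f) = \decEros{2}{g_\sigma}\bigl(\mathrm{Gauss}_i(f)\bigr) = \decEros{2}{g_\sigma}(h).
\]
The second step is to substitute this into the synthesis term of $\mathrm{Lapl}_i$. This gives
$\intDil{2}{g_\sigma}(\mathrm{Gauss}_{i+1}(f)) = \intDil{2}{g_\sigma}\bigl(\decEros{2}{g_\sigma}(h)\bigr)$. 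By \eqref{eq:gh_opening} of Proposition~\ref{prop:gh_opening}, with $R=2$ and $b = g_\sigma$, the right-hand side is exactly $\gamma^{\downarrow 2\uparrow}_{g_\sigma}(h)$. Therefore
\[
\mathrm{Lapl}_i(f) = h - \gamma^{\downarrow 2\uparrow}_{g_\sigma}(h) = \Gamma^{\downarrow 2\uparrow}_{g_\sigma}(h),
\]
which is the first equality.

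For the second equality, I will read the skeleton \eqref{eq:skeleton} in its pyramid form. The family $\varepsilon_i$ is taken to be the $i$-fold decimated erosions $\varepsilon^{\downarrow 2,\circ i}_{g_\sigma}$, so that $\varepsilon_i(f) = \mathrm{Gauss}_i(f)$. The base opening $\gamma_B$ is taken to be the one-level pyramid opening $\gamma^{\downarrow 2\uparrow}_{g_\sigma}$, and $\delta^*_B$ is taken to be $\intDil{2}{g_\sigma}$. With this reading, the identity $\gamma_B(\varepsilon_i f) = \delta^*_B(\varepsilon_{i+1} f)$ is precisely the computation from the second step. Matching the two expressions in \eqref{eq:skeleton} term by term then gives $S_i(f) = \mathrm{Lapl}_i(f)$.

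I expect the main obstacle to be justifying that $\gamma^{\downarrow 2\uparrow}_{g_\sigma}$ is a genuine opening, so that the residue really is a top-hat and not just a formal difference. The algebraic equalities do not need this, but the interpretation does. For a Gaussian kernel, the analysis step is a strided convolution, which is an erosion only in the convolution (Fourier) sense of Corollary~\ref{cor:strided}; its adjoint synthesis is the transposed convolution. I would first invoke Corollary~\ref{cor:strided} together with Proposition~\ref{prop:adjoint_pyramid}, taking the adjunction to hold in the lattice in which the convolution is an erosion. From that adjunction, Proposition~\ref{prop:adjunction_props}(v) supplies idempotency and anti-extensivity. I would also state explicitly that the resulting order, and hence the non-negativity of the top-hat, refers to that lattice rather than to the pointwise one.
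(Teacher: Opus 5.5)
Your derivation of both equalities is correct, and it is the natural argument. The paper gives no proof beyond the citation to Keshet. In the paper's notation the statement reduces exactly to $\mathrm{Gauss}_{i+1}=\decEros{2}{g_\sigma}\circ\mathrm{Gauss}_i$ together with the definition \eqref{eq:gh_opening} of $\gamma^{\downarrow 2\uparrow}_{g_\sigma}$. The paper writes the skeleton for fixed-resolution erosions $\varepsilon_{iB}$ and never says which family is meant in $S_i$. Your explicit choice of the decimated family $\varepsilon^{\downarrow 2,\circ i}_{g_\sigma}$ in \eqref{eq:skeleton} is the right way to give the second equality a meaning.

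The part to change is the route in your last paragraph, which is both unnecessary and not valid.

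\emph{Unnecessary.} As the paper defines them in \eqref{eq:gh_eros}--\eqref{eq:gh_dil}, $\decEros{2}{g_\sigma}$ and $\intDil{2}{g_\sigma}$ are max-plus operators with structuring function $g_\sigma$. \Cref{prop:gh_opening}, the same result you already use for the identity, states that $\gamma^{\downarrow 2\uparrow}_{g_\sigma}$ is anti-extensive and idempotent in the pointwise lattice. So the top-hat is a genuine, pointwise non-negative residue with no further work.

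\emph{Not valid.} If you reinterpret the operators as linear strided convolution and transposed convolution, you do not get an opening. The paper itself identifies the $\leq_F$-adjoint of convolution as the deconvolution $\delta^{*,\mathrm{Conv}}_k$ of \Cref{thm:two_openings}(ii), not the transposed convolution. In one dimension with $R=2$, reduce-then-transposed-expand has Fourier symbol
\[
\tfrac12\,\overline{K(\omega)}\bigl[K(\omega)F(\omega)+K(\omega+\pi)F(\omega+\pi)\bigr],
\]
and the aliasing term does not vanish for a Gaussian $K$. Take an input with $F(\omega_0)=0\neq F(\omega_0+\pi)$: it produces non-zero output at $\omega_0$. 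So the composite is not anti-extensive in $\leq_F$, and \Cref{prop:adjunction_props}(v) cannot be invoked. An exact opening on that reading would need an ideal half-band filter.

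None of this affects the two displayed equalities. They hold for any analysis/synthesis pair, as long as the same synthesis operator appears in $\mathrm{Lapl}_i$ and in $\gamma^{\downarrow 2\uparrow}_{g_\sigma}$.
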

	
	This result provides a morphological justification for why
	multi-scale architectures based on pyramidal decomposition
	(feature pyramids, U-shaped networks) are effective: each level
	captures structures that survive erosion at that scale but are
	removed at the next coarser level, exactly as in the skeleton
	decomposition, one of the most powerful tools in mathematical 
	morphology~\cite{serra1982,heijmans1994}. 

	\section{ReLU, Max-Pooling, and Morphological Activations}
	\label{sec:activations}
	
	Before developing the full morphological models of deep
	architectures in \Cref{sec:cnn_models}, we establish the
	precise lattice-theoretic status of the two most important
	nonlinear operations in standard CNNs: the ReLU activation and
	max-pooling.
	The analysis in this section draws on and extends the results
	of~\cite{velasco2022morphoact}, where
	these operators were first studied as morphological dilations and
	their compositions were proposed as a general morphological
	activation family.

	\begin{table}[ht]
		\centering
		\caption{Principal results of \S\ref{sec:activations} (ReLU, Max-Pooling, Activations). Results marked $(\star)$ are the paper's principal findings.}
		\label{tab:summary_activations}
		\renewcommand{\arraystretch}{1.38}
		\footnotesize
		\begin{tabular}{p{0.82cm}p{2.9cm}p{8.4cm}}
			\toprule
			\textbf{Ref.} & \textbf{Name} & \textbf{Statement and role} \\
			\midrule
			Prop~\ref{prop:relu_dilation} $(\star)$ & ReLU is a closing & ReLU commutes with pointwise suprema (it is a dilation), is extensive, and is idempotent; hence it is a \emph{closing} in $(\mathscr{L},\leq)$. Its upper adjoint is a \emph{global} (non-local) operator: no local erosion can form an adjunction pair with ReLU. \\
			Prop~\ref{prop:maxpool_dilation} & Max-pooling as dilation & Both non-decimated $\delta_W$ (stride 1) and decimated $\MaxPool{R}$ (stride $R$) are dilations in $(\mathscr{L},\leq)$; each has an explicit adjoint flat erosion forming a morphological opening when composed. \\
			Prop~\ref{prop:apd_factorisation} $(\star)$ & APD factorisation & ReLU and decimated max-pooling compose into a single dilation $\APD_{R;\alpha}(f)(n)=\sup_{y\in W_R}\max(0,f(Rn{-}y){+}\alpha)$; its adjoint is global piecewise-constant upsampling by stride $R$ followed by shift $-\alpha$. \\
			Cor~\ref{cor:conv_relu_mmbb} & Conv\,$+$\,ReLU as MMBB & The spectral operator $\SigSpec_K$ (weighted sum of convolutions) followed by ReLU admits an exact MMBB-Increasing basis representation for non-negative kernels, and an MMBB-General representation for signed kernels. \\
			\bottomrule
		\end{tabular}
	\end{table}
	
	\subsection{ReLU and max-pooling as dilations in the pointwise
		lattice}
	\label{subsec:relu_dilation}
	
	\begin{proposition}[ReLU is a dilation and a closing]
		\label{prop:relu_dilation}
		In the pointwise lattice $(\Fun(E,\Rbar),\leq)$, the ReLU
		activation $\ReLUop(f)(x) = \max(0,f(x))$ is:
		\begin{enumerate}[label=(\roman*)]
			\item A \emph{dilation}: it commutes with pointwise suprema,
			$\ReLUop(\bigvee_i f_i) = \bigvee_i \ReLUop(f_i)$.
			\item \emph{Extensive}: $f \leq \ReLUop(f)$ for all $f$,
			since $f(x) \leq \max(0,f(x))$.
			\item \emph{Idempotent}: $\ReLUop \circ \ReLUop = \ReLUop$,
			since $\max(0,\max(0,f(x))) = \max(0,f(x))$.
		\end{enumerate}
		Being extensive and idempotent, ReLU is a \emph{closing} in the
		pointwise lattice.
		The upper adjoint of $\ReLUop$ in $(\Fun(E,\Rbar),\leq)$ exists
		by the general adjoint functor theorem for complete lattices
		(\Cref{prop:adjunction_props}), but it is not a pointwise
		operator: the adjunction condition $\ReLUop(f) \leq g$ requires
		$\max(0,f(x)) \leq g(x)$ for all $x$, which forces $g(x) \geq 0$
		for \emph{all} $x$ (since $\ReLUop(f) \geq 0$ everywhere).
		For functions $g \geq 0$ pointwise, the largest $f$ satisfying
		$\ReLUop(f) \leq g$ is
		\begin{equation}
			\varepsilon^{\mathrm{ReLU}}(g)(x) = g(x),
			\qquad g \geq 0,
			\label{eq:relu_adjoint}
		\end{equation}
		i.e., the identity on the non-negative cone.
		For $g \not\geq 0$ (i.e., $g(x_0) < 0$ at some point $x_0$),
		no $f$ satisfies $\ReLUop(f) \leq g$ and the adjoint maps $g$
		to $-\infty$ (the bottom of the lattice).
		In summary, the upper adjoint is:
		\begin{equation}
			\varepsilon^{\mathrm{ReLU}}(g) =
			\begin{cases}
				g & \text{if } g(x) \geq 0 \text{ for all } x \in E,\\
				-\infty & \text{otherwise.}
			\end{cases}
			\label{eq:relu_adjoint_full}
		\end{equation}
		This is a global (non-pointwise) operator, which reflects the
		fact that ReLU is a closing in the \emph{pointwise} lattice
		but not a max-plus dilation: its adjoint is determined by a
		global non-negativity constraint on $g$.
	\end{proposition}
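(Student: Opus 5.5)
The plan is to verify (i)--(iii) pointwise, using that $\Rbar$ is a chain. Then I will compute the residual of $\ReLUop$ directly as $\bigvee\{f : \ReLUop(f)\leq g\}$, rather than appealing abstractly to \Cref{prop:adjunction_props}.

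For (i), I fix a nonempty family $(f_i)$ and a point $x$. Since $t\mapsto\max(0,t)$ is increasing and continuous on the chain $\Rbar$ (with $\max(0,+\infty)=+\infty$), it commutes with arbitrary nonempty suprema of reals:
\[
\max\bigl(0,\textstyle\sup_i f_i(x)\bigr)=\sup_i\max(0,f_i(x)).
\]
The point needing care is the empty family. The empty supremum is the bottom $-\infty$, but $\ReLUop(-\infty)\equiv 0\neq-\infty$. So $\ReLUop$ preserves all \emph{nonempty} suprema but not the empty one. I will state (i) for nonempty families and record this caveat. It is the root of the degenerate behaviour of the adjoint below. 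Parts (ii) and (iii) are one-line pointwise identities, $t\leq\max(0,t)$ and $\max(0,\max(0,t))=\max(0,t)$, and together with monotonicity they give the closing property by definition.

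For the upper adjoint, I set $\varepsilon^{\mathrm{ReLU}}(g):=\bigvee\{f:\ReLUop(f)\leq g\}$ and split into two cases.

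\emph{Case $g\geq 0$ everywhere.} The condition $\max(0,f(x))\leq g(x)$ is equivalent to $f(x)\leq g(x)$, because $0\leq g(x)$ holds automatically. The admissible set is therefore the principal ideal below $g$, its supremum is $g$, and this gives \eqref{eq:relu_adjoint}.

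\emph{Case $g(x_0)<0$ for some $x_0$.} Every $f$, including the constant $-\infty$, has $\ReLUop(f)(x_0)\geq 0>g(x_0)$. The admissible set is empty, so its supremum is the bottom $-\infty$. This yields \eqref{eq:relu_adjoint_full}. Globality follows immediately: the value of $\varepsilon^{\mathrm{ReLU}}(g)$ at a point $x$ depends on the sign of $g$ at every $x_0\in E$. Hence it cannot be of the form $g\mapsto\inf_y\{g(x+y)-b(y)\}$ for any local structuring function $b$, nor can it be any pointwise operator. To show this concretely I will take two functions that agree near $x$ but differ in sign far away; their images at $x$ are $g(x)$ and $-\infty$ respectively.

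The main obstacle is the mismatch with \Cref{def:adjunction} created by the empty-supremum failure noted under (i). The equivalence $\ReLUop(f)\leq g\iff f\leq\varepsilon^{\mathrm{ReLU}}(g)$ holds whenever $g\geq0$. It fails for $g\not\geq 0$ and $f\equiv-\infty$, since then the right-hand side is true while the left-hand side is false. I intend to resolve this in one of two ways. The first is to present $\varepsilon^{\mathrm{ReLU}}$ as the residual, the greatest $f$ with $\ReLUop(f)\leq g$ whenever such $f$ exists. The second, which I would state explicitly, is to restrict the adjunction to the target sublattice of nonnegative functions $\{g\geq0\}$, which contains the image of $\ReLUop$; on that sublattice the adjunction is genuine and the adjoint is the identity. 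Either reading supports the stated conclusion that no local morphological erosion is adjoint to $\ReLUop$.
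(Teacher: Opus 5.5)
Your pointwise treatment of (i)--(iii) and the case split on the sign of $g$ follow the paper's proof. Your diagnosis, however, is correct and sharper than the paper's. Since $\ReLUop(-\infty)\equiv 0$, the map $\ReLUop$ does not preserve the empty supremum. So on $(\Fun(E,\Rbar),\leq)$ it is not a dilation in the strict sense and has no upper adjoint at all, and the existence claim in the statement is false. The paper's proof breaks exactly where you say. For $g\not\geq 0$ it argues that $f\leq-\infty$ holds ``only for $f=-\infty$, confirming no solution exists''. But $f\equiv-\infty$ satisfies the right-hand side of the claimed equivalence and violates the left-hand side. So \eqref{eq:relu_adjoint_full} is only the residual $g\mapsto\bigvee\{f:\ReLUop(f)\leq g\}$, not an adjoint.

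The gap is your final sentence, which says either reading supports the conclusion that no local erosion is adjoint to $\ReLUop$.

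\emph{Second reading.} This reading refutes the conclusion rather than supporting it. View $\ReLUop$ as a map into $\Fun(E,[0,+\infty])$, whose bottom is the constant $0=\ReLUop(-\infty)$. Then $\ReLUop$ preserves all suprema, including the empty one, and its upper adjoint is the inclusion $g\mapsto g$. That map is pointwise: it is the identity, i.e.\ the erosion by the point structuring function. This is just the canonical adjunction attached to any closing, namely the reflection onto its invariants $\{g\geq 0\}$.

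\emph{First reading.} Here the global operator is not adjoint to $\ReLUop$, so `no local erosion is adjoint' holds only vacuously, because no operator is. In fact \eqref{eq:relu_adjoint_full} is the genuine upper adjoint of the dilation that agrees with $\ReLUop$ except that it sends the constant $-\infty$ to itself. Its globality comes entirely from that single bottom element.

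What you can actually prove, then, is the following:
\begin{enumerate}
\item (i) for nonempty families, together with (ii), (iii) and the closing property;
\item the residual formula and its non-locality;
\item the non-existence of an upper adjoint on the full lattice.
\end{enumerate}
You should state plainly that the adjoint claim, and the non-locality narrative built on it, do not hold as written. Do not present them as confirmed by either reading.
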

	
	\begin{proof}
		(i) $\max(0, \sup_i f_i(x)) = \sup_i \max(0, f_i(x))$ since
		max distributes over suprema.
		(ii) and (iii) are direct from the definition of ReLU.
		A morphological closing satisfies: idempotent, increasing,
		extensive; all three hold here.
		
		For the adjoint: we verify the adjunction condition
		$\ReLUop(f) \leq g \iff f \leq \varepsilon^{\mathrm{ReLU}}(g)$.
		
		\emph{When $g \geq 0$ everywhere:}
		$(\Rightarrow)$: If $\max(0,f(x)) \leq g(x)$ for all $x$,
		then for $f(x) \geq 0$: $f(x) \leq \max(0,f(x)) \leq g(x)$;
		for $f(x) < 0$: $f(x) < 0 \leq g(x)$.
		In both cases $f(x) \leq g(x) = \varepsilon^{\mathrm{ReLU}}(g)(x)$.
		$(\Leftarrow)$: If $f(x) \leq g(x)$ for all $x$, then
		$\max(0,f(x)) \leq \max(0,g(x)) = g(x)$ (since $g \geq 0$).
		
		\emph{When $g(x_0) < 0$ at some point $x_0$:}
		$\ReLUop(f)(x_0) = \max(0,f(x_0)) \geq 0 > g(x_0)$
		for any $f$, so $\ReLUop(f) \leq g$ is impossible.
		The adjoint correctly maps $g$ to $-\infty$, which satisfies
		$f \leq -\infty$ only for $f = -\infty$, confirming no
		solution exists.
		
	\end{proof}
	
	\begin{remark}[Why ReLU's adjoint is global]
		\label{rem:relu_closing_not_opening}
		The non-pointwise character of the upper adjoint
		of $\ReLUop$ \eqref{eq:relu_adjoint_full} has a direct
		architectural consequence.
		For a max-plus dilation $\delta_b$, the adjoint erosion
		$\varepsilon_{b^*}(g)(x) = \inf_y\{g(x+y)-b(-y)\}$ depends
		only on the values of $g$ in a neighbourhood of $x$, making
		it a local operator.
		For ReLU, the adjoint must check the global non-negativity
		of $g$ before returning a finite value: it is a
		\emph{global} operator.
		This means that no local morphological erosion can serve as
		the adjoint of ReLU in the pointwise lattice, and therefore
		no composition of ReLU with a local dilation (such as
		max-pooling) forms a morphological adjunction.
		This is the precise algebraic reason why
		$\MaxPool{R} \circ \ReLUop$ cannot be part of a
		morphological opening, a conclusion that parallels but
		is distinct from the cross-lattice argument for full CNN
		layers in \Cref{thm:cnn_not_opening}.
	\end{remark}
	
	\begin{proposition}[Max-pooling: non-decimated and decimated forms]
		\label{prop:maxpool_dilation}
		We distinguish the two forms of max-pooling that appear in the
		literature and in this paper.
		
		\medskip
		\noindent\textbf{(A) Non-decimated flat dilation} (stride 1):
		\begin{equation}
			\delta_{W}(f)(x) = \sup_{y \in W} f(x-y),
			\qquad x \in E,
			\label{eq:maxpool_nodecim}
		\end{equation}
		is a dilation in $(\Fun(E,\Rbar),\leq)$, extensive and
		(for $W = W_{R\times R}$) not idempotent.
		Its adjoint erosion is the flat min-erosion at the same
		spatial resolution:
		\begin{equation}
			\varepsilon_{W}(g)(x) = \inf_{y \in W} g(x+y),
			\qquad x \in E,
			\label{eq:mineros_nodecim}
		\end{equation}
		and $(\varepsilon_{W}, \delta_{W})$ is an adjunction in
		$(\Fun(E,\Rbar),\leq)$.
		The compositions $\delta_W\circ\varepsilon_W$ (opening) and
		$\varepsilon_W\circ\delta_W$ (closing) are idempotent.
		
		\medskip
		\noindent\textbf{(B) Decimated max-pooling} (stride $R$, the
		standard CNN operation):
		\begin{equation}
			\MaxPool{R}(f)(n) = \sup_{y \in W_R} f(Rn-y),
			\qquad n \in \Z^d,
			\label{eq:maxpool_decim}
		\end{equation}
		maps the input grid $\Z^d$ to the coarser output grid
		$\Z^d$ (with the same index set but physically at stride
		$R$).
		This is the Heijmans dilation pyramid $\delta^{\downarrow R}_{b}$
		with $b\equiv 0$ (\Cref{thm:maxpool,prop:adjoint_pyramid}),
		a dilation in $(\Fun(E,\Rbar),\leq)$.
		Its adjoint erosion is the \emph{erosion-decimation}:
		\begin{equation}
			\varepsilon^{\mathrm{MP}}_R(g)(n)
			= \inf_{y \in W_R} g(Rn+y),
			\qquad n \in \Z^d,
			\label{eq:mineros_decim}
		\end{equation}
		also on the coarse grid (reading $g$ at positions $Rn+y$
		in the input space).
		The adjunction $(\varepsilon^{\mathrm{MP}}_R, \MaxPool{R})$
		holds in $(\Fun(\Z^d,\Rbar),\leq)$, and the compositions give
		the morphological opening and closing on the decimated grid.
	\end{proposition}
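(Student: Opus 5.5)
The plan is to verify everything directly from the definitions, one elementwise computation per adjunction. Idempotency of the compositions will then come for free from \Cref{prop:adjunction_props}(v)--(vi). Throughout I take $W=W_R=\{0,\ldots,R-1\}^d$, which contains the origin. One point of convention: in both (A) and (B) the max-type operator is the \emph{lower} adjoint. The condition to check is therefore $\delta(f)\le g \iff f\le \varepsilon(g)$, which is \Cref{def:adjunction} read with the roles of the two operators exchanged.

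For (A), I first show that $\delta_W$ commutes with suprema, since $\sup_{y\in W}\sup_i f_i(x-y)=\sup_i\sup_{y\in W}f_i(x-y)$. Extensivity follows by taking $y=0\in W$. For the adjunction I unfold the condition:
\[
\delta_W f\le g \iff f(x-y)\le g(x)\ \ \forall x\in E,\ y\in W \iff f(z)\le g(z+y)\ \ \forall z\in E,\ y\in W \iff f\le \varepsilon_W g .
\]
Idempotency of $\delta_W\circ\varepsilon_W$ and of $\varepsilon_W\circ\delta_W$ then follows from \Cref{prop:adjunction_props}. For non-idempotency of $\delta_W$ itself I take the pulse $f=0$ at the origin and $-\infty$ elsewhere. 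Then $\delta_W f=0$ exactly on $W$, while $\delta_W\delta_W f=0$ on $W\oplus W\supsetneq W$ whenever $R\ge 2$.

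For (B), the sup-commutation is again an interchange of suprema, now composed with the sup-preserving decimation. Alternatively one can cite \Cref{thm:maxpool} together with \Cref{prop:adjoint_pyramid}. The adjunction I will again verify by hand. We have $\MaxPool{R}f\le g$ iff $f(Rn-y)\le g(n)$ for all $n\in\Z^d$ and $y\in W_R$. The blocks $\{Rn-y: y\in W_R\}$ tile $\Z^d$, so every fine index $m$ has a unique coarse parent $n(m)$. The condition is therefore $f(m)\le g(n(m))$ for all $m$, and the upper adjoint is the piecewise-constant upsampling $g\mapsto g\circ n(\cdot)$, going from the coarse grid to the fine grid. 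This agrees with the abstract and with \Cref{thm:maxpool}.

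I expect the main obstacle to be reconciling this with the displayed formula $\varepsilon^{\mathrm{MP}}_R(g)(n)=\inf_{y\in W_R}g(Rn+y)$. As written, that operator maps the fine grid to the coarse grid, so it cannot be the upper adjoint of $\MaxPool{R}$, which has to go coarse $\to$ fine. I would handle this in one of two ways. The first is to read $\varepsilon^{\mathrm{MP}}_R$ as the erosion-decimation of the dual Goutsias--Heijmans pyramid and pair it with its own dilation-interpolation via \Cref{prop:adjoint_pyramid}. The second is to state the adjoint explicitly as $g\circ n(\cdot)$, which is an infimum over the single parent and hence trivially an "inf over the window" in the sense intended. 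Either way, the opening and closing statements on the decimated grid then follow from \Cref{prop:adjunction_props}(v)--(vi).
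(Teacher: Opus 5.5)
This is the paper's own route: you unfold the adjunction inequality elementwise in (A) and (B), invoke \Cref{prop:adjunction_props} for the compositions, and use $W\subsetneq W\oplus W$ for non-idempotency of $\delta_W$. Your diagnosis of (B) is also correct: the paper's verification itself reaches $f(z)\le g(\lceil z/R\rceil)$, i.e.\ your $g\circ n(\cdot)$ with $n(m)=\lceil m/R\rceil$, and only then misidentifies it with $\varepsilon^{\mathrm{MP}}_R(g)$, which goes fine$\to$coarse and is not adjoint to $\MaxPool{R}$.

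Keep your second repair, not the first. The first is a valid adjunction, but it pairs min-pooling with its own lower adjoint and so says nothing about $\MaxPool{R}$. Make the second exact as $g(n(m))=\inf_{y\in W_R}(\sigma^{\uparrow,+\infty}_R g)(m+y)$, which is the flat erosion in the definition of $\intEros{R}{0}$. The displayed unpooling in \Cref{thm:maxpool} drops this erosion and is not the adjoint, so your claimed agreement with that theorem holds only for its definition, not its display.

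Note also that the opening $\MaxPool{R}\circ(g\mapsto g\circ n)$ is the identity on the coarse grid, while the closing $f\mapsto(\MaxPool{R}f)\circ n$ lives on the fine input grid, not the decimated one.
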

	
	\begin{proof}
		\textbf{Part (A).}
		$\delta_W(f)$ is a dilation by the flat max-plus dilation
		formula with $b\equiv 0$.
		The adjunction $\varepsilon_W(f)\leq g \iff f\leq\delta_W(g)$:
		$\delta_W(f)(x)\leq g(x)$ iff $\sup_{y\in W}f(x-y)\leq g(x)$
		iff $f(z)\leq g(z+y)$ for all $y\in W$, iff
		$f(z)\leq\inf_{y\in W}g(z+y) = \varepsilon_W(g)(z)$.
		Non-idempotency of $\delta_W$: $\delta_W^2(f)(x) =
		\sup_{w\in W\oplus W}f(x-w)$ where $W_{R\times R}\oplus
		W_{R\times R} = \{0,\ldots,2(R-1)\}^d$, a window of side
		$2R-1 > R$ for $R>1$.
		
		\textbf{Part (B).}
		$\MaxPool{R}(f)(n) = \sup_{y\in W_R}\{f(Rn-y)+0\} =
		(\delta^{\downarrow R}_0 f)(n)$ is the $b\equiv 0$ case of the
		Heijmans dilation pyramid of \Cref{prop:adjoint_pyramid}.
		The adjoint was established there as the erosion-decimation
		$\varepsilon^{\downarrow R}_0(g)(n) = \inf_{y\in W_R}g(Rn+y)$.
		Adjunction verification: $\MaxPool{R}(f)(n)\leq g(n)$ iff
		$\sup_{y\in W_R}f(Rn-y)\leq g(n)$ iff $f(Rn-y)\leq g(n)$
		for all $y\in W_R$, iff $f(z)\leq g(n)$ for all $z=Rn-y$
		with $y\in W_R$, iff $f(z)\leq\inf_{y\in W_R}g(\lceil z/R
		\rceil) = \varepsilon^{\mathrm{MP}}_R(g)(n')$ where $n' =
		\lceil z/R\rceil$; equivalently, $f \leq
		\varepsilon^{\mathrm{MP}}_R(g)$ on the input grid.
	\end{proof}
	
	\begin{remark}\label{rem:maxpool_consistency}
		Throughout this paper, $\MaxPool{R}$ always denotes the
		\emph{decimated} form~\eqref{eq:maxpool_decim} of Part~(B),
		consistent with the CNN layer definition~\eqref{eq:cnn_standard}
		and the APD formula~\eqref{eq:apd_def}.
		The non-decimated form~\eqref{eq:maxpool_nodecim} of Part~(A)
		is the flat dilation $\delta_W$ used in the opening and closing
		analysis of the pointwise lattice.
		Both forms appear in Part~(B) of the Goutsias--Heijmans
		pyramid theory (\Cref{sec:pyramids}): $\MaxPool{R} =
		\delta^{\downarrow R}_0$ is the decimated dilation pyramid
		(\Cref{thm:maxpool}), and the non-decimated version
		$\delta_W = \delta^{\downarrow 1}_0$ is the flat dilation
		pyramid at stride~1. We provide in Table~\ref{tab:pooling} other 
		cases of pooling applications and their morphological interpretation. 
	\end{remark}
	
	\subsection{Factorisation and the Activation-Pooling Dilation}
	\label{subsec:apd}
	
	\begin{proposition}[Factorisation of ReLU and decimated
		max-pooling into the APD]
		\label{prop:apd_factorisation}
		Both $\ReLUop_\alpha(f)(x) = \max(0,f(x)+\alpha)$ and
		$\MaxPool{R}(f)(n) = \sup_{y\in W_R}f(Rn-y)$ are dilations
		in $(\Fun(E,\Rbar),\leq)$ (\Cref{prop:relu_dilation}(i)
		and \Cref{prop:maxpool_dilation}(B)).
		Their composition is the single
		\emph{Activation-Pooling Dilation} (APD):
		\begin{equation}
			\APD_{R;\alpha}(f)(n)
			= \MaxPool{R}(\ReLUop_\alpha(f))(n)
			= \sup_{y \in W_R}\max\!\bigl(0,\, f(Rn-y) + \alpha\bigr),
			\qquad n \in \Z^d.
			\label{eq:apd_def}
		\end{equation}
		The APD is a dilation in $(\Fun(\Z^d,\Rbar),\leq)$,
		mapping the input grid to the coarser output grid.
		
		The upper adjoint $\varepsilon^{\mathrm{APD}}_{R;\alpha}$ of
		$\APD_{R;\alpha}$ in the pointwise lattice exists but is
		\emph{not} a local operator.
		From the adjunction condition
		$\APD_{R;\alpha}(f)\leq g \iff f\leq
		\varepsilon^{\mathrm{APD}}_{R;\alpha}(g)$: since
		$\APD_{R;\alpha}(f)(n)\geq 0$ for all $f$ and $n$, we
		need $g(n)\geq 0$ for all $n$ (a global non-negativity
		constraint, identical to the ReLU adjoint situation,
		\Cref{prop:relu_dilation}).
		For $g\geq 0$ pointwise, the adjoint is:
		\begin{equation}
			\varepsilon^{\mathrm{APD}}_{R;\alpha}(g)(z)
			= g\!\left(\Bigl\lfloor\tfrac{z}{R}\Bigr\rfloor\right) - \alpha,
			\qquad z \in \Z^d,
			\label{eq:apd_adjoint}
		\end{equation}
		where $\lfloor z/R\rfloor$ is the component-wise floor
		division (the unique output grid point $n$ such that
		$Rn\leq z < R(n+1)$).
		For $g\not\geq 0$, the adjoint maps to $-\infty$.
		In the non-decimated case ($R=1$), the adjoint simplifies
		to $\varepsilon^{\mathrm{APD}}_{1;\alpha}(g)(x)
		= g(x) - \alpha$ when $g(x)\geq 0$, and $-\infty$ otherwise.
	\end{proposition}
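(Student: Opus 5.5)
The plan is to prove the statement in three stages. First I show that $\APD_{R;\alpha}$ is a dilation, using only the fact that suprema are preserved under composition. Next I identify the global non-negativity obstruction. Finally I compute the adjoint explicitly by unfolding the inequality $\APD_{R;\alpha}(f)\leq g$ point by point, as in the proof of \Cref{prop:relu_dilation}.

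\emph{Step 1 (dilation).} Show that $\ReLUop_\alpha$ commutes with arbitrary pointwise suprema. This holds because $t\mapsto\max(0,t+\alpha)$ is a non-decreasing, sup-continuous map on $\Rbar$, so $\max(0,\sup_i f_i(x)+\alpha)=\sup_i\max(0,f_i(x)+\alpha)$; this is the same argument as \Cref{prop:relu_dilation}(i). Then $\MaxPool{R}$ commutes with suprema by \Cref{prop:maxpool_dilation}(B), since suprema over $i$ and over $y\in W_R$ can be interchanged. A composition of sup-preserving maps is sup-preserving, so $\APD_{R;\alpha}$ is a dilation. Writing out the composition gives \eqref{eq:apd_def} directly.

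\emph{Step 2 (existence and the global constraint).} Because $\APD_{R;\alpha}$ preserves arbitrary suprema between complete lattices, its upper adjoint exists and is given by
\[
\varepsilon^{\mathrm{APD}}_{R;\alpha}(g)=\bigvee\{f:\APD_{R;\alpha}(f)\leq g\}.
\]
Since $\APD_{R;\alpha}(f)(n)\geq 0$ for every $f$ and every $n$, the set $\{f:\APD_{R;\alpha}(f)\leq g\}$ is empty whenever $g(n_0)<0$ for some $n_0$. In that case the adjoint is the bottom element $-\infty$, exactly as in \eqref{eq:relu_adjoint_full}. This single observation already shows that the adjoint is non-local: its value at any $z$ depends on the sign of $g$ everywhere.

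\emph{Step 3 (explicit formula for $g\geq 0$).} Assume $g\geq 0$. The condition $\APD_{R;\alpha}(f)\leq g$ is equivalent to requiring $\max(0,f(Rn-y)+\alpha)\leq g(n)$ for all $n$ and all $y\in W_R$. Given $g(n)\geq0$, this reduces to $f(Rn-y)\leq g(n)-\alpha$. The key combinatorial fact is that the translates $\{Rn-W_R\}_{n\in\Z^d}$ of the window $W_R=\{0,\dots,R-1\}^d$ tile $\Z^d$. Hence each input point $z$ lies in exactly one pooling cell, indexed by a unique $n(z)$. The constraint on $f$ therefore decouples into the pointwise condition $f(z)\leq g(n(z))-\alpha$. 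The largest such $f$ is $z\mapsto g(n(z))-\alpha$, which is piecewise constant on the cells. I then verify both directions of the adjunction as in the proof of \Cref{prop:relu_dilation}. The case $R=1$ follows by taking $n(z)=z$.

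The main obstacle I expect is getting the cell index right. With the indexing $Rn-y$, $y\in W_R$, the cell of $z$ is $n=\lceil z/R\rceil$ componentwise. The floor $\lfloor z/R\rfloor$ in \eqref{eq:apd_adjoint} corresponds to the forward-window convention $Rn+y$, which is the one used for $\varepsilon^{\mathrm{MP}}_R$ in \eqref{eq:mineros_decim}. I would fix one convention explicitly, either by reindexing $W_R\mapsto -W_R$ or by stating the index as ``the unique $n$ with $z\in Rn-W_R$''. I would then note that the floor and ceiling forms agree once the window orientation is matched.
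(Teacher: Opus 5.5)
Your route is the paper's: unfold $\APD_{R;\alpha}(f)\le g$ into $\max(0,f(Rn-y)+\alpha)\le g(n)$ for all $n$ and $y\in W_R$, read off the global non-negativity of $g$, and decouple the constraints over the pooling cells. Your index correction is right, and it exposes an actual error in the paper's proof. The paper claims that $n=\lfloor z/R\rfloor$ is the unique $n$ with $Rn-z\in W_R$. For $R=2$, $z=1$, however, the condition $2n-1\in\{0,1\}$ forces $n=1=\lceil 1/2\rceil$. With the backward window $f(Rn-y)$ of \eqref{eq:apd_def} the cells are $R(n-1)<z\le Rn$. So \eqref{eq:apd_adjoint} and its parenthetical $Rn\le z<R(n+1)$ are correct only for the forward window $f(Rn+y)$ of \eqref{eq:mineros_decim}.

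The genuine gap is in Steps 1--2, and it is inherited from the statement itself. $\ReLUop_\alpha$ does not commute with \emph{arbitrary} suprema. For the empty family, $\ReLUop_\alpha(-\infty)=\max(0,-\infty+\alpha)\equiv 0\neq-\infty$, hence $\APD_{R;\alpha}(-\infty)\equiv 0$. A map between complete lattices can have an upper adjoint only if it sends bottom to bottom: put $f\equiv-\infty$ in $\APD_{R;\alpha}(f)\le g\iff f\le\varepsilon(g)$. So on $(\Fun(\Z^d,\Rbar),\le)$ no upper adjoint exists, and your existence argument via $\bigvee\{f:\APD_{R;\alpha}(f)\le g\}$ rests on a false premise.

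Concretely, your $-\infty$ branch breaks the equivalence at $f\equiv-\infty$. There $f\le\varepsilon^{\mathrm{APD}}_{R;\alpha}(g)$ holds, yet $\APD_{R;\alpha}(f)\equiv 0\not\le g$ whenever $g(n_0)<0$. No other value can work either, since $\{f:f\le e\}$ always contains $-\infty$. The check ``$f\le-\infty$ only for $f=-\infty$'' in the paper's proof of \Cref{prop:relu_dilation} is exactly the failing instance. The $R=1$ clause, with its pointwise condition $g(x)\ge 0$, fails for the same reason.

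The repair is to take the codomain $\Fun(\Z^d,[0,+\infty])$, whose bottom is $\mathbf{0}=\APD_{R;\alpha}(-\infty)$. There $\APD_{R;\alpha}$ preserves all suprema, and your Step~3 computation yields the exact adjoint $g\mapsto g(\lceil\cdot/R\rceil)-\alpha$ for every $g$. That adjoint is local (piecewise-constant upsampling). What survives of the ``non-local'' claim is therefore only that no upper adjoint exists on the unrestricted codomain.
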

	
	\begin{proof}
		\textbf{APD formula.}
		$\APD_{R;\alpha}(f)(n) = \MaxPool{R}(\max(0,f+\alpha))(n)
		= \sup_{y\in W_R}\max(0,f(Rn-y)+\alpha)$,
		by substituting the decimated max-pooling formula
		\eqref{eq:maxpool_decim} with input $\ReLUop_\alpha(f)$.
		The composition of two dilations in the same complete lattice
		is a dilation~\cite{heijmans1994}.
		
		\textbf{Adjoint formula.}
		$\APD_{R;\alpha}(f)(n)\leq g(n)$ for all $n$ means
		$\max(0,f(Rn-y)+\alpha)\leq g(n)$ for all $n$ and
		$y\in W_R$.
		Since $\max(0,\cdot)\geq 0$, this requires $g(n)\geq 0$
		for all $n$ (global non-negativity).
		Given $g\geq 0$: the constraint becomes $f(Rn-y)+\alpha\leq
		g(n)$, i.e., $f(z)\leq g(n)-\alpha$ for each $z=Rn-y$.
		For a given $z\in\Z^d$, the unique $n$ with $y = Rn-z\in W_R
		= \{0,\ldots,R-1\}^d$ is $n = \lfloor z/R\rfloor$
		(component-wise), so the tightest bound on $f(z)$ is
		$g(\lfloor z/R\rfloor)-\alpha$.
		The largest $f$ satisfying all constraints is therefore
		$f(z) = g(\lfloor z/R\rfloor)-\alpha$, giving
		\eqref{eq:apd_adjoint}.
	\end{proof}
	
	\begin{remark}[Non-local adjoint and architectural consequence]
		\label{rem:apd_adjoint}
		The global non-negativity condition on $g$ in
		\Cref{prop:apd_factorisation} is inherited from the
		ReLU component: $\APD_{R;\alpha}(f)\geq 0$ always, so
		any $g$ with a negative value at some output position
		$n_0$ cannot be dominated by any APD output.
		This is the same non-locality identified for ReLU alone
		(\Cref{prop:relu_dilation}) and confirms that the APD,
		like ReLU, does not have a local pointwise adjoint in the
		max-plus sense.
		The adjoint~\eqref{eq:apd_adjoint} is a \emph{piecewise
			constant upsampling by factor $R$} followed by a shift by
		$-\alpha$: each output grid value $g(n)$ is broadcast
		to the $\Z^d$ input positions $\{z : \lfloor z/R\rfloor = n\}$,
		then shifted by $-\alpha$.
		This is the adjoint of decimation in the Goutsias--Heijmans
		sense (\Cref{prop:adjoint_pyramid}), composed with the adjoint
		of the ReLU closing.
	\end{remark}
	
	\begin{remark}\label{rem:morphoact}
		Proposition~\ref{prop:apd_factorisation} corresponds to
		Remark~1 of~\cite{velasco2022morphoact},
		where the APD was first proposed as a single operator
		replacing the separate ReLU and max-pooling steps.
		The key insight is that fusing two dilations in the same lattice
		into one reduces computational overhead and clarifies the
		algebraic role of the combined operation.
	\end{remark}

	\subsection{Generalised morphological activations via MMBB}
	\label{subsec:morpho_activations}
	
	The MMBB representation theorem motivates a generalisation of
	the ReLU family based on morphological operators.
	
	\begin{definition}[Morphological activation family]
		\label{def:morpho_act}
		A \emph{morphological activation} is any operator of the form
		\begin{equation}
			\sigma^{\mathrm{M}}_{\mathcal{B},c}(f)(x)
			= \min\!\left\{
			\sup_{g \in \mathcal{B}} (\varepsilon_g f)(x),\;
			(\alpha_c f)(x)
			\right\},
			\label{eq:morpho_act}
		\end{equation}
		where $\mathcal{B} \subset \Bas(\Psi)$ is a finite sub-basis
		of some TI increasing operator $\Psi$, and $\alpha_c(f)(x)
		= \inf_y\{-f(x+y)+c(y)\}$ is an anti-dilation by a fixed
		cap function $c:\Spt(c)\to\Rbar$
		(Definition~\ref{def:sup_generating}).
		
		The equivalence to a supremum of Banon--Barrera sup-generating
		operators follows from the \emph{complete distributivity} of
		$(\Rbar,\leq)$: in any completely distributive lattice,
		$b \wedge \bigvee_g a_g = \bigvee_g(b\wedge a_g)$ holds
		for any family $(a_g)_{g\in\mathcal{B}}$ and any fixed
		$b\in\Rbar$.
		Applying this pointwise with $a_g = (\varepsilon_g f)(x)$
		and $b = (\alpha_c f)(x)$:
		\begin{equation}
			\sigma^{\mathrm{M}}_{\mathcal{B},c}(f)(x)
			= \sup_{g \in \mathcal{B}}
			\min\!\bigl\{(\varepsilon_g f)(x),\;
			(\alpha_c f)(x)\bigr\}
			= \sup_{g \in \mathcal{B}} \psi_{g,c}(f)(x).
			\label{eq:morpho_act_bb}
		\end{equation}
		This is a Banon--Barrera representation in which all
		pairs share the same upper bound $g^+ = c$
		(Theorem~\ref{thm:bannon_barrera}).
		It gives an exact representation of any TI operator
		$\Psi$ whose sup-generating basis has a fixed upper cap $c$
		independent of $g^-$; for operators with varying upper cap,
		it provides an approximation.
		
		Special cases:
		\begin{itemize}[leftmargin=*]
			\item $\mathcal{B} = \{g_0\}$ (point erosion, $g_0(y) = 0$
			for $y=0$, $-\infty$ otherwise), $c \equiv +\infty$
			(vacuous cap): $\sigma^{\mathrm{M}} = \varepsilon_{g_0}
			= \mathrm{Id}$ (identity).
			\item $\mathcal{B} = \{g_0\}$ (point erosion), $c \equiv 0$:
			$\varepsilon_{g_0}(f)(x) = f(x)$ and
			$\alpha_0(f)(x) = \inf_y\{-f(x+y)\} = -\sup_y f(x+y)$.
			For signals with compact support (or when the anti-dilation
			uses only the value at $y=0$): $\alpha_0(f)(x) = -f(x)$,
			giving $\sigma^{\mathrm{M}} = \min(f,-f) = -|f|$.
			This is the \emph{negative absolute value}, not ReLU.
			\item $\mathcal{B} = \Bas^{\mathrm{trunc}}$ (MMBB
			sub-basis), $c \equiv 0$: a MMBB lower approximation
			capped at zero, generalising ReLU to an arbitrary
			erosion shape.
			This is the primary use case of the definition:
			any activation that is a morphological erosion below
			some level and zero above it.
		\end{itemize}
	\end{definition}

	\begin{proposition}[ReLU as a lattice join; connection to
		MMBB-General representation]
		\label{prop:relu_bb}
		The ReLU activation $\ReLUop(f)(x) = \max(0,f(x))$ satisfies:
		\begin{enumerate}[label=(\roman*)]
			\item It is the lattice join of $f$ with the zero function
			$\mathbf{0} \equiv 0$ in $(\Fun(E,\Rbar),\leq)$:
			\begin{equation}
				\ReLUop(f) = f \vee \mathbf{0},
				\quad
				\ReLUop(f)(x) = \sup\{f(x), 0\}.
				\label{eq:relu_join}
			\end{equation}
			\item As a lattice join, it is a dilation
			(Proposition~\ref{prop:relu_dilation}(i)), but not a
			max-plus dilation $\delta_b$ for any structuring function
			$b$, because max-plus dilations are shift-equivariant
			($\delta_b(f+c) = \delta_b(f)+c$) while ReLU is not
			($\ReLUop(f+c) \neq \ReLUop(f)+c$ for $c < 0$).
			\item In the Banon--Barrera framework, ReLU is a
			sup-generating operator of the form
			\begin{equation}
				\ReLUop(f)(x) = \sup\!\left\{
				\psi_{\mathbf{0},+\infty}(f)(x),\;
				\psi_{-\infty,\mathbf{0}}(f)(x)
				\right\},
				\label{eq:relu_sup_gen}
			\end{equation}
			where $\psi_{\mathbf{0},+\infty}(f)(x) = \varepsilon_0 f(x)
			= f(x)$ (identity erosion, vacuous cap) and
			$\psi_{-\infty,\mathbf{0}}(f)(x) = \min\{-\infty,
			\alpha_{\mathbf{0}}(f)(x)\}$ is the indicator that forces
			the output to be $\geq 0$.
			More concisely: $\ReLUop(f)(x) = \sup\{f(x), 0\}$, a
			two-element supremum of a TI operator (identity) and a
			constant (zero).
		\end{enumerate}
	\end{proposition}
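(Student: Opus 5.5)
The plan is to verify the three items in order, with (i) and (ii) by direct pointwise computation and (iii) as a rewriting of (i) into the sup-generating vocabulary of \Cref{def:sup_generating}.

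For (i), the join in $(\Fun(E,\Rbar),\leq)$ is computed pointwise, so $(f\vee\mathbf{0})(x)=\sup\{f(x),0\}=\max(0,f(x))=\ReLUop(f)(x)$. This already gives the dilation property: for any lattice, $f\mapsto f\vee a$ commutes with suprema because $\bigl(\bigvee_i f_i\bigr)\vee a=\bigvee_i(f_i\vee a)$ for non-empty families. For the empty family one uses the convention adopted in \Cref{prop:relu_dilation}(i). I will simply cite that result rather than reprove it.

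For (ii), I argue by contradiction, which is the only step with real content.
\begin{itemize}
\item First I check that every max-plus dilation \eqref{eq:dilation} is vertically shift-equivariant: $\delta_b(f+c)(x)=\sup_y\{f(x-y)+c+b(y)\}=\delta_b(f)(x)+c$ for every finite constant $c$.
\item Next I exhibit one explicit failure for ReLU. Take $f\equiv 0$ and $c=-1$. Then $\ReLUop(f+c)\equiv 0$, whereas $\ReLUop(f)+c\equiv -1$.
\item Hence no $b$ satisfies $\ReLUop=\delta_b$.
\end{itemize}
A single counterexample suffices, so I will not try to characterise all $c<0$. One could add that $\ReLUop(f+c)=\ReLUop(f)+c$ whenever $f+c\geq 0$, so the equivariance fails only where the clipping at $0$ is active.

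For (iii), I first use the convention of the Banon--Barrera proof sketch (\Cref{thm:bannon_barrera}, Step~3) that an erosion by $-\infty$ yields $+\infty$. Then:
\begin{itemize}
\item With $g^-=\mathbf{0}$ read as the point structuring function $g_0$ of \Cref{def:morpho_act} (value $0$ at the origin, $-\infty$ elsewhere) and $g^+=+\infty$, the first term reduces to $\min\{f(x),+\infty\}=f(x)$.
\item The second term has to be identified with the constant $0$. This is the step I expect to be the obstacle.
\end{itemize}
Literally, $\min\{+\infty,\alpha_{\mathbf{0}}(f)(x)\}=\alpha_{\mathbf{0}}(f)(x)=-\sup_y f(x+y)$, which is not the constant $0$ in general. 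So the term must be read, as the statement itself says, as the \emph{indicator} that the output is $\geq 0$: a degenerate sup-generating element that contributes the constant level $0$ to the supremum.

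My plan is to make this precise in the most economical way. I will state explicitly that the second element is the constant operator $f\mapsto\mathbf{0}$, which is trivially TI and is admitted as a member of the sup-generating family in the representation \eqref{eq:bb_representation}. With that reading, the displayed identity \eqref{eq:relu_sup_gen} reduces exactly to (i), namely $\sup\{f(x),0\}$. This matches the "more concisely" sentence of the statement.

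I would flag in the write-up that the literal formula $\psi_{-\infty,\mathbf{0}}$ only produces the constant $0$ under this interpretation, and not by direct evaluation of \eqref{eq:sup_generating}.
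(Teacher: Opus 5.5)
Your treatment of (i) and (ii) matches the paper's: the pointwise join, then failure of vertical shift-equivariance. Your counterexample $f\equiv 0$, $c=-1$ is cleaner than the paper's.

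One correction in (i): \Cref{prop:relu_dilation}(i) has no convention for the empty family, and none is available, since $\ReLUop(-\infty)=0\neq-\infty$ (the bottom is not preserved). So $\ReLUop$ commutes only with non-empty suprema, and you should say so rather than appeal to a convention.

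For (iii) you follow the paper, which simply names the second element ``the constant zero function (floor constraint)''. You are right that $g^-=\mathbf{0}$ must be read as the point function $g_0$, and that $\psi_{-\infty,\mathbf{0}}$ does not evaluate to $0$.

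The gap is the step you use to close (iii). The constant operator $f\mapsto\mathbf{0}$ is \emph{not} a sup-generating operator in the sense of \Cref{def:sup_generating}. Declaring it ``admitted as a member of the sup-generating family'' therefore changes the statement rather than proving it.

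To see this, test on constant inputs $f\equiv t\in\R$. With $a=\sup g^-$, $b=\inf g^+$ and your $\pm\infty$ conventions, $\psi_{g^-,g^+}(f)(x)=\min\{t-a,\,b-t\}$. This is either constantly $\pm\infty$ or a continuous piecewise-linear function of $t$ with slopes $\pm1$, so it is never identically $0$.

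The same computation shows more. On constants $\ReLUop$ is $t\mapsto\max(0,t)$, which vanishes on $(-\infty,0)$. A finite supremum of such pieces is either constantly $\pm\infty$ or piecewise linear with slopes $\pm1$, so it cannot vanish on an interval. Hence \emph{no finite} family of sup-generating operators represents $\ReLUop$, and the two-element formula \eqref{eq:relu_sup_gen} is false as written. Your argument, like the paper's, only establishes the tautology $\ReLUop(f)(x)=\sup\{f(x),0\}$, which is (i) again.

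A genuine Banon--Barrera form needs infinitely many level tests. Take $g_s(0)=h_s(0)=s$, with $g_s\equiv-\infty$ and $h_s\equiv+\infty$ off the origin. Then $\psi_{g_s,h_s}(f)(x)=-|f(x)-s|$, and for real-valued $f$ one has $\ReLUop(f)(x)=\sup\bigl(\{f(x)\}\cup\{-|f(x)-s| : s\leq 0\}\bigr)$. Your write-up should state either this or the honest tautological version.
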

	
	\begin{proof}
		(i) $\max(f(x),0) = \sup\{f(x),0\}$ by definition.
		(ii) Shift-equivariance of $\delta_b$: $\delta_b(f+c)(x)
		= \sup_y\{(f+c)(x-y)+b(y)\} = \sup_y\{f(x-y)+b(y)\}+c
		= \delta_b(f)(x)+c$.
		For ReLU: $\ReLUop(f+c)(x) = \max(0,f(x)+c)$, which
		equals $\ReLUop(f)(x)+c$ only when $f(x)+c \geq 0$;
		for $c < 0$ and $-c > f(x) \geq 0$:
		$\ReLUop(f+c)(x) = 0 \neq \max(0,f(x))+c = f(x)+c > 0$
		(contradiction since $f(x)+c < 0$). So shift-equivariance fails.
		(iii) The Banon--Barrera theorem (Theorem~\ref{thm:bannon_barrera})
		guarantees a representation; the specific two-element form
		$\sup\{f(x),0\}$ identifies the two sup-generating elements:
		the identity (arising from $g^- = \mathbf{0}$, vacuous upper
		cap) and the constant zero function (arising as the
		``floor'' constraint).
	\end{proof}

	\begin{remark}[Positive--negative decomposition and the MMBB-General structure]
		\label{rem:relu_bb_correct}
		Every function $f$ decomposes into its positive and negative
		parts: $f = f^+ - f^-$ where
		\begin{equation}
			f^+(x) = \ReLUop(f)(x) = \max(0,f(x)), \quad
			f^-(x) = \ReLUop(-f)(x) = \max(0,-f(x)),
			\label{eq:pos_neg_decomp}
		\end{equation}
		where $f^+, f^- \geq 0$ and $f = f^+ - f^-$.
		Both $f^+$ and $f^-$ are dilations in the pointwise lattice
		(Proposition~\ref{prop:relu_dilation}).
		The full function $f = f^+ - f^-$ is thus the difference of
		two dilations, precisely the structure identified in
		Theorem~\ref{thm:general_kernel} for general signed
		convolution kernels, and the Banon--Barrera representation
		(Theorem~\ref{thm:bannon_barrera}) for non-increasing
		operators.
		Concretely: a signed CNN kernel acts on $f$ to produce
		a signed output, whose positive and negative parts are then
		selected by ReLU ($f^+$) or by the anti-ReLU ($f^-$);
		retaining both parts, as in the symmetric pooling of
		\Cref{subsec:self_dual_pool}, is the algebraically natural
		design for signed feature maps.
	\end{remark}
	
	\begin{corollary}[Convolution + ReLU as MMBB layer; the $\SigSpec$ operator]
		\label{cor:conv_relu_mmbb}
		The composition of a linear combination of $K$ convolutions
		followed by ReLU,
		\begin{equation}
			\SigSpec_K(f)
			:= \sum_{i=1}^K w_i (f * k_i),
			\qquad
			\ReLUop_\alpha(\SigSpec_K(f))
			= \max\!\left(0,\, \SigSpec_K(f) + \alpha\right),
			\label{eq:sigspec_def}
		\end{equation}
		is the \emph{spectral feature-extraction operator} of a CNN
		layer.
		For a general signed kernel decomposition
		$k_i = G^+_i \bar{k}^+_i - G^-_i \bar{k}^-_i$, the
		post-ReLU output admits the MMBB representation:
		\begin{equation}
			\ReLUop_\alpha(\SigSpec_K(f))(x)
			= \max\!\left(0,\;
			\sum_i G^+_i \sup_{g^+ \in \mathcal{B}^+_i}\!\!(\varepsilon_{g^+} f)(x)
			- \sum_i G^-_i \sup_{g^- \in \mathcal{B}^-_i}\!\!(\varepsilon_{g^-}
			f)(x) + \alpha\right),
			\label{eq:conv_relu_mmbb}
		\end{equation}
		where $\mathcal{B}^\pm_i \subset \Bas(\bar k^\pm_i)$ are finite
		sub-bases.
		For non-negative kernels and inputs ($k_i \geq 0$, $f \geq 0$):
		the ReLU is inactive whenever $\SigSpec_K(f) + \alpha \geq 0$,
		and the output is simply
		$\sup_{g \in \mathcal{B}} (\varepsilon_g f)(x) + \alpha$,
		a pure MMBB truncation with additive bias.
	\end{corollary}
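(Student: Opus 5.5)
The argument is a direct substitution of the convolution decomposition into the definition of $\SigSpec_K$, followed by the outer ReLU. First I will use linearity to write $\SigSpec_K(f)=\sum_i w_i (f*k_i)$. I absorb each weight $w_i$ into the kernel, replacing $k_i$ by $w_i k_i$. This changes only the gains and possibly swaps the roles of the positive and negative parts, so I may assume $w_i=1$ with $k_i=G^+_i\bar k^+_i-G^-_i\bar k^-_i$.

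For each $i$, Theorem~\ref{thm:general_kernel} gives the exact identity
\[
(f*k_i)(x)=G^+_i\sup_{g^+\in\Bas(\bar k^+_i)}(\varepsilon_{g^+}f)(x)-G^-_i\sup_{g^-\in\Bas(\bar k^-_i)}(\varepsilon_{g^-}f)(x).
\]
Summing over $i$, adding $\alpha$ and applying $\max(0,\cdot)$ yields \eqref{eq:conv_relu_mmbb} with the full bases. The function $t\mapsto\max(0,t+\alpha)$ is applied pointwise to a real number, so the outer step needs no lattice argument.

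The finite sub-bases $\mathcal{B}^\pm_i$ are the delicate point. With full bases the formula is exact. I will justify the finite version in one of two ways. For quantised signals, Theorem~\ref{thm:virtual_basis} supplies a finite virtual basis, and the supremum is then attained exactly. For bounded signals in general, Proposition~\ref{prop:mmbb_approx} gives an approximation, and I will propagate it through the ReLU. This works because $\max(0,\cdot+\alpha)$ is $1$-Lipschitz, so a uniform error $\epsilon$ inside becomes at most $\epsilon$ outside. I expect this to be the main obstacle. Exactness with finite $\mathcal{B}^\pm_i$ holds in general only for attaining sub-bases, which depend on $f$; by Theorem~\ref{thm:conv_basis} the maximiser $g^*$ at $x$ is an explicit function of the local patch. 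I will therefore state the finite form either for quantised inputs or as an $\epsilon$-approximation.

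For the non-negative case, all $k_i\ge0$ gives $G^-_i=0$, so the negative sums disappear. Each $f*k_i=G^+_i\sup_{g}\varepsilon_g f$, with $\bar k_i=k_i/G^+_i$. If $\SigSpec_K(f)+\alpha\ge0$, the ReLU acts as the identity and the output is $\sum_i G^+_i\sup_{g\in\Bas(\bar k_i)}\varepsilon_g f+\alpha$.

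The statement writes this as a single supremum $\sup_{g\in\mathcal{B}}\varepsilon_g f+\alpha$. To get that form, I use the fact that the combined operator $f\mapsto\sum_i f*k_i$ is itself a convolution with kernel $k=\sum_i k_i\ge0$. After normalising $\bar k=k/\sum k$, Theorem~\ref{thm:conv_basis} applied to $\bar k$ gives a single basis $\mathcal{B}\subset\Bas(\bar k)$, and the overall gain $G=\sum k$ stays as a multiplicative factor. The cleanest reading takes $\sum_i G^+_i=1$; otherwise the single-supremum form carries the factor $G$, which I will note explicitly. Finally, $f\ge0$ with $k\ge0$ forces $\SigSpec_K(f)\ge0$, so the activation condition holds automatically whenever $\alpha\ge0$.
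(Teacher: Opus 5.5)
Your proof follows the paper's route: apply Theorem~\ref{thm:general_kernel} to each $f*k_i$, sum, add $\alpha$, apply $\max(0,\cdot)$ pointwise, and note that the ReLU is the identity once its argument is non-negative. It is more careful than the paper's one-line proof, which only justifies the identity with the full bases $\Bas(\bar k^\pm_i)$ and ignores both the weights $w_i$ and the gain in the single-supremum form. Your absorption of $w_i$ into $k_i$, your restriction of exact finite sub-bases to quantised inputs (Theorem~\ref{thm:virtual_basis}) or to a uniform approximation via Proposition~\ref{prop:mmbb_approx} and the $1$-Lipschitz map $t\mapsto\max(0,t+\alpha)$, and your remark that the merged kernel $\sum_i k_i$ leaves a factor $G=\sum_i G^+_i$ unless $G=1$, are all correct refinements.
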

	
	\begin{proof}
		Apply Theorem~\ref{thm:general_kernel} to each $f * k_i$,
		sum with weights $w_i$, add bias $\alpha$, then apply
		ReLU: since $\max(0, \cdot)$ is monotone, it distributes
		through the outer sup but not the difference of the two MMBB
		expansions.
		For $k_i \geq 0$ and $f \geq 0$: $f * k_i \geq 0$, so
		$\ReLUop = \mathrm{Id}$ on any non-negative input.
	\end{proof}
	
	\begin{remark}[Connection to MorphoActivation]
		Corollary~\ref{cor:conv_relu_mmbb} provides the theoretical
		justification for the MorphoActivation proposal
		of~\cite{velasco2022morphoact}: the combined
		convolution--activation block $\ReLUop_\alpha \circ \SigSpec_K$
		is representable (exactly or approximately) as a supremum of
		erosions, making the entire pre-pooling stage of a CNN a MMBB
		network.	
	\end{remark}

	\clearpage
	
	\begin{figure}[H]
		\centering
		\resizebox{\linewidth}{!}{%
			\begin{tikzpicture}[		every node/.style={font=\small},
				Fbox/.style={draw=blue!55!black, fill=blue!8, rounded corners=4pt,
					inner sep=5pt, line width=0.8pt},
				Pbox/.style={draw=orange!70!black, fill=orange!8, rounded corners=4pt,
					inner sep=5pt, line width=0.8pt},
				Mbox/.style={draw=teal!65!black, fill=teal!8, rounded corners=4pt,
					inner sep=5pt, line width=0.8pt},
				Gbox/.style={draw=gray!60, fill=gray!6, rounded corners=4pt,
					inner sep=5pt, line width=0.7pt},
				arr/.style={->, >=stealth, semithick},
				xarr/.style={->, >=stealth, thick, red!70!black},
				sadj/.style={->, >=stealth, semithick, green!50!black},
				skip/.style={->, >=stealth, semithick, dashed, gray!65},
				rskip/.style={->, >=stealth, semithick, dashed, red!60!black},
				op/.style={font=\small\bfseries, above, inner sep=3pt},
				lat/.style={font=\scriptsize\itshape, below, inner sep=3pt,
					text=gray!60!black},
				title/.style={font=\small\bfseries},
				note/.style={font=\scriptsize, text=gray!55!black}]
				\node[title] at (6.0, 2.0)
				{(a) Conv\,$+$\,ReLU as MMBB-Increasing erosion pipeline~\eqref{eq:conv_relu_mmbb}};
				\node[Fbox] (f0) at (0,0)     {$f$};
				\node[Fbox] (S)  at (3.4,0)   {$\SigSpec_K(f)$};
				\node[Fbox] (E1) at (6.8, 0.9){$\varepsilon_{g^+_1}\!f$};
				\node[Fbox] (E2) at (6.8, 0.0){$\varepsilon_{g^+_2}\!f$};
				\node[note] (Ed) at (6.8,-0.7){$\vdots$};
				\node[Pbox] (Sp) at (9.4, 0.0){$\sup_i(\cdot)$};
				\node[Fbox] (Em) at (6.8,-2.0){$\varepsilon_{g^-_1}\!f$};
				\node[Pbox] (Sm) at (9.4,-2.0){$\sup_j(\cdot)$};
				\node[Pbox] (apd) at (12.5, 0.0){$\APD_{R;\alpha}(f)$};
				
				\draw[arr] (f0)--(S)
				node[op,midway]{$\sum_i w_i(\cdot{*}k_i)$}
				node[lat,midway]{Fourier erosion};
				\node[note] at (5.05, 0.3){basis $\mathcal{B}^+$};
				\draw[arr] (S)--(E1); \draw[arr] (S)--(E2);
				\draw[arr] (E1)--(Sp); \draw[arr] (E2)--(Sp);
				\node[note] at (5.05,-1.7){basis $\mathcal{B}^-$};
				\draw[arr] (S) to[out=-70,in=170] (Em);
				\draw[arr] (Em)--(Sm);
				\draw[arr,dashed,gray!55] (Sm) to[out=0,in=-80]
				node[right,note]{$-G^-(\cdots)$} (Sp.south);
				\draw[xarr] (Sp)--(apd)
				node[op,midway]{$\max(0,\cdot{+}\alpha)$; $\MaxPool{R}$}
				node[lat,midway]{\textcolor{red!70!black}{cross-lattice}: Fourier$\to$pointwise};
				
				\node[title] at (5.5,-4.0)
				{(b) APD factorisation: ReLU and $\MaxPool{R}$ fused into one dilation};
				\node[Fbox] (h0) at (0.0,-5.2) {$h = \SigSpec_K(f)$};
				\node[Pbox] (r0) at (4.8,-5.2) {$\ReLUop_\alpha(h)$};
				\node[Pbox] (p0) at (10.0,-5.2){$\APD_{R;\alpha}(h)$};
				
				\draw[arr] (h0)--(r0)
				node[op,midway]{$\max(0,\cdot{+}\alpha)$}
				node[lat,midway]{closing in $(\mathscr{L},\leq)$};
				\draw[arr] (r0)--(p0)
				node[op,midway]{$\MaxPool{R}$}
				node[lat,midway]{flat dilation in $(\mathscr{L},\leq)$};
				\draw[arr, orange!70!black, thick, bend right=28]
				(h0.south) to
				node[below,font=\scriptsize,orange!70!black]
				{$\APD_{R;\alpha}(h)(n)=\sup_{y\in W_R}\max(0,h(Rn{-}y){+}\alpha)$}
				(p0.south);
				
				\node[Fbox,font=\scriptsize] at (1.0,-7.0) {Fourier $(L^n,\!\leq_F)$};
				\node[Pbox,font=\scriptsize] at (4.4,-7.0) {Pointwise $(\mathscr{L},\!\leq)$};
				\draw[xarr,font=\scriptsize] (6.6,-7.0)--(7.6,-7.0)
				node[right,note]{cross-lattice jump};
			\end{tikzpicture}
		}
		\caption{MMBB activation pipeline and APD factorisation.
			Node colours denote the lattice each operator lives in:
			\textcolor{blue!60!black}{blue} = Fourier $(L^n,\leq_F)$,
			\textcolor{orange!70!black}{orange} = pointwise $(\mathscr{L},\leq)$.
			The red arrow marks the cross-lattice jump from Fourier erosion to
			pointwise dilation.
			\emph{(a)} Convolution $+$ ReLU as an MMBB erosion pipeline
			(Corollary~\ref{cor:conv_relu_mmbb}): $\SigSpec_K$ (Fourier) feeds
			basis erosions from $\mathcal{B}^\pm$; their suprema combine, and the
			APD applies ReLU and pooling in the pointwise lattice.
			\emph{(b)} APD factorisation (Proposition~\ref{prop:apd_factorisation}):
			ReLU (a closing) and $\MaxPool{R}$ (a dilation), both in $(\mathscr{L},\leq)$,
			fuse into the single dilation $\APD_{R;\alpha}$ (orange arc).}
		\label{fig:mmbb_pipeline}
	\end{figure}

	\begin{figure}[H]
		\centering
		\resizebox{0.72\linewidth}{!}{%
			\begin{tikzpicture}[		every node/.style={font=\small},
				Fbox/.style={draw=blue!55!black, fill=blue!8, rounded corners=4pt,
					inner sep=5pt, line width=0.8pt},
				Pbox/.style={draw=orange!70!black, fill=orange!8, rounded corners=4pt,
					inner sep=5pt, line width=0.8pt},
				Mbox/.style={draw=teal!65!black, fill=teal!8, rounded corners=4pt,
					inner sep=5pt, line width=0.8pt},
				Gbox/.style={draw=gray!60, fill=gray!6, rounded corners=4pt,
					inner sep=5pt, line width=0.7pt},
				arr/.style={->, >=stealth, semithick},
				xarr/.style={->, >=stealth, thick, red!70!black},
				sadj/.style={->, >=stealth, semithick, green!50!black},
				skip/.style={->, >=stealth, semithick, dashed, gray!65},
				rskip/.style={->, >=stealth, semithick, dashed, red!60!black},
				op/.style={font=\small\bfseries, above, inner sep=3pt},
				lat/.style={font=\scriptsize\itshape, below, inner sep=3pt,
					text=gray!60!black},
				title/.style={font=\small\bfseries},
				note/.style={font=\scriptsize, text=gray!55!black}]
				\draw[->] (-3.8,0)--(3.8,0) node[right,font=\scriptsize]{$f$};
				\draw[->] (0,-2.5)--(0,3.8) node[above,font=\scriptsize]{output};
				\node[font=\scriptsize] at (-0.28,-0.28) {$0$};
				\foreach \x in {-3,-2,-1,1,2,3}{
					\draw(\x,0.07)--(\x,-0.07); \draw(0.07,\x)--(-0.07,\x); }
				
				\draw[very thick, orange!70!black] (-3.5,0)--(0,0)--(3.5,3.5);
				\node[Pbox, font=\scriptsize, anchor=west] at (1.5, 2.7)
				{$f^+(x)=\max(0,f)$: dilation $\delta_0$};
				
				\draw[very thick, teal!65!black] (-3.5,3.5)--(0,0)--(3.5,0);
				\node[Mbox, font=\scriptsize, anchor=east] at (-1.2, 2.7)
				{$f^-(x)=\max(0,{-}f)$: dilation $\delta_0({-}f)$};
				
				\draw[very thick, gray!55, dashed] (-3.5,-3.5)--(3.5,3.5);
				\node[Gbox, font=\scriptsize, anchor=west] at (1.8,-1.5)
				{$f = f^+{-}f^-$};
				
				\draw[very thick, blue!65!black, dotted] (-3.5,0)--(-1,0);
				\draw[very thick, blue!65!black, dotted] (-1,0)--(3.5,2.5);
				\node[Fbox, font=\scriptsize, anchor=west] at (0.2, 0.55)
				{$\sigma^{\mathrm{M}}_{\mathcal{B},c}$: erosion-based, cap $c$};
			\end{tikzpicture}
		}
		\caption{Positive/negative decomposition and MMBB activation shape.
			$f^+(x)=\max(0,f)$ (\textcolor{orange!70!black}{orange}) and
			$f^-(x)=\max(0,-f)$ (\textcolor{teal!65!black}{teal}) are both
			dilations in $(\mathscr{L},\leq)$;
			their difference recovers $f=f^+-f^-$ (dashed diagonal,
			Remark~\ref{rem:relu_bb_correct}).
			The \textcolor{blue!65!black}{blue} dotted line shows a MMBB morphological
			activation $\sigma^{\mathrm{M}}_{\mathcal{B},c}$
			(Definition~\ref{def:morpho_act}): an erosion-based operator with
			reduced slope and upper cap at level $c$, generalising ReLU.}
		\label{fig:mmbb_decomposition}
	\end{figure}

	\begin{table}[H]
		\centering
		\caption{Five pooling operations and their morphological
			classification. Max-pooling is the only flat dilation with a
			local adjoint in $(\mathscr{L},\leq)$.}
		\label{tab:pooling}
		\renewcommand{\arraystretch}{1.55}
		\small
		\begin{tabular}{p{2.4cm}p{4.8cm}p{5.6cm}}
			\toprule
			\textbf{Pooling} &
			\textbf{Formula} &
			\textbf{Morphological type} \\
			\midrule
			$\MaxPool{R}$
			(max-pool) &
			$\sup_{y\in W_R}\{f(Rn{-}y)\}$ &
			Flat dilation $\delta_0$; extensive; adjoint:
			$\varepsilon^{\mathrm{MP}}_R(g)(n) = \inf_{y\in W_R}g(Rn{+}y)$
			(\Cref{prop:maxpool_dilation}(B)) \\
			$\mathrm{AvgPool}_R$
			(average-pool) &
			$\frac{1}{|W_R|}\sum_{y\in W_R} f(Rn{-}y)$ &
			Fourier-lattice erosion (linear conv.\ by $k\equiv 1/|W_R|$);
			not a max-plus dilation; no local pointwise adjoint \\
			$\mathrm{StochPool}_R$
			(stochastic) &
			$f(Rn{-}Y)$,\
			$Y{\sim}\mathrm{Cat}(f/\!\sum f)$ &
			Random selection; not TI, not increasing; outside the
			morphological framework \\
			$\mathrm{MixPool}_R$
			(mixed) &
			$\lambda\MaxPool{R}(f)+(1{-}\lambda)\mathrm{AvgPool}_R(f)$ &
			Convex combination of a dilation and a Fourier erosion;
			not increasing for $\lambda{<}1$ \\
			$\mathrm{SymPool}_R$
			(symmetric) &
			$\MaxPool{R}(f^+){-}\MaxPool{R}(f^-)$ &
			$\approx$ self-dual opening $\OpenMed{W_R}$ in
			$(\mathscr{L},\medOrd)$; idempotent when $f$ is
			sign-consistent on each window
			(\Cref{prop:sym_pool_opening}) \\
			\bottomrule
		\end{tabular}
		\label{tab:pooling}
	\end{table}

	
	\clearpage

	\section{Morphological Models of Deep Convolutional Architectures}
	\label{sec:cnn_models}

	We now synthesise the preceding theory into algebraic models of
	standard deep architectures, drawing on the adjunction
	framework of \Cref{sec:lattice}, the MMBB basis of
	\Cref{sec:conv_basis}, the pyramid theory of
	\Cref{sec:pyramids}, and the morphological activation analysis
	of \Cref{sec:activations}.
	Two new architectures are proposed: the \emph{morphological
		APMO layer} (\Cref{subsec:apmo}) as a fully max-plus
	replacement for the standard CNN layer, and the
	\emph{UResNet} (\Cref{def:unet}), in which skip connections
	carry top-hat residues (difference between the input and its opening) 
	rather than concatenated features.
	The idempotency and fixed-point analysis of these models is
	deferred to \Cref{sec:fixed_points}.

	\begin{table}[ht]
		\centering
		\caption{Principal results of \S\ref{sec:cnn_models} (Morphological Models of CNN, ResNet, UNet). Results marked $(\star)$ are the paper's principal findings.}
		\label{tab:summary_cnn_models}
		\renewcommand{\arraystretch}{1.38}
		\footnotesize
		\begin{tabular}{p{0.82cm}p{2.9cm}p{8.4cm}}
			\toprule
			\textbf{Ref.} & \textbf{Name} & \textbf{Statement and role} \\
			\midrule
			Prop~\ref{prop:resnet} $(\star)$ & $\:$ ResNet as opening & When the residual function $\mathcal{F}\approx\gamma^{\mathrm{M}}_b-\mathrm{id}$, the ResNet block $\mathcal{F}(f)+f$ computes the Type-I opening $\gamma^{\mathrm{M}}_b(f)=\delta_{b^*}(\varepsilon_b(f))$; the skip connection carries the top-hat residue $\Gamma(f)=f-\gamma^{\mathrm{M}}_b(f)$. \\
			Prop~\ref{prop:skip_adjoint} & $\:$ Skip connections as top-hat & UResNet skip connections carry $\Gamma_b(f)=f-\gamma^{\mathrm{M}}_b(f)$ at each scale; the decoder reconstructs $f$ from the pyramid opening plus the residue, achieving exact scale-by-scale reconstruction. \\
			Prop~\ref{prop:apmo_mmbb} & $\:$ APMO approximation & The APMO (infimum of $J$ dilations) provides an upper approximation of any TI increasing layer $\Psi$, dual to the MMBB lower approximation; the two together bracket $\Psi f$ from above and below. \\
			\bottomrule
		\end{tabular}
	\end{table}
	
	\subsection{The morphological CNN layer}
	\label{subsec:cnn_layer}
	
	\paragraph{The spectral feature-extraction operator $\SigSpec_n$.}
	
	In \Cref{cor:conv_relu_mmbb} we introduced
	$\SigSpec_K(f) = \sum_{i=1}^K w_i (f * k_i)$ as the linear
	combination of $K$ convolutions at a single layer.
	In a multilayer network we index by layer $n$:
	\begin{equation}
		\SigSpec_n(f)
		:= \sum_{i=1}^{K_n} w_{n,i}\, \ConvEros{k_{n,i}}(f)
		= \sum_{i=1}^{K_n} w_{n,i}\, (f * k_{n,i}),
		\label{eq:sigspec_layer}
	\end{equation}
	where $\{k_{n,i}\}_{i=1}^{K_n}$ are the learned kernels and
	$\{w_{n,i}\}$ the learned weights at layer $n$.
	This operator lives in the Fourier inf-semilattice
	$(L^n, \leq_F)$~\cite{keshet2000,angulo2025dgmm}. 
	Alternatively, it can be represented as 
	a MMBB-Increasing operator when all $k_{n,i} \geq 0$; for general signed
	kernels it is TI but not increasing, and admits the
	MMBB-General representation \Cref{thm:bannon_barrera}.
	
	\bigskip
	\paragraph{The one-layer CNN as APD $\circ$ $\SigSpec$.}
	A standard one-layer CNN with $K_n$ filters computes
	\begin{equation}
		f \;\longmapsto\;
		\MaxPool{R}\!\Bigl(\,\ReLUop_\alpha\!\Bigl(
		\textstyle\sum_{i=1}^{K_n} w_{n,i}\, (f * k_{n,i})\Bigr)\Bigr)
		= \APD_{R;\alpha}\!\bigl(\SigSpec_n(f)\bigr),
		\label{eq:cnn_standard}
	\end{equation}
	where $\APD_{R;\alpha}$ is the Activation-Pooling Dilation of
	\Cref{prop:apd_factorisation}, which fuses ReLU and max-pooling
	into a single dilation in the pointwise lattice $(\mathscr{L},\leq)$.
	
	This factorisation has two immediate consequences established
	in \Cref{sec:activations}:
	\begin{itemize}[leftmargin=*]
		\item \emph{Lattice structure}: $\SigSpec_n$ is a
		cross-lattice operator (erosion in the Fourier lattice)
		while $\APD_{R;\alpha}$ is a dilation in the pointwise
		lattice; the composition is cross-lattice and
		\emph{not} a morphological opening in either
		(\Cref{thm:cnn_not_opening}).
		\item \emph{Signed weights}: when $w_{n,i}$ include negatives,
		$\SigSpec_n$ decomposes as a difference of two MMBB-Increasing
		expansions (\Cref{thm:general_kernel,cor:conv_relu_mmbb}).
		The asymmetry between positive and negative activations
		motivates the self-dual framework of \Cref{sec:self_dual}.
	\end{itemize}
	
	\begin{observation}[Max-plus alternative to $\SigSpec_n$]
		\label{obs:max_plus}
		The linear combination $\SigSpec_n(f) =$ $\sum_i w_{n,i}
		\ConvEros{k_{n,i}}(f)$ can be replaced by a max-plus
		supremum:
		\begin{equation}
			\Sigma^{\mathrm{Spec,M}}_n(f)(n)
			= \max_{1 \leq i \leq K_n}
			\bigl\{\ConvEros{k_{n,i}}(f)(n) + b_{n,i}\bigr\},
			\label{eq:sigspec_maxplus}
		\end{equation}
		which is the pointwise maximum of the $K_n$ convolution
		outputs with additive biases.
		In the Fourier lattice this is an upper envelope of erosions,
		hence itself a dilation in that lattice.
		Replacing $\SigSpec_n$ by $\Sigma^{\mathrm{Spec,M}}_n$ removes
		the linear summation and aligns the feature-extraction stage
		more closely with the max-plus structure of the pooling stage.
	\end{observation}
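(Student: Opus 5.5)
The plan is to reduce the claim to the standard fact that an upper envelope of dilations is again a dilation. The work is in showing that each term $\ConvEros{k_{n,i}}(f)+b_{n,i}$ commutes with suprema in the Fourier inf-semilattice $(L^n,\leq_F)$. I would proceed in three steps.

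First, fix the order: following~\cite{keshet2000,angulo2025dgmm}, $f\leq_F h$ is read frequency-wise on the Fourier transforms, and suprema are taken there wherever they exist. In that order the convolution $\ConvEros{k}$ acts as the multiplier $\hat f\mapsto \hat k\,\hat f$. At each frequency $\omega$ with $\hat k(\omega)\neq 0$, multiplication by $\hat k(\omega)$ is an order-isomorphism of the fibre (modulus scaled by $|\hat k(\omega)|>0$, phase rotated). An order-isomorphism preserves every existing infimum and every existing supremum. Hence, restricted to $\Spt(\hat k)$, $\ConvEros{k}$ is an erosion that is simultaneously join-preserving, i.e.\ a dilation in $(L^n,\leq_F)$. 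On the complement of $\Spt(\hat k)$ the output is identically the bottom value, which is trivially compatible with joins.

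Second, the additive bias $b_{n,i}$ is a translation of the lattice. It therefore commutes with suprema, so each term $\Psi_i:=\ConvEros{k_{n,i}}(\cdot)+b_{n,i}$ is a dilation.

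Third, I interpret the $\max_i$ in~\eqref{eq:sigspec_maxplus} as the lattice join $\bigvee_i\Psi_i$. For any family $(f_j)$ whose join exists, I would use the interchange of suprema:
\[
\textstyle\bigvee_i \Psi_i\bigl(\bigvee_j f_j\bigr)=\bigvee_i\bigvee_j\Psi_i(f_j)=\bigvee_j\bigvee_i\Psi_i(f_j).
\]
This shows that $\Sigma^{\mathrm{Spec,M}}_n$ commutes with suprema, hence is a dilation. It is also an upper envelope of the erosions $\ConvEros{k_{n,i}}$, which is exactly the wording of the statement.

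The main obstacle is the first step. It requires checking that the pointwise-in-space maximum written in~\eqref{eq:sigspec_maxplus} coincides with the join in $\leq_F$. It also requires that joins, which may fail to exist in an inf-semilattice, exist for the families involved. I would handle both by restricting to families with a common upper bound, where joins exist in any complete inf-semilattice. I would then transport the identity through the Fourier transform frequency by frequency, treating zeros of $\hat k_{n,i}$ separately as above.
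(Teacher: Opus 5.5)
Your first step is sound. It even supplies what the statement's ``upper envelope of erosions, hence a dilation'' leaves out: a supremum of erosions is in general not a dilation, and here it can only be one because multiplication by $\hat k(\omega)\neq 0$ is an order-automorphism of $(\mathbb{C},\leq_{\mathbb{C}})$.

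The gap is the identification you deferred to the third step. The $\max_i$ in \eqref{eq:sigspec_maxplus} is a \emph{spatial} pointwise maximum, and it is not the join of $\leq_F$. Restricting to bounded families does not repair this, for two reasons.
\begin{itemize}
\item When the $\leq_F$-join exists, it can be a different function. On $\Z$, take $h_1=\tfrac14\delta_{-1}+\tfrac12\delta_0+\tfrac14\delta_1$ and $h_2=\delta_0$. Then $\hat h_1=\tfrac12(1+\cos\omega)\in[0,1]$ and $\hat h_2=1$, so $h_1\leq_F h_2$ and their join is $\delta_0$. But $\max(h_1,h_2)=\delta_0+\tfrac14(\delta_{-1}+\delta_1)$.
\item When the $\hat k_{n,i}(\omega)$ are not co-phased, the family $\{\ConvEros{k_{n,i}}(f)\}_i$ has, for generic $f$, no upper bound in $\leq_F$ at all. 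So $\bigvee_i\Psi_i$ is not even defined.
\end{itemize}

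Your second step fails too: a constant bias is not a translation of $(L^n,\leq_F)$. On a periodic grid it only shifts the real zero-frequency coefficient. There $\leq_F$ reduces to the median order $\preceq$ of Definition~\ref{def:median_order}, and $z\mapsto z+b$ is not $\preceq$-monotone: $1\preceq 2$, yet $1-\tfrac32$ and $2-\tfrac32$ have opposite signs. On $\Z^d$ a nonzero constant is not even square-summable. So the biased terms are not even $\leq_F$-increasing.

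No other route rescues the conclusion for the operator as written, even with co-phased kernels and zero biases. Take $k_1=\delta_0$, $k_2=\tfrac12\delta_0$, $b_1=b_2=0$, so that $\Sigma^{\mathrm{Spec,M}}(f)=\max(f,f/2)$ is a leaky ReLU.
\begin{itemize}
\item Let $f=-\delta_0$ and $g=-2\delta_0+\tfrac12(\delta_{-1}+\delta_1)$. Then $\hat f=-1$ and $\hat g=-2+\cos\omega\in[-3,-1]$, so $f\leq_F g$.
\item However, $\max(f,f/2)=-\tfrac12\delta_0$ and $\max(g,g/2)=-\delta_0+\tfrac12(\delta_{-1}+\delta_1)$ have zero-frequency coefficients $-\tfrac12$ and $0$.
\item So the images are not $\leq_F$-ordered. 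The operator is not increasing, hence not a dilation.
\end{itemize}

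What your argument does prove is a different statement. Take the envelope $\bigvee_i\ConvEros{k_{n,i}}$ computed frequency-wise in $\leq_F$, without biases, for kernels whose transforms share a phase at every $\omega$; that envelope preserves joins. But it is just the single Fourier multiplier $\omega\mapsto\bigvee_i\hat k_{n,i}(\omega)$, i.e.\ one more linear filter, not the max-plus operator of \eqref{eq:sigspec_maxplus}.

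The paper gives no argument beyond the one-line assertion, and the obstruction is exactly cross-lattice. In $(\mathscr{L},\leq)$ the spatial maximum is the join, but $f\mapsto f*k+b$ is not a dilation. In $(L^n,\leq_F)$ unbiased convolutions are dilations, but the spatial maximum is not the join.
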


	\begin{observation}[Non-commutativity of $\ConvEros{k}$ and
		$\MaxPool{R}$]
		\label{obs:commutativity}
		Let $\text{Sub}_R$ denote subsampling by stride $R$ (without
		the max): $\text{Sub}_R(f)(n) = f(Rn)$.
		
		\emph{Linear subsampling and convolution.}
		For linear subsampling,
		$\text{Sub}_R(f*k)(n) =$ $\sum_m f(m)k(Rn-m)$
		while $\text{Sub}_R(f)*k$ would convolve the subsampled signal
		with $k$ at the coarse scale.
		These are related by the \emph{noble identity} (polyphase
		representation):
		$\text{Sub}_R(f*k) = \text{Sub}_R(f)*\tilde k$,
		where $\tilde k(n) = \sum_{\ell\in\Z^d} k(n + \ell R)$
		is the aliased (periodically folded) version of $k$.
		Convolution commutes with subsampling, in the sense that
		$\tilde k = k$ on the coarse grid, if and only if $k$ is
		supported entirely at multiples of $R$, i.e., $k$ is
		itself a subsampled kernel.
		For a general low-pass filter $k$, the aliasing error
		$\|\tilde k - k\|$ is controlled by the energy of $k$ outside
		the Nyquist band $[-\pi/R, \pi/R]$; the Nyquist condition on
		$k$ makes this error zero for \emph{average-pooling}
		(linear subsampling), not for max-pooling.
		
		\emph{Max-pooling and convolution.}
		Max-pooling is nonlinear and the noble identity does not apply.
		$\MaxPool{R}(f*k)(n) = \sup_{y\in W_R}(f*k)(Rn-y)$
		does not simplify to any convolution of $\MaxPool{R}(f)$ with
		$k$ for general $f$ and $k$, because $\sup$ does not
		distribute over linear sums.
		The two operations are approximately interchangeable when $f$
		varies slowly over each window $W_R$ (Lipschitz condition on
		$f$) and $k$ is nearly flat on $W_R$; the error is bounded by
		$\mathrm{Lip}(f) \cdot R \cdot \|k\|_1$ where $\mathrm{Lip}(f)$
		is the Lipschitz constant of $f$.
		This is distinct from the Nyquist bandlimit condition, which
		applies to linear subsampling only.
	\end{observation}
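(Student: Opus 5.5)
The plan is to treat the two parts of Observation~\ref{obs:commutativity} separately. The linear part is an exact polyphase computation. The max-pooling part is a Lipschitz perturbation estimate.

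\emph{Linear part.} I would write $m = Rp + r$ with $p\in\Z^d$ and $r$ ranging over the $R^d$ coset representatives $\{0,\dots,R-1\}^d$. This gives
\[
\text{Sub}_R(f*k)(n)=\sum_{r}\sum_{p} f(Rp+r)\,k\bigl(R(n-p)-r\bigr),
\]
a sum over $r$ of coarse-grid convolutions of the polyphase components $f_r(p)=f(Rp+r)$ with $k_r(q)=k(Rq-r)$.

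The $r=0$ term is $\text{Sub}_R(f)*k_0$. All other terms vanish identically for arbitrary $f$ if and only if $k_r\equiv 0$ for every $r\neq 0$, that is, if and only if $\Spt(k)\subset R\Z^d$. In that case $k$ is the upsampling of the coarse kernel $k_0$, and the classical noble identity $\text{Sub}_R(f*k)=\text{Sub}_R(f)*k_0$ holds.

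Here I have to be careful. The identity with the folded kernel $\tilde k(n)=\sum_\ell k(n+\ell R)$, as written in the observation, cannot hold for arbitrary $f$: folding mixes the polyphase components of $k$, whereas the exact formula keeps each $k_r$ paired with its own $f_r$. I would therefore present the polyphase formula as the precise statement. I would recover the folded-kernel form only under an extra hypothesis on $f$, namely that its polyphase components agree, $f_r=f_0$ for all $r$. This hypothesis is also the interpretation under which "$\tilde k=k$ on the coarse grid iff $\Spt(k)\subset R\Z^d$" becomes a tautological check.

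For the aliasing remark I would pass to the Fourier domain. The sum over $r$ corresponds to the $R^d$ spectral replicas $\hat k(\omega+2\pi j/R)$. The cross terms are therefore bounded by the energy of $\hat k$ outside $[-\pi/R,\pi/R]^d$, and this gives the Nyquist statement for average pooling.

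\emph{Max-pooling part.} Non-simplification is shown by one counterexample. I would take $d=1$, $R=2$, $k=\tfrac12(\mathbf 1_{\{0\}}+\mathbf 1_{\{1\}})$ and an alternating $f$. Then $f*k$ is constant while $\MaxPool{2}(f)$ is not, so no convolution of $\MaxPool{2}(f)$ can reproduce $\MaxPool{2}(f*k)$.

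For the approximate interchange, I would compare both $\MaxPool{R}(f*k)(n)$ and $(\MaxPool{R}(f)*k)$, the latter read at the corresponding fine-grid point, to the common reference value $(f*k)(Rn)$:
\begin{itemize}
\item For $y\in W_R$ we have $|(f*k)(Rn-y)-(f*k)(Rn)|\le \sum_m |k(m)|\,|f(Rn-y-m)-f(Rn-m)|\le \mathrm{Lip}(f)\,\|y\|\,\|k\|_1$. Since $\|y\|\lesssim R$, taking the supremum over $y$ bounds the first difference by $\mathrm{Lip}(f)\,R\,\|k\|_1$.
\item Likewise, $|\MaxPool{R}(f)(q)-f(Rq)|\le \mathrm{Lip}(f)\,R$ for every $q$. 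Convolving this pointwise error against $|k|$ gives the same bound for the second comparison.
\end{itemize}
The triangle inequality then yields the bound up to a constant factor (at most $2$). The near-flatness of $k$ on $W_R$ enters only in aligning the coarse-grid convolution with the fine-grid one; that alignment is exact when $\Spt(k)\subset R\Z^d$, by the linear part.

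The main obstacle is bookkeeping rather than analysis. One must fix which grid the convolution with $\MaxPool{R}(f)$ lives on, and state the correct form of the noble identity. I would resolve both by working throughout with the polyphase decomposition above.
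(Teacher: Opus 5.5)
The paper states this Observation without proof, so there is nothing to compare against, and I judge the proposal on its own merits. Your polyphase formula $\text{Sub}_R(f*k)=\sum_r f_r*k_r$ is correct, and so is the consequence that $\text{Sub}_R(f*k)=\text{Sub}_R(f)*h$ for all $f$ iff $\Spt(k)\subset R\Z^d$ (with $h=k_0$). You are also right that the folded-kernel identity is false.

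Your attempts to rescue the remaining linear claims do not work.
\begin{enumerate}
\item \emph{The folded kernel.} Under $f_r=f_0$ you get $\text{Sub}_R(f)*\sum_r k_r$, i.e.\ the block sum $q\mapsto\sum_{r}k(Rq-r)$. This is not $\tilde k(q)=\sum_\ell k(q+\ell R)$. The latter is $R$-periodic, so $\text{Sub}_R(f)*\tilde k$ diverges already for constant $f$ when $\sum k\neq 0$.
\item \emph{The ``tautological'' equivalence.} ``$\tilde k=k$ on the coarse grid'' is not a tautology. Indeed $\tilde k(Rp)=\sum_m k(Rm)$ does not depend on $p$. For $k=\delta_0$ one gets $\tilde k=\mathbf{1}_{R\Z^d}\neq k$ although $\Spt(k)\subset R\Z^d$, so the stated equivalence fails in the ``if'' direction.
\item \emph{The Fourier step.} The $r\neq0$ terms vanish for all $f$ iff $\Spt(k)\subset R\Z^d$, i.e.\ iff $\hat k$ is $2\pi/R$-periodic in each variable. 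A nonzero $\hat k$ supported in $[-\pi/R,\pi/R]^d$ is never $2\pi/R$-periodic. So zero out-of-band energy does not make the cross terms small. The polyphase index $r$ and the replica index $j$ are related by an $R^d$-point DFT, not term by term. A Nyquist condition on $k$ controls the aliasing of $f*k$ under subsampling, which is a different statement.
\end{enumerate}
These sentences of the Observation should be flagged as incorrect and replaced by the noble identity for upsampled kernels, not proved.

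\textbf{The max-pooling counterexample.} It fails as written. With $W_2=\{0,1\}$, each window $\{2n-1,2n\}$ of a period-$2$ signal contains both of its values, so $\MaxPool{2}(f)$ is constant as well. Any zero-sum $h$ then gives $\MaxPool{2}(f)*h\equiv0=\MaxPool{2}(f*k)$. You can repair it by first using constant inputs to force $\sum h=\sum k=1$; after that your $f$ yields $1\neq0$.

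\textbf{The Lipschitz estimate.} The more serious gap is the alignment step you leave open. Your second bullet only controls $\MaxPool{R}(f)*k-\text{Sub}_R(f)*k$ on the coarse grid. The remaining difference is
$(\text{Sub}_R(f)*k)(n)-(f*k)(Rn)=\sum_q k(q)\,[f(Rn-Rq)-f(Rn-q)]$.
For linear $f(z)=\langle a,z\rangle$ this equals $-(R-1)\langle a,\sum_q q\,k(q)\rangle$. It grows with the centroid of $k$, so it is not $O(\mathrm{Lip}(f)\,R\,\|k\|_1)$. Flatness of $k$ on $W_R$ does not help: take a normalised box far from the origin. For such $f$, max-pooling just adds a constant, so with $k$ itself as the coarse kernel the claimed bound is false whichever route you take.

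The missing idea is to pair $\MaxPool{R}(f)$ with the block-summed kernel $\bar k(q)=\sum_{r\in W_R}k(Rq+r)$. Then
$(\MaxPool{R}(f)*\bar k)(n)=\sum_m k(m)\,\MaxPool{R}(f)(n-\lfloor m/R\rfloor)$.
Since $Rn-m$ lies in the pooling window of $n-\lfloor m/R\rfloor$, this gives
$|(\MaxPool{R}(f)*\bar k)(n)-(f*k)(Rn)|\le\mathrm{Lip}(f)\,\mathrm{diam}(W_R)\,\|k\|_1$,
where $\mathrm{diam}(W_R)=R-1$ in the sup-norm. Combined with your first bullet,
$|\MaxPool{R}(f*k)(n)-(\MaxPool{R}(f)*\bar k)(n)|\le 2\,\mathrm{Lip}(f)\,\mathrm{diam}(W_R)\,\|k\|_1$,
with no flatness hypothesis at all.
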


	\subsection{The general morphological nonlinear layer in a CNN: APMO}
	\label{subsec:apmo}
	
	The APD factorisation (\Cref{prop:apd_factorisation}) fuses
	ReLU and max-pooling into a single dilation.
	The APMO generalises this by replacing the single dilation
	with a \emph{finite infimum of dilations}, which is the
	dual counterpart of the MMBB supremum of erosions. That would 
	provide a more general nonlinear layer.
	
	\begin{definition}[Activation--Pooling Morphological Operator]
		\label{def:apmo}
		The \emph{Activation--Pooling Morphological Operator} (APMO)
		with $J$ structuring functions $\{b_j\}_{j=1}^J$ and biases
		$\{\alpha_j\}_{j=1}^J$ is:
		\begin{equation}
			\APMO(f)(n)
			= \inf_{1 \leq j \leq J}
			\Bigl\{
			\delta_{b_j}(f)(Rn) + \alpha_j
			\Bigr\}
			= \inf_{1 \leq j \leq J}
			\Bigl\{
			\max_{y \in W_R}\{f(Rn-y) + b_j(y)\} + \alpha_j
			\Bigr\},
			\label{eq:apmo}
		\end{equation}
		where $\delta_{b_j}(f)(Rn) = \max_{y\in W_R}\{f(Rn-y)+b_j(y)\}$
		is the max-plus dilation of $f$ by $b_j$ evaluated on the
		coarse grid (stride $R$), and $n \in \Z^d$.
		The infimum over $J$ dilations makes the APMO an
		\emph{erosion in the dual max-plus lattice}: it commutes
		with pointwise infima, $\APMO(\inf_i f_i) = \inf_i\APMO(f_i)$.
	\end{definition}
	
	\begin{proposition}[APMO as truncated dual MMBB upper approximation 
		of any nonlinear TI increasing layer $\Psi$]
		\label{prop:apmo_mmbb}
		The APMO is a \emph{dual} (upper) MMBB-Increasing approximation.
		Recall that the MMBB theorem represents any TI increasing
		operator $\Psi$ both as a supremum of erosions
		(lower representation, Theorem~\ref{thm:mmbb}) and dually as
		an infimum of dilations (upper representation):
		\begin{equation}
			\Psi(f)(x)
			= \sup_{g \in \Bas(\Psi)} \varepsilon_g(f)(x)
			= \inf_{h \in \Bas(\bar\Psi)} \delta_{h^*}(f)(x),
			\label{eq:mmbb_dual}
		\end{equation}
		where $\bar\Psi(f) = -\Psi(-f)$ is the dual operator and
		$h^*(y) = h(-y)$ the transposed structuring function.
		The APMO with basis $\{b_j\}_{j=1}^J$ realises a truncated
		version of the upper representation:
		\begin{equation}
			\APMO(f)(n)
			= \inf_{j=1}^J \delta_{b_j}(f)(Rn) + \alpha_j
			\geq \Psi(f)(Rn),
			\label{eq:apmo_upper}
		\end{equation}
		an \emph{upper} approximation of $\Psi$ evaluated at the
		coarse grid (stride $R$).
		Together with the MMBB lower approximation
		$\Psi^{l}_{K}(f) = \sup_{i=1}^K\varepsilon_{g_i}(f)$, they
		bracket $\Psi$ from below and above:
		\begin{equation}
			\sup_{i=1}^K\varepsilon_{g_i}(f)(Rn)
			\;\leq\; \Psi(f)(Rn)
			\;\leq\; \inf_{j=1}^J\delta_{b_j}(f)(Rn) + \alpha_j.
			\label{eq:mmbb_bracket}
		\end{equation}
		As $K,J\to|\Bas(\Psi)|$, both bounds converge to $\Psi(f)(Rn)$.
	\end{proposition}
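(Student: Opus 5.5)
The bracket \eqref{eq:mmbb_bracket} is an instance of the dual representation \eqref{eq:mmbb_dual} of Theorem~\ref{thm:mmbb}, so I plan to derive it from that theorem rather than from scratch. \textbf{Step 1} is the lower bound. Each $g_i$ lies in $\Bas(\Psi)$, so Step~2 of the proof of Theorem~\ref{thm:mmbb} gives $\varepsilon_{g_i} f\leq\Psi f$ pointwise. Taking the finite supremum over $i$ and evaluating at $x=Rn$ yields the left inequality. Decimation $\sigma^\downarrow_R$ is only pointwise evaluation, so no additional argument is required.

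\textbf{Step 2} is the upper bound, by duality. Apply the lower-bound argument to $\bar\Psi(f)=-\Psi(-f)$, which is TI, increasing and USC. For every $h\in\Bas(\bar\Psi)$ this gives $\varepsilon_h(-f)\leq\bar\Psi(-f)=-\Psi(f)$. Negating, $\Psi f\leq -\varepsilon_h(-f)$. A direct computation from \eqref{eq:erosion}--\eqref{eq:dilation} shows $-\varepsilon_h(-f)(x)=\sup_y\{f(x+y)+h(y)\}=\delta_{h^*}(f)(x)$. Hence $\Psi f\leq\delta_{h^*}f$ for every single dual basis element, and therefore also for any finite infimum of them.

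\textbf{Step 3} matches this with the APMO parameters. I choose $b_j:=h_j^*$ with $h_j\in\Bas(\bar\Psi)$, restricted to $W_R$ by setting $b_j=-\infty$ off $W_R$, which is the support convention implicit in \eqref{eq:apmo}. I take $\alpha_j\geq 0$, and $\alpha_j=0$ in the exact representation. Restricting the support of $h_j$ can only lower the dilation, so this is the \textbf{main obstacle}: the upper bound survives only if the dual basis elements are genuinely supported in $W_R$, or if $\alpha_j$ absorbs the discrepancy. I would therefore state the result for operators whose window (dual basis support) is contained in $W_R$, as for convolution with $\Spt(k)\subseteq W_R$ by Theorem~\ref{thm:conv_basis}, and read the biases as nonnegative slacks. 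With these choices \eqref{eq:apmo_upper} follows from Step~2 evaluated at $Rn$.

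\textbf{Step 4} is convergence. As the finite sub-bases increase to the full bases, the equalities in \eqref{eq:mmbb_dual} from Theorem~\ref{thm:mmbb} show that the supremum of erosions increases to $\Psi f$ and the infimum of dilations decreases to $\Psi f$. Both are monotone nets, so their limits are exactly the full sup and inf, provided $\alpha_j\to0$. When the basis is uncountable, as in $\Bas(k)\cong\R^{N-1}$, I would use the reading ``$K,J\to|\Bas(\Psi)|$'' as exhaustion by an increasing family of finite subsets. For bounded inputs, the uniform version follows by the compactness argument of Proposition~\ref{prop:mmbb_approx}.
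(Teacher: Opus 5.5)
Your proposal is correct and follows essentially the paper's argument. The lower bound is a truncation of the supremum of erosions in Theorem~\ref{thm:mmbb}. The upper bound comes from applying that theorem to $\bar\Psi$, negating to get $\Psi f\leq\delta_{h^*}f$ for each $h\in\Bas(\bar\Psi)$, and observing that a finite infimum plus nonnegative biases $\alpha_j$ can only lie above the full infimum.

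You also make explicit three points that the paper's short proof passes over. First, you take $b_j=h_j^*$ so that the reflection matches $\delta_{b_j}$ in \eqref{eq:apmo}. Second, you note that the fixed window forces $\Spt(b_j)\subseteq W_R$. Third, you require $\alpha_j\to 0$ for the convergence claim.
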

	
	\begin{proof}
		The dual MMBB-Increasing representation~\eqref{eq:mmbb_dual} is the second
		form in Theorem~\ref{thm:mmbb}: applying the MMBB-Increasing theorem to the
		dual operator $\bar\Psi$ gives $\bar\Psi(f)(x) =$
		$\sup_{h\in\Bas(\bar\Psi)}\varepsilon_h(f)(x)$, and
		substituting $\bar\Psi(f) = -\Psi(-f)$ and $f\to -f$ gives
		$\Psi(f)(x) = \inf_{h\in\Bas(\bar\Psi)}\delta_{h^*}(f)(x)$.
		Taking any $J$ elements $\{b_j\}_{j=1}^J
		\subset \Bas(\bar\Psi)$ (with $b_j^* = h^*$ for $h = b_j$)
		and adding biases $\alpha_j \geq 0$ gives a value $\geq$
		the infimum over the full basis, hence $\geq \Psi(f)(Rn)$.
		The lower approximation follows from Theorem~\ref{thm:mmbb}
		directly; the two-sided bracket~\eqref{eq:mmbb_bracket} is
		the standard MMBB-Increasing approximation theory by Maragos~\cite{maragos1989}.
	\end{proof}
	
	\begin{remark}[Duality in architecture]
		\label{rem:apmo_vs_mmbb}
		The structural duality between the erosion-based MMBB layer and the APMO
		has a direct architectural interpretation:
		\begin{itemize}[leftmargin=*]
			\item \textbf{$\sup$ of erosions} (lower bound):
			$\Psi^{l}_{K}(f) = \sup_{i=1}^K\varepsilon_{g_i}(f)$.
			Each erosion $\varepsilon_{g_i}$ tests whether $f \geq g_i$
			(morphological matching); the supremum selects the best
			match.
			This is the \emph{analysis} side: detecting features
			present in $f$.
			\item \textbf{$\inf$ of dilations} (upper bound):
			$\Psi^{u}_{J}(f) = \inf_{j=1}^J\delta_{b_j}(f)$.
			Each dilation $\delta_{b_j}$ expands $f$ by structuring
			function $b_j$; the infimum retains only the structures
			common to all $J$ expansions.
			This is the \emph{synthesis} side: pooling and selecting
			among expanded feature maps.
		\end{itemize}
		A morphological CNN layer that brackets a general operator 
		$\Psi$ from both sides would compose an lower layer with an 
		upper layer, with the true operator $\Psi$ sandwiched between
		them.
		The standard CNN layer (conv $+$ ReLU $+$ max-pooling) uses convolution 
		seen as the lower side (erosions) with the upper (APMO/dilation) side; 
		a purely morphological 	layer using only the lower (MMBB/erosion) side is the Type-I
		opening of \Cref{thm:two_openings}(i).
	\end{remark}
	
	The complete multilayer CNN, alternating between spectral
	feature extraction and morphological activation, is:
	\begin{equation}
		L_n
		= \APMO_n\!\bigl(\SigSpec_n(L_{n-1})\bigr),
		\qquad
		\SigSpec_n(f) = \textstyle\sum_{i=1}^{K_n} w_{n,i}
		\ConvEros{k_{n,i}}(f),
		\label{eq:multilayer_cnn}
	\end{equation}
	where $\SigSpec_n$ was defined in~\eqref{eq:sigspec_layer}
	and its MMBB representation is given in~\eqref{eq:conv_relu_mmbb}.
	When $\APMO_n$ reduces to $\APD_{R;\alpha_n}$
	(i.e., $J=1$, flat $b_1 \equiv 0$, $\alpha_1 = \alpha$),
	this recovers the standard CNN layer~\eqref{eq:cnn_standard}.
	The alternation between $\SigSpec_n$ (Fourier-lattice erosion,
	analysis side) and $\APMO_n$ (dual MMBB upper approximation,
	synthesis/pooling side) reflects the fundamental duality of
	Theorem~\ref{thm:mmbb}: $\SigSpec_n$ approximates the operator
	from below (lower MMBB) while $\APMO_n$ approximates from
	above (upper dual MMBB), and the true operator $\Psi$ is
	sandwiched between them (\Cref{prop:apmo_mmbb},
	eq.~\eqref{eq:mmbb_bracket}).

	\begin{figure}[H]
		\centering
		\resizebox{\linewidth}{!}{%
			\begin{tikzpicture}[		every node/.style={font=\small},
				Fbox/.style={draw=blue!55!black, fill=blue!8, rounded corners=4pt,
					inner sep=5pt, line width=0.8pt},
				Pbox/.style={draw=orange!70!black, fill=orange!8, rounded corners=4pt,
					inner sep=5pt, line width=0.8pt},
				Mbox/.style={draw=teal!65!black, fill=teal!8, rounded corners=4pt,
					inner sep=5pt, line width=0.8pt},
				Gbox/.style={draw=gray!60, fill=gray!6, rounded corners=4pt,
					inner sep=5pt, line width=0.7pt},
				arr/.style={->, >=stealth, semithick},
				xarr/.style={->, >=stealth, thick, red!70!black},
				sadj/.style={->, >=stealth, semithick, green!50!black},
				skip/.style={->, >=stealth, semithick, dashed, gray!65},
				rskip/.style={->, >=stealth, semithick, dashed, red!60!black},
				op/.style={font=\small\bfseries, above, inner sep=3pt},
				lat/.style={font=\scriptsize\itshape, below, inner sep=3pt,
					text=gray!60!black},
				title/.style={font=\small\bfseries},
				note/.style={font=\scriptsize, text=gray!55!black}]
				\node[title] at (5.5, 1.0)
				{(a) Standard CNN layer — cross-lattice, \textbf{not} idempotent};
				\node[Gbox] (A1) at (0,0)    {$f$};
				\node[Fbox] (B1) at (3.4,0)  {$f{\ast}k$};
				\node[Pbox] (C1) at (6.8,0)  {$\sigma(f{\ast}k)$};
				\node[Pbox] (D1) at (10.8,0) {$L_n$};
				
				\draw[arr] (A1)--(B1)
				node[op,midway]{$\varepsilon^{\mathrm{Conv}}_k$}
				node[lat,midway]{Fourier erosion};
				\draw[xarr] (B1)--(C1)
				node[op,midway]{$\delta^{\mathrm{ReLU}}$}
				node[lat,midway]{\textcolor{red!70!black}{cross-lattice}};
				\draw[arr] (C1)--(D1)
				node[op,midway]{$\MaxPool{R}$}
				node[lat,midway]{pointwise dilation};
				
				\node[title] at (5.5,-1.2) {(b) APD factorisation~\eqref{eq:apd_def}};
				\node[Gbox] (A2) at (0,-2.4)   {$f$};
				\node[Fbox] (B2) at (4.0,-2.4) {$\Sigma^{\mathrm{Spec}}_n(f)$};
				\node[Pbox] (D2) at (10.8,-2.4){$L_n$};
				
				\draw[arr] (A2)--(B2)
				node[op,midway]{$\sum_i w_i\,\varepsilon^{\mathrm{Conv}}_{k_i}$}
				node[lat,midway]{Fourier erosion};
				\draw[xarr] (B2)--(D2)
				node[op,midway]{$\APD_{R;\alpha}$}
				node[lat,midway]{\textcolor{red!70!black}{cross-lattice}: $\sup_{W}\max(0,\cdot{+}\alpha)$};
				
				\node[title] at (5.5,-3.6)
				{(c) Type-I opening $\gamma^{\mathrm{M}}_b=\delta_{b^*}\!\circ\!\varepsilon_b$
					— \textbf{idempotent} (same lattice throughout)};
				\node[Gbox] (A3) at (0,-4.8)    {$f$};
				\node[Pbox] (B3) at (4.0,-4.8)  {$\varepsilon_b(f)$};
				\node[Pbox] (D3) at (10.8,-4.8) {$\gamma^{\mathrm{M}}_b(f)$};
				
				\draw[arr] (A3)--(B3)
				node[op,midway]{$\varepsilon_b$}
				node[lat,midway]{pointwise erosion; $\varepsilon_b(f)\leq f$};
				\draw[sadj] (B3)--(D3)
				node[op,midway]{$\delta_{b^*}$}
				node[lat,midway]{\textcolor{green!50!black}{adjoint} dilation; same lattice};
				\draw[sadj, bend left=50] (D3.north east)
				to node[above, note, text=green!50!black]
				{$\gamma^{\mathrm{M}}_b\!\circ\!\gamma^{\mathrm{M}}_b=\gamma^{\mathrm{M}}_b$}
				(D3.north west);
				
				\draw[<->, green!45!black, semithick]
				($(B3)+(0,-0.9)$) to
				node[below, note, text=green!50!black]
				{adjunction $(\varepsilon_b,\delta_{b^*})$ in $(\mathscr{L},\leq)$}
				($(D3)+(0,-0.9)$);
				
				\node[Fbox,font=\scriptsize] at (0.8,-6.4) {Fourier $(L^n,\!\leq_F)$};
				\node[Pbox,font=\scriptsize] at (4.2,-6.4) {Pointwise $(\mathscr{L},\!\leq)$};
				\draw[xarr] (6.6,-6.4)--(7.5,-6.4) node[right,note]{cross-lattice};
				\draw[sadj] (9.1,-6.4)--(10.0,-6.4) node[right,note]{same-lattice adjoint};
			\end{tikzpicture}
		}
		\caption{Three algebraic views of a CNN layer.
			Node colours denote the lattice:
			\textcolor{blue!60!black}{blue} = Fourier,
			\textcolor{orange!70!black}{orange} = pointwise.
			Red arrows mark cross-lattice jumps; green arrows mark same-lattice adjoint pairs.
			\emph{(a)} Standard CNN: the convolution $\varepsilon^{\mathrm{Conv}}_k$ lives in the
			Fourier lattice while $\delta^{\mathrm{ReLU}}$ and $\MaxPool{R}$ live in the
			pointwise lattice; the red arrow marks the cross-lattice jump that breaks idempotency.
			\emph{(b)} APD factorisation: ReLU and max-pooling fuse into $\APD_{R;\alpha}$
			(still cross-lattice).
			\emph{(c)} Type-I opening: $\varepsilon_b$ and its adjoint dilation
			$\delta_{b^*}$ both in $(\mathscr{L},\leq)$; the composition $\gamma^{\mathrm{M}}_b$
			is idempotent (green loop, Theorem~\ref{thm:two_openings}(i)).}
		\label{fig:cnn_diagram}
	\end{figure}
	
	\subsection{Morphological model of ResNet}
	\label{subsec:resnet}
	
	A residual block in ResNet~\cite{he2016resnet} computes
	$L = \mathcal{F}(f) + f$, where $\mathcal{F}$ is a stack of
	convolutional layers.
	In our notation:
	
	\begin{proposition}[ResNet block as morphological opening]
		\label{prop:resnet}
		A residual block $L = \mathcal{F}(f) + f$ where $\mathcal{F}$
		is a stack of CNN layers computes, under the approximation
		$\mathcal{F}(f) \approx \gamma^{\mathrm{M}}_b(f) -
		f$ (i.e., the residual function learns the deviation of $f$
		from its morphological opening):
		\begin{equation}
			L \approx \gamma^{\mathrm{M}}_b(f)
			= \delta_{b^*}(\varepsilon_b(f)),
			\label{eq:resnet_morpho}
		\end{equation}
		a Type-I morphological opening (\Cref{thm:two_openings}(i)).
		The skip connection $+f$ adds back the \emph{top-hat residue}
		$\Gamma_b(f) = f - \gamma^{\mathrm{M}}_b(f) \geq 0$
		(the detail discarded by the opening), so that
		$\mathcal{F}(f) + f = (\gamma^{\mathrm{M}}_b(f) - f) + f
		= \gamma^{\mathrm{M}}_b(f)$.
	\end{proposition}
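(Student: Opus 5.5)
The statement is essentially an algebraic identity under a modelling hypothesis, so the plan is to make the hypothesis exact and then verify the three claims directly. I would read the approximation $\mathcal{F}(f)\approx\gamma^{\mathrm{M}}_b(f)-f$ as the exact assumption $\mathcal{F}=\gamma^{\mathrm{M}}_b-\mathrm{id}$ on the class of finite-valued inputs considered. Under that assumption everything reduces to identities plus the adjunction properties of \Cref{prop:adjunction_props}. First I would substitute into $L=\mathcal{F}(f)+f$ and cancel, which gives $L=\gamma^{\mathrm{M}}_b(f)=\delta_{b^*}(\varepsilon_b(f))$. The only care needed is that the cancellation $(\gamma^{\mathrm{M}}_b(f)-f)+f=\gamma^{\mathrm{M}}_b(f)$ requires $f$ to be real-valued, or at least to avoid $\pm\infty$ indeterminacies. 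I would therefore restrict to $\Fun(E,\R)$, or to points where $f$ is finite.

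Second, I would identify $\gamma^{\mathrm{M}}_b$ as a genuine opening. The pair $(\varepsilon_b,\delta_{b^*})$ is an adjunction in $(\Fun(E,\Rbar),\leq)$ by the max-plus theory of \Cref{subsec:maxplus}. Hence \Cref{prop:adjunction_props}(v) shows that $\delta_{b^*}\circ\varepsilon_b$ is increasing, idempotent and anti-extensive. This is exactly the Type-I opening that \Cref{thm:two_openings}(i) will record; I would simply cite the adjunction argument rather than rely on a forward reference.

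Third, I would prove non-negativity of the top-hat residue. Anti-extensivity gives $\gamma^{\mathrm{M}}_b(f)\leq f$ pointwise, so $\Gamma_b(f)=f-\gamma^{\mathrm{M}}_b(f)\geq 0$. I would then note that $\mathcal{F}(f)=-\Gamma_b(f)$: the residual branch removes the top-hat, and the skip connection supplies the $f$ that cancels it. This clarifies the phrase ``adds back the residue'' as a statement about the decomposition $f=\gamma^{\mathrm{M}}_b(f)+\Gamma_b(f)$.

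The main obstacle is anti-extensivity, which is not automatic for arbitrary $b$. The opening $\delta_{b^*}\varepsilon_b$ of an adjunction is always anti-extensive, since $\varepsilon_b(f)\leq\varepsilon_b(f)$ implies $\delta_{b^*}\varepsilon_b(f)\leq f$ by the adjunction. By contrast, $\varepsilon_b$ alone is anti-extensive only if $b(0)\geq 0$. I would therefore argue anti-extensivity directly from the Galois condition and not from a condition on $b(0)$. A second caveat is that the statement is only as strong as its modelling hypothesis: nothing forces a stack of CNN layers to realise $\gamma^{\mathrm{M}}_b-\mathrm{id}$ exactly. The proof is therefore conditional on that hypothesis, and I would remark that $\approx$ propagates additively, $\|L-\gamma^{\mathrm{M}}_b(f)\|=\|\mathcal{F}(f)-(\gamma^{\mathrm{M}}_b(f)-f)\|$.
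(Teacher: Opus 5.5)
Your outline is the paper's own argument: the paper offers nothing beyond the inline cancellation $(\gamma^{\mathrm{M}}_b(f)-f)+f=\gamma^{\mathrm{M}}_b(f)$ and an appeal to \Cref{thm:two_openings}(i) for $\Gamma_b(f)\geq 0$. Your refinements are sound: finite values for the cancellation, treating the approximation as an exact hypothesis, and correcting the sign condition on $b(0)$. The gap is in the one step with content, which you rightly single out: anti-extensivity of $\delta_{b^*}\circ\varepsilon_b$ ``by the adjunction''.

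Your Galois step takes $g=\varepsilon_b(f)$ and concludes $\delta_{b^*}(\varepsilon_b(f))\leq f$. That presupposes the orientation $\delta(g)\leq f\iff g\leq\varepsilon(f)$. \Cref{def:adjunction} is printed the other way round ($\varepsilon(f)\leq g\iff f\leq\delta(g)$), and read literally the same substitution yields $f\leq\delta(\varepsilon(f))$.

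More seriously, with the formulas \eqref{eq:erosion}--\eqref{eq:dilation}, $g\leq\varepsilon_b(f)$ holds iff $g(z)+b(u)\leq f(z+u)$ for all $z,u$, iff $\delta_b(g)\leq f$. So the adjoint dilation of $\varepsilon_b$ is $\delta_b$, not $\delta_{b^*}$; the latter is adjoint to $\varepsilon_{b^*}$. Hence $(\varepsilon_b,\delta_{b^*})$ is an adjunction only for symmetric $b$, and citing ``the max-plus theory of \Cref{subsec:maxplus}'' for it does not close the argument.

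For non-symmetric $b$ the conclusion $\Gamma_b(f)\geq 0$ actually fails. On $E=\Z$ let $b(0)=0$, $b(1)=5$, and $b=-\infty$ elsewhere. Then $\varepsilon_b(f)(x)=\min\{f(x),f(x+1)-5\}$ and $\delta_{b^*}(g)(x)=\max\{g(x),g(x+1)+5\}$. Take $f(0)=0$ and $f\equiv 10$ elsewhere. Then $\varepsilon_b(f)(1)=5$, so $\delta_{b^*}(\varepsilon_b(f))(0)\geq 10>f(0)$, i.e.\ $\Gamma_b(f)(0)<0$.

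The repair is to use the genuinely adjoint pair: define $\gamma^{\mathrm{M}}_b=\delta_b\circ\varepsilon_b$. Equivalently, keep $\delta_{b^*}$ but write the erosion as $\inf_y\{f(x-y)-b(y)\}$, or assume $b=b^*$. Then $\delta_b(\varepsilon_b(f))(x)=\sup_{y\in W}\{\varepsilon_b(f)(x-y)+b(y)\}\leq\sup_{y\in W}\{f(x)-b(y)+b(y)\}=f(x)$, taking the lag $u=y$ in the infimum defining $\varepsilon_b(f)(x-y)$. This gives $\Gamma_b(f)\geq 0$, and idempotency follows from \Cref{prop:adjunction_props}(v).

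The transposition slip is inherited from \Cref{subsec:maxplus} and \Cref{thm:two_openings}(i), so the paper's inline argument has the same hole. Your strategy is fine once the adjoint pair is fixed.
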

	
	\begin{remark}
		The top-hat transform $\Gamma(f) = f - \gamma(f)$ is a
		classical tool in morphological image analysis for extracting
		structures smaller than the structuring element used in the
		opening $\gamma$ (i.e., the structures which are not invariant to the structuring element)~\cite{serra1982}.
		\Cref{prop:resnet} reveals that the residual learning mechanism
		of ResNet has a natural morphological interpretation: each
		residual block computes the top-hat of the feature map with
		respect to the learned morphological opening $\PsiMorpho_1
		\circ \SigSpec_1$.
		This explains geometrically why residual connections preserve
		fine-grained information: top-hat transforms retain exactly
		the structures that opening (smoothing) removes.
	\end{remark}

	\subsection{Morphological model of UNet}
	\label{subsec:unet}
	
	The UNet architecture~\cite{ronneberger2015unet} is an
	encoder--decoder with skip connections between symmetric levels.
	At each encoder level the standard UNet applies a stack of
	convolutions followed by ReLU activations and max-pooling;
	the decoder applies transposed convolutions with upsampling.
	In morphological terms:
	\begin{itemize}[leftmargin=*]
		\item \textbf{Standard encoder step}:
		$L^{\downarrow}_\ell = \APD_{R;\alpha}\!\bigl(\SigSpec_\ell
		(L^{\downarrow}_{\ell-1})\bigr)$, the CNN layer
		of~\eqref{eq:cnn_standard} (convolution $+$ ReLU $+$
		max-pooling fused via \Cref{prop:apd_factorisation}).
		\item \textbf{Morphological encoder step}:
		$L^{\downarrow}_\ell = \varepsilon^{\downarrow}_{R,b_\ell}\!
		\bigl(\SigSpec_\ell(L^{\downarrow}_{\ell-1})\bigr)$,
		where $\varepsilon^{\downarrow}_{R,b_\ell}$ is the
		erosion-decimation (\Cref{prop:adjoint_pyramid}).
		The erosion is anti-extensive ($\varepsilon_b(f)\leq f$),
		which plays the same role as ReLU (thresholding from below)
		while also performing the pooling step.
		The ReLU is therefore \emph{subsumed} by the morphological
		erosion: the negativity-clipping of ReLU is replaced by the
		more general morphological thresholding determined by the
		structuring function $b_\ell$.
	\end{itemize}
	Let $\SigSpec_n = \sum_{i=1}^{K_n} w_{n,i}\ConvEros{k_{n,i}}$
	denote the spectral convolution operator (without activation),
	and $\SigSpecDual_n$ its decoder counterpart.

	\begin{definition}[Morphological UNet]
		\label{def:unet}
		\label{def:uresnet}
		The \emph{morphological UNet} with $n$ levels uses
		erosion-decimation in the encoder and adjoint
		dilation-interpolation in the decoder:
		\begin{eqnarray}
			L^{\downarrow}_{\ell}
			&=& \varepsilon^{\downarrow}_{R,b_\ell}\!\bigl(
			\SigSpec_\ell(L^{\downarrow}_{\ell-1})\bigr),
			\quad \ell = 1,\ldots,n,
			\label{eq:unet_enc}\\
			L^{\uparrow}_{\ell}
			&=& \delta^{*\uparrow}_{R,b_\ell}\!\bigl(
			\SigSpecDual_\ell(L^{\uparrow}_{\ell+1})\bigr)
			+ s_\ell,
			\quad \ell = n-1,\ldots,0,
			\label{eq:unet_dec}
		\end{eqnarray}
		where $\varepsilon^{\downarrow}_{R,b_\ell}$ is the
		erosion-decimation at stride $R$ with structuring function
		$b_\ell$ (\Cref{prop:adjoint_pyramid}),
		$\delta^{*\uparrow}_{R,b_\ell}$ its adjoint
		dilation-interpolation,
		and $s_\ell$ is the skip-connection signal at level $\ell$.
		Note that $\varepsilon^{\downarrow}_{R,b_\ell}$ replaces the
		$\APD_{R;\alpha}$ of the standard CNN encoder: the erosion
		provides both the thresholding (activation by \ ReLU) and the spatial
		pooling in one max-plus operator. Here 
		$\SigSpecDual_\ell = \sum_{i=1}^{K'_\ell} w'_{\ell,i}\ConvEros{k'_{\ell,i}}$ is the decoder spectral operator at level $\ell$, a linear combination of convolutions with learned kernels $\{k'_{\ell,i}\}$ that merges the upsampled features with the skip signal $s_\ell$; it plays the same role as $\SigSpec_\ell$ in the encoder but with independent weights.
		
		In the \emph{standard UNet}~\cite{ronneberger2015unet},
		$s_\ell = L^\downarrow_\ell$ (concatenation of encoder features).
		In the proposed \emph{UResNet}, the skip connection carries
		the top-hat residue:
		\begin{equation}
			s_\ell = \Gamma_{b_\ell}(L^\downarrow_\ell)
			= L^\downarrow_\ell - \gamma^{\mathrm{M}}_{b_\ell}
			(L^\downarrow_\ell),
			\label{eq:uresnet_skip}
		\end{equation}
		the detail discarded by the morphological opening at level $\ell$.
		This is the algebraic motivation for the UResNet:
		the decoder receives the exact information the encoder lost.
	\end{definition}
	
	\begin{proposition}[Skip connections as adjoint-pair residues]
		\label{prop:skip_adjoint}
		In the morphological UNet of \Cref{def:unet}, the encoder
		operator $\varepsilon^{\downarrow R}_{b_\ell}$ and decoder
		operator $\delta^{*\uparrow R}_{b_\ell}$ form an adjoint pair
		$(\varepsilon^{\downarrow R}_{b_\ell},\,
		\delta^{*\uparrow R}_{b_\ell})$ in the sense of
		\Cref{def:adjunction} (\Cref{prop:adjoint_pyramid}).
		The composition $\delta^{*\uparrow R}_{b_\ell} \circ
		\varepsilon^{\downarrow R}_{b_\ell} = \gamma^{\mathrm{M}}_{b_\ell}$
		is a morphological opening (the unit of the adjunction):
		it recovers a smoothed approximation of $f$, not $f$ itself.
		The information discarded by this opening is precisely the
		top-hat residue $\Gamma_{b_\ell}(f) = f -
		\gamma^{\mathrm{M}}_{b_\ell}(f) \geq 0$
		(\Cref{def:unet}, eq.~\eqref{eq:uresnet_skip}).
		In the UResNet, the skip connection $s_\ell =
		\Gamma_{b_\ell}(L^\downarrow_\ell)$ supplies this residue
		to the decoder, enabling the reconstruction identity:
		\begin{equation}
			\delta^{*\uparrow R}_{b_\ell}(\varepsilon^{\downarrow R}_{b_\ell}(f))
			+ \Gamma_{b_\ell}(f)
			= \gamma^{\mathrm{M}}_{b_\ell}(f)
			+ (f - \gamma^{\mathrm{M}}_{b_\ell}(f))
			= f.
			\label{eq:uresnet_reconstruction}
		\end{equation}
		That is, the UResNet decoder with top-hat skips achieves
		\emph{exact} reconstruction of the encoder input at each scale.
	\end{proposition}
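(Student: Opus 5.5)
The proof is a direct algebraic verification with three steps: the adjunction, the opening property, and the reconstruction identity.

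\emph{Step 1 (adjunction).} I invoke \Cref{prop:adjoint_pyramid} with $\eta=\varepsilon_{b_\ell}$ and $\xi=\delta^*_{b_\ell}=\delta_{b_\ell^*}$. Their adjointness in $(\Fun(\Z^d,\Rbar),\leq)$ is the canonical max-plus adjunction of \Cref{subsec:maxplus}. It then remains to check that decimation $\sigma^\downarrow_R$ and upsampling $\sigma^{\uparrow,-\infty}_R$ with fill value $-\infty$ form a pair of the right kind. Concretely, I must show $\sigma^\downarrow_R(h)\leq u \iff h\leq \sigma^{\uparrow,+\infty}_R(u)$, and dually that $\sigma^{\uparrow,-\infty}_R(u)\leq h \iff u\leq\sigma^\downarrow_R(h)$. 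Both follow by comparing values on the coarse points $Rn$, with the off-grid points handled by the fill value. Composing the two adjunctions, since adjunctions compose in reverse order, gives $(\decEros{R}{b_\ell},\intDil{R}{b_\ell})$.

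\emph{Step 2 (opening).} With the adjunction in hand, \Cref{prop:adjunction_props}(v) shows that $\intDil{R}{b_\ell}\circ\decEros{R}{b_\ell}$ is increasing, anti-extensive and idempotent. This is exactly \Cref{prop:gh_opening}. I identify this operator with the level-$\ell$ opening $\gamma^{\mathrm{M}}_{b_\ell}$ used in \eqref{eq:uresnet_skip}.

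This identification is the step I expect to be the main obstacle. Strictly, the pyramid opening $\delta^*_b\sigma^{\uparrow}_R\sigma^{\downarrow}_R\varepsilon_b$ coincides with the Type-I opening $\delta_{b^*}\varepsilon_b$ only for $R=1$. For $R>1$ one has only the inequality $\gamma^{\downarrow R\uparrow}_b\leq\gamma^{\mathrm{M}}_b$, because inserting $\sigma^{\uparrow,-\infty}_R\sigma^\downarrow_R\leq\mathrm{Id}$ can only decrease the result. I would therefore \emph{define} $\gamma^{\mathrm{M}}_{b_\ell}$ in the UResNet skip as the pyramid opening $\gamma^{\downarrow R\uparrow}_{b_\ell}$, and remark that the Type-I notation is accurate only at stride one.

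\emph{Step 3 (residue and reconstruction).} Anti-extensivity from Step 2 gives $\Gamma_{b_\ell}(f)=f-\gamma(f)\geq 0$ pointwise. This is well defined wherever $\gamma(f)$ is finite, so I assume real-valued $f$, or adopt the convention $(+\infty)-(+\infty)=0$ at infinite values. The identity \eqref{eq:uresnet_reconstruction} is then a cancellation in the additive group $\R$. The only point to flag is that this addition is ordinary addition, not a lattice operation, so exactness holds by construction of the skip signal rather than from any lattice property.
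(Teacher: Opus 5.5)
Your proposal is correct and follows the paper's own argument: cite the Goutsias--Heijmans adjunction, obtain the opening from \Cref{prop:adjunction_props}(v), and get the reconstruction identity as a cancellation in $\R$. Your remark that for $R>1$ only $\gamma^{\downarrow R\uparrow}_{b_\ell}\leq\gamma^{\mathrm{M}}_{b_\ell}$ holds, so that the skip's $\gamma^{\mathrm{M}}_{b_\ell}$ must mean the pyramid opening, is exactly the identification the paper's proof makes without comment when it writes $\gamma^{\mathrm{M}}_b=\delta^{*\uparrow R}_b\circ\varepsilon^{\downarrow R}_b$.

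One fix is needed in Step~3: a real-valued $f$ does not make the pyramid opening finite when $R>1$. For $W=\{0\}$ and $R=2$ it equals $-\infty$ off $2\Z^d$, so $\Gamma_{b_\ell}(f)=+\infty$ there and $\gamma+\Gamma$ is undefined. You must therefore also assume that $b_\ell$ is real-valued on a window meeting every coset of $R\Z^d$ (as $W_R$ does), which guarantees $-\infty<\gamma^{\downarrow R\uparrow}_{b_\ell}(f)\leq f$.
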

	
	\begin{proof}
		The adjunction $(\varepsilon^{\downarrow R}_{b},
		\delta^{*\uparrow R}_{b})$ is established in
		\Cref{prop:adjoint_pyramid}.
		The unit $\gamma^{\mathrm{M}}_b = \delta^{*\uparrow R}_b \circ
		\varepsilon^{\downarrow R}_b$ is anti-extensive
		($\gamma^{\mathrm{M}}_b(f) \leq f$) and idempotent
		(a morphological opening, Proposition~\ref{prop:adjunction_props}(v)).
		The top-hat $\Gamma_b(f) = f - \gamma^{\mathrm{M}}_b(f) \geq 0$
		satisfies $\gamma^{\mathrm{M}}_b(f) + \Gamma_b(f) = f$ by
		definition, giving~\eqref{eq:uresnet_reconstruction}.
	\end{proof}

	\begin{figure}[H]
		\centering
		\resizebox{\linewidth}{!}{%
			\begin{tikzpicture}[		every node/.style={font=\small},
				Fbox/.style={draw=blue!55!black, fill=blue!8, rounded corners=4pt,
					inner sep=5pt, line width=0.8pt},
				Pbox/.style={draw=orange!70!black, fill=orange!8, rounded corners=4pt,
					inner sep=5pt, line width=0.8pt},
				Mbox/.style={draw=teal!65!black, fill=teal!8, rounded corners=4pt,
					inner sep=5pt, line width=0.8pt},
				Gbox/.style={draw=gray!60, fill=gray!6, rounded corners=4pt,
					inner sep=5pt, line width=0.7pt},
				arr/.style={->, >=stealth, semithick},
				xarr/.style={->, >=stealth, thick, red!70!black},
				sadj/.style={->, >=stealth, semithick, green!50!black},
				skip/.style={->, >=stealth, semithick, dashed, gray!65},
				rskip/.style={->, >=stealth, semithick, dashed, red!60!black},
				op/.style={font=\small\bfseries, above, inner sep=3pt},
				lat/.style={font=\scriptsize\itshape, below, inner sep=3pt,
					text=gray!60!black},
				title/.style={font=\small\bfseries},
				note/.style={font=\scriptsize, text=gray!55!black}]
				\node[title] at (5.0, 1.3){(a) Standard ResNet block: $L = \mathcal{F}(f) + f$};
				\node[Gbox]  (f0) at (0,0)    {$f$};
				\node[Gbox]  (Ff) at (5.0,0)  {$\mathcal{F}(f)$};
				\node[Gbox]  (s0) at (7.8,0)  {$\oplus$};
				\node[Gbox]  (L0) at (10.5,0) {$L$};
				
				\draw[arr]  (f0)--(Ff)  node[op,midway]{$\mathcal{F}$} node[lat,midway]{conv stack};
				\draw[arr]  (Ff)--(s0);
				\draw[arr]  (s0)--(L0);
				\draw[skip] (f0.north) to[out=40,in=140] node[op,midway]{$\mathrm{id}$} (s0.north);
				
				\node[title] at (5.5,-1.4)
				{(b) Morphological reading: $\mathcal{F}\approx\gamma^{\mathrm{M}}_b-\mathrm{id}$,
					block $= \gamma^{\mathrm{M}}_b(f)$};
				\node[Gbox]  (f1) at (0,-2.8)   {$f$};
				\node[Pbox]  (gf) at (4.0,-2.8) {$\varepsilon_b(f)$};
				\node[Pbox]  (gf2) at (8.0,-2.8){$\gamma^{\mathrm{M}}_b(f)$};
				\node[Pbox]  (Lm) at (11.6,-2.8){$\gamma^{\mathrm{M}}_b(f)$};
				
				\draw[arr]  (f1)--(gf)   node[op,midway]{$\varepsilon_b$} node[lat,midway]{erosion};
				\draw[sadj] (gf)--(gf2)  node[op,midway]{$\delta_{b^*}$}  node[lat,midway]{adjoint dilation};
				\draw[arr]  (gf2)--(Lm)  node[op,midway]{$\mathrm{id}$};
				
				\node[Gbox,font=\scriptsize] (th) at (4.0,-4.4){$\Gamma(f)=f-\gamma^{\mathrm{M}}_b(f)$};
				\draw[rskip] (f1.south) to[out=-55,in=180] (th.west);
				\draw[arr]   (th.east)  to[out=0,in=-95]   (gf2.south);
				\node[note, red!60!black] at (6.4,-5.0){top-hat residue injected before dilation};
				
				\draw[sadj, bend left=52] (Lm.north east)
				to node[above,note,text=green!50!black]
				{$\gamma^{\mathrm{M}}_b\!\circ\!\gamma^{\mathrm{M}}_b=\gamma^{\mathrm{M}}_b$
					(fixed point in one step)}
				(Lm.north west);
				
				\node[Pbox,font=\scriptsize] at (2.2,-6.2){Pointwise $(\mathscr{L},\!\leq)$};
				\draw[rskip] (5.0,-6.2)--(5.9,-6.2) node[right,note]{top-hat skip};
				\draw[sadj]  (8.1,-6.2)--(9.0,-6.2) node[right,note]{adjoint / idempotent};
			\end{tikzpicture}
		}
		\caption{Morphological interpretation of the ResNet block.
			\emph{(a)} Standard block: residual $\mathcal{F}$ acts on $f$; skip
			connection adds $f$ at $\oplus$.
			\emph{(b)} Morphological reading (Proposition~\ref{prop:resnet}):
			when $\mathcal{F}\approx\gamma^{\mathrm{M}}_b-\mathrm{id}$,
			the block computes $\gamma^{\mathrm{M}}_b(f)=\delta_{b^*}(\varepsilon_b(f))$.
			The red-dashed skip carries the top-hat residue
			$\Gamma(f)=f-\gamma^{\mathrm{M}}_b(f)$, re-injected before the adjoint
			dilation.
			The green loop expresses exact idempotency of the Type-I opening.}
		\label{fig:resnet_diagram}
	\end{figure}

	\begin{figure}[H]
		\centering
		\resizebox{\linewidth}{!}{%
			\begin{tikzpicture}[		every node/.style={font=\small},
				Fbox/.style={draw=blue!55!black, fill=blue!8, rounded corners=4pt,
					inner sep=5pt, line width=0.8pt},
				Pbox/.style={draw=orange!70!black, fill=orange!8, rounded corners=4pt,
					inner sep=5pt, line width=0.8pt},
				Mbox/.style={draw=teal!65!black, fill=teal!8, rounded corners=4pt,
					inner sep=5pt, line width=0.8pt},
				Gbox/.style={draw=gray!60, fill=gray!6, rounded corners=4pt,
					inner sep=5pt, line width=0.7pt},
				arr/.style={->, >=stealth, semithick},
				xarr/.style={->, >=stealth, thick, red!70!black},
				sadj/.style={->, >=stealth, semithick, green!50!black},
				skip/.style={->, >=stealth, semithick, dashed, gray!65},
				rskip/.style={->, >=stealth, semithick, dashed, red!60!black},
				op/.style={font=\small\bfseries, above, inner sep=3pt},
				lat/.style={font=\scriptsize\itshape, below, inner sep=3pt,
					text=gray!60!black},
				title/.style={font=\small\bfseries},
				note/.style={font=\scriptsize, text=gray!55!black}]
				\node[title] at (4.2, 1.0)
				{(a) Standard UNet\quad(skip $=$ feature concatenation)};
				
				\node[Gbox]  (f)   at (0.0, 0)    {$f$};
				\node[Gbox]  (L1d) at (3.0, 0)    {$L^\downarrow_1$};
				\node[Gbox]  (L2d) at (6.0, 0)    {$L^\downarrow_2$};
				\node[Gbox]  (bot) at (9.0, 0)    {$L^\downarrow_n$};
				\node[Gbox]  (L2u) at (9.0,-1.9)  {$L^\uparrow_{n-1}$};
				\node[Gbox]  (L1u) at (6.0,-1.9)  {$L^\uparrow_1$};
				\node[Gbox]  (out) at (3.0,-1.9)  {output};
				
				\draw[arr] (f)--(L1d)   node[op,midway,font=\scriptsize]
				{$\APD\!\circ\!\Sigma^{\mathrm{Spec}}_1$}
				node[lat,midway]{cross-lattice};
				\draw[arr] (L1d)--(L2d) node[op,midway,font=\scriptsize]
				{$\APD\!\circ\!\Sigma^{\mathrm{Spec}}_2$}
				node[lat,midway]{cross-lattice};
				\draw[arr] (L2d)--(bot) node[op,midway]{$\cdots$};
				\draw[arr] (bot)--(L2u) node[op,midway,font=\scriptsize]
				{$\Sigma^{\mathrm{Spec},\prime}_{n-1}$}
				node[lat,midway]{$\uparrow+$conv};
				\draw[arr] (L2u)--(L1u) node[op,midway,font=\scriptsize]
				{$\Sigma^{\mathrm{Spec},\prime}_1$}
				node[lat,midway]{$\uparrow+$conv};
				\draw[arr] (L1u)--(out) node[op,midway,font=\scriptsize]
				{$\Sigma^{\mathrm{Spec},\prime}_0$};
				
				\draw[skip] (L1d.south) to[out=-80,in=100]
				node[right,note]{cat} (L2u.north);
				\draw[skip] (f.south)   to[out=-90,in=90]
				node[right,note]{cat} (L1u.north);
				
				\node[title] at (5.0,-3.1)
				{(b) Morphological UResNet\quad
					(skip $= \Gamma_b(f)=f-\gamma^{\mathrm{M}}_b(f)$)};
				
				\node[Gbox]  (mf)   at (0.0,-4.3) {$f$};
				\node[Pbox]  (mL1d) at (3.2,-4.3) {$\varepsilon^{\downarrow R}_{b_1}(f)$};
				\node[Pbox]  (mL2d) at (6.6,-4.3) {$\varepsilon^{\downarrow R}_{b_2}(\cdots)$};
				\node[Pbox]  (mbot) at (10.2,-4.3){$L^\downarrow_n$};
				\node[Pbox]  (mL2u) at (10.2,-6.1){$\delta^{*\uparrow R}_{b_2}(\cdots)$};
				\node[Pbox]  (mL1u) at (6.6,-6.1) {$\delta^{*\uparrow R}_{b_1}(\cdots)$};
				\node[Gbox]  (mout) at (3.2,-6.1) {output};
				
				\draw[arr]  (mf)--(mL1d)   node[op,midway]{$\varepsilon^{\downarrow R}_{b_1}$}
				node[lat,midway]{erosion-decimation};
				\draw[arr]  (mL1d)--(mL2d) node[op,midway]{$\varepsilon^{\downarrow R}_{b_2}$}
				node[lat,midway]{erosion-decimation};
				\draw[arr]  (mL2d)--(mbot) node[op,midway]{$\cdots$};
				\draw[sadj] (mbot)--(mL2u) node[op,midway]{$\delta^{*\uparrow R}_{b_2}$}
				node[lat,midway]{adjoint interp.};
				\draw[sadj] (mL2u)--(mL1u) node[op,midway]{$\delta^{*\uparrow R}_{b_1}$}
				node[lat,midway]{adjoint interp.};
				\draw[sadj] (mL1u)--(mout) node[op,midway]{$\delta^{*\uparrow R}_{b_0}$};
				
				\draw[rskip] (mL1d.south) to[out=-80,in=100]
				node[right,note,red!65!black]
				{$\Gamma_{b_1}(f)=f-\gamma^{\mathrm{M}}_{b_1}(f)$} (mL2u.north);
				\draw[rskip] (mf.south)   to[out=-90,in=90]
				node[right,note,red!65!black]{$\Gamma_{b_0}(f)$}      (mL1u.north);
				
				\draw[<->, green!45!black, semithick]
				($(mL1d)+(0,-0.55)$) -- node[below,note,text=green!50!black]
				{adjoint pair $(\varepsilon^{\downarrow R}_{b_1},\delta^{*\uparrow R}_{b_1})$}
				($(mL1u)+(0,0.55)$);
				
				\node[Pbox,font=\scriptsize] at (2.0,-7.5){Pointwise $(\mathscr{L},\!\leq)$};
				\draw[rskip] (4.8,-7.5)--(5.7,-7.5) node[right,note]{top-hat skip};
				\draw[sadj]  (7.6,-7.5)--(8.5,-7.5) node[right,note]{adjoint pair};
			\end{tikzpicture}
		}
		\caption{Standard UNet~(a) vs.\ morphological UResNet~(b).
			Colours: \textcolor{orange!70!black}{orange} = pointwise lattice.
			\emph{(a)} Encoder: strided convolutions with max-pooling (cross-lattice);
			decoder: transposed convolutions; skip connections (dashed) carry concatenated features.
			\emph{(b)} Encoder: erosion-decimation $\varepsilon^{\downarrow R}_{b_\ell}$;
			decoder: adjoint dilation-interpolation $\delta^{*\uparrow R}_{b_\ell}$
			(green arrows, forming the adjoint pair of Proposition~\ref{prop:adjoint_pyramid});
			skip connections (red dashed) carry the top-hat residue
			$\Gamma_b(f)=f-\gamma^{\mathrm{M}}_b(f)$ at each scale
			(Proposition~\ref{prop:skip_adjoint}).
			The double-headed green arrow marks the adjoint relationship between
			encoder and decoder at level~$\ell$.}
		\label{fig:unet_diagram}
	\end{figure}


	\clearpage
	
	\section{Fixed Points, Idempotency, and Convergence of
		Morphological Layers}
	\label{sec:fixed_points}

	A fundamental property of morphological openings $\gamma$ and closings $\varphi$ is
	\emph{idempotency}: $\gamma \circ \gamma = \gamma$ and
	$\varphi \circ \varphi = \varphi$.
	In classical morphology, applying the operator twice yields no
	further change after the first application, and the image of the
	operator is exactly its fixed-point set.
	We investigate when analogous phenomena arise in the morphological
	models of \Cref{sec:cnn_models}, clarify the conditions required,
	and draw a few architectural conclusions.
	This analysis connects to the author's group-equivariant
	fixed-point theory in~\cite{angulo2025gsi}; we work here in the
	standard translation-equivariant setting.
	An experimental exploration of the fixed-point theory developed
	here was established in~\cite{velasco2022bmvc}, were we showed that morphological 
	layers constrained to their fixed-point set (geodesic operators) can be 
	trained effectively and achieve competitive performance; the present paper provides the
	rigorous algebraic foundation for that computational programme and to
	identify precisely which CNN layers are idempotent and which are not.

	\begin{table}[ht]
		\centering
		\caption{Principal results of \S\ref{sec:fixed_points} (Fixed Points, Idempotency, Convergence). Results marked $(\star)$ are the paper's principal findings.}
		\label{tab:summary_fixed_points}
		\renewcommand{\arraystretch}{1.38}
		\footnotesize
		\begin{tabular}{p{0.82cm}p{2.9cm}p{8.4cm}}
			\toprule
			\textbf{Ref.} & \textbf{Name} & \textbf{Statement and role} \\
			\midrule
			Lem~\ref{lem:conv_adjoint} & $\:$ Adjoint of convolution & The exact upper adjoint of $\varepsilon^{\mathrm{Conv}}_k$ in $(\mathscr{L},\leq)$ is non-local; $\MaxPool{R}$ is not that adjoint, confirming the cross-lattice structure. \\
			Thm~\ref{thm:two_openings} $(\star)$ & $\:$ Two principled openings & Type-I: $\gamma^{\mathrm{M}}_b=\delta_{b^*}\circ\varepsilon_b$ in $(\mathscr{L},\leq)$ exactly idempotent. Type-II: $\gamma^{\mathrm{F}}_k$ in $(L^n,\leq_F)$ exactly idempotent at $\epsilon=0$, approximately so for $\epsilon>0$. \\
			Thm~\ref{thm:cnn_not_opening} $(\star)$ & $\:$ CNN is not an opening & Standard CNN pipeline is cross-lattice: not idempotent, not anti-extensive in $(\mathscr{L},\leq)$, no adjunction spans the full pipeline. \\
			Cor~\ref{cor:cnn_idempotent} $(\star)$ & $\:$ Three idempotent designs & Types I, II, III are the three disciplined CNN-like designs yielding morphological openings; standard CNNs satisfy none of the three conditions. \\
			Thm~\ref{thm:cnn_bias_fixedpoint} & $\:$ Bias as threshold & Fixed-point set of a Type-I layer with bias $\alpha<0$: signals whose erosion output exceeds $-\alpha$. Increasing $|\alpha|$ raises the activation threshold. \\
			Thm~\ref{thm:convergence} $(\star)$ & $\:$ One-step convergence & Iterating a Type-I opening stabilises after exactly one step ($f^{(n)}=\gamma(f)$ for $n\geq 1$); the naive residual $\gamma(f)+f$ diverges with at least linear growth. \\
			Thm~\ref{thm:unet_idempotent} & $\:$ UNet skeleton idempotency & The morphological UNet pyramid skeleton is an idempotent opening; the full UNet with $\SigSpec$ convolutions is generally not. \\
			\bottomrule
		\end{tabular}
	\end{table}
	
	\begin{definition}[Fixed-point operator and idempotent layer]
		\label{def:fixed_point}
		An operator $\Phi$ on a complete lattice $(\mathcal{L},\leq)$
		is a \emph{fixed-point operator} if there exists $f_* \in
		\mathcal{L}$ such that $\Phi(f_*) = f_*$.
		It is \emph{idempotent} if $\Phi \circ \Phi = \Phi$, i.e.,
		every element of its image is a fixed point.
		The \emph{fixed-point set} of $\Phi$ is $\mathrm{Fix}(\Phi) =
		\{f \in \mathcal{L} : \Phi(f) = f\}$.
	\end{definition}
	
	\paragraph{Bounds on the adjoint of convolution in the
		pointwise lattice.}
	Before stating the main results, we clarify the lattice-theoretic
	status of the linear convolution operator
	$\varepsilon^{\mathrm{Conv}}_k(f) = f * k$ in the
	\emph{pointwise} lattice $(\Fun(E,\R), \leq)$.
	
	When $k \geq 0$ and $\sum_i k(x_i) = 1$, the convolution
	$f \mapsto f * k$ is increasing in the pointwise lattice
	(Proposition~\ref{prop:conv_conditions}).
	By the general adjunction theorem for complete lattices
	(Proposition~\ref{prop:adjunction_props}(iii)), every
	increasing map between complete lattices possesses a unique
	upper adjoint.
	However, as we now show, the upper adjoint of convolution
	in the pointwise lattice is \emph{not} a standard
	morphological operator, and dually its lower adjoint is
	also degenerate.
	The correct picture is that convolution in the pointwise
	lattice admits a \emph{pair of one-sided bounds}, an upper
	bound derived from the adjunction condition from above,
	and a lower bound derived from it below, but no exact
	adjoint that is simultaneously tight on both sides.
	
	\begin{lemma}[Bounds on the pointwise adjoint of convolution]
		\label{lem:conv_adjoint}
		Let $k \geq 0$ with finite support $\Spt(k) =
		\{x_1,\ldots,x_N\}$, $k(x_i) > 0$, and
		$G^+ = \sum_i k(x_i)$.
		
		\bigskip
		\emph{Upper bound (from the adjunction condition
			$f * k \leq g$, valid for $f \geq 0$).}
		Assume $f \geq 0$.
		From position $y = x + x_i$, the constraint $(f*k)(y)
		= \sum_j k(x_j)f(y-x_j) \leq g(y)$ implies, by dropping
		the non-negative cross-terms $\sum_{j \neq i}k(x_j)f(y-x_j)
		\geq 0$:
		\begin{equation}
			f(x)
			\;\leq\;
			\delta^{\mathrm{adj},+}_k(g)(x)
			\;:=\;
			\min_{1 \leq i \leq N} \frac{g(x+x_i)}{k(x_i)},
			\quad x \in E,
			\label{eq:conv_upper_bound}
		\end{equation}
		valid when $f \geq 0$ and $k \geq 0$.
		For general $f$ (possibly taking negative values), dropping
		the cross-terms is not valid since they may be negative,
		and the bound~\eqref{eq:conv_upper_bound} need not hold.
		
		The operator $\delta^{\mathrm{adj},+}_k$ is the
		\emph{max-times erosion} $\varepsilon^\times_k(g)(x)$
		of \Cref{def:maxtimes} (an erosion in the multiplicative
		lattice), and is \emph{not} a max-plus erosion.
		It is \emph{not} the exact upper adjoint of
		$\varepsilon^{\mathrm{Conv}}_k$ in the pointwise lattice
		for $N > 1$: the backward direction
		$f \leq \delta^{\mathrm{adj},+}_k(g) \Rightarrow f*k \leq g$
		fails; one can only conclude $f*k \leq G^+\,g$
		(a factor-$G^+$ inflation, as shown in the proof).
		For $N = 1$ (single-point kernel $k = c\,\delta_{x_1}$),
		the bound is exact in both directions:
		$\delta^{\mathrm{adj},+}_k(g)(x) = g(x+x_1)/c$
		is the exact adjoint, i.e.,
		$f(x) \leq g(x+x_1)/c \iff cf(x-x_1) \leq g(x)$.

		\bigskip

		\emph{Lower bound (from the adjunction condition
			$g \leq f * k$).}
		Placing the convolution on the other side of the inequality,
		the condition $g(y) \leq (f*k)(y)$ for all $y$ is implied
		by a global constant lower bound on $f$:
		if $f(x) \geq c$ for all $x \in E$, then
		$(f*k)(y) = \sum_i k(x_i)f(y-x_i) \geq c\,G^+$,
		so the condition $g(y) \leq (f*k)(y)$ is implied by
		$c \geq g(y)/G^+$ for all $y$.
		The tightest such global bound is:
		\begin{equation}
			f(x) \;\geq\;
			\delta^{\mathrm{adj},-}_k(g)
			\;:=\;
			\frac{\sup_{y \in E}\, g(y)}{G^+},
			\quad x \in E.
			\label{eq:conv_lower_bound}
		\end{equation}
		This is a \emph{global} (spatially uniform, non-pointwise)
		lower bound: it bounds $f$ everywhere by the same constant,
		determined by the supremum of $g$ and the total kernel mass.
		
		\emph{Pointwise lower bound.}
		A pointwise bound on $f(x)$ alone from $g \leq f*k$
		cannot be derived in closed form without assumptions
		on the values of $f$ at the other support points
		$\{y - x_j : j \neq i\}$.
		Specifically, from the constraint at $y = x + x_i$:
		\begin{equation}
			g(x+x_i)
			\;\leq\;
			k(x_i)\,f(x)
			+ \textstyle\sum_{j \neq i} k(x_j)\,f(x+x_i-x_j),
			\label{eq:conv_lb_one_term}
		\end{equation}
		rearranging gives $f(x) \geq (g(x+x_i)
		- \sum_{j\neq i}k(x_j)f(x+x_i-x_j))/k(x_i)$,
		which depends on $f$ at the $N-1$ other points.
		If $f \geq 0$ everywhere and $\sup f \leq M$ (a known
		upper bound), dropping the non-negative cross-terms
		$\sum_{j\neq i}k(x_j)f(x+x_i-x_j) \leq (G^+-k(x_i))M$
		gives the closed-form pointwise lower bound:
		\begin{equation}
			f(x)
			\;\geq\;
			\max_{1 \leq i \leq N}
			\frac{g(x+x_i) - (G^+ - k(x_i))\,M}{k(x_i)},
			\label{eq:conv_lower_bound2}
		\end{equation}
		valid when $0 \leq f \leq M$ pointwise.
		For the flat box kernel ($k \equiv 1/N$, $G^+=1$) with
		$f \geq 0$ and $f \leq M$:
		\begin{equation}
			f(x) \;\geq\;
			\max_{1 \leq i \leq N}
			\bigl[N\,g(x+x_i) - (N-1)M\bigr],
			\label{eq:conv_lb_flat}
		\end{equation}
		which is non-trivial only when $g(x+x_i) > (N-1)M/N$
		for some $i$.
		When no bound $M$ is available,
		\emph{no closed-form pointwise lower bound on $f(x)$ alone}
		can be derived from $g \leq f*k$: the constraint couples
		$f$ at $N$ different positions simultaneously and cannot
		be decoupled without additional information.
	\end{lemma}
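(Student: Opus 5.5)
The plan is to check each of the lemma's claims directly from the definition $(f*k)(y)=\sum_j k(x_j)f(y-x_j)$. Three facts carry the whole argument: $k\ge 0$, $k(x_i)>0$ on the support, and the hypotheses $f\ge0$ and $f\le M$ wherever they are used. We do not need the existence of an abstract upper adjoint; we only derive inequalities and test their converses.

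\emph{Upper bound.} Fix $x$ and $i$, and evaluate the constraint $f*k\le g$ at $y=x+x_i$. The $j=i$ term is exactly $k(x_i)f(x)$. Because $f\ge0$ and $k\ge0$, the remaining terms are non-negative and can be dropped, which gives $k(x_i)f(x)\le g(x+x_i)$. Dividing by $k(x_i)>0$ and minimising over $i$ yields \eqref{eq:conv_upper_bound}. Rewriting the result as $\inf_y g(x+y)/k(y)$ over $\Spt(k)$ identifies it with $\varepsilon^\times_k$ of \Cref{def:maxtimes}, up to the sign convention on the argument.

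For the failure of the converse, assume $f\le \delta^{\mathrm{adj},+}_k(g)$, so $f(y-x_j)\le g(y)/k(x_j)$ for every $j$. Then each term satisfies $k(x_j)f(y-x_j)\le g(y)$, and summing over $j$ gives only $(f*k)(y)\le N\,g(y)$. The intended factor is obtained by bounding more carefully. As a concrete counterexample, take $g\equiv1$ and the flat kernel $k\equiv 1/N$ with $G^+=1$. Then $\delta^{\mathrm{adj},+}_k(g)\equiv N$, and $f\equiv N$ gives $f*k\equiv N>1$, so the backward implication fails for every $N>1$. I will state the inflation factor as whatever the summation honestly gives. This is the step I regard as the main obstacle: as written, the bound $G^+g$ seems too optimistic, and I would present the explicit counterexample as the decisive argument. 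The case $N=1$ is exact in both directions, since $f*k=c\,f(\cdot-x_1)$ and the equivalence $f(x)\le g(x+x_1)/c \iff c\,f(x-x_1)\le g(x)$ follows from the substitution $x\mapsto x-x_1$.

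\emph{Lower bounds.} Suppose $f\ge c$ everywhere. By linearity and positivity, $(f*k)(y)\ge c\,G^+$ for all $y$. Hence $g\le f*k$ holds whenever $c\ge \sup g/G^+$, and the best such constant is \eqref{eq:conv_lower_bound}. For the pointwise bound, isolate the $i$-th term at $y=x+x_i$ to get \eqref{eq:conv_lb_one_term}. Under $0\le f\le M$, the cross-terms are at most $(G^+-k(x_i))M$. Rearranging, dividing by $k(x_i)$, and maximising over $i$ gives \eqref{eq:conv_lower_bound2}. Substituting $k(x_i)=1/N$ and $G^+=1$ gives \eqref{eq:conv_lb_flat}, which is non-trivial precisely when $g(x+x_i)>(N-1)M/N$.

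Finally, the claim that no closed-form pointwise bound exists without $M$ is supported by an example. Holding $g$ fixed, the inequality $g\le f*k$ can be satisfied with $f(x)$ arbitrarily negative by making $f$ large at the other support points. Therefore no function of $g$ alone can bound $f(x)$ from below.
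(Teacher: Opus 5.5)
Your route is the paper's: evaluate the constraint at $y=x+x_i$, isolate the $j=i$ term $k(x_i)f(x)$, and drop or bound the cross-terms using $k\ge 0$ and $0\le f\le M$. The upper bound, the exactness for $N=1$, the global constant bound and both pointwise lower bounds go through as you write them. Your hedge ``up to the sign convention'' is unnecessary: $\min_i g(x+x_i)/k(x_i)$ is exactly $\inf_y g(x+y)/k(y)$ over $\Spt(k)$, i.e.\ $\varepsilon^\times_k(g)(x)$.

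\textbf{On the inflation factor, you are right and the statement is wrong.} The paper's proof first sums the per-term bounds to get $(f*k)(y)\le N\,g(y)$. It then writes ``more precisely'' $(f*k)(y)\le G^+g(y)/\min_i k(x_i)$, which is actually weaker, since $G^+\ge N\min_i k(x_i)$. Finally it asserts $f*k\le G^+g$, which follows from neither.

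The sharp factor is $N$ for every admissible $k$. Fix $y_0$, let $g(y_0)=1$ and let $g$ be very large elsewhere. Set $f(y_0-x_j)=1/k(x_j)$ and $f=0$ elsewhere. The constraints with $i\neq j$ are slack, so $0\le f\le\delta^{\mathrm{adj},+}_k(g)$, yet $(f*k)(y_0)=N$.

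So ``$f*k\le G^+g$'' fails whenever $G^+<N$. In particular it fails for every normalised kernel with $N\ge 2$, where it would be exactly the backward implication the lemma says fails; your box-kernel example is an instance of this. Note also that the paper's summation only shows that one bounding argument does not close. An explicit counterexample like yours is what actually proves that the backward implication fails. Delete your sentence ``The intended factor is obtained by bounding more carefully'', which points the wrong way.

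\textbf{Two small repairs.}

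First, the lemma asserts failure for the given $k$, not only for the box kernel. Take $g\equiv 1$ and $f\equiv 1/\max_i k(x_i)$. Then $f=\delta^{\mathrm{adj},+}_k(g)$, while $f*k\equiv G^+/\max_i k(x_i)>1$ for $N\ge 2$.

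Second, your argument that no pointwise lower bound exists without $M$ needs $N\ge 2$; for $N=1$, $f(x)\ge g(x+x_1)/c$ is an exact bound. With $N\ge 2$ the argument works because each constraint at $y=x+x_i$ involves some point $x+x_i-x_j\neq x$ whose value can be raised. Raising $f$ anywhere only increases $f*k$, since $k\ge 0$. If $f\ge 0$ is kept, the same construction with $f(x)=0$ shows that only the trivial bound survives. This construction is a genuine improvement over the paper, which merely asserts that decoupling ``in general requires global information about $f$''.
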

	
	\begin{proof}
		\emph{Upper bound (for $f \geq 0$).}
		Fix $x \in E$ and $i \in \{1,\ldots,N\}$.
		Set $y = x + x_i$.
		The constraint $(f*k)(y) \leq g(y)$ gives:
		\[
		k(x_i)f(x) + \sum_{j \neq i} k(x_j)f(x+x_i-x_j)
		\leq g(x+x_i).
		\]
		Since $f \geq 0$ and $k \geq 0$, the cross-terms satisfy
		$\sum_{j \neq i}k(x_j)f(x+x_i-x_j) \geq 0$.
		Dropping them gives $k(x_i)f(x) \leq g(x+x_i)$,
		i.e., $f(x) \leq g(x+x_i)/k(x_i)$.
		Since this holds for every $i$, taking the minimum over
		$i$ gives~\eqref{eq:conv_upper_bound}.
		
		For the backward direction: if $f(x) \leq \min_i g(x+x_i)/k(x_i)$
		for all $x$, then for each term in the convolution:
		$k(x_i)f(y-x_i) \leq k(x_i) \cdot g(y)/k(x_i) = g(y)$
		(using the $j=i$ instance of the minimum).
		Summing: $(f*k)(y) = \sum_i k(x_i)f(y-x_i) \leq N\,g(y)$
		in the unweighted case, or more precisely
		$(f*k)(y) \leq G^+\,g(y)/\min_i k(x_i)$.
		This gives $f*k \leq G^+\,g$, not $f*k \leq g$:
		the backward implication fails by a factor of $G^+$ for $N>1$.
		
		The log-domain connection: $\min_i g(x+x_i)/k(x_i)
		= \exp(\min_i\{\log g(x+x_i) - \log k(x_i)\})$,
		which is the max-times erosion $\varepsilon^\times_k(g)(x)$
		of~\eqref{eq:maxtimes_erosion}.
		
		For a single-point kernel $k = c\,\delta_{x_1}$:
		$(f*k)(y) = cf(y-x_1) \leq g(y)$ iff
		$f(x) \leq g(x+x_1)/c$ exactly, with no cross-terms
		and no factor inflation: the formula is the exact adjoint.

		\bigskip
		\emph{Lower bound.}
		Global bound~\eqref{eq:conv_lower_bound}:
		if $f(x) \geq c$ for all $x$, then
		$(f*k)(y) = \sum_i k(x_i)f(y-x_i) \geq c\sum_i k(x_i) = cG^+$.
		For $g(y) \leq (f*k)(y)$ to hold for all $y$, it suffices
		that $cG^+ \geq \sup_y g(y)$, i.e.,
		$c \geq \sup_y g(y)/G^+$.
		
		Pointwise bound~\eqref{eq:conv_lower_bound2}:
		from~\eqref{eq:conv_lb_one_term} with $0\leq f\leq M$,
		the cross-terms satisfy
		$0 \leq \sum_{j\neq i}k(x_j)f(x+x_i-x_j) \leq (G^+-k(x_i))M$,
		so $k(x_i)f(x) \geq g(x+x_i) - (G^+-k(x_i))M$,
		giving $f(x) \geq (g(x+x_i)-(G^+-k(x_i))M)/k(x_i)$.
		Taking the maximum over $i=1,\ldots,N$ gives the
		tightest bound obtainable from this approach.
		For the flat kernel, $k(x_i)=1/N$ and $G^+-k(x_i)=(N-1)/N$,
		so the bound becomes $f(x)\geq N\,g(x+x_i)-(N-1)M$
		for each $i$, and the maximum over $i$ gives~\eqref{eq:conv_lb_flat}.
		
		The impossibility of a closed-form pointwise bound for
		general $f$ follows from the observation that the constraint
		$g(y)\leq(f*k)(y)$ at a single $y=x+x_i$ involves the $N$
		values $\{f(x+x_i-x_j)\}_{j=1}^N$ jointly; decoupling to
		a bound on $f(x)$ alone requires knowing or bounding the
		remaining $N-1$ values, which in general requires global
		information about $f$.
	\end{proof}
	
	\begin{remark}[Status of the adjoint of convolution]
		\label{rem:conv_adjoint_not_dilation}
		Lemma~\ref{lem:conv_adjoint} establishes that the operator
		adjoint of convolution in the pointwise lattice is not well-defined.
		This stands in contrast to two other contexts:
		\begin{itemize}[leftmargin=*]
			\item In the \emph{Fourier lattice} $(L^n,\leq_F)$: the
			adjoint of convolution is the Wiener filter
			$\delta^{*,\mathrm{Conv}}_k$~\cite{keshet2000,angulo2025dgmm}, 
			a linear operator.
			\item In the \emph{max-plus (pointwise) lattice}: the adjoint of the
			max-plus erosion $\varepsilon_b$ is the max-plus dilation
			$\delta_{{b}^{*}}$ (\Cref{def:adjunction}).
		\end{itemize}
		This is the algebraic reason why composing a convolution with
		a flat max-pooling (a max-plus dilation) does not yield a
		morphological opening and there is no other pointwise dilation 
		or erosion which would lead to an opening.
	\end{remark}
	
	
	\begin{figure}[H]
		\centering
		\resizebox{\linewidth}{!}{%
			\begin{tikzpicture}[		every node/.style={font=\small},
				Fbox/.style={draw=blue!55!black, fill=blue!8, rounded corners=4pt,
					inner sep=5pt, line width=0.8pt},
				Pbox/.style={draw=orange!70!black, fill=orange!8, rounded corners=4pt,
					inner sep=5pt, line width=0.8pt},
				Mbox/.style={draw=teal!65!black, fill=teal!8, rounded corners=4pt,
					inner sep=5pt, line width=0.8pt},
				Gbox/.style={draw=gray!60, fill=gray!6, rounded corners=4pt,
					inner sep=5pt, line width=0.7pt},
				arr/.style={->, >=stealth, semithick},
				xarr/.style={->, >=stealth, thick, red!70!black},
				sadj/.style={->, >=stealth, semithick, green!50!black},
				skip/.style={->, >=stealth, semithick, dashed, gray!65},
				rskip/.style={->, >=stealth, semithick, dashed, red!60!black},
				op/.style={font=\small\bfseries, above, inner sep=3pt},
				lat/.style={font=\scriptsize\itshape, below, inner sep=3pt,
					text=gray!60!black},
				title/.style={font=\small\bfseries},
				note/.style={font=\scriptsize, text=gray!55!black}]
				\node[title, red!65!black] at (5.5, 1.0)
				{(a) Standard CNN: cross-lattice, \textbf{not} idempotent};
				\node[Gbox] (f0) at (0,0)    {$f$};
				\node[Fbox] (h0) at (3.6,0)  {$f{\ast}k$};
				\node[Pbox] (h1) at (7.2,0)  {$\sigma(f{\ast}k)$};
				\node[Pbox] (L0) at (11.0,0) {$L_n$};
				
				\draw[arr] (f0)--(h0)
				node[op,midway]{$\varepsilon^{\mathrm{Conv}}_k$}
				node[lat,midway]{Fourier erosion};
				\draw[xarr] (h0)--(h1)
				node[op,midway]{$\delta^{\mathrm{ReLU}}$}
				node[lat,midway]{\textcolor{red!70!black}{cross-lattice}};
				\draw[arr] (h1)--(L0)
				node[op,midway]{$\MaxPool{R}$}
				node[lat,midway]{pointwise dilation};
				\node[note,gray!65,align=center] at (3.6,-1.3)
				{adjoint in $(L^n,\!\leq_F)$:\\Wiener filter};
				\node[note,gray!65,align=center] at (9.0,-1.3)
				{adjoint in $(\mathscr{L},\!\leq)$:\\max-unpooling};
				
				\node[title, green!45!black] at (4.5,-2.6)
				{(b) Type-I: pointwise lattice throughout — \textbf{exactly idempotent}};
				\node[Gbox] (f1) at (0,-3.8)   {$f$};
				\node[Pbox] (h2) at (4.5,-3.8) {$\varepsilon_b(f)$};
				\node[Pbox] (L1) at (9.0,-3.8) {$\gamma^{\mathrm{M}}_b(f)$};
				
				\draw[arr] (f1)--(h2)
				node[op,midway]{$\varepsilon_b$}
				node[lat,midway]{pointwise erosion, $\varepsilon_b(f)\!\leq\!f$};
				\draw[sadj] (h2)--(L1)
				node[op,midway]{$\delta_{{b}^{*}}$}
				node[lat,midway]{adjoint dilation, same lattice};
				\draw[sadj, bend left=50] (L1.north east)
				to node[above,note,text=green!50!black]
				{$\gamma^{\mathrm{M}}_b\!\circ\!\gamma^{\mathrm{M}}_b=\gamma^{\mathrm{M}}_b$}
				(L1.north west);
				\draw[<->, green!45!black, semithick]
				($(h2)+(0,-0.9)$) to
				node[below,note,text=green!50!black]
				{adjunction $(\varepsilon_b,\delta_{b^*})$ in $(\mathscr{L},\leq)$}
				($(L1)+(0,-0.9)$);
				
				\node[title, blue!55!black] at (4.5,-5.4)
				{(c) Type-II: Fourier lattice throughout — \textbf{idempotent at $\epsilon\!\to\!0$}};
				\node[Gbox] (f2) at (0,-6.6)   {$f$};
				\node[Fbox] (h3) at (4.5,-6.6) {$f{\ast}k$};
				\node[Fbox] (L2) at (9.0,-6.6) {$\gamma^{\mathrm{F}}_k(f)$};
				
				\draw[arr] (f2)--(h3)
				node[op,midway]{$\varepsilon^{\mathrm{Conv}}_k$}
				node[lat,midway]{Fourier erosion};
				\draw[sadj] (h3)--(L2)
				node[op,midway]{$\delta^{*,\mathrm{Conv}}_k$}
				node[lat,midway]{Wiener deconv., same lattice};
				\draw[sadj, bend left=50] (L2.north east)
				to node[above,note,text=green!50!black]
				{$\gamma^{\mathrm{F}}_k\!\circ\!\gamma^{\mathrm{F}}_k\to\gamma^{\mathrm{F}}_k$
					as $\epsilon\!\to\!0$}
				(L2.north west);
				\draw[<->, green!45!black, semithick]
				($(h3)+(0,-0.9)$) to
				node[below,note,text=green!50!black]
				{adjunction $(\varepsilon^{\mathrm{Conv}}_k,\delta^{*,\mathrm{Conv}}_k)$
					in $(L^n,\!\leq_F)$}
				($(L2)+(0,-0.9)$);
				
				\node[Fbox,font=\scriptsize] at (1.0,-8.2){Fourier $(L^n,\!\leq_F)$};
				\node[Pbox,font=\scriptsize] at (4.3,-8.2){Pointwise $(\mathscr{L},\!\leq)$};
				\draw[xarr] (6.7,-8.2)--(7.6,-8.2) node[right,note]{cross-lattice jump};
				\draw[sadj] (9.5,-8.2)--(10.4,-8.2) node[right,note]{same-lattice adjoint};
			\end{tikzpicture}
		}
		\caption{Three CNN-like compositions and their lattice structure.
			Colours: \textcolor{blue!60!black}{blue} = Fourier,
			\textcolor{orange!70!black}{orange} = pointwise.
			Red arrows mark cross-lattice jumps; green arrows and loops mark
			same-lattice adjoint pairs and idempotency.
			\emph{(a)} Standard CNN (Theorem~\ref{thm:cnn_not_opening}): cross-lattice,
			not idempotent.
			\emph{(b)} Type-I opening (Theorem~\ref{thm:two_openings}(i)):
			both operators in $(\mathscr{L},\leq)$; exactly idempotent.
			\emph{(c)} Type-II opening (Theorem~\ref{thm:two_openings}(ii)):
			both operators in $(L^n,\leq_F)$; idempotent in the limit $\epsilon\to 0$.}
		\label{fig:idempotency_diagram}
	\end{figure}
	

	With this clarification established, we now identify the two
    CNN-like compositions that \emph{do} form genuine
	morphological openings, and examine what standard CNNs are in
	comparison.
	
	\begin{theorem}[Two opening compositions]
		\label{thm:two_openings}
		The following two compositions yield operators that are
		increasing and anti-extensive.
		Type~(i) is a \emph{morphological opening} (hence exactly
		idempotent); Type~(ii) is an exact morphological opening in
		$(L^n,\leq_F)$ only in the limit $\epsilon\to 0$, and an
		approximately idempotent operator for fixed $\epsilon>0$
		(cf.\ Figure~\ref{fig:idempotency_diagram}):
		\begin{enumerate}[label=(\roman*)]
			\item \emph{(Type-I: pure morphological opening.)}
			Let $b: E \to \Rbar$ be a structuring function with
			compact support $W$ and $b^*(x) = b(-x)$ its transpose.
			Then
			\begin{equation}
				\gamma^{\mathrm{M}}_b(f)
				= \delta_{{b}^{*}}(\varepsilon_b(f)),			
				\label{eq:pure_morpho_opening}
			\end{equation}
			with
			\begin{eqnarray*}
				(\varepsilon_b f)(x) &=& \inf_{y \in W}\{f(x+y)-b(y)\}, \\
				(\delta_{{b}^{*}} g)(x) &=& \sup_{y \in W}\{g(x-y)+b(-y)\},
			\end{eqnarray*}
			is a morphological opening in the pointwise lattice
			$(\Fun(E,\Rbar), \leq)$.
			The pair $(\varepsilon_b, \delta_{{b}^{*}})$ is an adjunction
			in $(\mathscr{L},\leq)$, and the opening is exactly idempotent:
			$\gamma^{\mathrm{M}}_b \circ \gamma^{\mathrm{M}}_b =
			\gamma^{\mathrm{M}}_b$.
			
			\item \emph{(Type-II: spectral operator in the Fourier lattice.)}
			Let $k$ be an absolutely summable kernel with Fourier
			transform $K(\omega) = \mathcal{F}\{k\}(\omega)$.
			The pair $(\varepsilon^{\mathrm{Conv}}_k,
			\delta^{*,\mathrm{Conv}}_k)$ is an adjunction in
			$(L^n,\leq_F)$~\cite{keshet2000,angulo2025dgmm}.
			The composition
			$\gamma^{\mathrm{F}}_k = \delta^{*,\mathrm{Conv}}_k
			\circ \varepsilon^{\mathrm{Conv}}_k$
			acts in the Fourier domain as the spectral multiplier:
			\begin{equation}
				\widehat{\gamma^{\mathrm{F}}_k(f)}(\omega)
				= \frac{|K(\omega)|^2}{|K(\omega)|^2 + \epsilon^2}\,
				F(\omega),
				\label{eq:fourier_opening}
			\end{equation}
			the Tikhonov-regularised Wiener deconvolution composed
			with the erosion $\varepsilon^{\mathrm{Conv}}_k(f)=f*k$.
			The multiplier $M_\epsilon(\omega)
			= |K(\omega)|^2/(|K(\omega)|^2+\epsilon^2)$ satisfies
			$0 < M_\epsilon < 1$ for $\epsilon > 0$, so
			$\gamma^{\mathrm{F}}_k$ is strictly anti-extensive for all
			$\epsilon>0$.
			
			\emph{Exact idempotency} holds only in the limit
			$\epsilon \to 0$: $M_0(\omega) = \mathbf{1}_{K(\omega)\neq 0}$
			(ideal filter: the spectral projection onto the range of $k$) satisfies
			$M_0^2 = M_0$, so $\gamma^{\mathrm{F}}_k\big|_{\epsilon=0}$
			is a true morphological opening in $(L^n,\leq_F)$.
			
			At any \emph{fixed} $\epsilon > 0$, $\gamma^{\mathrm{F}}_k$
			is \emph{not} idempotent: $M_\epsilon^2 < M_\epsilon$ on
			the open set where $0 < M_\epsilon < 1$, giving
			$\gamma^{\mathrm{F}}_k \circ \gamma^{\mathrm{F}}_k \neq
			\gamma^{\mathrm{F}}_k$.
			The idempotency error is bounded by
			$\|\gamma^{\mathrm{F}}_k\circ\gamma^{\mathrm{F}}_k(f)
			- \gamma^{\mathrm{F}}_k(f)\|
			\leq (1-M_\epsilon^{\min})\|\gamma^{\mathrm{F}}_k(f)\|$
			where $M_\epsilon^{\min} = \inf_\omega M_\epsilon(\omega)
			\to 1$ as $\epsilon\to 0$.
			For practical purposes (small $\epsilon$) the operator is
			approximately idempotent with controlled error.
		\end{enumerate}
	\end{theorem}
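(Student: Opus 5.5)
For Type~(i) the plan is to reduce everything to Proposition~\ref{prop:adjunction_props}. First I will verify the adjunction $(\varepsilon_b,\delta_{b^*})$ in $(\Fun(E,\Rbar),\leq)$ by unfolding both sides into the same family of pointwise inequalities. The erosion inequality reads $g \leq \varepsilon_b(f)$ with $\varepsilon_b$ as the upper adjoint. It holds iff $g(x)\leq f(x+y)-b(y)$ for all $x$ and all $y\in W$. Writing $z=x+y$, this is $g(z-y)+b(y)\leq f(z)$ for all $z$ and $y$, i.e.\ $\delta_{b^*}(g)\leq f$. Here I must track the reflection $b^*$ against the sign convention $\sup_{y}\{g(x-y)+b(-y)\}$ of~\eqref{eq:pure_morpho_opening}. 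I also need the usual $\Rbar$ conventions for $\pm\infty$, with $b=-\infty$ off $W$. Once the adjunction holds, parts (ii), (iv) and (v) of Proposition~\ref{prop:adjunction_props} give three facts. The composite is increasing. It is anti-extensive, via the counit $\delta_{b^*}\varepsilon_b\leq \mathrm{Id}$. It is idempotent, via the standard identity $\varepsilon\delta\varepsilon=\varepsilon$, which follows from $\varepsilon\delta\geq\mathrm{Id}$ and $\delta\varepsilon\leq \mathrm{Id}$ together with monotonicity. Hence $\gamma^{\mathrm M}_b$ is an opening. Note that anti-extensivity here requires no condition on $b(0)$: it comes from the adjunction alone.

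For Type~(ii) I will take the adjunction in $(L^n,\leq_F)$ as given from~\cite{keshet2000,angulo2025dgmm}. The work is then a spectral computation. By the convolution theorem, $\varepsilon^{\mathrm{Conv}}_k$ multiplies $F$ by $K$. The regularised Wiener deconvolution multiplies by $\overline{K}/(|K|^2+\epsilon^2)$. Their product is the multiplier $M_\epsilon$ of~\eqref{eq:fourier_opening}. Both $\gamma^{\mathrm F}_k$ and $\gamma^{\mathrm F}_k\circ\gamma^{\mathrm F}_k$ are Fourier multipliers, with symbols $M_\epsilon$ and $M_\epsilon^2$. Idempotency is therefore equivalent to $M_\epsilon^2=M_\epsilon$, i.e.\ $M_\epsilon(\omega)\in\{0,1\}$ almost everywhere.

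For $\epsilon>0$ I will check that $0\leq M_\epsilon<1$, with strict positivity exactly where $K\neq0$. This makes the operator increasing and anti-extensive in the Fourier order, which compares moduli frequency-wise, since multiplication by a symbol in $[0,1)$ decreases $|F|$. On the set where $0<M_\epsilon<1$ we get $M_\epsilon^2<M_\epsilon$, so idempotency fails whenever that set has positive measure. As $\epsilon\to0$, pointwise convergence gives $M_0=\mathbf 1_{K\neq0}$, which satisfies $M_0^2=M_0$. The error bound follows from Plancherel:
\[
\|\gamma^2 f-\gamma f\| = \|(M_\epsilon-1)M_\epsilon F\| \leq \sup_\omega|1-M_\epsilon(\omega)|\,\|\gamma f\|.
\]

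I expect the main obstacle to be this error estimate. The supremum $\sup_\omega|1-M_\epsilon|$ equals $1-M^{\min}_\epsilon$ only as an infimum over the support of $\gamma f$. It tends to $0$ only if $|K|$ is bounded away from zero on that support. Where $K$ vanishes or decays, $M_\epsilon^{\min}$ stays near $0$ for every $\epsilon$. My first attempt will be to restrict the infimum to $\{|K|\geq\eta\}$, which yields $1-M_\epsilon\leq\epsilon^2/(\eta^2+\epsilon^2)$. I will then state the limit claim as holding on that set, or as strong convergence by dominated convergence. A related subtlety is that strict anti-extensivity must be read as holding on $\{K\neq0\}$, since $M_\epsilon=0$ off that set.
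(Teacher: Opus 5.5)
Your route matches the paper's: an adjunction plus the standard opening identities for (i), and an explicit Fourier-multiplier computation for (ii). However, the step where you say you must ``track the reflection'' does not close, and with the displayed formulas it cannot. Your unfolding correctly gives
\[
g\le\varepsilon_b(f)\iff \sup_{y\in W}\{g(z-y)+b(y)\}\le f(z)\quad\text{for all } z .
\]
So the lower adjoint of $\varepsilon_b$ is $g\mapsto\sup_{y\in W}\{g(x-y)+b(y)\}$, which is $\delta_b$ in the convention of \eqref{eq:dilation}. It is not the displayed $\delta_{b^*}$, which carries $b(-y)$. The two coincide only when $b=b^*$, and for asymmetric $b$ the displayed composite is not an opening.

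Here is a counterexample. On $E=\Z$ take $b=0$ on $W=\{0,1\}$ and $b=-\infty$ elsewhere, so $\varepsilon_b f(x)=\min\{f(x),f(x+1)\}$.
\begin{itemize}
\item With the supremum over $y\in W$ as displayed, $\delta_{b^*}$ is the identity and $\gamma^{\mathrm{M}}_b=\varepsilon_b$. This is not idempotent: $f=\mathbf{1}_{\{0,1\}}$ gives $\varepsilon_b f(0)=1$ but $\varepsilon_b^2 f(0)=0$.
\item With the supremum over all of $E$, one gets $\gamma^{\mathrm{M}}_b f(x)=\max\{\min(f(x),f(x+1)),\min(f(x+1),f(x+2))\}$. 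This exceeds $f(x)=0$ when $f(x+1)=f(x+2)=1$, so it is not anti-extensive.
\end{itemize}
You must therefore either add the hypothesis $b=b^*$ or replace the dilation by $\sup_{y\in W}\{g(x-y)+b(y)\}$. The paper's proof has the same defect, since it simply cites Heijmans for the reflected pair.

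After that repair your argument for (i) is sound. Your orientation (dilation as lower adjoint, so $\delta\varepsilon\le\mathrm{Id}$) is the correct one. Read literally, the orientation $\varepsilon(f)\le g\iff f\le\delta(g)$ of \Cref{def:adjunction}, used in the paper's proof, would make $\delta\varepsilon$ extensive.

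For (ii), do not take the cited adjunction as given at fixed $\epsilon>0$. The identity $\varepsilon\delta\varepsilon=\varepsilon$ that you use in (i) holds for every Galois connection. An adjunction at $\epsilon>0$ would therefore force $M_\epsilon^2=M_\epsilon$, contradicting the non-idempotency you prove. It can only refer to the $\epsilon=0$ operator, where the dilation acts as $G\mapsto G/K$ on $\{K\neq 0\}$. Your multiplier computation never uses the adjunction, so dropping it costs nothing.

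Your diagnosis of the error estimate is correct, and sharper than the paper's proof, which only gives $\sup_\omega M_\epsilon(1-M_\epsilon)\le 1/4$. One correction to your wording: the displayed inequality itself holds, because $\sup_\omega|1-M_\epsilon|=1-M^{\min}_\epsilon$ over all $\omega$, not just over the support of $\gamma f$. What fails is $M^{\min}_\epsilon\to 1$:
\begin{itemize}
\item For $k=(\tfrac12,\tfrac12)$ one has $K(\pi)=0$, so $M^{\min}_\epsilon=0$ for every $\epsilon$.
\item More generally, when $K$ is continuous with a zero, the operator-norm defect $\sup_\omega M_\epsilon(1-M_\epsilon)$ equals $1/4$ for all small $\epsilon$.
\end{itemize}
Only your fallback versions survive. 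One is $\|\gamma^{\mathrm{F}}_k\circ\gamma^{\mathrm{F}}_k(f)-\gamma^{\mathrm{F}}_k(f)\|\to 0$ for each fixed $f$, by dominated convergence. The other is the bound $\epsilon^2/(\eta^2+\epsilon^2)$ on $\{|K|\ge\eta\}$. Either should be stated together with $0\le M_\epsilon$, which is strict only on $\{K\neq 0\}$.
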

	
	\begin{proof}
		(i) $(\varepsilon_b, \delta_{{b}^{*}})$ is an adjunction in
		$(\Fun(E,\Rbar),\leq)$: $\varepsilon_b(f) \leq g \iff f \leq
		\delta_{{b}^{*}}(g)$ (standard result,
		Heijmans~\cite{heijmans1994}, Chapter~3).
		By Proposition~\ref{prop:adjunction_props}(v), the composition
		$\gamma^{\mathrm{M}}_b = \delta_{{b}^{*}} \circ \varepsilon_b$ is
		an opening, hence exactly idempotent.
		(ii) By the spectral framework~\cite{keshet2000}, $(\varepsilon^{\mathrm{Conv}}_k,
		\delta^{*,\mathrm{Conv}}_k)$ is an adjunction in $(L^n,\leq_F)$.
		The Fourier-domain expression follows from composing
		the erosion $\hat f \mapsto K\cdot\hat f$ with the
		Wiener adjoint dilation $\hat g\mapsto(\bar K/(|K|^2+\epsilon^2))
		\cdot\hat g$: the product is
		$|K|^2/(|K|^2+\epsilon^2)$.
		Anti-extensivity: $M_\epsilon < 1$ for $\epsilon>0$.
		At $\epsilon=0$: $M_0 = \mathbf{1}_{K\neq 0}$
		satisfies $M_0^2 = M_0$, so $\gamma^{\mathrm{F}}_k\big|_{\epsilon=0}$
		is an exact morphological opening.
		For $\epsilon>0$: $\gamma^{\mathrm{F}}_k\circ\gamma^{\mathrm{F}}_k(f)
		= \mathcal{F}^{-1}\{M_\epsilon^2 F\} \neq
		\mathcal{F}^{-1}\{M_\epsilon F\} = \gamma^{\mathrm{F}}_k(f)$
		since $M_\epsilon^2 < M_\epsilon$ for $M_\epsilon \in (0,1)$;
		the error is controlled by
		$\|M_\epsilon^2 - M_\epsilon\|_\infty = \sup_\omega M_\epsilon(\omega)
		(1-M_\epsilon(\omega)) \leq 1/4$.
	\end{proof}

	\begin{theorem}[Standard CNN layer is not an opening]
		\label{thm:cnn_not_opening}
		The standard CNN layer
		$\Phi_{\mathrm{CNN}}(f)
		= \MaxPool{R}(\ReLUop(f * k))$
		(linear convolution, then ReLU, then max-pooling) is:
		\begin{enumerate}[label=(\roman*)]
			\item \emph{Increasing} when $k \geq 0$.
			\item \emph{Extensive} in the pointwise lattice for $f \geq
			0$: since $\MaxPool{R}$ is extensive
			(Proposition~\ref{prop:maxpool_dilation}(ii)), we have
			$\Phi_{\mathrm{CNN}}(f) \geq \ReLUop(f*k) \geq 0$.
			The output can exceed $f$ locally, so the layer is
			\emph{not} anti-extensive in general.
			\item \emph{Not idempotent}, because the cross-lattice
			structure prevents any adjunction from spanning the
			full pipeline (\Cref{lem:conv_adjoint},
			\Cref{rem:conv_adjoint_not_dilation}):
			$\varepsilon^{\mathrm{Conv}}_k$ is an erosion in
			$(L^n,\leq_F)$ while $\MaxPool{R}$ is a dilation in
			$(\mathscr{L},\leq)$, and neither is the adjoint of the
			other in the other's lattice.
		\end{enumerate}
	\end{theorem}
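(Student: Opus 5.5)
The plan is to treat the three items separately. Items (i) and (ii) reduce to monotonicity and extensivity facts already established. For item (iii) I will not rely only on the structural ``cross-lattice'' remark. I will add an explicit counterexample, because failure of idempotency is a concrete, checkable property, whereas the absence of an adjunction spanning the pipeline does not by itself rule out idempotency.

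For (i), I write $\Phi_{\mathrm{CNN}}=\MaxPool{R}\circ\ReLUop\circ\varepsilon^{\mathrm{Conv}}_k$ and use that a composition of increasing maps is increasing.
\begin{itemize}[leftmargin=*]
\item $\varepsilon^{\mathrm{Conv}}_k$ is increasing iff $k\ge 0$ (Proposition~\ref{prop:conv_conditions}(i)).
\item $\ReLUop$ is a dilation, hence increasing (Proposition~\ref{prop:relu_dilation}).
\item $\MaxPool{R}$ is a dilation, hence increasing (Proposition~\ref{prop:maxpool_dilation}(B)).
\end{itemize}

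For (ii), I compare values on the coarse grid with the samples they read. Since $0\in W_R$, the definition gives $\MaxPool{R}(h)(n)\ge h(Rn)$. Then $\ReLUop(h)\ge h$ and $\ReLUop(h)\ge 0$ give $\Phi_{\mathrm{CNN}}(f)(n)\ge \max\bigl(0,(f*k)(Rn)\bigr)\ge 0$. In the non-decimated case $R=1$ this is literally $\Phi_{\mathrm{CNN}}(f)\ge \ReLUop(f*k)$, by Proposition~\ref{prop:maxpool_dilation}(A).

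To show the layer is not anti-extensive, I take $k\ge 0$ with $\sum k=1$ and the constant signal $f\equiv -1$. Then $f*k\equiv -1$, so $\Phi_{\mathrm{CNN}}(f)\equiv 0>-1=f$.

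For (iii), the first subtlety is that the decimated layer maps the fine grid to a coarse grid. ``Idempotent'' therefore has to be read with the coarse output re-indexed as an input on $\Z^d$, as in the multilayer recursion~\eqref{eq:multilayer_cnn}. Under that reading, the spatial footprints alone already break idempotency.
\begin{itemize}[leftmargin=*]
\item Take the trivial kernel $k=\delta_0$, which is non-negative and normalised.
\item Then $\Phi_{\mathrm{CNN}}=\MaxPool{R}\circ\ReLUop$.
\item Since $\ReLUop$ is idempotent, $\ReLUop$ commutes with $\MaxPool{R}$, and the sup over pooling windows is associative, I get $\Phi_{\mathrm{CNN}}^2=\MaxPool{R^2}\circ\ReLUop$ (pooling of size and stride $R^2$).
\item Evaluate both at a non-negative impulse $f=\mathbf 1_{\{z_0\}}$ whose position $z_0$ contributes to different output indices at stride $R$ and at stride $R^2$. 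This gives $\Phi^2(f)\ne\Phi(f)$.
\end{itemize}

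For the stride-one variant ($R=1$, window $W$ of side $\geq 2$), I use instead $\Phi^2=\delta_{W\oplus W}\circ\ReLUop$. The same impulse then has support of different sizes under $\Phi$ and $\Phi^2$, which is the non-idempotency of $\delta_W$ from Proposition~\ref{prop:maxpool_dilation}(A).

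I would then add the structural explanation the statement cites. By Proposition~\ref{prop:adjunction_props}(v), idempotency would follow automatically if the pipeline were of the form $\delta\circ\varepsilon$ with $(\varepsilon,\delta)$ an adjunction in one lattice. Lemma~\ref{lem:conv_adjoint} and Remark~\ref{rem:conv_adjoint_not_dilation} show that $\MaxPool{R}$ is not the adjoint of $\varepsilon^{\mathrm{Conv}}_k$ in $(\mathscr{L},\leq)$. Remark~\ref{rem:relu_closing_not_opening} shows that the upper adjoint of $\ReLUop$ is non-local. So no such adjunction is available, and the counterexample confirms that idempotency indeed fails.

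The main obstacle is making ``idempotent'' meaningful for a grid-changing operator. I would handle it by fixing the re-indexing convention explicitly and giving the $R=1$ counterexample as a convention-free backup.
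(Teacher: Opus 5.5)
Your proof is correct. Items (i) and (ii) follow the paper's argument. Your inequality $\MaxPool{R}(h)(n)\ge h(Rn)$ is the right form, at stride $R$, of the statement's $\Phi_{\mathrm{CNN}}(f)\ge\ReLUop(f*k)$. You were also right not to claim genuine extensivity $\Phi_{\mathrm{CNN}}(f)\ge f$: it fails for an averaging kernel at an impulse.

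The genuine difference is in (iii). The paper only asserts $\Phi_{\mathrm{CNN}}^2(f)\neq\Phi_{\mathrm{CNN}}(f)$ ``for generic $f$ and $k$'' and otherwise leans on the cross-lattice narrative, which, as you note, cannot by itself exclude idempotency. You instead give an explicit witness, via the exact identity $\Phi_{\mathrm{CNN}}^2=\MaxPool{R^2}\circ\ReLUop$ for $k=\delta_0$, which follows from $RW_R+W_R=W_{R^2}$.

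Be specific about $z_0$. With the paper's convention $\sup_{y\in W_R}f(Rn-y)$, the impulse lands at $\lceil z_0/R\rceil$ under $\Phi_{\mathrm{CNN}}$ and at $\lceil z_0/R^2\rceil$ under $\Phi_{\mathrm{CNN}}^2$. So $z_0=0$ and $z_0=R$ do not work, whereas in one dimension $z_0=R^2$ gives $\mathbf{1}_{\{R\}}$ versus $\mathbf{1}_{\{1\}}$. This same impulse also shows non-anti-extensivity inside the non-negative cone where (ii) is phrased, which your $f\equiv-1$ example does not.

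Your route buys an actual proof. It also makes clear that (iii) can only be an ``in general'' claim, since $k\equiv 0$ gives a constant, hence idempotent, layer.

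What it does not buy is the ``because''. For $k=\delta_0$ the convolution is the identity, so there is no cross-lattice jump at all, and the failure comes from $\MaxPool{R}$ alone. You should therefore drop the sentence saying the counterexample confirms the structural explanation, and present the adjunction paragraph as motivation only.

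Finally, you can sidestep the re-indexing convention altogether with an unnormalised kernel. Taking $k=2\delta_0$ and $f\equiv 1$ gives $\Phi_{\mathrm{CNN}}(f)\equiv 2$ but $\Phi_{\mathrm{CNN}}^2(f)\equiv 4$.
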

	
	\begin{proof}
		(i) Monotonicity of composition of monotone maps.
		(ii) $\MaxPool{R}(h) \geq h$ for $h \geq 0$
		(Proposition~\ref{prop:maxpool_dilation}(ii)), so
		$\Phi_{\mathrm{CNN}}(f) \geq \ReLUop(f*k) \geq 0$.
		That $\Phi_{\mathrm{CNN}}(f) > f$ is possible: take $f$
		with a local peak; convolution smooths it but max-pooling
		selects the neighbourhood maximum, which can exceed $f$.
		(iii) For non-idempotency: applying $\Phi_{\mathrm{CNN}}$
		twice, the first application enlarges features via
		max-pooling; the second convolution then smooths this
		enlarged output, generally producing a different result.
		Formally: $\Phi^2_{\mathrm{CNN}}(f)
		= \MaxPool{R}(\ReLUop(\Phi_{\mathrm{CNN}}(f)*k))
		\neq \MaxPool{R}(\ReLUop(f*k)) = \Phi_{\mathrm{CNN}}(f)$
		for generic $f$ and $k$.
		The absence of adjunction is established by
		Lemma~\ref{lem:conv_adjoint}: the (decoupled) upper bound on
		the pointwise adjoint of $\varepsilon^{\mathrm{Conv}}_k$ is
		an infimum-based operator, not $\MaxPool{R}$.
		Conversely, the Fourier adjoint of $\varepsilon^{\mathrm{Conv}}_k$
		is the Wiener filter $\delta^{*,\mathrm{Conv}}_k$, not
		$\MaxPool{R}$.
	\end{proof}
	
	\begin{corollary}[Sufficient conditions for idempotent CNN layers]
		\label{cor:cnn_idempotent}
		A CNN-like layer is idempotent (a morphological opening) under
		one of three disciplined designs:
		\begin{enumerate}[label=(\Roman*)]
			\item \emph{(Type-I: pure morphological MMBB layer.)}
			Replace convolution by a max-plus erosion $\varepsilon_b$
			and max-pooling by its adjoint dilation $\delta_{{b}^{*}}$
			(same structuring function, transposed, same lattice).
			Then $\gamma^{\mathrm{M}}_b = \delta_{{b}^{*}} \circ
			\varepsilon_b$ is an exactly idempotent opening
			(Theorem~\ref{thm:two_openings}(i)).
			\item \emph{(Type-II: spectral Wiener layer.)}
			Use the same kernel $k$ for convolution and Wiener
			deconvolution, with no nonlinearity between them.
			Then $\gamma^{\mathrm{F}}_k = \delta^{*,\mathrm{Conv}}_k
			\circ \varepsilon^{\mathrm{Conv}}_k$ is an exactly idempotent
			opening in the Fourier lattice at $\epsilon=0$, and an
			approximately idempotent operator for small $\epsilon>0$
			(Theorem~\ref{thm:two_openings}(ii)).
			\item \emph{(Type-III: self-dual opening layer.)}
			Replace max-pooling and ReLU by the self-dual opening
			$\OpenMed{W}$ in the median inf-semilattice
			$(\Fun(E,\R),\medOrd)$.
			Then $\OpenMed{W}$ is an exactly idempotent, self-dual
			opening, with fixed-point set closed under negation
			(\Cref{sec:self_dual}, Theorem~\ref{thm:self_dual_cnn_opening}).
			This design is appropriate for signed feature maps (ResNet
			residuals, normalised features, Fourier coefficients).
		\end{enumerate}
		A standard CNN layer (linear convolution $+$ ReLU $+$ flat
		max-pooling) satisfies none of these conditions and is
		\emph{not} idempotent in general.
		The fixed-point and convergence results in this section apply to
		Case~(I) throughout; analogous results hold for Case~(II)
		in the Fourier lattice, and Case~(III) is treated in
		\Cref{sec:self_dual}.
	\end{corollary}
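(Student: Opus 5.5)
The corollary is an assembly of results already established, so I will prove it case by case, relying on Theorem~\ref{thm:two_openings}, Theorem~\ref{thm:cnn_not_opening} and the forward reference to Theorem~\ref{thm:self_dual_cnn_opening}.

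\textbf{Case (I).} Set $\eta=\varepsilon_b$ and $\xi=\delta_{b^*}$, with the same structuring function, transposed. The adjunction $\varepsilon_b(f)\le g\iff f\le\delta_{b^*}(g)$ in $(\mathscr{L},\leq)$ holds by the standard max-plus theory of \Cref{subsec:maxplus}. Proposition~\ref{prop:adjunction_props}(v) then makes $\delta_{b^*}\circ\varepsilon_b$ an opening, which is exactly Theorem~\ref{thm:two_openings}(i). I will also record the one-line idempotency argument, since the later fixed-point results use it. From $\varepsilon\delta\varepsilon=\varepsilon$, which follows from the unit and counit inequalities, we get $\gamma\gamma=\delta(\varepsilon\delta\varepsilon)=\delta\varepsilon=\gamma$.

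\textbf{Case (II).} I will work in the Fourier domain. The composite acts as multiplication by $M_\epsilon=|K|^2/(|K|^2+\epsilon^2)$, as in \eqref{eq:fourier_opening}. At $\epsilon=0$ this is the indicator $\mathbf{1}_{K\neq0}$, which is idempotent, so $\gamma^{\mathrm{F}}_k$ is an exact opening in $(L^n,\leq_F)$. For $\epsilon>0$ I will not claim exactness. I will only cite the uniform bound $\|M_\epsilon^2-M_\epsilon\|_\infty\le 1/4$ and the convergence $M_\epsilon\to M_0$ pointwise, exactly as stated in Theorem~\ref{thm:two_openings}(ii).

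\textbf{Case (III).} I will defer to \Cref{sec:self_dual}. There the median inf-semilattice $(\Fun(E,\R),\medOrd)$ carries the adjunction $(\ErosMed{W},\DilMed{W})$, and Proposition~\ref{prop:adjunction_props}(v) again gives idempotency of $\OpenMed{W}$. Closure of $\mathrm{Fix}(\OpenMed{W})$ under negation follows from self-duality $\OpenMed{W}(-f)=-\OpenMed{W}(f)$: if $\OpenMed{W}(f)=f$, then $\OpenMed{W}(-f)=-f$. This part is the main obstacle, because it depends on constructions not yet stated. The hardest point is checking that the median order is only an inf-semilattice, not a complete lattice. Adjunction theory still applies there because the adjunction is defined by the order-theoretic equivalence alone. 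I would cite Theorem~\ref{thm:self_dual_cnn_opening} for that verification rather than redo it.

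\textbf{Negative statement.} Theorem~\ref{thm:cnn_not_opening}(iii) shows the standard layer is not idempotent in general, and it fails each design separately.
\begin{itemize}
\item It fails (I) because convolution is not a max-plus erosion: by Proposition~\ref{prop:relu_bb}(ii) and linearity it is not shift-equivariant in the required sense. In addition, max-pooling is not its pointwise adjoint (Lemma~\ref{lem:conv_adjoint}).
\item It fails (II) because a nonlinearity (ReLU, max-pooling) sits between the two operators, and max-pooling is not the Wiener adjoint.
\item It fails (III) because, as noted in \Cref{sec:self_dual}, ReLU is not a median-lattice dilation.
\end{itemize}
To make ``not idempotent in general'' concrete, I would exhibit a single counterexample. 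Take the one-dimensional case with $R=2$, $k=\tfrac12(\delta_0+\delta_1)$ and $f$ a unit impulse, and compute $\Phi^2_{\mathrm{CNN}}(f)\neq\Phi_{\mathrm{CNN}}(f)$ directly. This is a short routine computation.
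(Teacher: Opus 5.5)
Your route is the paper's own: the corollary has no separate proof and is obtained by quoting Theorem~\ref{thm:two_openings}, Theorem~\ref{thm:self_dual_cnn_opening} and Theorem~\ref{thm:cnn_not_opening}, as you do. Your explicit counterexample is a genuine improvement on the ``generic'' argument of Theorem~\ref{thm:cnn_not_opening}(iii): with $R=2$, $k=\tfrac12\mathbf{1}_{\{0,1\}}$ and $f=\mathbf{1}_{\{0\}}$, one gets $\Phi_{\mathrm{CNN}}(f)=\tfrac12\mathbf{1}_{\{0,1\}}$ but $\Phi_{\mathrm{CNN}}^2(f)=\tfrac14\mathbf{1}_{\{0\}}+\tfrac12\mathbf{1}_{\{1\}}$.

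The real problem is Case~(I). The equivalence $\varepsilon_b(f)\le g\iff f\le\delta_{b^*}(g)$ is the wrong way round. $\varepsilon_b$ commutes with infima, not suprema, so it can only be the \emph{upper} adjoint, and the relation is $\delta(g)\le f\iff g\le\varepsilon_b(f)$. Solving it with \eqref{eq:dilation} gives $\delta(g)(x)=\sup_y\{g(x-y)+b(y)\}$, i.e.\ $\delta_b$, not $\delta_{b^*}$. Your identity $\varepsilon\delta\varepsilon=\varepsilon$ holds for a Galois connection in either direction, so idempotency survives once the pair is matched correctly. Anti-extensivity does not survive: in your direction, $\delta\varepsilon$ would be extensive, a closing. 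With $\delta_{b^*}$ read literally and $b$ not symmetric, the composite is not an opening at all. For $b\equiv0$ on $W=\{0,1\}$, $\delta_{b^*}\varepsilon_b f(x)=\max\{\min(f(x),f(x+1)),\min(f(x+1),f(x+2))\}$ is a unit translate of the flat opening, and it sends $\mathbf{1}_{\{1,2\}}\mapsto\mathbf{1}_{\{0,1\}}\mapsto\mathbf{1}_{\{-1,0\}}$. This error is inherited from \Cref{def:adjunction} and \Cref{subsec:maxplus}. Since you invoke it as standard theory, state (I) with the adjoint actually matched to $\varepsilon_b$ (or restrict to symmetric $b$).

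Smaller points follow.

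In the negative part, ``not shift-equivariant'' is false. A normalised kernel gives $(f+c)*k=f*k+c$ (Proposition~\ref{prop:conv_conditions}(ii)), and Proposition~\ref{prop:relu_bb}(ii) is about ReLU. What separates convolution from an erosion is that it does not commute with infima: with your $k$, $(\mathbf{1}_{\{0\}}\wedge\mathbf{1}_{\{1\}})*k=0$ but $(\mathbf{1}_{\{0\}}*k)\wedge(\mathbf{1}_{\{1\}}*k)=\tfrac12\mathbf{1}_{\{1\}}$. The bullets are dispensable anyway, since your counterexample already proves non-idempotency.

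In Case~(II), the bound $\|M_\epsilon^2-M_\epsilon\|_\infty\le\tfrac14$ does not shrink as $\epsilon\to0$. It is attained for every small $\epsilon$ as soon as the continuous function $K$ has a zero (your $k$ has $K(\pi)=0$), so no operator-norm statement is available. What pointwise convergence gives, by dominated convergence, is the following: for each fixed $f\in L^2$, $\|\gamma^{\mathrm{F}}_k(\gamma^{\mathrm{F}}_k f)-\gamma^{\mathrm{F}}_k f\|_2^2$, a constant multiple of $\int M_\epsilon^2(1-M_\epsilon)^2|F|^2\,d\omega$, tends to $0$. Claim approximate idempotency in that sense.

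In Case~(III), $\DilMed{W}$ is not a total map on the inf-semilattice, so Proposition~\ref{prop:adjunction_props} does not apply verbatim. The lower adjoint of $\ErosMed{W}$ is nevertheless defined at every $h$ for which some $f'$ satisfies $h\medOrd\ErosMed{W}(f')$: it is the $\medOrd$-infimum of all such $f'$. This covers $\Image(\ErosMed{W})$, and the arguments for $\delta\varepsilon f\medOrd f$ and $\varepsilon\delta\varepsilon=\varepsilon$ only evaluate it there.

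For negation-closure, use that $t\mapsto-t$ \emph{preserves} $\preceq$ (it swaps the two chains). Hence it commutes with $\medInf$ and with every existing $\preceq$-supremum. Do not rely on the argument in the proof of Theorem~\ref{thm:self_dual_cnn_opening}(i). That argument passes through $\DilMed{W}(g)=-\ErosMed{W}(-g)$ from \eqref{eq:self_dual_dil}, which actually collapses to $\ErosMed{W}(g)$.
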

	
	\begin{remark}[Architectural significance]
		\label{rem:arch_significance}
		The individual components of a standard CNN have the following
		idempotency status:
		ReLU is idempotent as a \emph{closing}
		(Proposition~\ref{prop:relu_dilation}(iii):
		$\ReLUop\circ\ReLUop = \ReLUop$, since ReLU is extensive and
		idempotent, hence a closing, not an opening);
		$\MaxPool{R}$ is \emph{not} idempotent for $R>1$
		(Proposition~\ref{prop:maxpool_dilation}(B)(iii):
		applying it twice takes the max over a $(2R-1)\times(2R-1)$
		window, not $R\times R$);
		and the Fourier operator $\gamma^{\mathrm{F}}_k$ is idempotent
		exactly at $\epsilon=0$ and approximately idempotent for small
		$\epsilon>0$ (Theorem~\ref{thm:two_openings}(ii)).
		It is the \emph{cross-lattice composition} of convolution
		and max-pooling that breaks idempotency even when
		$\gamma^{\mathrm{F}}_k$ itself is idempotent at $\epsilon=0$.
		Depth provides genuine representational power in standard CNNs
		precisely because the composed layer is not idempotent: each
		additional layer introduces genuinely new spectral filtering
		followed by spatial pooling.
	\end{remark}
	
	\begin{theorem}[Fixed-point set of a Type-I layer with bias]
		\label{thm:cnn_bias_fixedpoint}
		Consider a Type-I layer (Corollary~\ref{cor:cnn_idempotent}(I))
		with additive bias $\alpha \in \R$ applied after the erosion:
		$\Phi^\alpha(f) = \delta_{{b}^{*}}(\varepsilon_b(f) + \alpha)$.
		\begin{enumerate}[label=(\roman*)]
			\item For $\alpha = 0$: $\Phi^0 = \gamma^{\mathrm{M}}_b$
			is an opening and
			$\mathrm{Fix}(\Phi^0) = \Image(\gamma^{\mathrm{M}}_b)$.
			\item For $\alpha < 0$: $\Phi^\alpha(f) \leq
			\gamma^{\mathrm{M}}_b(f) \leq f$ (the bias further shrinks
			the output).
			The fixed-point condition $\Phi^\alpha(f_*) = f_*$ requires
			$\delta_{{b}^{*}}(\varepsilon_b(f_*)+\alpha) = f_*$, with
			the necessary constraint $\varepsilon_b(f_*)+\alpha \geq 0$
			(so the erosion output exceeds $-\alpha > 0$, ensuring the
			dilation can recover a non-trivial $f_*$).
			Note that the zero function is \emph{not} generally a
			fixed point: for $b\equiv 0$, $\Phi^\alpha(0) = \alpha<0
			\neq 0$.
			The trivially fixed element is $f_*\equiv -\infty$
			(the lattice bottom).
			\item For $\alpha > 0$: $\Phi^\alpha(f) \geq
			\gamma^{\mathrm{M}}_b(f)$; the layer is no longer
			anti-extensive.
			Fixed points satisfy $\delta_{{b}^{*}}(\varepsilon_b(f) +
			\alpha) = f$, which may be empty for large $\alpha$ (when
			the bias inflates the opening beyond $f$).
		\end{enumerate}
	\end{theorem}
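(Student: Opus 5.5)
The proof is a case analysis on the sign of $\alpha$. It rests on two facts about the max-plus pair $(\varepsilon_b,\delta_{b^*})$ in $(\Fun(E,\Rbar),\leq)$. The first is that $\delta_{b^*}$ is increasing (Proposition~\ref{prop:adjunction_props}(iv)). The second is that $\delta_{b^*}$ commutes with vertical translations, $\delta_{b^*}(h+\alpha)=\delta_{b^*}(h)+\alpha$, which is immediate from the sup formula in Theorem~\ref{thm:two_openings}(i) (the same computation as in Proposition~\ref{prop:relu_bb}(ii)). Together these give the key identity
\[
\Phi^\alpha(f)=\gamma^{\mathrm{M}}_b(f)+\alpha ,
\]
with the convention $-\infty+\alpha=-\infty$. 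I would establish this identity first, since each of (i)--(iii) then reduces to comparing $\gamma^{\mathrm{M}}_b(f)+\alpha$ with $f$.

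\emph{Case (i), $\alpha=0$.} Here $\Phi^0=\gamma^{\mathrm{M}}_b$, which is an opening by Theorem~\ref{thm:two_openings}(i). Idempotency gives $\Image(\gamma^{\mathrm{M}}_b)\subseteq\mathrm{Fix}$. The reverse inclusion is trivial, because any fixed point is its own image.

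\emph{Case (ii), $\alpha<0$.} From the identity and anti-extensivity of the opening, $\Phi^\alpha(f)=\gamma^{\mathrm{M}}_b(f)+\alpha\leq\gamma^{\mathrm{M}}_b(f)\leq f$. For the remarks on fixed points, I would compute $\Phi^\alpha(0)$ for $b\equiv0$ directly. In that case $\varepsilon_0(0)=0$, so $\Phi^\alpha(0)=\delta_0(\alpha)=\alpha\neq0$.

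Next, $f_*\equiv-\infty$ is fixed: $\varepsilon_b(-\infty)=-\infty$, and dilation preserves the bottom by Proposition~\ref{prop:adjunction_props}(ii) applied to the empty supremum.

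For the necessary constraint, a fixed point must satisfy $f_*=\gamma^{\mathrm{M}}_b(f_*)+\alpha$. At any point where $f_*$ is finite this says $\gamma^{\mathrm{M}}_b(f_*)=f_*-\alpha>f_*$, which contradicts anti-extensivity. So every fixed point takes only the values $\pm\infty$. I would state the condition ``$\varepsilon_b(f_*)+\alpha\geq0$'' as a necessary condition for a non-bottom fixed point, in the sense that the eroded signal must sit above the threshold $-\alpha$ wherever the output is to be non-trivial. This is where the statement is loosest, and I would make precise that the non-trivial fixed points are exactly the $\{-\infty,+\infty\}$-valued open sets (indicator-type functions).

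\emph{Case (iii), $\alpha>0$.} The identity gives $\Phi^\alpha(f)\geq\gamma^{\mathrm{M}}_b(f)$. To show the layer is not anti-extensive, take any finite $f$ with $\gamma^{\mathrm{M}}_b(f)=f$, for instance $f$ in the image of the opening. Then $\Phi^\alpha(f)=f+\alpha>f$. For the possibly empty fixed-point set, the same anti-extensivity argument shows a fixed point cannot be finite anywhere: $\gamma^{\mathrm{M}}_b(f_*)=f_*-\alpha<f_*$ is allowed, but $\gamma^{\mathrm{M}}_b(f_*)+\alpha=f_*$ would then require $\gamma^{\mathrm{M}}_b(f_*)=f_*-\alpha$ exactly, a strict deficit. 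I would exhibit this failure concretely with $b\equiv0$ on $W$ and a constant $f\equiv c$, where $\Phi^\alpha(c)=c+\alpha\neq c$. That example justifies the ``may be empty'' clause for the finite-valued signals the statement has in mind.

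The main obstacle is case (ii). The stated constraint $\varepsilon_b(f_*)+\alpha\geq0$ is not sharp as written, so I would substitute the exact characterisation derived from the identity above and present the paper's condition as a consequence of it.
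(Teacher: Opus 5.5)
Your route differs from the paper's, and it is sharper. The paper uses only that $\delta_{b^*}$ is increasing, applied to $\varepsilon_b(f)+\alpha\le\varepsilon_b(f)$ (or $\ge$). That gives the two inequalities, and together with the direct checks of $\Phi^\alpha(0)$ and $\Phi^\alpha(-\infty)$ it is all the paper proves. It gives no argument for the constraint $\varepsilon_b(f_*)+\alpha\ge 0$ or for the emptiness clause. Your identity $\Phi^\alpha(f)=\gamma^{\mathrm{M}}_b(f)+\alpha$ is correct and strictly stronger; it follows because $\delta_{b^*}$ commutes with finite vertical shifts. It recovers (i) and both inequalities. Via anti-extensivity it shows that for $\alpha<0$ no fixed point takes a finite value. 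It also shows that anti-extensivity fails for every $\alpha>0$, not only large $\alpha$, using any finite fixed point of $\gamma^{\mathrm{M}}_b$ such as a constant.

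The step that fails is in (iii). For $\alpha>0$, a finite fixed point would need $\gamma^{\mathrm{M}}_b(f_*)=f_*-\alpha$, and this is compatible with anti-extensivity. So your claim that ``the same anti-extensivity argument'' excludes finite values is unsupported, and a single non-fixed constant $c$ says nothing about emptiness. The missing ingredient is idempotency combined with shift-equivariance of the whole opening; note that $\varepsilon_b$ also commutes with finite shifts. If $f_*=\gamma^{\mathrm{M}}_b(f_*)+\alpha$, then
\[
\gamma^{\mathrm{M}}_b(f_*)=\gamma^{\mathrm{M}}_b\bigl(\gamma^{\mathrm{M}}_b(f_*)+\alpha\bigr)=\gamma^{\mathrm{M}}_b(f_*)+\alpha ,
\]
so $\gamma^{\mathrm{M}}_b(f_*)$, and hence $f_*$, is $\{-\infty,+\infty\}$-valued. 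This holds for every $\alpha\neq 0$ of either sign. Therefore $\mathrm{Fix}(\Phi^\alpha)=\{f\in\mathrm{Fix}(\gamma^{\mathrm{M}}_b): f(E)\subseteq\{-\infty,+\infty\}\}$, which is the same set for all $\alpha\neq0$. It always contains both constants $\pm\infty$, so it is never literally empty. Its finite-valued part is empty for every $\alpha>0$, with no dependence on the size of $\alpha$. That is what you should state for (iii), rather than justifying ``may be empty for large $\alpha$'' with an example.

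Likewise in (ii), the constraint cannot be ``presented as a consequence'' of your characterisation, not even for non-bottom fixed points. If $\varepsilon_b(f_*)\ge-\alpha$ everywhere, then $f_*=\gamma^{\mathrm{M}}_b(f_*)\ge\delta_{b^*}(-\alpha)>-\infty$. For a $\{\pm\infty\}$-valued $f_*$ this forces $f_*\equiv+\infty$. So the constraint holds only for the top element. It is violated by $f_*\equiv-\infty$, which the same item declares fixed, and by every other indicator-type fixed point. State this explicitly instead of reinterpreting the condition as ``wherever the output is non-trivial''.
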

	
	\begin{proof}
		(i) At $\alpha = 0$, $\Phi^0 = \delta_{{b}^{*}} \circ \varepsilon_b
		= \gamma^{\mathrm{M}}_b$ is an opening by
		Theorem~\ref{thm:two_openings}(i).
		Its fixed-point set equals its image by the standard lattice
		theory result that openings satisfy $\gamma(f) \in
		\mathrm{Fix}(\gamma)$ and $f \in \mathrm{Fix}(\gamma)
		\Rightarrow f \in \Image(\gamma)$~\cite{heijmans1994}.
		(ii) Monotonicity of $\delta_{{b}^{*}}$ gives $\Phi^\alpha(f)
		= \delta_{{b}^{*}}(\varepsilon_b(f)+\alpha) \leq
		\delta_{{b}^{*}}(\varepsilon_b(f)) = \gamma^{\mathrm{M}}_b(f)
		\leq f$ for $\alpha < 0$.
		That $f_*\equiv-\infty$ is a trivial fixed point: $\varepsilon_b(-\infty)
		= -\infty$ and $\delta_{{b}^{*}}(-\infty+\alpha) = -\infty$.
		For the zero function with $b\equiv 0$:
		$\Phi^\alpha(0) = \delta_{{b}^{*}}(\varepsilon_0(0)+\alpha)
		= \delta_0(0+\alpha) = \sup_y\{\alpha+0\} = \alpha < 0\neq 0$,
		confirming $0$ is not a fixed point when $\alpha<0$.
		(iii) For $\alpha > 0$: $\varepsilon_b(f) + \alpha \geq
		\varepsilon_b(f)$, so $\Phi^\alpha(f) \geq \gamma^{\mathrm{M}}_b(f)$
		by monotonicity of $\delta_{{b}^{*}}$.
		Anti-extensivity $\Phi^\alpha(f) \leq f$ fails when $\alpha$
		is large relative to the smoothing effect of $\varepsilon_b$.
	\end{proof}
	
	\begin{remark}[Bias as activation threshold]
		\label{rem:bias_threshold}
		Theorem~\ref{thm:cnn_bias_fixedpoint} gives a precise
		morphological interpretation of the bias term.
		A negative bias $\alpha < 0$ acts as a \emph{minimum
			activation threshold}: the fixed-point set of $\Phi^\alpha$
		consists of feature maps whose post-erosion values satisfy
		$\varepsilon_b(f) \geq -\alpha$, i.e., the erosion output
		must reach a minimum amplitude before the shifted dilation
		can recover $f$ exactly.
		Increasing $|\alpha|$ shrinks this fixed-point set.
		A positive bias $\alpha > 0$ breaks anti-extensivity, and
		the fixed-point set may become empty for large $\alpha$.
	\end{remark}
	
	\begin{theorem}[Convergence of iterated Type-I layers]
		\label{thm:convergence}
		Let $\gamma = \gamma^{\mathrm{M}}_b = \delta_{{b}^{*}} \circ
		\varepsilon_b$ be a Type-I opening with $b \geq 0$ and $b(0)
		= 0$.
		Let $f \geq 0$, $f^{(0)} = f$, $f^{(n+1)} = \gamma(f^{(n)})$.
		\begin{enumerate}[label=(\roman*)]
			\item $(f^{(n)})_{n \geq 0}$ is non-increasing:
			$f^{(n+1)} \leq f^{(n)}$ for all $n \geq 0$.
			\item It stabilises after one step:
			$f^{(n)} = \gamma(f)$ for all $n \geq 1$.
			\item \emph{(Non-negativity preservation.)}
			Write $\|b\|_\infty = \sup_{y \in W} b(y)$.
			\begin{enumerate}[label=(\alph*)]
				\item \emph{Flat structuring function} ($b \equiv 0$):
				$\varepsilon_b(f)(x) = \inf_{y\in W}f(x+y) \geq 0$
				whenever $f\geq 0$, and $\delta_{b^*}$ with $b^*\equiv 0$
				maps non-negative functions to non-negative functions.
				Hence $f^{(n)} \geq 0$ for all $n \geq 0$.
				\item \emph{Non-flat $b \geq 0$ with $b(0)=0$}:
				A sufficient condition for $f^{(n)} \geq 0$ for all $n$
				is $f(x) \geq \|b\|_\infty$ for all $x$
				(i.e., the signal is pointwise at least as large as the
				maximum of the structuring function).
				Under this condition $\varepsilon_b(f)(x) =
				\inf_{y\in W}\{f(x+y)-b(y)\} \geq 0$, and the subsequent
				dilation by $b^* \geq 0$ preserves non-negativity.
				The condition $f \geq 0$ alone is not sufficient for
				general non-flat $b$.
			\end{enumerate}
		\end{enumerate}
		For the \emph{morphological ResNet block} with $\mathcal{F}
		\approx \gamma - \mathrm{id}$ (so the block computes $\gamma(f)$,
		cf.\ Remark~\ref{rem:resnet_skip}):
		stacking $n$ identical blocks gives
		$(\gamma)^n(f) = \gamma(f)$ for all $n \geq 1$
		(one-step convergence by idempotency).
		
		For the \emph{naive ResNet block}
		$\Phi^{\mathrm{res}}(f) = \gamma(f) + f$ (adding $\gamma(f)$
		to $f$ directly), the sequence
		$f^{(0)} = f$, $f^{(n+1)} = \Phi^{\mathrm{res}}(f^{(n)})
		= \gamma(f^{(n)}) + f^{(n)}$ satisfies:
		\begin{enumerate}[label=(\roman*), start=4]
			\item Non-decreasing: $f^{(n+1)} \geq f^{(n)}$.
			\item Unbounded growth when $\gamma(f) \neq 0$:
			$f^{(n)} \geq f + n\,\gamma(f)$ (linear lower bound),
			so the sequence diverges.
			\item The sequence does \emph{not} converge to the closing
			$\varphi = \varepsilon^*_b \circ \delta_b$; the closing
			is the fixed point of $f \mapsto \varphi(f)$, not of
			$\Phi^{\mathrm{res}}$.
		\end{enumerate}
	\end{theorem}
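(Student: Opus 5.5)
The plan is to derive every item from the generic adjunction properties of Proposition~\ref{prop:adjunction_props}, applied to the Type-I pair $(\varepsilon_b,\delta_{b^*})$ of Theorem~\ref{thm:two_openings}(i). From there on, the argument is mostly monotonicity bookkeeping by induction on $n$.

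\textbf{Items (i), (ii) and the stacked morphological ResNet.} By Theorem~\ref{thm:two_openings}(i), $\gamma$ is an opening. It is therefore anti-extensive, increasing and idempotent by Proposition~\ref{prop:adjunction_props}(v).
\begin{itemize}
\item Anti-extensivity gives $f^{(n+1)}=\gamma(f^{(n)})\le f^{(n)}$, which is (i).
\item Idempotency gives $f^{(2)}=\gamma(\gamma(f))=\gamma(f)$. Induction then yields $f^{(n)}=\gamma(f)$ for all $n\ge 1$, which is (ii).
\item The statement about $n$ stacked morphological ResNet blocks is the same identity $(\gamma)^n=\gamma$, read through Proposition~\ref{prop:resnet}.
\end{itemize}

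\textbf{Item (iii).} By (ii) it suffices to check $\gamma(f)\ge 0$, since all later iterates equal $\gamma(f)$. In both cases I will use $0\in W$ together with $b(0)=0$. This makes $\delta_{b^*}$ extensive on the relevant inputs, via $(\delta_{b^*}g)(x)\ge g(x)+b(0)=g(x)$.
\begin{itemize}
\item Flat case (a). Here $\varepsilon_b(f)(x)=\inf_{y\in W}f(x+y)$ is an infimum of non-negative numbers. Also $\delta_{b^*}g\ge g\ge 0$.
\item Non-flat case (b). The hypothesis $f\ge\|b\|_\infty$ gives $f(x+y)-b(y)\ge \|b\|_\infty-b(y)\ge 0$ for every $y\in W$. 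Hence $\varepsilon_b(f)\ge 0$, and extensivity of $\delta_{b^*}$ (using $b^*\ge 0$) preserves this.
\end{itemize}
To support the remark that $f\ge 0$ alone is insufficient, I would give a one-point counterexample: take $f$ equal to $0$ somewhere and $b(y_0)>0$ at some $y_0\in W$. The erosion is then negative at the corresponding point.

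\textbf{Items (iv) and (v) for the naive residual block.} I will first prove by simultaneous induction that
\[
f^{(n)}\ge f \qquad\text{and}\qquad \gamma(f^{(n)})\ge\gamma(f)\ge 0 .
\]
The second inequality uses that $\gamma$ is increasing. The bound $\gamma(f)\ge 0$ comes from the non-negativity established in (iii), under its hypotheses.
\begin{itemize}
\item For (iv): since $\gamma(f^{(n)})\ge 0$, we get $f^{(n+1)}=f^{(n)}+\gamma(f^{(n)})\ge f^{(n)}$.
\item For (v): since $\gamma(f^{(n)})\ge\gamma(f)$, we get $f^{(n+1)}\ge f^{(n)}+\gamma(f)$. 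Iterating gives $f^{(n)}\ge f+n\,\gamma(f)$.
\item If $\gamma(f)(x_0)>0$ at some point $x_0$, then $f^{(n)}(x_0)\to+\infty$.
\end{itemize}

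\textbf{Item (vi).} This needs only a counterexample. Take $b\equiv 0$ and the constant $f\equiv c>0$. Then $\gamma(f)=c$, so $f^{(n)}=2^n c$, which diverges, whereas $\varphi(f)=c$. Thus the iteration does not approach the closing, and in fact grows geometrically here. More generally, a fixed point of $\Phi^{\mathrm{res}}$ would require $\gamma(f)=0$. This is incompatible with $\varphi(f)\ge f$ having $\gamma(\varphi(f))>0$.

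\textbf{Main obstacle.} I expect the delicate step to be the non-negativity hypotheses, not the algebra. Item (iv) silently needs $\gamma(f^{(n)})\ge 0$, and I will route this through (iii) and monotonicity rather than assume it. I also need to make explicit the implicit assumption $0\in W$, which is used for extensivity of $\delta_{b^*}$.
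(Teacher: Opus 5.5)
You follow the paper's route for (i), (ii), (iv) and (v): anti-extensivity and idempotency of the opening, then a monotonicity induction. Your explicit example for (vi) ($b\equiv 0$, $f\equiv c$, $f^{(n)}=2^n c$ versus $\varphi(f)=c$) is more concrete than the paper's bare assertion.

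The gap is in the non-negativity step. Your counterexample for ``$f\ge 0$ alone is not sufficient'' only makes the intermediate erosion $\varepsilon_b(f)$ negative. It says nothing about $\gamma(f)$, and in fact no iterate can become negative. For any real constant $c$ (with $\|b\|_\infty<\infty$) one has $\varepsilon_b(c)=c-\|b\|_\infty$ and $\delta_{b^*}(c-\|b\|_\infty)=c-\|b\|_\infty+\sup b^*=c$. So $\gamma(0)=0$, and since $\gamma$ is increasing, $f\ge 0$ implies $\gamma(f)\ge\gamma(0)=0$. For instance, $W=\{0,1\}$, $b(0)=0$, $b(1)=1$, $f\equiv 0$ gives $\varepsilon_b(f)\equiv -1$ but $\gamma(f)\equiv 0$. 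Hence the last sentence of (iii)(b), read as a claim about the $f^{(n)}$, is false, and no counterexample can support it. It is true only of $\varepsilon_b(f)$, and the hypothesis $f\ge\|b\|_\infty$ is superfluous for $f^{(n)}\ge 0$.

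This also affects your (iv)--(v). You obtain $\gamma(f)\ge 0$ ``from (iii), under its hypotheses'', which for non-flat $b$ means under the extra assumption $f\ge\|b\|_\infty$. As written, your induction therefore proves (iv)--(v) only for $b\equiv 0$ or for $f\ge\|b\|_\infty$, not under the standing hypothesis $f\ge 0$. The missing idea is that $\gamma$ fixes constants. Replacing that step by $\gamma(f)\ge\gamma(0)=0$ closes the gap and gives (iii), (iv) and (v) for every $f\ge 0$. The paper's proof has the same hole: it asserts $\gamma(f^{(n)})\ge 0$ in (iv) without justification and checks only the flat case in (iii). You were right that this is the delicate point; the fix is to apply monotonicity to the zero function instead of using the pointwise bound $f\ge\|b\|_\infty$.
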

	
	\begin{proof}
		(i)(ii) Anti-extensivity of $\gamma$: $f^{(1)} = \gamma(f)
		\leq f = f^{(0)}$.
		Idempotency: $f^{(2)} = \gamma(f^{(1)}) = \gamma(\gamma(f))
		= \gamma(f) = f^{(1)}$ (Theorem~\ref{thm:two_openings}).
		By induction, $f^{(n)} = \gamma(f)$ for all $n \geq 1$.
		(iii) With $b \geq 0$, $b(0) = 0$: $(\varepsilon_b f)(x)
		= \inf_{y \in W}\{f(x+y) - b(y)\} \geq \inf_{y}f(x+y) -
		\sup_y b(y)$.
		For $f \geq 0$ and compact $W$: $\varepsilon_b(f) \geq 0$
		iff $b \leq f$ on $W$, which holds in the limit $f \to +\infty$.
		More precisely, $b(0)=0$ gives $\varepsilon_b(f)(x) \leq
		f(x+0)-b(0) = f(x)$, and $\varepsilon_b(f) \geq 0$ when
		$f(x+y) \geq b(y)$ for all $y \in W$, $x \in E$, a condition
		satisfied e.g.\ for flat $b \equiv 0$ (standard morphology),
		giving $\varepsilon_b = \min$-erosion, which preserves
		non-negativity.
		(iv) $f^{(n+1)} = \gamma(f^{(n)}) + f^{(n)} \geq f^{(n)}$
		since $\gamma(f^{(n)}) \geq 0$.
		(v) Lower bound: $f^{(1)} = f + \gamma(f) \geq f$ since
		$\gamma(f) \geq 0$.
		Since $\gamma$ is monotone and $f^{(1)} \geq f$:
		$\gamma(f^{(1)}) \geq \gamma(f)$.
		By induction $f^{(n+1)} = f^{(n)} + \gamma(f^{(n)}) \geq
		f^{(n-1)} + \gamma(f^{(n-1)}) + \gamma(f) = f^{(n)} +
		\gamma(f)$, giving $f^{(n)} \geq f + n\,\gamma(f) \to +\infty$
		when $\gamma(f) \not\equiv 0$.
		(vi) The closing $\varphi = \varepsilon^*_b \circ \delta_b$
		satisfies $\varphi \circ \varphi = \varphi$ and
		$\varphi(f) \geq f$ (extensive); it is the fixed point of
		$f \mapsto \varphi(f)$ but not of
		$f \mapsto \gamma(f) + f$, whose iterates grow without bound.
	\end{proof}
	
	\begin{figure}[H]
		\centering
		\resizebox{\linewidth}{!}{%
			\begin{tikzpicture}[		every node/.style={font=\small},
				Fbox/.style={draw=blue!55!black, fill=blue!8, rounded corners=4pt,
					inner sep=5pt, line width=0.8pt},
				Pbox/.style={draw=orange!70!black, fill=orange!8, rounded corners=4pt,
					inner sep=5pt, line width=0.8pt},
				Mbox/.style={draw=teal!65!black, fill=teal!8, rounded corners=4pt,
					inner sep=5pt, line width=0.8pt},
				Gbox/.style={draw=gray!60, fill=gray!6, rounded corners=4pt,
					inner sep=5pt, line width=0.7pt},
				arr/.style={->, >=stealth, semithick},
				xarr/.style={->, >=stealth, thick, red!70!black},
				sadj/.style={->, >=stealth, semithick, green!50!black},
				skip/.style={->, >=stealth, semithick, dashed, gray!65},
				rskip/.style={->, >=stealth, semithick, dashed, red!60!black},
				op/.style={font=\small\bfseries, above, inner sep=3pt},
				lat/.style={font=\scriptsize\itshape, below, inner sep=3pt,
					text=gray!60!black},
				title/.style={font=\small\bfseries},
				note/.style={font=\scriptsize, text=gray!55!black}]
				\node[title] at (5.0, 1.0){(a) Iterated Type-I opening: one-step convergence};
				\node[Pbox] (f0) at (0,0)    {$f^{(0)}\!=f$};
				\node[Pbox] (f1) at (4.5,0)  {$f^{(1)}\!=\gamma(f)$};
				\node[Pbox] (f2) at (9.0,0)  {$f^{(2)}\!=\gamma(f)$};
				\node[note] (fd) at (11.2,0) {$\cdots$};
				
				\draw[arr] (f0)--(f1)
				node[op,midway]{$\gamma^{\mathrm{M}}_b$}
				node[lat,midway]{anti-extensive: $f^{(1)}\!\leq\!f^{(0)}$};
				\draw[sadj] (f1)--(f2)
				node[op,midway]{$\gamma^{\mathrm{M}}_b$}
				node[lat,midway]{\textcolor{green!50!black}{idempotent}: $f^{(2)}\!=\!f^{(1)}$};
				\draw[arr] (f2)--(fd);
				
				\node[title] at (2.75,-1.6){(b) Morphological ResNet: $\mathcal{F}\approx\gamma-\mathrm{id}$};
				\node[Pbox] (g0) at (0,-2.9)   {$f$};
				\node[Pbox] (g1) at (6.0,-2.9) {$\gamma^{\mathrm{M}}_b(f)$};
				
				\draw[arr, bend left=20] (g0) to
				node[op,midway]{$\gamma^{\mathrm{M}}_b-\mathrm{id}$}
				node[lat,midway]{learned: $\mathcal{F}\approx\gamma-\mathrm{id}$}
				(g1);
				\draw[skip, bend right=20] (g0) to
				node[lat,midway]{skip: $+f$}
				(g1);
				\draw[sadj, bend left=52] (g1.north east)
				to node[above,note,text=green!50!black]
				{$\gamma^n=\gamma$: idempotent for all $n\geq 1$}
				(g1.north west);
				
				\node[title, red!65!black] at (4.3,-4.6)
				{(c) Naive $\Phi^{\mathrm{res}}(f)=\gamma(f)+f$: \textbf{diverges}};
				\node[Pbox] (h0) at (0,-5.8)   {$f^{(0)}$};
				\node[Pbox] (h1) at (3.6,-5.8) {$f^{(1)}$};
				\node[Pbox] (h2) at (7.2,-5.8) {$f^{(2)}$};
				\node[Gbox, draw=red!55, fill=red!6] (hd) at (10.2,-5.8) {$\to+\infty$};
				
				\draw[arr] (h0)--(h1)
				node[op,midway,font=\scriptsize]{$\gamma(f^{(0)})+f^{(0)}$}
				node[lat,midway]{$f^{(1)}\geq f^{(0)}$};
				\draw[arr] (h1)--(h2)
				node[op,midway,font=\scriptsize]{$\gamma(f^{(1)})+f^{(1)}$}
				node[lat,midway]{$f^{(2)}\geq f^{(1)}+\gamma(f)$};
				\draw[xarr] (h2)--(hd)
				node[lat,midway]{grows $\geq f+n\gamma(f)$};
				
				\node[Pbox,font=\scriptsize] at (2.0,-7.3){Pointwise $(\mathscr{L},\!\leq)$};
				\draw[sadj] (4.5,-7.3)--(5.4,-7.3) node[right,note]{idempotent step};
				\draw[xarr] (7.4,-7.3)--(8.3,-7.3) node[right,note]{divergence};
			\end{tikzpicture}
		}
		\caption{Convergence behaviour of Type-I layer iterations
			(Theorem~\ref{thm:convergence}).
			\textcolor{orange!70!black}{Orange} nodes live in the pointwise lattice.
			\emph{(a)} Iterating $\gamma^{\mathrm{M}}_b$: the sequence is
			non-increasing and stabilises after one step (green arrow = idempotent).
			\emph{(b)} Correct ResNet: when $\mathcal{F}\approx\gamma-\mathrm{id}$,
			the block computes $\gamma^{\mathrm{M}}_b(f)$; stacking $n$ blocks still
			converges in one step.
			\emph{(c)} Naive residual $\Phi^{\mathrm{res}}(f)=\gamma(f)+f$: sequence
			grows at least linearly in $n$ and diverges (red arrow).}
		\label{fig:convergence_diagram}
	\end{figure}
	
	\begin{remark}[Correct interpretation of skip connections]
		\label{rem:resnet_skip}
		Theorem~\ref{thm:convergence}(v) shows that a ResNet block
		of the form $\Phi^{\mathrm{res}}(f) = \gamma^{\mathrm{M}}_b(f) + f$
		does \emph{not} iterate to the morphological idempotent.
		The correct morphological reading of the standard ResNet
		formulation $H(f) = F(f) + f$, where $F$ represents the
		learned residual layers, is that $F$ approximates
		$\gamma^{\mathrm{M}}_b - \mathrm{Id}$, so the block computes
		$(\gamma^{\mathrm{M}}_b - \mathrm{Id})(f) + f =
		\gamma^{\mathrm{M}}_b(f)$, i.e., the morphological opening
		itself.
		Under this reading, stacking $n$ identical such blocks gives
		$(\gamma^{\mathrm{M}}_b)^n(f) = \gamma^{\mathrm{M}}_b(f)$
		(one-step convergence by idempotency of Case~I), which is
		the correct convergence statement.
		This interpretation applies to the case (I) of
		Corollary~\ref{cor:cnn_idempotent}; for a standard CNN
		(Theorem~\ref{thm:cnn_not_opening}), neither the opening
		interpretation nor the one-step convergence holds.
	\end{remark}
	
	\begin{theorem}[Idempotency of morphological UNet encoder--decoder]
		\label{thm:unet_idempotent}
		Consider the morphological UNet encoder--decoder
		\emph{without skip connections} and without intermediate
		convolutions $\Sigma^{\mathrm{Spec}}_n$: the encoder performs
		$n$ levels of Goutsias--Heijmans erosion-downsampling
		$\varepsilon^{\downarrow R,n}_{b}$ and the decoder performs
		the adjoint synthesis $\delta^{*\uparrow R,n}_{b}$.
		Then the reconstruction operator $\mathrm{ED}_n =
		\delta^{*\uparrow R,n}_{b} \circ \varepsilon^{\downarrow R,n}_{b}$
		is a morphological opening, hence idempotent:
		$\mathrm{ED}_n \circ \mathrm{ED}_n = \mathrm{ED}_n$.
	\end{theorem}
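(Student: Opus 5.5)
The plan is to reduce the statement to Proposition~\ref{prop:adjunction_props}(v) by showing that the $n$-level encoder and decoder form a single adjunction. I read $\varepsilon^{\downarrow R,n}_{b}$ as the $n$-fold composite
\[
\varepsilon^{\downarrow R,n}_{b} = \decEros{R}{b}\circ\cdots\circ\decEros{R}{b},
\]
and $\delta^{*\uparrow R,n}_{b}$ as the decoder composite $\intDil{R}{b}\circ\cdots\circ\intDil{R}{b}$. The decoder is applied in reverse order, so the coarsest-level synthesis acts first. If the structuring functions differ per level ($b_\ell$), the same argument applies with the levels paired symmetrically.

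The first step is the single-level adjunction. By Proposition~\ref{prop:adjoint_pyramid}, since $(\varepsilon_b,\delta^*_b)$ is an adjunction, $(\decEros{R}{b},\intDil{R}{b})$ is one too. I would verify the decimation/interpolation part explicitly, namely that $(\sigma^\downarrow_R,\sigma^{\uparrow,-\infty}_R)$ is an adjunction in the required direction. Both maps are order-preserving, and in the adjunction inequality the positions off the coarse grid carry the value $-\infty$, which imposes no constraint. One must check which of the two maps is the upper adjoint under the convention of Definition~\ref{def:adjunction}. I will fix the convention so that the composite $\intDil{R}{b}\circ\decEros{R}{b}$ is anti-extensive, which is what Proposition~\ref{prop:gh_opening} asserts.

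The second step is composition. I prove by induction on $n$ that adjunctions compose: if $(\varepsilon_1,\delta_1)$ and $(\varepsilon_2,\delta_2)$ are adjunctions, then so is $(\varepsilon_2\varepsilon_1,\delta_1\delta_2)$. This follows by chaining the equivalences
\[
\varepsilon_2\varepsilon_1 f\le g \iff \varepsilon_1 f\le \delta_2 g \iff f\le \delta_1\delta_2 g.
\]
With the orders reversed appropriately, this gives that $(\varepsilon^{\downarrow R,n}_{b},\delta^{*\uparrow R,n}_{b})$ is an adjunction between the finest lattice and the level-$n$ lattice.

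The final step applies Proposition~\ref{prop:adjunction_props}(v). The composite $\mathrm{ED}_n$ is then increasing, anti-extensive and idempotent. For concreteness I would also record the direct argument: from the adjunction one gets $\varepsilon\delta\varepsilon=\varepsilon$, hence $\mathrm{ED}_n\circ\mathrm{ED}_n=\delta\varepsilon\delta\varepsilon=\delta\varepsilon=\mathrm{ED}_n$. I expect the main obstacle to be bookkeeping rather than depth. First, the paper's naming ("erosion" on the left of $\le$) must be reconciled with anti-extensivity of $\delta\circ\varepsilon$. Second, the fill value $-\infty$ must be shown to make $\sigma^{\uparrow,-\infty}_R$ the correct adjoint of $\sigma^\downarrow_R$, and not $\sigma^{\uparrow,+\infty}_R$. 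Both are settled by the explicit one-level check in the first step.
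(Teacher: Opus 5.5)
Your proposal is correct and follows the same route as the paper. The paper's proof invokes Proposition~\ref{prop:adjoint_pyramid} at each level, uses the fact that adjunctions compose, and then applies Proposition~\ref{prop:adjunction_props}(v).

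You also make explicit what the paper leaves implicit: the composition lemma, the check that $\sigma^{\uparrow,-\infty}_R$ is the lower adjoint of $\sigma^\downarrow_R$, and the triangle identity $\varepsilon\delta\varepsilon=\varepsilon$. Your concern about conventions is justified, because Definition~\ref{def:adjunction} read literally makes $\delta\circ\varepsilon$ extensive. Anti-extensivity therefore genuinely rests on your one-level check identifying $\intDil{R}{b}$ as the lower adjoint of $\decEros{R}{b}$.
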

	
	\begin{proof}
		By Proposition~\ref{prop:adjoint_pyramid}, the pair
		$(\varepsilon^{\downarrow R,n}_{b}, \delta^{*\uparrow R,n}_{b})$
		is an adjunction (iterated Goutsias--Heijmans pairs compose
		as adjunctions).
		The composition $\delta^{*\uparrow R,n}_{b} \circ
		\varepsilon^{\downarrow R,n}_{b}$ is therefore a morphological
		opening by Proposition~\ref{prop:adjunction_props}(v), and
		openings are idempotent.
	\end{proof}
	
	\begin{remark}[Scope and the role of convolutions]
		\label{rem:unet_scope}
		Theorem~\ref{thm:unet_idempotent} applies to the
		\emph{pure pyramid encoder--decoder} (the downsampling--upsampling
		skeleton of the UNet) and not to the full UNet, which
		interleaves convolutions $\Sigma^{\mathrm{Spec}}_n$ at each
		level.
		When these convolutions are present, the full operator is
		$\mathrm{UNet}_n = \delta^{*\uparrow}\!\circ\Sigma\circ
		\varepsilon^{\downarrow}$, and composing it with itself gives
		$\delta^{*\uparrow}\!\circ\Sigma\circ
		\varepsilon^{\downarrow}\!\circ\delta^{*\uparrow}\!\circ
		\Sigma\circ\varepsilon^{\downarrow}$, which involves
		$\varepsilon^{\downarrow}\!\circ\delta^{*\uparrow}$ (a
		\emph{closing}, not identity) and an additional $\Sigma$.
		This is not equal to $\mathrm{UNet}_n$ in general.
		The correct statement is: the \emph{sampling skeleton} of
		the UNet is idempotent; the full network with learned
		convolutions is not.
		Skip connections additionally break the idempotency of
		the skeleton, by injecting encoder feature maps that enrich
		the decoder output beyond what the adjoint upsampling alone
		would produce.
		The bound $\gamma(f) \leq \mathrm{UNet}^{\mathrm{skip}}_n(f)
		\leq f$ (anti-extensivity and enrichment by skip connections)
		holds when the skip weights are bounded by 1 and the
		skeleton is anti-extensive.
	\end{remark}


	\section{The Median Inf-Semilattice, Self-Dual Operators,
		and Their Role in Deep Network Models}
	\label{sec:self_dual}
	
	Standard max-pooling and ReLU treat positive and negative
	activations asymmetrically: ReLU kills all negative values,
	and max-pooling enlarges only positive structures.
	This breaks the lattice-theoretic duality between erosion and
	dilation, and has concrete architectural consequences: the
	fixed-point analysis of \Cref{sec:fixed_points} applies
	cleanly only to the positive part of the feature map.
	This section provides an algebraic remedy via the
	\emph{median complete inf-semilattice} and associated
	\emph{self-dual morphological operators}, and reconnects
	these structures to the CNN, ResNet, and UNet models
	of \Cref{sec:cnn_models}.

	\begin{table}[ht]
		\centering
		\caption{Principal results of \S\ref{sec:self_dual} (Median Lattice and Type-III Layer). Results marked $(\star)$ are the paper's principal findings.}
		\label{tab:summary_self_dual}
		\renewcommand{\arraystretch}{1.38}
		\footnotesize
		\begin{tabular}{p{0.82cm}p{2.9cm}p{8.4cm}}
			\toprule
			\textbf{Ref.} & \textbf{Name} & \textbf{Statement and role} \\
			\midrule
			Lem~\ref{lem:selfopen_welldefined} & $\:$ Well-definedness of $\OpenMed{W}$ & $\OpenMed{W}=\DilMed{W}\circ\ErosMed{W}$ is well-defined on all of $\Fun(E,\R)$: $\ErosMed{W}$ preserves sign (or returns $0$), ensuring $\DilMed{W}$ is always defined on its output. \\
			Prop~\ref{prop:self_dual_opening_props} & $\:$ Self-dual opening & $\OpenMed{W}$ is increasing, anti-extensive in $(\mathscr{L},\medOrd)$, exactly idempotent, and self-dual; fixed-point set is closed under negation. \\
			Prop~\ref{prop:relu_types} $(\star)$ & $\:$ ReLU variants in $\medOrd$ & Leaky/PReLU ($0<\beta^-\leq 1$) is a dilation in $(\mathscr{L},\medOrd)$; standard ReLU ($\beta^-=0$) is not. The absolute value $|h|$ ($\beta^+=\beta^-=1$) is a self-dual dilation. \\
			Thm~\ref{thm:self_dual_cnn_opening} $(\star)$ &  $\:$ Type-III layer & $\Phi^{sd}=\OpenMed{W}$ is idempotent, self-dual, and anti-extensive in $\medOrd$; fixed-point set symmetric under negation; one-step convergence. Appropriate for signed feature maps and ResNet residuals. \\
			Prop~\ref{prop:selfdual_fix_mmbb} & $\:$ $\:$ Self-dual fixed-point set & $\mathrm{Fix}(\OpenMed{W})$ equals signals representable by the median-MMBB basis of $W$, negation-closed and structurally distinct from the Type-I fixed-point set $\mathrm{Im}(\gamma^{\mathrm{M}}_b)$. \\
			\bottomrule
		\end{tabular}
	\end{table}
	
	\subsection{Median partial ordering and its inf-semilattice}
	\label{subsec:median_order}
	
	\begin{definition}[Median partial ordering on $\R$]
		\label{def:median_order}
		Define the partial ordering $\preceq$ on $\R$ by:
		\begin{equation}
			s \preceq t
			\iff
			(0 \leq s \leq t) \;\text{ or }\; (t \leq s \leq 0).
			\label{eq:median_order}
		\end{equation}
		Thus $(\R, \preceq)$ is the concatenation of two chains
		$(\R^-, \geq)$ and $(\R^+, \leq)$ sharing the common
		least element $\bot = 0$.
		Note that $\preceq$ is a partial (not total) order: $s$
		and $t$ are incomparable when they have opposite signs.
	\end{definition}
	
	The ordering $\preceq$ measures \emph{amplitude} away from
	zero: $s \preceq t$ means $t$ is at least as far from zero
	as $s$, in the same direction.
	
	\begin{definition}[Median inf-semilattice on functions]
		\label{def:median_semilattice}
		The pointwise extension of~\eqref{eq:median_order} to
		$\Fun(E, \R)$ gives the \emph{median partial ordering}:
		$f \medOrd g \iff f(x) \preceq g(x)$ for all $x \in E$.
		The poset $(\Fun(E,\R), \medOrd)$ is a complete
		\emph{inf-semilattice} with least element $\bot \equiv 0$
		and binary infimum (greatest lower bound in $\medOrd$):
		\begin{equation}
			(f \medInf g)(x)
			= \median(f(x),\, g(x),\, 0)
			=
			\begin{cases}
				\min(f(x),g(x)) & \text{if } f(x),g(x) \geq 0,\\
				\max(f(x),g(x)) & \text{if } f(x),g(x) \leq 0,\\
				0                & \text{if } f(x),g(x) \text{ have opposite signs.}
			\end{cases}
			\label{eq:median_inf}
		\end{equation}
		The structure is an inf-semilattice (every pair has a greatest
		lower bound) but not a lattice: elements with opposite signs
		have no least upper bound.
	\end{definition}
	
	\begin{proposition}[Self-duality of the median ordering]
		\label{prop:median_selfdual}
		The median partial ordering~\eqref{eq:median_order} satisfies:
		\begin{equation}
			s \preceq t \iff (-t) \preceq (-s).
			\label{eq:median_duality}
		\end{equation}
		Consequently, the negation map $f \mapsto -f$ is an
		order-isomorphism from $(\Fun(E,\R), \medOrd)$ to itself
		(reversing the order), making the inf-semilattice
		\emph{self-dual}.
	\end{proposition}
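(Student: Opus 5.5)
The plan is to unfold Definition~\ref{def:median_order} on both sides of \eqref{eq:median_duality} and compare them by a case split on the sign pattern of $(s,t)$. There are two disjuncts to handle: $0\le s\le t$ (the positive chain) and $t\le s\le 0$ (the negative chain). The claim then lifts to functions by applying it pointwise, exactly as $\medOrd$ is defined in Definition~\ref{def:median_semilattice}.

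\emph{Step 1 (unfolding).} By \eqref{eq:median_order} with $a=-t$ and $b=-s$, the right-hand side $(-t)\preceq(-s)$ reads
\[
(0\le -t\le -s)\ \text{ or }\ (-s\le -t\le 0),
\]
that is,
\[
(s\le t\le 0)\ \text{ or }\ (0\le t\le s).
\]

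\emph{Step 2 (matching the cases).}
\begin{itemize}
\item \emph{Case $0\le s\le t$.} We would need $s\le t\le 0$ or $0\le t\le s$. The first forces $s=t=0$; the second forces $t=s$.
\item \emph{Case $t\le s\le 0$.} Symmetrically, the required conditions again force $s=t$.
\item \emph{Converse.} The same computation read backwards shows that $(-t)\preceq(-s)$ and $s\preceq t$ together force $s=t$.
\end{itemize}

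\emph{Step 3 (the main obstacle).} Step~2 shows that the literal equivalence $s\preceq t\iff(-t)\preceq(-s)$ cannot be proved for all $s,t$. Whenever $s\ne t$ it fails: for $s=1$, $t=2$ we have $1\preceq 2$, but $-2\preceq -1$ would require $-1\le -2\le 0$, which is false. So the order-reversing form holds only on the diagonal $s=t$.

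What the case analysis does establish is the following. Negation maps the positive chain $(\R^+,\le)$ onto the negative chain $(\R^-,\ge)$ and fixes $\bot=0$. Consequently, for each $f$ the inequalities $f\medOrd g$ and $-f\medOrd -g$ are equivalent. This is the sense in which the inf-semilattice is self-dual: negation is an involutive automorphism of $(\Fun(E,\R),\medOrd)$ that commutes with $\medInf$, because
\[
\median(-a,-b,0)=-\median(a,b,0).
\]

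I would therefore present the proof by carrying out Steps~1--2, recording that \eqref{eq:median_duality} as displayed holds only when $s=t$. For $s\ne t$ it is false, with $s=1$, $t=2$ as the counterexample. I would then state the corrected equivalence $s\preceq t\iff(-s)\preceq(-t)$, which follows directly from Step~1, as the property the later self-duality arguments actually use.
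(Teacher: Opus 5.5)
Your diagnosis is correct. The displayed equivalence \eqref{eq:median_duality} is false: your pair $s=1$, $t=2$ is a valid counterexample, since both disjuncts $0\le -2\le -1$ and $-1\le -2\le 0$ fail. The true statement is $s\preceq t\iff(-s)\preceq(-t)$, so negation is an order-\emph{preserving} involutive automorphism of $(\Fun(E,\R),\medOrd)$ that commutes with $\medInf$.

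The paper's proof contains exactly the slip your Step~1 avoids. From $0\le s\le t$ it obtains $-t\le -s\le 0$ and reads this as $(-t)\preceq(-s)$. But the second disjunct of \eqref{eq:median_order} says $a\preceq b$ when $b\le a\le 0$, so $-t\le -s\le 0$ yields $(-s)\preceq(-t)$. Read correctly, the paper's own case analysis proves your corrected statement, not the displayed one.

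Two sharpenings are worth making.
\begin{itemize}
\item Your Step~1 actually shows $(-t)\preceq(-s)\iff t\preceq s$, so the displayed claim is equivalent to $s\preceq t\iff t\preceq s$. By antisymmetry this holds exactly when $s=t$ \emph{or} $s,t$ are incomparable (nonzero with opposite signs, where both sides are false). So ``for $s\neq t$ it is false'' should read ``for distinct comparable $s,t$''.
\item The ``reversing the order'' clause fails for a structural reason you could cite instead of computing. An order-reversing bijection sends the least element to a greatest one. But $\bot\equiv 0$ is fixed by negation, and $(\Fun(E,\R),\medOrd)$ has no top: the constants $1$ and $-1$ have no common upper bound. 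In fact no order anti-automorphism exists at all, so ``self-dual'' can only mean negation-equivariance, as you conclude.
\end{itemize}

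Finally, qualify your closing remark that the corrected equivalence is what the later arguments ``actually use''. The paper immediately invokes the false form to write $f\medSup g=-((-f)\medInf(-g))$, and later $\DilMed{W}(g)=-\ErosMed{W}(-g)$ in \eqref{eq:self_dual_dil}. Under the correct automorphism these right-hand sides equal $f\medInf g$ and $\ErosMed{W}(g)$. For example, for values $1,2$ one gets $-\median(-1,-2,0)=1$, not $2$.

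So those identities must be replaced, not merely reinterpreted. The dilation has to be defined directly as the $\preceq$-supremum where it exists, with its self-duality read as $\DilMed{W}(-g)=-\DilMed{W}(g)$. That corrected identity is what still makes $\OpenMed{W}(-f)=-\OpenMed{W}(f)$ go through.
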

	
	\begin{proof}
		If $0 \leq s \leq t$, then $-t \leq -s \leq 0$, which
		by~\eqref{eq:median_order} gives $(-t) \preceq (-s)$.
		The case $t \leq s \leq 0$ is symmetric.
		When $s$ and $t$ are incomparable ($s \geq 0, t \leq 0$ or
		vice versa), so are $-s$ and $-t$.
	\end{proof}
	
	By~\eqref{eq:median_duality}, the infimum $\medInf$ and the
	supremum $\medSup$ (where defined) are related by
	$(f \medSup g)(x) = -\bigl((-f) \medInf (-g)\bigr)(x)$.
	Explicitly:
	\begin{equation}
		(f \medSup g)(x)
		=
		\begin{cases}
			\max(f(x),g(x)) & \text{if } f(x),g(x) \geq 0,\\
			\min(f(x),g(x)) & \text{if } f(x),g(x) \leq 0,\\
			\text{undefined} & \text{if } f(x),g(x) \text{ have opposite signs.}
		\end{cases}
		\label{eq:median_sup}
	\end{equation}
	This confirms that $\medSup$ only exists when $f$ and $g$
	have the same sign pointwise.
	
	\subsection{Self-dual erosion and opening}
	\label{subsec:self_dual_ops}
	
	\begin{definition}[Self-dual erosion]
		\label{def:self_dual_eros}
		Let $W \subseteq E$ be a compact window.
		The \emph{self-dual erosion} of $f \in \Fun(E,\R)$ by $W$
		in $(\Fun(E,\R), \medOrd)$ is the $\preceq$-infimum over
		the window:
		\begin{equation}
			\ErosMed{W}(f)(x)
			= \bigwedge_{\medInf,\; y \in W(x)} f(y),
			\label{eq:self_dual_eros}
		\end{equation}
		where $\bigwedge_{\medInf}$ denotes iterated application of
		the binary $\medInf$.
		Explicitly, iterating~\eqref{eq:median_inf} over the window
		gives: $\ErosMed{W}(f)(x)$ equals the value of $f$ in
		$W(x)$ with smallest absolute value if all values share the
		same sign; it equals $0$ if $W(x)$ contains both positive
		and negative values.
	\end{definition}
	
	\begin{proposition}[Properties of the self-dual erosion]
		\label{prop:self_dual_eros_props}
		The self-dual erosion $\ErosMed{W}$ satisfies:
		\begin{enumerate}[label=(\roman*)]
			\item \emph{Increasing} in $(\Fun(E,\R),\medOrd)$:
			$f \medOrd g \Rightarrow \ErosMed{W}(f) \medOrd
			\ErosMed{W}(g)$.
			\item \emph{Anti-extensive}: $\ErosMed{W}(f) \medOrd f$,
			i.e., $|\ErosMed{W}(f)(x)| \leq |f(x)|$ for all $x$
			and $\ErosMed{W}(f)$ has the same sign as $f$ or is zero.
			\item \emph{Self-dual}: $\ErosMed{W}(-f) = -\ErosMed{W}(f)$.
			\item \emph{Positive--negative separation}:
			For $f \geq 0$ pointwise,
			$\ErosMed{W}(f) = \min_{y \in W(x)} f(y) = \varepsilon_W(f)$
			(flat max-plus erosion).
			For $f \leq 0$ pointwise,
			$\ErosMed{W}(f) = \max_{y \in W(x)} f(y) = -\varepsilon_W(-f)$
			(negated flat erosion of $-f$).
		\end{enumerate}
	\end{proposition}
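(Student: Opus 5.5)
The plan is to reduce everything to pointwise statements about the binary median infimum $\medInf$ of \eqref{eq:median_inf}, and then pass to the window. First we record the explicit form already given after Definition~\ref{def:self_dual_eros}. For fixed $x$, let $V=\{f(y):y\in W(x)\}$. If all elements of $V$ are $\geq 0$, iterating $\medInf$ gives $\min V$. If all are $\leq 0$, it gives $\max V$. Otherwise it gives $0$: once a $0$ is produced it persists, since $\median(0,t,0)=0$, and whenever two values of opposite sign meet the output is $0$. The only non-trivial point here is that the iterated binary operation does not depend on bracketing or order. This holds because $\medInf$ is the $\medOrd$-greatest lower bound, so the iterate is the $\preceq$-infimum of the finite set $V$, which exists because $\bot=0$ is below everything. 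This gives (iv) at once: for $f\geq 0$ we get $\min_{W(x)}f=\varepsilon_W(f)$, and for $f\leq0$ we get $\max_{W(x)}f=-\min_{W(x)}(-f)=-\varepsilon_W(-f)$.

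For (ii), $\ErosMed{W}(f)(x)$ is the $\preceq$-infimum of $V$, so it is $\preceq$ every element of $V$. Assuming $W$ contains the origin, so that $x\in W(x)$, this includes $f(x)$, and therefore $\ErosMed{W}(f)(x)\preceq f(x)$. Unpacking \eqref{eq:median_order}, this says exactly that the value has the same sign as $f(x)$, or is $0$, and has absolute value at most $|f(x)|$. For (iii) we use Proposition~\ref{prop:median_selfdual}. Negation reverses $\preceq$ on $\R$, but on the side of infima it acts as the identity $(-s)\medInf(-t)=-(s\medInf t)$. This can be checked directly from the three cases of \eqref{eq:median_inf}, since $\median(-s,-t,0)=-\median(s,t,0)$ because the median commutes with negation. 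Iterating over $W(x)$ gives $\ErosMed{W}(-f)=-\ErosMed{W}(f)$.

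For (i), the key step is monotonicity of the binary operation: if $s\preceq s'$ and $t\preceq t'$, then $s\medInf t\preceq s'\medInf t'$. This is immediate abstractly, because $s\medInf t$ is a lower bound of $\{s',t'\}$ and hence lies below their greatest lower bound. Concretely, $\preceq$-infimum is monotone in each argument. Applying this pointwise over the finite family $\{f(y)\}\preceq\{g(y)\}$ for $y\in W(x)$ yields $\ErosMed{W}(f)(x)\preceq\ErosMed{W}(g)(x)$.

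The main obstacle I expect is making rigorous the claim that $\medInf$ really is the greatest lower bound in $(\R,\preceq)$, and hence associative and commutative. Everything else rests on that. I would verify it by the case split on the signs of $s,t$: in the same-sign case the order is a chain; in the opposite-sign case the only common lower bound is $0$. A secondary point is non-compact or infinite $W$. There I would replace the iterated binary operation by the $\preceq$-infimum of $V$, which still exists: it is $\inf V$ or $\sup V$ on a single-signed set, and $0$ otherwise.
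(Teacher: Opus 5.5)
Your proof is correct. For (i), (ii) and (iv) it follows the paper's proof: you reduce pointwise to the $\preceq$-infimum of $\{f(y):y\in W(x)\}$ and use $x\in W(x)$. You also supply details the paper takes for granted:
\begin{itemize}
\item you verify that $\median(s,t,0)$ really is the greatest lower bound in $(\R,\preceq)$, so the iterated operation does not depend on bracketing;
\item you derive monotonicity from the glb property, where the paper only says informally that ``larger amplitudes can only increase the infimum'';
\item you handle infinite windows (compact $W\subset\R^n$), where ``iterated application'' of a binary operation is not literally defined.
\end{itemize}

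For (iii) your route is genuinely different, and it is the sound one. You check $\median(-s,-t,0)=-\median(s,t,0)$ directly and iterate. The paper instead invokes the claimed order-reversal of Proposition~\ref{prop:median_selfdual} to write $\bigwedge_{\medInf}(-f(y))=-\bigvee f(y)$. That step fails. Negation actually \emph{preserves} $\preceq$: we have $1\preceq 2$ and $-1\preceq -2$, but $-2\not\preceq -1$. This is precisely why negation commutes with infima. The paper's chain would also identify $\ErosMed{W}(-f)(x)$ with minus the \emph{largest}-amplitude value rather than $-\ErosMed{W}(f)(x)$, and it does not cover mixed-sign windows, where the supremum is undefined.

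Delete your sentence ``Negation reverses $\preceq$ on $\R$'' and the appeal to Proposition~\ref{prop:median_selfdual}. The claim is false, and if it were true, negation would turn infima into suprema, contradicting the identity $(-s)\medInf(-t)=-(s\medInf t)$ you then correctly prove.
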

	
	\begin{proof}
		(i) If $f(y) \preceq g(y)$ for all $y \in W(x)$ (same sign,
		$|f| \leq |g|$), then the $\preceq$-infimum over $W$ of $g$
		is at least as large as that of $f$ in the $\preceq$ sense:
		larger amplitude values can only increase the infimum.
		(ii) $f(x) \in \{f(y) : y \in W(x)\}$ (assuming $0 \in W$
		via $x \in W(x)$), so $\ErosMed{W}(f)(x) \preceq f(x)$.
		(iii) $\ErosMed{W}(-f)(x) = \bigwedge_{\medInf,y} (-f(y))$.
		By~\eqref{eq:median_duality}, $\bigwedge_{\medInf}(-f(y)) =
		-\bigvee_{\medSup} f(y)$; when all $f(y)$ share the same sign,
		$\bigvee_{\medSup} f(y) = $ the value with largest $|f(y)|$,
		so its negation is $-\ErosMed{W}(f)(x)$.
		(iv) For $f \geq 0$: all $f(y) \geq 0$, so $f(y_1) \medInf
		f(y_2) = \min(f(y_1),f(y_2))$; iterated over $W$ gives $\min_{y \in W}f(y)$.
		For $f \leq 0$: similarly $f(y_1) \medInf f(y_2) = \max(f(y_1),f(y_2))$
		(the value closest to zero), iterated gives $\max_{y \in W}f(y)
		= -\min_{y \in W}(-f(y))$.
	\end{proof}
	
	\begin{definition}[Self-dual dilation and opening]
		\label{def:self_dual_open}
		The \emph{self-dual dilation} $\DilMed{W}$ is defined as
		the adjoint of $\ErosMed{W}$ in the median inf-semilattice,
		or equivalently by self-duality:
		\begin{equation}
			\DilMed{W}(g)(x) = -\ErosMed{W}(-g)(x)
			= \bigvee_{\medSup,\; y \in W(x)} g(y),
			\label{eq:self_dual_dil}
		\end{equation}
		i.e., the $\preceq$-supremum over the window, the value with
		largest absolute value (same sign), or undefined when signs
		are mixed.
		The \emph{self-dual opening} is:
		\begin{equation}
			\OpenMed{W}(f)
			= \DilMed{W}(\ErosMed{W}(f)),
			\label{eq:self_dual_open}
		\end{equation}
		the composition of the self-dual erosion and its adjoint
		dilation.
	\end{definition}
	
	\begin{lemma}[Well-definedness of $\OpenMed{W}$]
		\label{lem:selfopen_welldefined}
		The self-dual opening $\OpenMed{W}$ is well-defined on all of
		$\Fun(E,\R)$.
	\end{lemma}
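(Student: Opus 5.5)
The claim is that the composite $\OpenMed{W}=\DilMed{W}\circ\ErosMed{W}$ never meets the undefined case of the self-dual dilation. The plan is to verify this pointwise. The only obstruction to well-definedness is in \eqref{eq:self_dual_dil}: $\DilMed{W}(g)(x)$ is undefined exactly when the window $W(x)$ contains values of $g$ of strictly opposite signs. Note that zero is comparable to everything, being the bottom of $\medOrd$. So it suffices to show that, for every $f\in\Fun(E,\R)$ and every $x$, the values $\{g(y):y\in W(x)\}$ with $g=\ErosMed{W}(f)$ contain no pair of strictly opposite signs. Equivalently, the nonzero values of $g$ on $W(x)$ share a common sign.

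The first step is to record the structure of $g$. By Definition~\ref{def:self_dual_eros} and Proposition~\ref{prop:self_dual_eros_props}(ii), $g(y)$ is either $0$ or has the same sign as all of $f$ on $W(y)$. More precisely, $g(y)>0$ forces $f>0$ on the whole window $W(y)$, and $g(y)<0$ forces $f<0$ on $W(y)$.

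The second step is the compatibility argument. Suppose $y_1,y_2\in W(x)$ with $g(y_1)>0>g(y_2)$. Then $f>0$ on $W(y_1)$ and $f<0$ on $W(y_2)$, so $W(y_1)\cap W(y_2)=\emptyset$. This is where the argument can fail: for a general window it is perfectly possible that two points of $W(x)$ have disjoint windows. For instance, with $W=\{-1,0,1\}$ in one dimension, $y_1=x-1$ and $y_2=x+1$ have windows $\{x-2,x-1,x\}$ and $\{x,x+1,x+2\}$, which meet only at $x$. They share $x$, so $f(x)$ would have to be both positive and negative, which is a contradiction. The obstacle is therefore to ensure $W(y_1)\cap W(y_2)\neq\emptyset$ for all $y_1,y_2\in W(x)$. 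I would try to use symmetry and origin-containment of $W$, i.e.\ $W(y)=y+W$ with $W=-W\ni 0$. Then $x\in W(y)$ whenever $y\in W(x)$, so $x$ lies in every window $W(y)$ with $y\in W(x)$. This gives the required common point, and the contradiction $f(x)>0$ and $f(x)<0$ follows. If $W$ is not symmetric, I would state this hypothesis explicitly, consistent with the assumption $0\in W$ already used in the proof of Proposition~\ref{prop:self_dual_eros_props}(ii).

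The final step is to conclude. On each $W(x)$ the values of $g$ are either all $\geq0$ or all $\leq0$. In the first case $\bigvee_{\medSup}$ is the ordinary maximum, and in the second it is the minimum, so \eqref{eq:median_sup} is always defined. Hence $\OpenMed{W}(f)(x)$ is a well-defined real number; it equals $0$ when $g\equiv0$ on $W(x)$, which is consistent with $0=\bot$. This proves the lemma for every $f\in\Fun(E,\R)$.
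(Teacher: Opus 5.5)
\textbf{Gap: the proof covers only symmetric windows, but the lemma assumes none.} As written, your argument proves the lemma only for $W=-W$. You explicitly propose adding that as a hypothesis, whereas the lemma is stated for an arbitrary compact window.

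Your reason for needing symmetry is that, for a general window, two points of $W(x)$ can have disjoint windows. That is false. With $W(y)=y+W$, write $y_1=x+w_1$ and $y_2=x+w_2$ with $w_1,w_2\in W$. Then the point $x+w_1+w_2=y_1+w_2=y_2+w_1$ always lies in $W(y_1)\cap W(y_2)$. In your own example this point is $y_1+y_2-x=x$, and the same holds with $x-w_1-w_2$ if one uses $W(y)=y-W$.

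You have already reduced well-definedness to a pairwise condition: no two values of $g=\ErosMed{W}(f)$ on $W(x)$ have strictly opposite signs. So this pairwise common point gives the contradiction $f(x+w_1+w_2)>0>f(x+w_1+w_2)$ directly. What symmetry bought you, a single point lying in \emph{all} the windows $W(y)$ with $y\in W(x)$, is not needed.

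If the dilation is taken with the reflected window, $\DilMed{W}(h)(x)=\bigvee^{\medOrd}_{y\in W}h(x-y)$ as the paper writes it in Proposition~\ref{prop:selfdual_fix_mmbb}, then every window $x-w+W$ contains $x$. In that case your original argument works verbatim without symmetry. Note also that $0\in W$ plays no role in this step.

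\textbf{Comparison with the paper's proof.} With this repair your proof is complete, and it makes explicit what the paper's proof glosses over. The paper infers sign consistency of $g$ on each window from Proposition~\ref{prop:self_dual_eros_props}(ii),(iv). Those results only compare the sign of $g(y)$ with that of $f(y)$. On their own they do not exclude opposite signs of $g$ on a window where $f$ changes sign. The overlap of the eroding windows, which your reduction isolates, is what actually carries the argument.
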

	
	\begin{proof}
		The only potential issue is that $\DilMed{W}$ may be undefined
		when its input contains values of mixed sign over the window
		$W(x)$.
		However, by Proposition~\ref{prop:self_dual_eros_props}(ii)
		and~(iv), $\ErosMed{W}(f)(x)$ either equals $0$ or has the
		same sign as $f(x)$ for all $x$.
		Therefore the output of $\ErosMed{W}$, viewed as an element
		of $\Fun(E,\R)$, assigns to each spatial position $x$ a value
		whose sign is consistent with $f(x)$.
		When $\DilMed{W}$ is applied to $h = \ErosMed{W}(f)$, the
		window values $\{h(y) : y \in W(x)\}$ all arise from the same
		$\ErosMed{W}$ pass: by the structure of the median ordering,
		these values retain sign consistency (all non-negative or all
		non-positive or equal to zero within the window), so
		$\DilMed{W}(h)(x)$ is always defined.
		Hence $\OpenMed{W}(f) = \DilMed{W}(\ErosMed{W}(f))$ is
		well-defined for every $f \in \Fun(E,\R)$.
	\end{proof}
	
	\begin{proposition}[Properties of the self-dual opening]
		\label{prop:self_dual_opening_props}
		The self-dual opening $\OpenMed{W}$ (well-defined on all of
		$\Fun(E,\R)$ by Lemma~\ref{lem:selfopen_welldefined}) satisfies:
		\begin{enumerate}[label=(\roman*)]
			\item \emph{Increasing}, \emph{anti-extensive}
			($\OpenMed{W}(f) \medOrd f$), and \emph{idempotent}
			($\OpenMed{W} \circ \OpenMed{W} = \OpenMed{W}$).
			\item \emph{Self-dual}: $\OpenMed{W}(-f) = -\OpenMed{W}(f)$.
			\item Fixed-point set closed under negation:
			$f \in \mathrm{Fix}(\OpenMed{W}) \Rightarrow
			-f \in \mathrm{Fix}(\OpenMed{W})$.
			\item On non-negative functions: $\OpenMed{W}(f)
			= \gamma_W(f) = \delta^*_W(\varepsilon_W(f))$
			(the standard flat max-plus opening by $W$).
		\end{enumerate}
	\end{proposition}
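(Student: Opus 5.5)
The plan is to reduce everything to the explicit window formulas for $\ErosMed{W}$ (Proposition~\ref{prop:self_dual_eros_props}) and $\DilMed{W}$ (Definition~\ref{def:self_dual_open}), working sign-class by sign-class. The general adjunction theory of Proposition~\ref{prop:adjunction_props} cannot simply be invoked, because $(\Fun(E,\R),\medOrd)$ is only an inf-semilattice and $\DilMed{W}$ is a partial operator. The order of the steps is (ii), (iii), (iv), (i), because self-duality lets us transfer statements proved for $f\geq 0$ to $f\leq 0$.

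\textbf{Self-duality and negation-closed fixed points.} For (ii) I would chain the two self-dual identities: $\ErosMed{W}(-f) = -\ErosMed{W}(f)$ from Proposition~\ref{prop:self_dual_eros_props}(iii), and $\DilMed{W}(-g) = -\DilMed{W}(g)$. The latter is immediate from \eqref{eq:self_dual_dil} and Proposition~\ref{prop:median_selfdual}. Together they give $\OpenMed{W}(-f) = \DilMed{W}(-\ErosMed{W}(f)) = -\OpenMed{W}(f)$. Item (iii) then follows in one line: if $\OpenMed{W}(f)=f$, then $\OpenMed{W}(-f) = -f$.

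\textbf{The non-negative case.} For (iv), take $f\geq 0$. Proposition~\ref{prop:self_dual_eros_props}(iv) gives $\ErosMed{W}(f)=\varepsilon_W(f)\geq 0$. On non-negative inputs the median supremum \eqref{eq:median_sup} is the ordinary maximum, so $\DilMed{W}$ coincides with the flat dilation $\delta^*_W$. Hence $\OpenMed{W}(f) = \gamma_W(f)$. By the mirror argument, for $f\leq 0$ we get $\OpenMed{W}(f) = -\gamma_W(-f)$.

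\textbf{Increasing, anti-extensive, idempotent.} For (i) I would fix $x$ and use the pointwise structure.
\begin{enumerate}[label=(\alph*)]
\item On the set where $f$ is sign-consistent near $x$, the claims reduce via (iv) and its mirror to the flat opening. That opening is increasing, anti-extensive and idempotent by Proposition~\ref{prop:maxpool_dilation}(A) and Proposition~\ref{prop:adjunction_props}(v). Since $\medOrd$ restricted to non-negative values is the usual order $\leq$, and restricted to non-positive values is its reverse, these properties translate directly into $\medOrd$-statements.
\item In general, I would show that $h=\ErosMed{W}(f)$ satisfies $h\medOrd f$ and, being zero at every point whose window has mixed signs, is $\medOrd$-dominated by $f$ on each window of $x$ with one sign.
\item Anti-extensivity would then come from the adjunction-type inequality $\DilMed{W}(\ErosMed{W}(f)) \medOrd f$. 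I would verify it directly: $\DilMed{W}(h)(x)$ is the largest-amplitude value $h(y)$ with $y\in W(x)$. Each such $h(y)$ is the smallest-amplitude value of $f$ over $W(y)$, a window that contains $x$ (using the symmetric window convention implicit in \eqref{eq:self_dual_dil}). So either $|h(y)|\leq |f(x)|$ with the same sign, or $h(y)=0$.
\item Idempotency would follow by the classical squeeze. First, $\ErosMed{W}\circ\DilMed{W}\circ\ErosMed{W} = \ErosMed{W}$, proved on sign-consistent pieces as in the flat case. Then apply $\DilMed{W}$ to both sides.
\end{enumerate}

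\textbf{Main obstacle.} I expect the hardest step to be (i) at points where signs mix, together with the definedness of $\DilMed{W}$ there. Lemma~\ref{lem:selfopen_welldefined} guarantees only that the composite is defined. To show it, I would first try to prove that $h=\ErosMed{W}(f)$ never takes strictly opposite nonzero signs within a single window. The reason to expect this: a positive $h(y_1)$ forces $f>0$ on all of $W(y_1)$, and a negative $h(y_2)$ forces $f<0$ on $W(y_2)$, while two windows within distance $W$ of a common point overlap. If that overlap argument fails for general $W$, I would restrict to symmetric convex windows containing the origin, as the paper implicitly does.
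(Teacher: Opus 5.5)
Your route differs from the paper's on (i) and (ii). The paper settles (i) in one line. It asserts that $(\ErosMed{W},\DilMed{W})$ is an adjunction ``by self-duality'' (Proposition~\ref{prop:median_selfdual}) and then quotes Proposition~\ref{prop:adjunction_props}(v). It gets (ii) from $\DilMed{W}(-h)=-\ErosMed{W}(h)$, which is the first expression in \eqref{eq:self_dual_dil}. You instead check anti-extensivity and the absorption identity $\ErosMed{W}\DilMed{W}\ErosMed{W}=\ErosMed{W}$ pointwise, and you use $\DilMed{W}(-g)=-\DilMed{W}(g)$.

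The paper's shortcut is shorter, but its ingredients are faulty:
\begin{itemize}
\item Negation \emph{preserves} $\preceq$: $1\preceq 2$ and $-1\preceq -2$, whereas $-2\not\preceq -1$. So Proposition~\ref{prop:median_selfdual} is false as stated, and $-\ErosMed{W}(-g)$ is simply $\ErosMed{W}(g)$.
\item Consequently the paper's chain for (ii) passes through $-\ErosMed{W}(\ErosMed{W}(f))$, which is not $-\OpenMed{W}(f)$.
\item The adjunction as written, $\ErosMed{W}(f)\medOrd g\iff f\medOrd\DilMed{W}(g)$, fails: take $E=\Z$, $W=\{-1,0,1\}$, $f(x)=(-1)^x$, $g=0$.
\end{itemize}
What does hold, for symmetric $W$ and $g$ in the domain of $\DilMed{W}$, is $\DilMed{W}(g)\medOrd f\iff g\medOrd\ErosMed{W}(f)$. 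Your pointwise checks essentially prove this correctly oriented connection. Your direct route is therefore the one that goes through, and your overlap argument also supplies the justification missing from Lemma~\ref{lem:selfopen_welldefined}.

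Three small adjustments are needed.
\begin{itemize}
\item Justify $\DilMed{W}(-g)=-\DilMed{W}(g)$ by the fact that negation is an order automorphism of $\preceq$, so it commutes with $\preceq$-suprema. Use only the supremum expression in \eqref{eq:self_dual_dil}, not Proposition~\ref{prop:median_selfdual}.
\item Your hedge in the last paragraph is unnecessary. For any $W$, if $y_1=x+w_1$ and $y_2=x+w_2$ lie in the window at $x$, then the windows at $y_1$ and $y_2$ share the point $x+w_1+w_2$. So $\DilMed{W}$ is always defined on $\ErosMed{W}(f)$. By contrast, the symmetry you invoke in (c) is genuinely needed. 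With the window $\{x,x+1\}$ for both operators and $f\equiv 1$ except $f(1)=f(2)=5$, one gets $\OpenMed{W}(f)(0)=5\not\preceq f(0)$. State symmetry as a hypothesis, or use the reflected window in $\DilMed{W}$, as (iv) implicitly requires.
\item Prove monotonicity globally. $\ErosMed{W}$ is increasing by Proposition~\ref{prop:self_dual_eros_props}(i), and $\preceq$-suprema are monotone wherever they exist. The localisation in step (a) then becomes unnecessary. As stated, it does not cover $f\medOrd g$ with $g$ changing sign where $f$ vanishes.
\end{itemize}
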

	
	\begin{proof}
		(i) Anti-extensivity and idempotency follow from the
		adjunction $(\ErosMed{W}, \DilMed{W})$ by
		Proposition~\ref{prop:adjunction_props}(v).
		The adjunction holds by self-duality: $\ErosMed{W}(f)
		\medOrd g \iff f \medOrd \DilMed{W}(g)$, verified by
		checking the equivalence using~\eqref{eq:median_duality}.
		(ii) $\OpenMed{W}(-f) = \DilMed{W}(\ErosMed{W}(-f))
		= \DilMed{W}(-\ErosMed{W}(f)) = -\ErosMed{W}(\ErosMed{W}(f))
		= -\OpenMed{W}(f)$,
		using $\DilMed{W}(-h) = -\ErosMed{W}(h)$
		from~\eqref{eq:self_dual_dil}.
		(iii) If $\OpenMed{W}(f) = f$, then $\OpenMed{W}(-f)
		= -\OpenMed{W}(f) = -f$.
		(iv) For $f \geq 0$: $\ErosMed{W}(f) = \varepsilon_W(f)
		\geq 0$ (Proposition~\ref{prop:self_dual_eros_props}(iv));
		$\DilMed{W}$ on a non-negative function equals $\delta^*_W$
		(adjoint of flat erosion), so
		$\OpenMed{W}(f) = \delta^*_W(\varepsilon_W(f)) = \gamma_W(f)$.
	\end{proof}

	\clearpage
	\begin{figure}[H]
		\centering
		\resizebox{\linewidth}{!}{%
			\begin{tikzpicture}[		every node/.style={font=\small},
				Fbox/.style={draw=blue!55!black, fill=blue!8, rounded corners=4pt,
					inner sep=5pt, line width=0.8pt},
				Pbox/.style={draw=orange!70!black, fill=orange!8, rounded corners=4pt,
					inner sep=5pt, line width=0.8pt},
				Mbox/.style={draw=teal!65!black, fill=teal!8, rounded corners=4pt,
					inner sep=5pt, line width=0.8pt},
				Gbox/.style={draw=gray!60, fill=gray!6, rounded corners=4pt,
					inner sep=5pt, line width=0.7pt},
				arr/.style={->, >=stealth, semithick},
				xarr/.style={->, >=stealth, thick, red!70!black},
				sadj/.style={->, >=stealth, semithick, green!50!black},
				skip/.style={->, >=stealth, semithick, dashed, gray!65},
				rskip/.style={->, >=stealth, semithick, dashed, red!60!black},
				op/.style={font=\small\bfseries, above, inner sep=3pt},
				lat/.style={font=\scriptsize\itshape, below, inner sep=3pt,
					text=gray!60!black},
				title/.style={font=\small\bfseries},
				note/.style={font=\scriptsize, text=gray!55!black}]
				\node[title] at (0,1.2){(a) Median ordering $\preceq$ on $\R$};
				\draw[->] (-4.5,0)--(4.5,0) node[right]{$\R$};
				\draw[very thick, blue!50!black] (0,0)--(3.6,0)
				node[above,note,blue!60!black]{$(\R^+,\leq)$};
				\draw[very thick, red!50!black]  (0,0)--(-3.6,0)
				node[above,note,red!60!black]{$(\R^-,\geq)$};
				\node[Mbox,font=\scriptsize,inner sep=3pt] at (0,-0.7){$0=\bot$};
				\fill[blue!60!black] (1.2,0) circle(2.5pt);
				\fill[blue!60!black] (2.6,0) circle(2.5pt);
				\node[above,blue!60!black,font=\scriptsize] at (1.2,0.1){$s$};
				\node[above,blue!60!black,font=\scriptsize] at (2.6,0.1){$t$};
				\draw[->] (1.2,0.45) to[bend right=18]
				node[above,font=\scriptsize]{$s\preceq t\;(|s|\!\leq\!|t|$, same sign$)$}
				(2.6,0.45);
				\fill[red!55!black] (-2.1,0) circle(2.5pt);
				\node[below,red!55!black,font=\scriptsize] at (-2.1,-0.12){$u<0$};
				\node[note] at (0,-1.5){$s$ and $u$ are incomparable (opposite signs)};
				
				\node[title] at (0,-2.5){(b) Binary infimum $\medInf$ in the median semilattice};
				\node[Mbox] (f1) at (-3.5,-3.5) {$f$};
				\node[Mbox] (f2) at ( 3.5,-3.5) {$g$};
				\node[Mbox] (inf) at (0,-3.5)   {$f\medInf g$};
				\node[note,blue!65!black] at (-1.5,-2.75){$f,g>0$: $\min(f,g)$};
				\draw[arr,blue!55!black] (f1.south east) to[bend left=12]  (inf.west);
				\draw[arr,blue!55!black] (f2.south west) to[bend right=12] (inf.east);
				\node[note,red!65!black] at (1.5,-4.35){$f,g<0$: $\max(f,g)$};
				\draw[arr,red!55!black] (f1.south) to[bend right=12] (inf.south west);
				\draw[arr,red!55!black] (f2.south) to[bend left=12]  (inf.south east);
				
				\node[title] at (-0.5,-5.5){(c) Self-dual erosion and opening pipeline};
				\node[Gbox] (fin)  at (-5.5,-7.0) {$f$};
				\node[Mbox] (eros) at (-1.0,-7.0) {$\ErosMed{W}(f)$};
				\node[Mbox] (open) at ( 3.8,-7.0) {$\OpenMed{W}(f)$};
				\draw[arr] (fin)--(eros)
				node[op,midway]{$\ErosMed{W}$}
				node[lat,midway]{anti-extensive: $|\ErosMed{W}(f)|\leq|f|$, same sign};
				\draw[sadj] (eros)--(open)
				node[op,midway]{$\DilMed{W}$}
				node[lat,midway]{adjoint dilation, restores amplitude};
				\draw[sadj, bend left=52] (open.north east)
				to node[above,note,text=green!50!black]
				{$\OpenMed{W}\!\circ\!\OpenMed{W}=\OpenMed{W}$ (idempotent)}
				(open.north west);
				\draw[arr,dashed,gray!60] (fin.south) to[out=-90,in=180]
				node[below left,note]{$f\mapsto{-}f$} (-5.5,-8.3);
				\draw[arr,dashed,gray!60] (-5.5,-8.3) to[out=0,in=-90]
				node[below right,note]{$\OpenMed{W}({-}f)={-}\OpenMed{W}(f)$} (open.south);
				
				\node[Mbox,font=\scriptsize] at (-3.0,-9.3){Median $(\mathscr{L},\!\medOrd)$};
				\draw[sadj] (-0.5,-9.3)--(0.4,-9.3) node[right,note]{adjoint / idempotent};
			\end{tikzpicture}
		}
		\caption{Structure of the median inf-semilattice and self-dual operators.
			\textcolor{teal!65!black}{Teal} nodes live in $(\Fun(E,\R),\medOrd)$.
			\emph{(a)} Median ordering: $s\preceq t$ iff same sign and $|s|\leq|t|$;
			$\bot=0$; opposite-sign elements are incomparable.
			\emph{(b)} Binary infimum $\medInf$: value closest to zero (same sign).
			\emph{(c)} Self-dual erosion $\ErosMed{W}$ and adjoint dilation $\DilMed{W}$
			compose into $\OpenMed{W}$ (green loop = exact idempotency; dashed arc =
			self-duality $\OpenMed{W}(-f)=-\OpenMed{W}(f)$,
			Theorem~\ref{thm:self_dual_cnn_opening}).}
		\label{fig:median_semilattice}
	\end{figure}
	
	\subsection{Self-dual pooling and signed activation functions}
	\label{subsec:self_dual_pool}
	
	\begin{definition}[Symmetric positive--negative max-pooling]
		\label{def:sym_pool}
		Define the \emph{symmetric max-pooling} as
		\begin{equation}
			\delta^{\mathrm{MP},+/-}_{R}(f)(x)
			= \MaxPool{R}(f^+)(x) - \MaxPool{R}(f^-)(x),
			\label{eq:sym_pool}
		\end{equation}
		with
		\begin{equation*}
			f^+ = \max(0,f),\quad f^- = \max(0,-f),
		\end{equation*}
		i.e., max-pooling applied separately to the positive and
		negative parts and recombined.
	\end{definition}
	
	\begin{proposition}[Symmetric pooling as self-dual dilation]
		\label{prop:sym_pool_opening}
		The symmetric max-pooling~\eqref{eq:sym_pool} coincides with
		the self-dual dilation $\DilMed{W_R}$ (the $\medOrd$-supremum
		over the window $W_R$, \Cref{def:self_dual_open}) when $f$
		does not change sign within the window $W_R(x)$:
		\begin{equation}
			\delta^{\mathrm{MP},+/-}_{R}(f)(x)
			= \DilMed{W_R}(f)(x)
			\quad
			\text{when } f \text{ is single-sign on } W_R(x).
			\label{eq:sym_pool_dilation}
		\end{equation}
		It is therefore a \emph{dilation} in the median lattice
		(extensive in $\medOrd$: $f \medOrd \delta^{\mathrm{MP},+/-}_R(f)$),
		not a self-dual opening (which would be anti-extensive in
		$\medOrd$).
		The self-dual opening $\OpenMed{W_R} = \DilMed{W_R}\circ\ErosMed{W_R}$
		is the composition of symmetric pooling with the median erosion,
		and satisfies $\OpenMed{W_R}(f) \medOrd f$ (anti-extensive).
		In the mixed-sign case ($f$ changes sign within $W_R(x)$):
		$\OpenMed{W_R}(f)(x) = 0$ (the median erosion returns 0 when
		signs conflict), while $\delta^{\mathrm{MP},+/-}_R(f)(x)
		= \MaxPool{R}(f^+)(x) - \MaxPool{R}(f^-)(x)$, which need not
		be zero; the two operators differ in the mixed-sign case.
	\end{proposition}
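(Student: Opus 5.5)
The plan is a pointwise case analysis on the sign pattern of $f$ over the window $W_R(x)$, using the explicit formulas for $\medSup$ in \eqref{eq:median_sup} and for $\ErosMed{W}$ in Definition~\ref{def:self_dual_eros}. Throughout I read $\MaxPool{R}(h)(x)$ as $\sup_{y\in W_R(x)}h(y)$, i.e.\ the window indexing of Definition~\ref{def:self_dual_open}, so the two operators are compared over the same window.

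\emph{Step 1 (single-sign windows).} Suppose first $f\geq 0$ on $W_R(x)$. Then $f^-\equiv 0$ there, so $\MaxPool{R}(f^-)(x)=0$, and $\MaxPool{R}(f^+)(x)=\max_{y\in W_R(x)}f(y)$. By \eqref{eq:median_sup}, iterated over the window, $\DilMed{W_R}(f)(x)$ is the value of largest absolute value, which is this same maximum. If instead $f\leq 0$ on the window, then $f^+\equiv0$ and $\MaxPool{R}(f^-)(x)=\max_y(-f(y))$. Hence the symmetric pooling gives $-\max_y(-f(y))=\min_y f(y)$, which is again the $\medSup$ over the window. Alternatively, the negative case follows from the positive one by the self-duality \eqref{eq:self_dual_dil}, because $\delta^{\mathrm{MP},+/-}_R(-f)=-\delta^{\mathrm{MP},+/-}_R(f)$ (swapping $f^+$ and $f^-$). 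This establishes \eqref{eq:sym_pool_dilation}.

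\emph{Step 2 (extensivity and dilation character).} Assume $x\in W_R(x)$, as in the proof of Proposition~\ref{prop:self_dual_eros_props}(ii). On a single-sign window, $f(x)$ has the same sign as the extremal value and no larger modulus, so $f(x)\preceq\DilMed{W_R}(f)(x)$; this gives $f\medOrd\delta^{\mathrm{MP},+/-}_R(f)$. The dilation property, commuting with $\medSup$ wherever that supremum exists, is inherited from $\DilMed{W_R}$ being the adjoint of $\ErosMed{W_R}$ (Proposition~\ref{prop:self_dual_opening_props} and Proposition~\ref{prop:adjunction_props}(ii)). Extensivity strictly fails to be anti-extensivity as soon as the window contains a value larger in modulus than $f(x)$, so the operator is not an opening.

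\emph{Step 3 (the opening and the mixed case).} The anti-extensivity $\OpenMed{W_R}(f)\medOrd f$ is quoted from Proposition~\ref{prop:self_dual_opening_props}(i). The composition statement is just Definition~\ref{def:self_dual_open} combined with Step~1: I will check via Lemma~\ref{lem:selfopen_welldefined} that $\ErosMed{W_R}(f)$ is sign-consistent on the relevant windows, so that $\DilMed{W_R}$ and the symmetric pooling agree on it.

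For the mixed-sign claim, if $f$ takes both signs on $W_R(x)$ then $\ErosMed{W_R}(f)(x)=0$ by Definition~\ref{def:self_dual_eros}. The main obstacle is that $\OpenMed{W_R}(f)(x)$ is a dilation of the eroded map over the neighbouring window, not a value at $x$ alone. So the claim $\OpenMed{W_R}(f)(x)=0$ needs the eroded values on $W_R(x)$ to vanish, or else a reading of the statement in which the window is centred so that this holds. I would first try to verify the claim under that interpretation, and otherwise state it for the eroded stage. Finally, I exhibit a window with values $\{2,-1\}$ to show that $\MaxPool{R}(f^+)-\MaxPool{R}(f^-)=1\neq0$, so the two operators genuinely differ in the mixed-sign case.
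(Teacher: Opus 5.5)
Your Steps~1 and~2 follow the paper's proof: the same two single-sign cases, and extensivity from $x\in W_R(x)$. The step you flag in Step~3 is not just an obstacle to work around. The mixed-sign claim $\OpenMed{W_R}(f)(x)=0$ is false, so it cannot be proved under any reading, including the centred-window one you hope for.

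The paper's own proof obtains the claim by writing $\OpenMed{W_R}(f)(x)=\DilMed{W_R}(0)(x)$. That treats $\ErosMed{W_R}(f)$ as identically zero on the dilation window merely because it vanishes at $x$, which is exactly the conflation you noticed. Centring the window does not help. On $E=\Z$ take $W=\{-1,0,1\}$, with $f(z)=5$ for $z\le 3$ and $f(z)=-1$ for $z\ge 4$:
\begin{itemize}
\item $f$ changes sign on $W(3)=\{2,3,4\}$, and $\ErosMed{W}(f)(3)=\ErosMed{W}(f)(4)=0$;
\item but $\ErosMed{W}(f)(2)=5$, since $f\equiv 5$ on $\{1,2,3\}$;
\item so $\OpenMed{W}(f)(3)$ is the $\preceq$-supremum of $\{5,0,0\}$, namely $5\neq 0$.
\end{itemize}
The same step function also defeats the claim for $W_R=\{0,1\}$, under either adjoint convention.

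What survives is your fallback. On a mixed window $\ErosMed{W_R}(f)(x)=0$. Moreover, $\OpenMed{W_R}(f)(x)=0$ holds if and only if no translate of the window containing $x$ carries values of one strict sign. For the same reason your last step is not valid as written: it infers that the operators differ from a window with values $\{2,-1\}$ together with the assumed $\OpenMed{W_R}(f)(x)=0$. You need a fully computed example. An isolated spike works: take $f(0)=2$ and $f\equiv -1$ elsewhere. Every window containing $0$ is mixed, so $\OpenMed{W}(f)(0)=0$, whereas symmetric pooling at $0$ gives $2-1=1$.

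Step~2 also claims more than it proves. The identification with $\DilMed{W_R}$ holds only on single-sign windows. So neither global extensivity nor commutation with $\medSup$ is inherited from the adjunction. For example, let $W_R(x)=\{x,x'\}$ be a two-point window, and let $f,g$ agree except that $f(x')=-1$ and $g(x')=-3$, with $f(x)=g(x)=2$.
\begin{itemize}
\item Then $f\medOrd g$.
\item Symmetric pooling at $x$ returns $1$ for $f$ and $-1$ for $g$, which are $\preceq$-incomparable, so the operator is not increasing.
\item Also $g(x)=2\not\preceq -1$, so it is not extensive.
\end{itemize}
The dilation and extensivity assertions therefore hold only on single-sign windows, which is all the paper's proof actually establishes.

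Finally, your alternative derivation of the negative case should not cite \eqref{eq:self_dual_dil}. Taken literally, its first equality gives $\DilMed{W_R}(-f)=-\ErosMed{W_R}(f)$, not $-\DilMed{W_R}(f)$. What you need is that negation preserves $\preceq$: $0\le s\le t$ gives $-t\le -s\le 0$, i.e.\ $-s\preceq -t$. Then $\DilMed{W_R}(-g)=-\DilMed{W_R}(g)$ follows directly from \eqref{eq:median_sup}.
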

	
	\begin{proof}
		\emph{Single-sign case, $f \geq 0$ on $W_R(x)$.}
		$f^- = 0$ on $W_R(x)$, so $\delta^{\mathrm{MP},+/-}_R(f)(x)
		= \MaxPool{R}(f)(x) = \max_{y\in W_R}f(x-y) = \sup_{y\in W_R}f(x-y)$.
		By \Cref{def:self_dual_open}: $\DilMed{W_R}(f)(x)
		= \bigvee^{\medOrd}_{y\in W_R}f(x-y)$.
		Since $f\geq 0$ on $W_R(x)$, all values $f(x-y)$ are
		non-negative; in the non-negative half, $\medOrd$ and $\leq$
		coincide, so $\bigvee^{\medOrd}_{y\in W_R}f(x-y)
		= \max_{y\in W_R}f(x-y) = \MaxPool{R}(f)(x)$.
		Hence equality~\eqref{eq:sym_pool_dilation}.
		
		\emph{Single-sign case, $f \leq 0$ on $W_R(x)$.}
		$f^+ = 0$, $f^- = -f\geq 0$ on $W_R(x)$, so
		$\delta^{\mathrm{MP},+/-}_R(f)(x)
		= -\MaxPool{R}(-f)(x) = -\max_{y\in W_R}(-f(x-y))
		= \min_{y\in W_R}f(x-y)$.
		In the non-positive half, $s\preceq t$ iff $t\leq s\leq 0$
		(the ordering is reversed), so
		$\bigvee^{\medOrd}_{y\in W_R}f(x-y) = \min_{y\in W_R}f(x-y)$
		(the value with largest absolute value is the $\medOrd$-supremum
		in the negative half).
		Hence again equality~\eqref{eq:sym_pool_dilation}.
		
		\emph{Extensivity in $\medOrd$.}
		Since $0\in W_R$ (the window contains the origin), the
		$\medOrd$-supremum over $W_R$ satisfies
		$\DilMed{W_R}(f)(x) \medOrd f(x)$
		(in the median ordering, the supremum of a set containing
		$f(x)$ is $\medOrd$-above $f(x)$).
		Since $\delta^{\mathrm{MP},+/-}_R = \DilMed{W_R}$ on
		single-sign windows, symmetric pooling is extensive in
		$\medOrd$ there; it is a dilation, not an opening.
		
		\emph{Mixed-sign case.}
		If $W_R(x)$ contains both positive and negative values of
		$f$, then $\ErosMed{W_R}(f)(x) = 0$
		(the $\medOrd$-infimum when signs conflict is $\bot=0$,
		\Cref{def:median_semilattice}), and
		$\OpenMed{W_R}(f)(x) = \DilMed{W_R}(0)(x) = 0$.
		By contrast, $\delta^{\mathrm{MP},+/-}_R(f)(x)
		= \MaxPool{R}(f^+)(x) - \MaxPool{R}(f^-)(x)$,
		which is the difference of two non-negative quantities and
		need not be zero.
	\end{proof}
	
	\begin{proposition}[ReLU variants as median-lattice operators]
		\label{prop:relu_types}
		Define the parametric $(\beta^+, \beta^-)$-activation:
		\begin{equation}
			\sigma_{\beta^+,\beta^-}(f)(x)
			=
			\begin{cases}
				\beta^+ f(x) & \text{if } f(x) > 0,\\
				\beta^- f(x) & \text{if } f(x) \leq 0.
			\end{cases}
			\label{eq:param_relu}
		\end{equation}
		\begin{enumerate}[label=(\roman*)]
			\item For $0 < \beta^- \leq 1 \leq \beta^+$:
			$\sigma_{\beta^+,\beta^-}$ is a \emph{dilation} in
			$(\Fun(E,\R), \medOrd)$ (increasing and extensive in
			$\medOrd$: $f \medOrd \sigma_{\beta^+,\beta^-}(f)$).
			\item $\sigma_{\beta^+,\beta^-}$ is \emph{self-dual}
			(i.e., $\sigma_{\beta^+,\beta^-}(-f) =
			-\sigma_{\beta^+,\beta^-}(f)$) if and only if
			$\beta^+ = \beta^-$.
			In this case $\sigma_{\beta,\beta}(f) = \beta f$
			(scalar multiplication).
			\item The operator is \emph{odd} (i.e., $\sigma(-f) =
			-\sigma(f)$) when $\beta^+ = -\beta^-$; for $\beta^- = -1$
			this gives the absolute value $|f|$.
			\item Special cases:
			\begin{itemize}
				\item Standard ReLU ($\beta^+ = 1, \beta^- = 0$):
				\emph{not} a dilation in $(\Fun(E,\R),\medOrd)$
				since it maps negative values to 0, destroying
				$\medOrd$-order information;
				it is both a dilation and a closing in $(\Fun(E,\R),\leq)$
				(\Cref{prop:relu_dilation}).
				\item Leaky ReLU~\cite{maas2013leaky}
				($\beta^+=1, \beta^-=0.01$): dilation in
				$(\Fun(E,\R),\medOrd)$ (since $0 < 0.01 \leq 1$).
				\item Parametric ReLU~\cite{he2015prelu}
				($\beta^+=1, \beta^-$ learned from $(0,1]$):
				dilation in $(\Fun(E,\R),\medOrd)$.
				\item Absolute value / symmetric ReLU
				($\beta^+=\beta^-=1$): self-dual dilation
				(scalar multiplication by 1 = identity).
			\end{itemize}
		\end{enumerate}
	\end{proposition}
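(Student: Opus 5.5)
The operator $\sigma_{\beta^+,\beta^-}$ acts pointwise, and $\medOrd$ is the pointwise extension of $\preceq$, with suprema $\medSup$ defined pointwise by \eqref{eq:median_sup}. So I would reduce every claim to the scalar map $s\mapsto\sigma(s)$ on the two-chain poset $(\R,\preceq)$ of \Cref{def:median_order} and argue by cases on the sign of $s$.

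For (i), the key observation is this: for $\beta^\pm>0$, $\sigma$ maps $\R^+$ into $\R^+$ and $\R^-$ into $\R^-$, and fixes $0$. On each chain it is multiplication by a positive constant, hence an order-automorphism of that chain. From this I get two things.
\begin{itemize}
\item $\sigma$ is increasing in $\preceq$, since $s\preceq t$ forces same sign and $|s|\le|t|$.
\item $\sigma$ preserves every existing $\medSup$. Existing suprema only involve sign-consistent families, by \eqref{eq:median_sup}, and on a single chain a positive scaling commutes with $\max$ (on $\R^+$) and with $\min$ (on $\R^-$).
\end{itemize}
This gives the dilation property in the inf-semilattice sense, namely commuting with all suprema that exist. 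For extensivity I would compare $|\sigma(s)|$ with $|s|$ on each chain. On $\R^+$, $\beta^+\ge1$ gives $s\preceq\beta^+ s$. On $\R^-$, however, $s\preceq\beta^- s$ requires $\beta^-\ge1$. Under the stated range $\beta^-\le1$, $\sigma$ is therefore anti-extensive on the negative chain.

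This is the main obstacle: the extensivity clause as written only holds for $\beta^-\ge1$, that is, in the regime $\beta^-=1\le\beta^+$ if the stated range is kept. I would prove the sup-preserving (dilation) part for all $\beta^->0$, and state extensivity with the corrected hypothesis. The strictly positive lower bound $\beta^->0$ is exactly what keeps the negative chain from collapsing to $\bot$.

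For (ii), I would evaluate $\sigma(-s)$ for $s>0$. Since $-s\le0$, this gives $-\beta^- s$, while $-\sigma(s)=-\beta^+ s$. Equality for all $s$ is equivalent to $\beta^+=\beta^-$, and the case $s\le0$ is symmetric. When $\beta^+=\beta^-=\beta$ the map is $\beta\,\mathrm{id}$.

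For (iii), direct substitution of $\beta^-=-\beta^+$ gives $\sigma(s)=\beta^+|s|$. This is an even map, not an odd one; the claimed oddness is false, and I would state the corrected version: the map is the absolute value when $\beta^+=1$. For the same reason, the ``absolute value'' bullet of (iv) should correspond to $\beta^+=1,\beta^-=-1$, whereas $\beta^+=\beta^-=1$ is the identity. The identity is trivially a self-dual dilation.

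For the remaining special cases of (iv):
\begin{itemize}
\item Leaky ReLU and PReLU are instances of (i).
\item For standard ReLU ($\beta^-=0$), the whole negative chain is sent to $\bot=0$. This map is still increasing, but it fails extensivity on $\R^-$ and is not injective on that chain, which is the precise sense of ``destroying order information''.
\item The pointwise-lattice statement for standard ReLU is \Cref{prop:relu_dilation}.
\end{itemize}
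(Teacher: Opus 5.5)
Your corrections are right, and they fall exactly where the paper's own proof breaks.

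\begin{itemize}
\item \emph{Extensivity in (i).} The paper writes ``$\beta^- f\le f\le 0$ since $\beta^-\le 1$ and $f\le 0$''. That inequality runs the other way: $f\le \beta^- f\le 0$. Hence $\sigma_{\beta^+,\beta^-}(f)\preceq f$ on the negative chain, and Leaky/PReLU with $\beta^-<1$ are anti-extensive there.
\item \emph{Oddness in (iii).} The paper only computes $\sigma=\beta^+|f|$, which is even. Oddness would contradict (ii) unless $\beta^+=\beta^-=0$.
\item \emph{Absolute value in (iv).} The choice $\beta^+=\beta^-=1$ gives the identity, not $|f|$.
\end{itemize}

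Your route also differs in substance. The paper reads ``dilation'' as ``increasing and extensive''. You instead show that for $\beta^\pm>0$ the map is an order-automorphism of each chain of $(\R,\preceq)$, hence commutes with every existing $\medOrd$-supremum. That is the correct and more general content of (i).

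The remaining gap is the standard-ReLU bullet. There you follow the paper's argument (failure of extensivity), and it no longer works once (i) is corrected.

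\begin{itemize}
\item Under your convention (dilation $=$ preserving all existing suprema), $\sigma_{1,0}$ \emph{is} a dilation. It is increasing and fixes $0$. It is the identity on $\R^+$ and constant $0$ on $\R^-$. Every existing supremum comes from a pointwise sign-consistent family, whose image supremum is computed on a single chain.
\item Failure of extensivity on $\R^-$ is shared by every $\beta^-<1$, by your own correction. Non-injectivity does not obstruct preserving suprema. So neither of your reasons separates ReLU from Leaky ReLU.
\end{itemize}

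You should either flag this bullet as false as well, or switch to the Galois-connection criterion.

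\begin{itemize}
\item For $\beta^\pm>0$, $\sigma_{\beta^+,\beta^-}$ has the adjoint $\sigma_{1/\beta^+,1/\beta^-}$, i.e.\ $\sigma(f)\medOrd g\iff f\medOrd\sigma^{-1}(g)$.
\item For $\beta^-=0$ one gets $\{s:\max(0,s)\preceq t\}=(-\infty,\max(t,0)]$. This is never a principal down-set $\{s:s\preceq u\}$, since those are $[0,u]$, $[u,0]$ or $\{0\}$.
\item Testing on functions that vanish off a single point lifts this to $(\Fun(E,\R),\medOrd)$. So no $\varepsilon$ with $\sigma_{1,0}(f)\medOrd g\iff f\medOrd\varepsilon(g)$ exists.
\end{itemize}

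That, not extensivity, is the precise sense in which ReLU fails to be a median-lattice dilation while Leaky/PReLU succeed.
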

	
	\begin{proof}
		(i) For $f \medOrd g$ (same sign, $|f| \leq |g|$): on the
		positive half, $\beta^+ f \leq \beta^+ g$; on the negative
		half, $\beta^- f \geq \beta^- g$ (reversal, since
		$f \leq g \leq 0$ and $\beta^- > 0$).
		Both give $\sigma(f) \preceq \sigma(g)$, confirming
		increasing.
		Extensivity ($f \medOrd \sigma(f)$): for $f \geq 0$,
		$f \leq \beta^+ f$ since $\beta^+ \geq 1$; for $f \leq 0$,
		$\beta^- f \leq f \leq 0$ (since $\beta^- \leq 1$ and
		$f \leq 0$), so $f \preceq \beta^- f = \sigma(f)$.
		(ii) $\sigma(-f)(x) = \beta^-(-f(x))$ for $f(x) > 0$ and
		$\beta^+(-f(x))$ for $f(x) < 0$.
		Self-duality $\sigma(-f) = -\sigma(f)$ requires $\beta^- = \beta^+$.
		(iii) For $\beta^- = -\beta^+$: $\sigma(f) = \beta^+|f|$.
		(iv) Standard ReLU: $\sigma_{1,0}(-f)(-x) = \max(0,-f(-x))
		\neq -\max(0,f(-x))$ in general; and for $f < 0$,
		$\sigma_{1,0}(g) = 0$ for all $g = f < 0$, which is not
		$\preceq$-larger than $f$ (since $0 = \bot \preceq f$ implies
		$f \preceq 0$, contradiction for $f \neq 0$).
	\end{proof}
	
	\begin{remark}[Correct dilation condition for ReLU variants]
		\label{rem:relu_median_dilation}
		Proposition~\ref{prop:relu_types} clarifies that the standard
		ReLU ($\beta^- = 0$) is \emph{not} a dilation in the median
		lattice: it is a closing in the pointwise lattice
		(\Cref{prop:relu_dilation}) but it collapses all negative
		values to 0, destroying the sign information that the median
		ordering preserves.
		The Leaky and Parametric ReLU variants ($0 < \beta^- \leq 1$)
		are proper median-lattice dilations and are architecturally
		preferable for signed feature maps.
		The self-dual condition $\beta^+ = \beta^-$ gives only scalar
		multiplication, which is trivially self-dual but not very
		expressive.
		A richer family of self-dual activations is given by the
		self-dual opening $\OpenMed{W}$ itself, which is both
		idempotent and self-dual and acts as a spatial smoother
		preserving sign.
	\end{remark}
	
	\begin{remark}[Composition of ReLU variants with the
		self-dual opening in signed networks]
		\label{rem:composition_relu_selfdual}
		While $\sigma_{\beta^+,\beta^-}$ (Leaky/Parametric ReLU)
		and $\OpenMed{W}$ are each useful morphological operators
		in $(\Fun(E,\R),\medOrd)$, their \emph{composition}
		provides a richer and more architecturally appropriate
		activation for signed feature maps.
		We compare three design choices:
		
		\medskip
		\noindent\textbf{(A) $\sigma_{\beta^+,\beta^-}$ alone
			(pointwise scaling).}
		A median-lattice dilation (Proposition~\ref{prop:relu_types}(i))
		for $0<\beta^-\leq 1\leq\beta^+$: it preserves sign and
		scales amplitudes, but has no spatial interaction.
		It introduces controlled asymmetry between positive and
		negative activations (when $\beta^+\neq\beta^-$), modelling
		the asymmetric statistics of residual feature maps.
		It is not anti-extensive in $\medOrd$ and not idempotent.
		
		\medskip
		\noindent\textbf{(B) $\OpenMed{W}$ alone (self-dual opening).}
		Idempotent and self-dual, anti-extensive in $\medOrd$:
		it spatially smooths the signed feature map while
		preserving sign and contracting amplitude
		($\OpenMed{W}(f)\medOrd f$).
		It has no pointwise asymmetry: it treats positive and
		negative amplitudes symmetrically.
		It is inappropriate as the sole activation when the
		network needs to model asymmetric statistics.
		
		\medskip
		\noindent\textbf{(C) $\OpenMed{W}\circ\sigma_{\beta^+,\beta^-}$
			(composition).}
		Both factors are increasing in $(\Fun(E,\R),\medOrd)$
		(Proposition~\ref{prop:relu_types}(i) and
		Proposition~\ref{prop:self_dual_opening_props}(i)),
		so the composition is also increasing in $\medOrd$:
		a proper morphological operator in the median lattice.
		It combines the spatial smoothing and sign preservation
		of $\OpenMed{W}$ with the pointwise asymmetry of
		$\sigma_{\beta^+,\beta^-}$:
		\begin{enumerate}[label=(\roman*),leftmargin=*]
			\item \emph{Sign preservation}: both factors preserve
			sign (for $\beta^->0$), so the composition does too.
			\item \emph{Spatial coherence}: $\OpenMed{W}$ removes
			fine-scale sign-alternating structures before
			(or after) the pointwise scaling, ensuring the output
			is spatially consistent in sign.
			\item \emph{Controlled asymmetry}: $\sigma_{\beta^+,\beta^-}$
			with $\beta^+\neq\beta^-$ scales positive and negative
			amplitudes differently, appropriate for residual maps
			where the positive and negative distributions are
			asymmetric.
			\item \emph{Approximate idempotency on the fixed-point
				set}: if $f\in\mathrm{Fix}(\OpenMed{W})$ (i.e.,
			$\OpenMed{W}(f)=f$), then
			$\OpenMed{W}(\sigma_{\beta^+,\beta^-}(f))
			= \OpenMed{W}(\sigma_{\beta^+,\beta^-}(\OpenMed{W}(f)))$
			by idempotency of $\OpenMed{W}$.
			When $\sigma_{\beta^+,\beta^-}$ maps
			$\mathrm{Fix}(\OpenMed{W})$ approximately to itself
			(which holds when $\beta^+\approx\beta^-$, i.e., for
			near-symmetric activations), the composition is
			approximately idempotent on the fixed-point set.
		\end{enumerate}
		For $\beta^+=\beta^-$ (scalar multiplication), the
		composition reduces to $\beta\,\OpenMed{W}(f)$: a scaled
		self-dual opening, fully self-dual and idempotent up to
		scale.
		For $\beta^+\neq\beta^-$ (Leaky/Parametric ReLU), the
		composition is not self-dual but is a proper median-lattice
		morphological operator with the four properties above.
		
		Architecturally, the composition
		$\OpenMed{W}\circ\sigma_{\beta^+,\beta^-}$
		is the natural morphological model of a signed network layer:
		the pointwise activation (ReLU variant, applied to the output
		of a convolution) followed by spatial pooling
		($\OpenMed{W}$, replacing the standard asymmetric max-pooling).
		Compared to the standard pipeline ReLU $+$ $\MaxPool{R}$,
		which zeros all negative activations and is non-idempotent
		and cross-lattice, the composition
		$\OpenMed{W}\circ\sigma_{\beta^+,\beta^-}$ is a
		single-lattice (median) operator that preserves sign,
		is anti-extensive in $\medOrd$, and has a well-characterised
		fixed-point set (Proposition~\ref{prop:selfdual_fix_mmbb}).
	\end{remark}
	
	\clearpage
	\begin{figure}[H]
		\centering
		\resizebox{0.88\linewidth}{!}{%
			\begin{tikzpicture}[		every node/.style={font=\small},
				Fbox/.style={draw=blue!55!black, fill=blue!8, rounded corners=4pt,
					inner sep=5pt, line width=0.8pt},
				Pbox/.style={draw=orange!70!black, fill=orange!8, rounded corners=4pt,
					inner sep=5pt, line width=0.8pt},
				Mbox/.style={draw=teal!65!black, fill=teal!8, rounded corners=4pt,
					inner sep=5pt, line width=0.8pt},
				Gbox/.style={draw=gray!60, fill=gray!6, rounded corners=4pt,
					inner sep=5pt, line width=0.7pt},
				arr/.style={->, >=stealth, semithick},
				xarr/.style={->, >=stealth, thick, red!70!black},
				sadj/.style={->, >=stealth, semithick, green!50!black},
				skip/.style={->, >=stealth, semithick, dashed, gray!65},
				rskip/.style={->, >=stealth, semithick, dashed, red!60!black},
				op/.style={font=\small\bfseries, above, inner sep=3pt},
				lat/.style={font=\scriptsize\itshape, below, inner sep=3pt,
					text=gray!60!black},
				title/.style={font=\small\bfseries},
				note/.style={font=\scriptsize, text=gray!55!black}]
				\node[title] at (0,4.5){(a) Activation functions and median-lattice classification};
				\draw[->] (-3.8,0)--(3.8,0) node[right,font=\scriptsize]{$f$};
				\draw[->] (0,-1.8)--(0,3.8) node[above,font=\scriptsize]{$\sigma(f)$};
				\node[font=\scriptsize] at (-0.24,-0.24){$0$};
				\foreach \x in {-3,-2,-1,1,2,3}{
					\draw(\x,0.06)--(\x,-0.06); \draw(0.06,\x)--(-0.06,\x); }
				
				\draw[very thick, orange!70!black] (-3.5,0)--(0,0)--(3.5,3.5);
				\node[Pbox,font=\scriptsize,anchor=west] at (1.3,2.9)
				{ReLU ($\beta^-\!=\!0$): closing in $(\mathscr{L},\leq)$};
				
				\draw[very thick, teal!60!black] (-3.5,-0.7)--(0,0)--(3.5,3.5);
				\node[Mbox,font=\scriptsize,anchor=east] at (-1.0,-0.6)
				{Leaky ReLU ($\beta^-\!=\!0.2$): dilation in $(\mathscr{L},\medOrd)$};
				
				\draw[very thick, teal!40!black, dashed] (-3.5,-3.5)--(3.5,3.5);
				\node[Mbox,font=\scriptsize,anchor=east] at (-1.5,-2.6)
				{$\beta^+\!=\!\beta^-\!=\!1$ (identity): self-dual dilation};
				
				\node[title] at (-1.0,-5.1)
				{(b) Symmetric max-pooling vs.\ self-dual opening};
				\node[Gbox] (f2)  at (-4.0,-6.1){$f$ (signed)};
				\node[Pbox] (sp)  at ( 1.8,-6.1){$\MaxPool{R}(f^+)-\MaxPool{R}(f^-)$};
				\node[Mbox] (op2) at ( 1.8,-7.8){$\OpenMed{W_R}(f)$};
				
				\draw[arr] (f2)--(sp)
				node[op,midway,font=\scriptsize]{sym.\ pool}
				node[lat,midway]{$\MaxPool{R}(f^+)-\MaxPool{R}(f^-)$};
				\draw[arr] (f2.south) to[out=-90,in=180]
				node[below,note]{self-dual opening} (op2.west);
				\draw[<->, gray!55, semithick] (sp.south)--(op2.north)
				node[right,note,gray!65,midway]
				{equal when $f$ single-sign on each window};
				
				\node[Pbox,font=\scriptsize] at (-1.8,-9.2){Pointwise $(\mathscr{L},\!\leq)$};
				\node[Mbox,font=\scriptsize] at (2.0,-9.2){Median $(\mathscr{L},\!\medOrd)$};
			\end{tikzpicture}
		}
		\caption{Self-dual activation functions and symmetric pooling.
			\emph{(a)} \textcolor{orange!70!black}{Standard ReLU} ($\beta^-=0$):
			a closing in $(\mathscr{L},\leq)$, not a dilation in
			$(\mathscr{L},\medOrd)$ since it zeros all negative values.
			\textcolor{teal!60!black}{Leaky ReLU} ($0<\beta^-<1$): a proper dilation
			in $(\mathscr{L},\medOrd)$, preserving sign and scaling negative values
			(Proposition~\ref{prop:relu_types}(i)).
			Identity ($\beta^+=\beta^-=1$, dashed teal): trivial self-dual dilation.
			\emph{(b)} Symmetric max-pooling vs.\ self-dual opening $\OpenMed{W_R}$:
			the two coincide when $f$ does not change sign within any pooling window.}
		\label{fig:self_dual_activations}
	\end{figure}

	\subsection{Self-dual operators in deep network architectures}
	\label{subsec:self_dual_architectures}
	
	We now reconnect the median-semilattice theory to the
	morphological models of \Cref{sec:cnn_models}, showing how
	self-dual operators resolve the sign-asymmetry of standard
	CNN layers and extend the fixed-point analysis to signed
	feature maps.	
	
	\begin{theorem}[Self-dual Type-I layer]
		\label{thm:self_dual_cnn_opening}
		Let $W \subset E$ be a compact window with $0 \in W$.
		The \emph{self-dual Type-I layer} is the self-dual opening
		in the median inf-semilattice:
		\begin{equation}
			\Phi^{sd}(f) = \OpenMed{W}(f)
			= \DilMed{W}(\ErosMed{W}(f)),
			\label{eq:self_dual_layer}
		\end{equation}
		where $\ErosMed{W}$ is the median-lattice erosion
		(Definition~\ref{def:self_dual_eros}) and $\DilMed{W}$
		its adjoint dilation (Definition~\ref{def:self_dual_open}).
		Then:
		\begin{enumerate}[label=(\roman*)]
			\item $\Phi^{sd}$ is \emph{self-dual}:
			$\Phi^{sd}(-f) = -\Phi^{sd}(f)$
			(Proposition~\ref{prop:self_dual_opening_props}(ii)).
			\item $\Phi^{sd}$ is \emph{idempotent}:
			$\Phi^{sd} \circ \Phi^{sd} = \Phi^{sd}$
			(Proposition~\ref{prop:self_dual_opening_props}(i)).
			\item $\Phi^{sd}$ is \emph{anti-extensive in $\medOrd$}:
			$\Phi^{sd}(f) \medOrd f$, i.e.,
			$|\Phi^{sd}(f)(x)| \leq |f(x)|$ for all $x$
			with the same sign.
			\item The fixed-point set $\mathrm{Fix}(\Phi^{sd})$ is
			closed under negation:
			$f \in \mathrm{Fix}(\Phi^{sd}) \Rightarrow
			-f \in \mathrm{Fix}(\Phi^{sd})$.
			\item On non-negative functions:
			$\Phi^{sd}(f) = \gamma_W(f) = \delta^*_W(\varepsilon_W(f))$,
			the standard flat max-plus opening by $W$
			(Proposition~\ref{prop:self_dual_opening_props}(iv)).
		\end{enumerate}
	\end{theorem}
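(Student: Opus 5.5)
The theorem is essentially a repackaging of Proposition~\ref{prop:self_dual_opening_props} for the layer $\Phi^{sd}=\OpenMed{W}$. My plan is to reduce each item to that proposition and to Lemma~\ref{lem:selfopen_welldefined}. The only genuine work is checking the adjunction that underlies idempotency, together with a concrete description of $\OpenMed{W}$.

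First, Lemma~\ref{lem:selfopen_welldefined} makes $\Phi^{sd}$ well-defined on $\Fun(E,\R)$. Item (i) follows from Proposition~\ref{prop:self_dual_eros_props}(iii) and the definition $\DilMed{W}(-h)=-\ErosMed{W}(h)$ in \eqref{eq:self_dual_dil}, which together give $\Phi^{sd}(-f)=-\Phi^{sd}(f)$. Item (iv) is then immediate: if $\Phi^{sd}(f)=f$, then $\Phi^{sd}(-f)=-\Phi^{sd}(f)=-f$.

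Item (v) uses the positive--negative separation of Proposition~\ref{prop:self_dual_eros_props}(iv). For $f\geq 0$ we have $\ErosMed{W}(f)=\varepsilon_W(f)\geq 0$. On single-sign inputs the order $\medOrd$ coincides with $\leq$, so the $\medOrd$-supremum in $\DilMed{W}$ is the ordinary flat max, i.e. $\delta^*_W$. Hence $\Phi^{sd}(f)=\gamma_W(f)$. By self-duality, the same argument gives $\Phi^{sd}(f)=-\gamma_W(-f)$ for $f\leq 0$.

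Items (ii) and (iii) rest on the adjunction $(\ErosMed{W},\DilMed{W})$ in $(\Fun(E,\R),\medOrd)$ together with Proposition~\ref{prop:adjunction_props}(v). This is the main obstacle. The median structure is only an inf-semilattice, and $\DilMed{W}$ is partial, so the abstract adjunction argument needs care. As a fallback I would verify the two properties directly by splitting $E$ pointwise.

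Call $x$ \emph{positive-stable} if some translate $W+z\ni x$ (in the dilation's reflected convention) lies inside $\{f>0\}$, and define \emph{negative-stable} analogously with $\{f<0\}$. A point cannot be both positive- and negative-stable, since both windows would contain $x$ and $f(x)$ cannot be both positive and negative. At positive-stable points $\Phi^{sd}(f)(x)=\gamma_W(f^+)(x)$, and at negative-stable points it equals $-\gamma_W(f^-)(x)$. At all other points it is $0$, because every eroded value feeding $x$ is $0$.

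Anti-extensivity in $\medOrd$ follows from this description. At positive-stable points, anti-extensivity of the flat opening gives $0\leq\Phi^{sd}(f)(x)\leq f(x)$, with the same sign as $f(x)$; negative-stable points are symmetric, and the value $0=\bot$ is $\medOrd$-below everything. Idempotency follows by applying the same description to $g=\Phi^{sd}(f)$. The sets $\{g>0\}$ and $\{g<0\}$ are unions of the stable windows. On $\{g>0\}$, the restriction $g^+=\gamma_W(f^+)$ restricted to $\{g>0\}$ reproduces the opening structure, so idempotency of $\gamma_W$ gives $\gamma_W(g^+)=g^+$; the negative part is symmetric, and zeros are preserved.

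I expect the delicate step to be confirming that the zero set of $g$ does not create new mixed-sign windows that alter values at stable points. This holds because $0$ acts as a neutral separator: it does not change the $\medOrd$-infimum within a same-sign window beyond clipping to $0$, and flat openings of $f^+$ are determined on unions of windows inside the support of $f^+$.
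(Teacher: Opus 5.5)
Your argument for (i) does not work as written, and this is a genuine gap. The two facts you cite compose to
\[
\Phi^{sd}(-f)=\DilMed{W}\bigl(-\ErosMed{W}(f)\bigr)=-\ErosMed{W}\bigl(\ErosMed{W}(f)\bigr),
\]
which is $-\ErosMed{W}\circ\ErosMed{W}$ applied to $f$, not $-\DilMed{W}\circ\ErosMed{W}$. A counterexample: take $E=\Z$, $W=\{0,1\}$, and $f(0)=1$, $f(1)=2$, $f=0$ elsewhere.
\begin{itemize}
\item $\ErosMed{W}(f)$ is $1$ at $0$ and $0$ elsewhere, so your chain gives $\Phi^{sd}(-f)\equiv 0$.
\item But $\Phi^{sd}(f)=\gamma_W(f)$ equals $1$ at both $0$ and $1$, so $\Phi^{sd}(-f)=-\Phi^{sd}(f)\neq 0$.
\end{itemize}

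The culprit is the first equality in \eqref{eq:self_dual_dil}. Negation \emph{preserves} $\preceq$: from $0\le s\le t$ you get $-t\le -s\le 0$, i.e.\ $-s\preceq -t$. This contradicts the order reversal claimed in Proposition~\ref{prop:median_selfdual}. Consequently, by Proposition~\ref{prop:self_dual_eros_props}(iii), $-\ErosMed{W}(-g)=\ErosMed{W}(g)$. So that formula would force $\DilMed{W}=\ErosMed{W}$, which contradicts the description of $\DilMed{W}$ as the $\preceq$-supremum.

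What (i) actually needs is $\DilMed{W}(-h)=-\DilMed{W}(h)$. This is true, because an order automorphism commutes with $\preceq$-suprema. The paper's own proof of (i) runs through the same chain (it equates $-\ErosMed{W}(\ErosMed{W}(f))$ with $-\OpenMed{W}(f)$) and has the same defect. Your (iv), and the $f\le 0$ half of your (v), inherit this gap.

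For (ii) and (iii), your fallback takes a different and sounder route than the paper. The paper cites Proposition~\ref{prop:adjunction_props}(v) for an adjunction ``verified'' through the same faulty order reversal, and it ignores that $\DilMed{W}$ is only partial on an inf-semilattice. Your stable-point description is correct, and it is best packaged as a global identity. Every nonzero value $\ErosMed{W}(f)(z)$ with $x\in W(z)$ has the sign of $f(x)$. Hence, for every $f$,
\[
\ErosMed{W}(f)=\varepsilon_W(f^+)-\varepsilon_W(f^-),\qquad \Phi^{sd}(f)=\gamma_W(f^+)-\gamma_W(f^-),
\]
where the two terms are non-negative with disjoint supports, since $\gamma_W(f^\pm)\le f^\pm$.

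This identity dissolves your ``delicate step'' and gives everything you need:
\begin{itemize}
\item \emph{Idempotency.} For $g=\Phi^{sd}(f)$ you have $g^\pm=\gamma_W(f^\pm)$, so $\Phi^{sd}(g)=g$ by idempotency of $\gamma_W$. No zero-set bookkeeping is needed.
\item \emph{Anti-extensivity.} Item (iii) is just $0\le\gamma_W(f^\pm)\le f^\pm$.
\item \emph{Self-duality.} Since $(-f)^\pm=f^\mp$, the same identity gives (i), hence (iv) and (v) for both signs.
\end{itemize}
Replace your argument for (i) by this identity and the proof is complete.
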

	
	\begin{proof}
		Properties (i)--(iv) are direct consequences of
		Proposition~\ref{prop:self_dual_opening_props} applied to
		$\OpenMed{W}$.
		(i) $\OpenMed{W}(-f) = \DilMed{W}(\ErosMed{W}(-f))
		= \DilMed{W}(-\ErosMed{W}(f)) = -\ErosMed{W}(\ErosMed{W}(f))
		= -\OpenMed{W}(f)$, using $\ErosMed{W}(-f)=-\ErosMed{W}(f)$
		(Proposition~\ref{prop:self_dual_eros_props}(iii)) and
		$\DilMed{W}(-h) = -\ErosMed{W}(h)$
		(Definition~\ref{def:self_dual_open}, eq.~\eqref{eq:self_dual_dil}).
		(ii) Idempotency of the opening follows from the adjunction
		$(\ErosMed{W}, \DilMed{W})$:
		Proposition~\ref{prop:adjunction_props}(v).
		(iii) Anti-extensivity in $\medOrd$:
		$\ErosMed{W}(f)(x) \preceq f(x)$ (Definition~\ref{def:self_dual_eros},
		the $\preceq$-infimum over $W$ is $\preceq$-smaller than
		any element, including $f(x)$ since $0\in W$);
		$\DilMed{W}$ restores the amplitude within $W$ but does
		not exceed $f$; hence $\OpenMed{W}(f)\medOrd f$.
		(iv) If $\OpenMed{W}(f)=f$ then $\OpenMed{W}(-f)
		= -\OpenMed{W}(f) = -f$.
		(v) For $f\geq 0$: by
		Proposition~\ref{prop:self_dual_opening_props}(iv).
	\end{proof}
	
	\begin{remark}[Why the composition $\OpenMed{W}\circ\varepsilon_b$
		is not self-dual]
		\label{rem:selfdual_composition}
		This is another example of \emph{cross-lattice} composition: $\varepsilon_b$
		is an erosion in the pointwise max-plus lattice
		$(\mathscr{L},\leq)$, while $\OpenMed{W}$ is an opening in
		the median lattice $(\mathscr{L},\medOrd)$.
		Self-duality of $\OpenMed{W}\circ\varepsilon_b$ would require
		$\varepsilon_b(-f) = -\varepsilon_b(f)$, but $\varepsilon_b(-f)
		= -\delta_{b^*}(f)$ (standard max-plus duality), so the
		condition becomes $\delta_{b^*}(f) = \varepsilon_b(f)$ for
		all $f$, which holds only when $W$ is a single point
		(the identity map).	
	\end{remark}

	\begin{remark}[Self-dual status of linear convolution]
		\label{rem:conv_selfdual}
		Before addressing the ResNet and UNet cases, it is important
		to note that linear convolution $f \mapsto f * k$ is
		\emph{algebraically self-dual} for any kernel $k$ satisfying
		$\sum_x k(x) = 0$ (zero mean), in the sense that
		$(-f)*k = -(f*k)$, i.e., $\Phi(-f) = -\Phi(f)$.
		This holds trivially for any linear operator.
		However, algebraic self-duality is \emph{not} the same as
		self-duality in the median lattice $(\Fun(E,\R),\medOrd)$.
		Median-lattice self-duality requires the operator to preserve
		the median ordering: $f \medOrd g \Rightarrow \Phi(f)\medOrd\Phi(g)$
		and $\Phi(-f) = -\Phi(f)$.
		The convolution $f\mapsto f*k$ is not increasing in
		$(\Fun(E,\R),\medOrd)$ in general (it mixes positive and
		negative values across the support of $k$), so it is not a
		morphological operator in the median lattice.
		
		In the context of ResNet and UNet, the spectral operator
		$\SigSpec_\ell = \sum_i w_{\ell,i}\ConvEros{k_{\ell,i}}$
		is a linear operator.
		Since every linear operator satisfies $\SigSpec_\ell(-f)
		= -\SigSpec_\ell(f)$, it is algebraically self-dual in the
		pointwise sense without any condition on the kernels or weights.
		Consequently, the composition $\OpenMed{W}\circ\SigSpec_\ell$
		is automatically self-dual in the median lattice:
		\[
		\OpenMed{W}(\SigSpec_\ell(-f))
		= \OpenMed{W}(-\SigSpec_\ell(f))
		= -\OpenMed{W}(\SigSpec_\ell(f)),
		\]
		using linearity of $\SigSpec_\ell$ and self-duality of
		$\OpenMed{W}$ (Theorem~\ref{thm:self_dual_cnn_opening}(i)).
		No kernel symmetry condition is required.
		
		What is non-trivial is whether $\SigSpec_\ell$ is
		a morphological operator in $(\Fun(E,\R),\medOrd)$, i.e.,
		whether it is increasing in the median ordering.
		This requires $\SigSpec_\ell$ to preserve sign, which holds
		when the kernels and weights are non-negative
		($k_{\ell,i}\geq 0$, $w_{\ell,i}\geq 0$): in that case
		$\SigSpec_\ell$ maps non-negative functions to non-negative
		functions and negative functions to negative functions,
		hence preserves the median ordering.
		For general (signed) kernels or weights, $\SigSpec_\ell$
		is not a median-lattice operator: it mixes positive and
		negative values, so the composition
		$\OpenMed{W}\circ\SigSpec_\ell$ is self-dual but not
		increasing in $\medOrd$.
		In practice, this means the self-dual layer should be
		applied to the output of a non-negative spectral operator
		(e.g., after a softplus or ReLU on the weights) or directly
		to the signed feature map without the spectral step.
		
		The following remark and proposition therefore treat the
		self-dual layer $\Phi^{sd} = \OpenMed{W}$ as operating
		\emph{after} $\SigSpec_\ell$, with the understanding that
		the composition $\OpenMed{W}\circ\SigSpec_\ell$ is
		self-dual if and only if $\SigSpec_\ell$ commutes with
		negation (which holds when all $k_{\ell,i}$ are symmetric
		and the weights satisfy the above condition, or when
		$\SigSpec_\ell(f) = \SigSpec_\ell(-f) + c$ for some
		constant $c$).
	\end{remark}

	\begin{remark}[Self-dual ResNet and convergence for signed maps]
		\label{rem:self_dual_resnet}
		The convergence analysis of \Cref{thm:convergence} applies
		to non-negative $f$ under the standard Type-I opening
		$\gamma^{\mathrm{M}}_b$.
		For signed feature maps (residuals in ResNet, normalised
		feature maps, Fourier coefficients),
		Theorem~\ref{thm:self_dual_cnn_opening} provides the correct
		extension: the self-dual layer $\Phi^{sd} = \OpenMed{W}$
		is an idempotent operator in $(\Fun(E,\R),\medOrd)$, and the
		iteration $f^{(n+1)} = \Phi^{sd}(f^{(n)})$ converges in one
		step to $\Phi^{sd}(f)$ (by idempotency), for any signed
		input $f$.
		
		In a morphological ResNet block using $\Phi^{sd}$, the
		residual function $\mathcal{F}$ is trained to approximate
		the pointwise difference $\OpenMed{W}(f)-f$
		(the ``median top-hat'' of $f$, the correction needed
		to reach the fixed point from $f$).
		Since $\OpenMed{W}$ is anti-extensive in $\medOrd$
		($|\OpenMed{W}(f)(x)|\leq|f(x)|$ with the same sign),
		the correction $\OpenMed{W}(f)(x)-f(x)$ has opposite
		sign to $f(x)$: it is a signed shrinkage.
		Under this approximation, the block computes:
		\begin{equation}
			\mathcal{F}(f) + f
			\approx (\OpenMed{W}(f) - f) + f
			= \OpenMed{W}(f),
			\label{eq:self_dual_resnet_block}
		\end{equation}
		where the subtraction and addition are pointwise in $\R$
		(not lattice operations).
		Stacking $n$ such blocks converges in one step:
		$f^{(1)}=\OpenMed{W}(f)$ is already a fixed point of
		$\OpenMed{W}$, so $f^{(2)}=\OpenMed{W}(f^{(1)})=f^{(1)}$,
		and all subsequent blocks leave it unchanged.
		
		Note on the spectral operator: the residual function
		$\mathcal{F}$ in practice involves a convolution
		$\SigSpec_\ell$.
		Since $\SigSpec_\ell$ is linear, it satisfies
		$\SigSpec_\ell(-f)=-\SigSpec_\ell(f)$ automatically,
		so the composition $\OpenMed{W}\circ\SigSpec_\ell$ is
		self-dual in the algebraic sense without any symmetry
		condition on the kernels.
		However, $\SigSpec_\ell$ with signed kernels or weights
		is not increasing in $(\Fun(E,\R),\medOrd)$, so the
		composition is self-dual but not a median-lattice opening.
		The approximation~\eqref{eq:self_dual_resnet_block} is
		a training condition (the network must learn
		$\mathcal{F}\approx\OpenMed{W}-\mathrm{id}$ pointwise),
		not a structural consequence of the architecture.
	\end{remark}

	\begin{proposition}[Fixed-point set of the self-dual opening]
		\label{prop:selfdual_fix_mmbb}
		Let $\OpenMed{W}$ be the self-dual opening of
		\Cref{def:self_dual_open} over a compact window
		$W \subset E$ with $0 \in W$.
		\begin{enumerate}[label=(\roman*)]
			\item \emph{Fixed-point set equals image}:
			\begin{equation}
				\mathrm{Fix}(\OpenMed{W})
				= \mathrm{Image}(\OpenMed{W})
				= \bigl\{f : \OpenMed{W}(f) = f\bigr\},
				\label{eq:selfdual_fix_eq_image}
			\end{equation}
			by the general opening identity
			(Proposition~\ref{prop:adjunction_props}(v)) applied to
			$\OpenMed{W}$ in $(\Fun(E,\R),\medOrd)$.
			
			\item \emph{Explicit characterisation}:
			$f \in \mathrm{Fix}(\OpenMed{W})$ if and only if
			\begin{equation}
				f(x) = \DilMed{W}(\ErosMed{W}(f))(x)
				= \bigvee^{\medOrd}_{y \in W}
				\ErosMed{W}(f)(x-y)
				\quad \text{for all } x \in E,
				\label{eq:selfdual_fix_explicit}
			\end{equation}
			i.e., the median dilation of the median erosion of $f$
			recovers $f$ exactly.
			This is the condition that the amplitude lost by the
			$\medOrd$-contracting erosion $\ErosMed{W}$ is fully
			restored by the subsequent $\medOrd$-expanding dilation
			$\DilMed{W}$: there is no net change in $f$.
			
			\item \emph{Symmetry under negation}:
			$\mathrm{Fix}(\OpenMed{W})$ is closed under negation:
			\begin{equation}
				f \in \mathrm{Fix}(\OpenMed{W})
				\;\Rightarrow\;
				-f \in \mathrm{Fix}(\OpenMed{W}).
				\label{eq:selfdual_fix_neg}
			\end{equation}
			This follows from the self-duality of $\OpenMed{W}$
			(Theorem~\ref{thm:self_dual_cnn_opening}(i)):
			$\OpenMed{W}(-f) = -\OpenMed{W}(f) = -f$.
			In contrast, $\mathrm{Fix}(\gamma^{\mathrm{M}}_b)$ for
			a non-negative structuring function $b \geq 0$ consists
			entirely of non-negative functions (the max-plus erosion
			$\varepsilon_b$ maps negative values to $-\infty$), so
			it is never closed under negation for non-trivial $b$.
			
			\item \emph{Learning interpretation}:
			training a self-dual morphological layer to minimise
			the reconstruction error $\|f_i - \OpenMed{W}(f_i)\|$
			over the window $W$ is equivalent to finding a window
			$W$ such that each training signal $f_i$ satisfies
			condition~\eqref{eq:selfdual_fix_explicit}, i.e., lies
			in $\mathrm{Image}(\OpenMed{W})$.
			By (iii), if $f_i$ is a fixed point then $-f_i$ is
			automatically also a fixed point: the learning problem
			is symmetric in $f_i$ and $-f_i$.
			This gives a strong inductive bias towards window
			shapes that capture sign-symmetric structures in the
			training data.
		\end{enumerate}
	\end{proposition}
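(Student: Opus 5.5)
The plan is to derive the four items in order, leaning on the structural facts already established for $\OpenMed{W}$. Everything rests on Proposition~\ref{prop:self_dual_opening_props}(i)--(ii) and Theorem~\ref{thm:self_dual_cnn_opening}, so each step is close to bookkeeping.

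\textbf{Item (i).} We prove two inclusions.
\begin{itemize}
\item For $\mathrm{Image}\subseteq\mathrm{Fix}$: take $h=\OpenMed{W}(g)$. Idempotency gives $\OpenMed{W}(h)=\OpenMed{W}(\OpenMed{W}(g))=\OpenMed{W}(g)=h$.
\item For $\mathrm{Fix}\subseteq\mathrm{Image}$: a fixed point satisfies $f=\OpenMed{W}(f)$, so it is its own preimage.
\end{itemize}
This is purely set-theoretic once idempotency is known. The only care needed is that $(\Fun(E,\R),\medOrd)$ is an inf-semilattice rather than a complete lattice. So we cite the adjunction $(\ErosMed{W},\DilMed{W})$ from Proposition~\ref{prop:self_dual_opening_props}(i), together with the well-definedness Lemma~\ref{lem:selfopen_welldefined}, rather than Proposition~\ref{prop:adjunction_props} directly.

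\textbf{Item (ii).} This just unfolds the definitions. By Definition~\ref{def:self_dual_open}, $\OpenMed{W}=\DilMed{W}\circ\ErosMed{W}$, and $\DilMed{W}(h)(x)$ is the $\medOrd$-supremum of $h$ over the window. Writing the window in translated form $\{x-y: y\in W\}$, which matches the dilation convention of \eqref{eq:self_dual_dil} and Proposition~\ref{prop:sym_pool_opening}, gives \eqref{eq:selfdual_fix_explicit}. Lemma~\ref{lem:selfopen_welldefined} guarantees this supremum exists for $h=\ErosMed{W}(f)$. The interpretive sentence follows from two facts:
\begin{itemize}
\item anti-extensivity, $\OpenMed{W}(f)\medOrd f$, holds always;
\item equality holds exactly when no amplitude is lost.
\end{itemize}

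\textbf{Item (iii).} The closure of $\mathrm{Fix}(\OpenMed{W})$ under negation is Theorem~\ref{thm:self_dual_cnn_opening}(i): if $\OpenMed{W}(f)=f$, then $\OpenMed{W}(-f)=-\OpenMed{W}(f)=-f$.

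The contrast claim for $\gamma^{\mathrm{M}}_b$ is the main obstacle. As literally stated, it is not a consequence of any earlier result. For real-valued $f$ and finite $b\ge0$ on compact $W$, $\varepsilon_b$ does not send negative values to $-\infty$; for instance, constants are fixed by flat openings, including negative constants. I would therefore prove a corrected statement. First try showing that for non-flat $b$, the fixed-point set is not closed under negation: take $f=\gamma^{\mathrm{M}}_b(g)$ with $g$ a single bump, observe that $f$ has the "shape of $b^*$" locally, and note that $-f$ is then a pit, which the opening fills from below (a strict change). If this fails, I would restrict the claim to the setting of Theorem~\ref{thm:convergence}(iii), i.e.\ non-negative inputs, where the fixed points considered are non-negative. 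In either case I would flag this point explicitly.

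\textbf{Item (iv).} This is a reformulation. By (i), a zero reconstruction error $\|f_i-\OpenMed{W}(f_i)\|=0$ is equivalent to $f_i\in\mathrm{Fix}=\mathrm{Image}$. Self-duality makes the loss invariant under $f_i\mapsto-f_i$, since $\|{-f_i}-\OpenMed{W}(-f_i)\|=\|f_i-\OpenMed{W}(f_i)\|$. The inductive-bias remark is heuristic and needs no proof.
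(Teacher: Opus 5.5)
You handle (i), (ii), the closure half of (iii), and (iv) exactly as the paper does. Idempotency gives $\mathrm{Image}\subseteq\mathrm{Fix}$ and the converse is trivial. Item (ii) unfolds $\OpenMed{W}=\DilMed{W}\circ\ErosMed{W}$. Closure under negation is self-duality. Item (iv) is a restatement; your remark that the loss is invariant under $f_i\mapsto -f_i$ is a clean way to put it.

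You are also right that the contrasting sentence about $\mathrm{Fix}(\gamma^{\mathrm{M}}_b)$ is false as written. $\varepsilon_b$ does not send finite negative values to $-\infty$, and for flat $b$ every constant $c<0$ is a fixed point. The paper's proof does not establish that sentence either: it ends by asserting that the positive and negative parts of the fixed-point set are ``generally distinct and not related by negation,'' which is not an argument.

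The gap is in your repair of that sentence. Your witness relies on the opening ``filling a pit from below.'' But $\gamma^{\mathrm{M}}_b$ is anti-extensive, so $\gamma^{\mathrm{M}}_b(-f)\le -f$: an opening only lowers values, i.e.\ removes peaks, and never raises a pit. Take flat $b\equiv 0$ (which satisfies $b\ge 0$, $b(0)=0$) and $W$ an interval in one dimension. The negative of an opened unimodal bump is a pit on a flat background. Every point of it lies in a translate of $W$ running up one of its slopes, so it is again a fixed point, and your first attempt fails. Your fallback of restricting to non-negative inputs does not help either: it only makes $-f$ leave the domain, and says nothing about closure under negation in $(\Fun(E,\Rbar),\le)$.

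The witness has to go the other way: take $f\in\mathrm{Fix}$ with a \emph{narrow pit}, so that $-f$ has a narrow peak the opening shaves off. For the flat opening $\gamma_W$ with $W$ compact and not a singleton, let $f=-\mathbf{1}_{\{x_0\}}$.
\begin{itemize}
\item Every translate of $W$ through $x_0$ has infimum $-1=f(x_0)$.
\item For $x\neq x_0$, some translate through $x$ avoids $x_0$, because a compact set contains no nonzero translate of itself.
\end{itemize}
Hence $\gamma_W(f)=f$. But every translate through $x_0$ contains a second point where $-f=0$, so $\gamma_W(-f)(x_0)=0\neq 1=-f(x_0)$.

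Non-flat $b$ needs its own construction of the same type: a point of $-f$ that no translate of $b$ touches from below. For example, take the opening $\delta_b\circ\varepsilon_b$ (with the paper's formulas for $\varepsilon_b$ and $\delta_b$) on $\Z$, with $W=\{0,1\}$ and $b(0)=0<b(1)=a$. The step equal to $h>a$ on $\{n\le -1\}$ and $0$ on $\{n\ge 0\}$ is fixed, while its negation is lowered at $n=0$.
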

	
	\begin{proof}
		(i) The identity $\mathrm{Fix}(\gamma)=\mathrm{Image}(\gamma)$
		for any opening $\gamma$ is Proposition~\ref{prop:adjunction_props}(v):
		$\gamma(f)=f \iff f=\gamma(g)$ for $g=f$ (trivially); and
		$f\in\mathrm{Image}(\gamma) \Rightarrow \gamma(f)=\gamma(\gamma(g))
		=\gamma(g)=f$ by idempotency.
		
		(ii) By definition of $\OpenMed{W}=\DilMed{W}\circ\ErosMed{W}$
		(\Cref{def:self_dual_open}):
		$\OpenMed{W}(f)=f \iff \DilMed{W}(\ErosMed{W}(f))=f$,
		which is~\eqref{eq:selfdual_fix_explicit}.
		The $\medOrd$-supremum formula
		$\DilMed{W}(h)(x)=\bigvee^{\medOrd}_{y\in W}h(x-y)$
		is \Cref{eq:self_dual_dil}.
		
		(iii) If $\OpenMed{W}(f)=f$, then by self-duality
		(Theorem~\ref{thm:self_dual_cnn_opening}(i)):
		$\OpenMed{W}(-f)=-\OpenMed{W}(f)=-f$,
		so $-f\in\mathrm{Fix}(\OpenMed{W})$.
		For the Type-I comparison: with $b\geq 0$ and $b(0)=0$,
		$\varepsilon_b(f)(x)=\inf_{y\in W}\{f(x+y)-b(y)\}$;
		for $f<0$ and $b\geq 0$, every term is $\leq f(x+y)<0$
		while $b(y)\geq 0$, giving $\varepsilon_b(f)(x)<0$;
		but the subsequent dilation $\delta_{b^*}$ of a strictly
		negative function yields a negative function, so
		$\gamma^{\mathrm{M}}_b(f)<0$ for $f<0$ --- yet the
		fixed points of $\gamma^{\mathrm{M}}_b$ for non-negative $b$
		are functions satisfying $\delta_{b^*}(\varepsilon_b(f))=f$,
		which for $f<0$ requires $b^*$ to map negative values to
		negative values, possible only with specific signed $b$.
		For $b\geq 0$, $\varepsilon_b$ maps $f<0$ to values
		$\leq -\inf_y b(y) \leq 0$, and $\gamma^{\mathrm{M}}_b(f)
		\leq 0$; but $\mathrm{Fix}(\gamma^{\mathrm{M}}_b)\cap
		\{f<0\}$ and $\mathrm{Fix}(\gamma^{\mathrm{M}}_b)\cap
		\{f>0\}$ are generally distinct and not related by negation,
		confirming non-closure under negation.
		
		(iv) Immediate from (i) and (iii).
	\end{proof}
	
	\begin{remark}[Learning basis in the self-dual setting]
		\label{rem:dimitrova_selfdual}
		Proposition~\ref{prop:selfdual_fix_mmbb} gives the
		self-dual analogue of the learning interpretation
		developed for Type-I layers in \Cref{sec:fixed_points}.
		
		In the general (non-self-dual) Type-I case
		(cf.\ \Cref{rem:self_dual_design}), training
		$\gamma^{\mathrm{M}}_b$ on data $\{f_i\}$ implicitly
		learns a structuring element $b$ whose MMBB basis
		$\Bas(b)$ represents the training data.
		Different initialisations of $b$ lead to different
		basins of attraction in structuring-element space,
		hence to different learned bases, the algebraic
		explanation for the initialisation sensitivity
		reported in~\cite{dimitrova2025}.
		
		In the \emph{self-dual} case, the additional symmetry
		of Proposition~\ref{prop:selfdual_fix_mmbb}(iii)--(iv)
		imposes a constraint absent in the Type-I case:
		the learned window $W$ must simultaneously represent
		both $f_i$ and $-f_i$ as fixed points.
		This cuts the effective search space in half and
		gives a strong inductive bias towards representations
		that are invariant under sign reversal.
		For image data where this symmetry is appropriate
		(difference images, residuals, normalised feature maps
		after batch normalisation), self-dual initialisation, 
		i.e., initialising the structuring element near
		$b\equiv 0$ so that $\gamma^{\mathrm{M}}_b \approx
		\OpenMed{W}$, exploits this bias from the start of
		training.
		
		The differentiability aspect of~\cite{dimitrova2025}
		also has a sharper form in the self-dual setting.
		The hard median erosion $\ErosMed{W}$ is
		non-differentiable at points where the $\medOrd$-infimum
		is achieved by multiple elements of $W$ with competing
		signs (a ``sign-change boundary'' in the window).
		Smooth approximations that regularise $\medInf$
		(analogous to the softplus approximations
		of~\cite{hermary2022smooth,bottenmuller2025dgmm} but in the median lattice)
		provide a well-defined gradient at these boundaries
		and enable convergence to the nearest element of
		$\mathrm{Fix}(\OpenMed{W})$ via gradient descent.
		The key difference from the non-self-dual case is that
		the sign-change boundaries are \emph{symmetric}: a
		non-smooth point at $f$ produces a corresponding
		non-smooth point at $-f$, so smooth approximations must
		handle both simultaneously or risk breaking the
		self-dual fixed-point structure.
	\end{remark}

	\begin{remark}
	The development of a complete median MMBB
	representation theory remains an open problem.
	Its resolution would provide a principled basis for
	learning self-dual morphological layers: as established
	in Proposition~\ref{prop:selfdual_fix_mmbb}(iv), training
	a self-dual layer is equivalent to learning a median basis
	for the training data, but without an explicit construction
	of $\Bas^{\medOrd}(\Psi)$ the basis can only be
	characterised implicitly through the fixed-point condition.
	A closed-form virtual basis (point~(v) above) would make
	this learning problem computationally explicit and connect
	it to the quantisation capacity analysis
	\end{remark}
	
	\begin{proposition}[Self-dual UNet reconstruction]
		\label{prop:self_dual_unet}
		The \emph{self-dual morphological UNet} at $n$ levels
		replaces the max-plus encoder $\varepsilon^{\downarrow}_{R,b_\ell}$
		and decoder $\delta^{*\uparrow}_{R,b_\ell}$ of
		\Cref{def:unet} with the median-lattice counterparts:
		the encoder uses the \emph{median erosion-decimation}
		$(\ErosMed{W_\ell})^{\downarrow R}$ (the $\preceq$-infimum
		over $W_\ell$ followed by stride-$R$ subsampling), and
		the decoder uses its adjoint \emph{median dilation-interpolation}
		$(\DilMed{W_\ell})^{*\uparrow R}$ (adjoint dilation in
		$(\Fun(E,\R),\medOrd)$ followed by upsampling).
		The spectral operators $\SigSpec_\ell$ and $\SigSpecDual\ell$
		are assumed to commute with negation (symmetric kernels and
		weights, \Cref{rem:conv_selfdual}).
		The reconstruction operator $\mathrm{UNet}^{sd}_n$ of
		the resulting self-dual UNet satisfies:
		\begin{enumerate}[label=(\roman*)]
			\item \emph{Self-dual}: $\mathrm{UNet}^{sd}_n(-f) =
			-\mathrm{UNet}^{sd}_n(f)$.
			\item \emph{Idempotent when $\SigSpec_\ell = \mathrm{id}$}:
			the spectral operators are absent (pure morphological
			pyramid), $\mathrm{UNet}^{sd}_n \circ \mathrm{UNet}^{sd}_n
			= \mathrm{UNet}^{sd}_n$.
			When $\SigSpec_\ell \not\equiv \mathrm{id}$, the full
			self-dual UNet is \emph{not} idempotent, because
			$\SigSpec_\ell(\OpenMed{W_\ell}(h)) \neq
			\OpenMed{W_\ell}(\SigSpec_\ell(h))$ in general
			(the convolution and the self-dual opening do not commute,
			by the same cross-lattice argument as
			\Cref{thm:cnn_not_opening}).
			\item \emph{Anti-extensive in $\medOrd$} (without skip
			connections, when $\SigSpec_\ell = \mathrm{id}$):
			$\mathrm{UNet}^{sd}_n(f) \medOrd f$.
			For general $\SigSpec_\ell$, the output may not satisfy
			$\medOrd$-anti-extensivity, since $\SigSpec_\ell$ is
			not an operator in $(\Fun(E,\R),\medOrd)$.
		\end{enumerate}
	\end{proposition}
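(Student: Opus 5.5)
The plan is to treat $\mathrm{UNet}^{sd}_n$ as a composition of three kinds of blocks: the median erosion-decimations $(\ErosMed{W_\ell})^{\downarrow R}$, the spectral operators $\SigSpec_\ell,\SigSpecDual_\ell$, and the median dilation-interpolations $(\DilMed{W_\ell})^{*\uparrow R}$. Each item is then verified block by block.

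For (i), we first show that each median block commutes with negation. By Proposition~\ref{prop:self_dual_eros_props}(iii), $\ErosMed{W_\ell}(-f)=-\ErosMed{W_\ell}(f)$. By \eqref{eq:self_dual_dil}, $\DilMed{W_\ell}(-g)=-\DilMed{W_\ell}(g)$. The decimation $\sigma^\downarrow_R$ commutes with negation, and so does the upsampling $\sigma^{\uparrow,c}_R$, since the natural fill value in the median setting is $c=0=\bot$. The spectral operators commute with negation by hypothesis, which is automatic for linear maps (Remark~\ref{rem:conv_selfdual}). A composition of operators each commuting with $f\mapsto -f$ commutes with it, so a direct induction over the $n$ encoder and $n$ decoder levels gives $\mathrm{UNet}^{sd}_n(-f)=-\mathrm{UNet}^{sd}_n(f)$.

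For (ii) with $\SigSpec_\ell=\mathrm{id}$, we mimic Theorem~\ref{thm:unet_idempotent}. The first step is to show that each pair $\bigl((\ErosMed{W_\ell})^{\downarrow R},(\DilMed{W_\ell})^{*\uparrow R}\bigr)$ is an adjunction in $(\Fun(E,\R),\medOrd)$. This follows as in Proposition~\ref{prop:adjoint_pyramid}: compose the median adjunction $(\ErosMed{W},\DilMed{W})$ of Proposition~\ref{prop:self_dual_opening_props} with the decimation/zero-fill-upsampling adjunction. The latter holds because $0$ is the least element, so $\sigma^\downarrow_R h\medOrd g \iff h\medOrd \sigma^{\uparrow,0}_R g$. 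Adjunctions compose, so the $n$-level encoder and decoder form an adjoint pair. Then $\mathrm{UNet}^{sd}_n$ is the resulting opening, which is idempotent and anti-extensive by Proposition~\ref{prop:adjunction_props}(v). This gives (ii) and also (iii).

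\textbf{The main obstacle.} The only genuinely delicate point is that $(\Fun(E,\R),\medOrd)$ is merely an inf-semilattice, so Proposition~\ref{prop:adjunction_props}(v) cannot be quoted blindly. The idempotency argument $\gamma\circ\gamma=\gamma$ uses only the Galois equivalence and monotonicity, namely $\varepsilon\delta\varepsilon=\varepsilon$. I would verify that this equivalence holds on the domain where the dilations are defined. That domain is guaranteed by the sign-consistency argument of Lemma~\ref{lem:selfopen_welldefined}, applied level by level (the zero fill preserves sign-consistency).

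For the negative parts of (ii) and (iii), I would exhibit a counterexample rather than argue abstractly. Take a one-level UNet with a nonnegative smoothing kernel $\SigSpec_1=f\mapsto f*k$ and a sign-consistent $f\ge 0$. On such inputs everything reduces, via Proposition~\ref{prop:self_dual_opening_props}(iv), to the flat max-plus opening composed with a convolution. A peaked $f$ then shows both that $\gamma_W\circ(\cdot*k)$ fails idempotency, as in Theorem~\ref{thm:cnn_not_opening}, and that $\cdot*k$ can increase $|f|$ at some point when $\sum k>1$, breaking $\medOrd$-anti-extensivity.
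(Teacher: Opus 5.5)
Your route matches the paper's: you get (i) by composing blocks that commute with negation, and (ii)--(iii) by transporting the median adjunction to the pyramid and reading off an opening. Where you deviate, you are tighter than the paper's sketch. You compose the adjunctions across all $n$ levels, as in \Cref{thm:unet_idempotent}. The paper instead appeals to idempotency of $\OpenMed{W_\ell}$ scale by scale, but the level-$\ell$ pyramid opening $\DilMed{W_\ell}\circ\sigma^{\uparrow,0}_R\circ\sigma^\downarrow_R\circ\ErosMed{W_\ell}$ is not $\OpenMed{W_\ell}$, and the nested reconstruction $\delta_1\circ\cdots\circ\delta_n\circ\varepsilon_n\circ\cdots\circ\varepsilon_1$ is not a composite of per-level openings. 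You also note that the zero fill is what makes upsampling odd, and you give counterexamples for the negative claims, which the paper only asserts.

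One step fails as written. The equivalence $\sigma^\downarrow_R h\medOrd g \iff h\medOrd \sigma^{\uparrow,0}_R g$ is false. Its right-hand side forces $h=0$ off the coarse grid, so any $h$ with a nonzero off-grid value and $g=\sigma^\downarrow_R h$ breaks it. Precisely because $0$ is the least element, zero-fill upsampling is the \emph{lower} adjoint: $\sigma^{\uparrow,0}_R g\medOrd h \iff g\medOrd \sigma^\downarrow_R h$. Your orientation would need a top element as fill value, and the median inf-semilattice has none. You used the orientation of \Cref{def:adjunction}, which swaps the standard roles. The median adjunction asserted in the proof of \Cref{prop:self_dual_opening_props} is written the same way, and read literally it would make $\OpenMed{W}$ extensive, so verify it directly rather than cite it.

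Orient everything with the erosion-type maps as upper adjoints. The one-line check is that $\DilMed{\check W}(g)(x)=\bigvee^{\medOrd}_{y\in W}g(x-y)$ exists and is $\medOrd f$ iff $g\medOrd\ErosMed{W}(f)$. Composing, $\DilMed{\check W}\circ\sigma^{\uparrow,0}_R$ (upsample, then dilate) is the lower adjoint of $\sigma^\downarrow_R\circ\ErosMed{W}$, and iterating over levels gives your composite pair. In this form the partial-definedness issue resolves itself, with no need for \Cref{lem:selfopen_welldefined}. Putting $g=\varepsilon(f)$ shows that $\delta(\varepsilon(f))$ exists and is $\medOrd f$, and $\varepsilon\delta\varepsilon=\varepsilon$ follows as usual.

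For the negative claims, note that on nonnegative inputs the reduction is to the flat \emph{pyramid} opening $\gamma$, not to $\gamma_W$. The cleanest witness is $\SigSpec_1=c\,\mathrm{id}$ (kernel $k=c\,\delta_0$, $c>1$) with the decoder spectral operator equal to the identity. The median operators are positively homogeneous, so $\mathrm{UNet}^{sd}_1=c\,\gamma$. Hence $\mathrm{UNet}^{sd}_1\circ\mathrm{UNet}^{sd}_1=c^2\gamma\neq c\,\gamma$, and $\mathrm{UNet}^{sd}_1(f)=cf\not\medOrd f$ for every nonzero $f\in\mathrm{Fix}(\gamma)$.
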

	
	\begin{proof}
		(i)--(iii) follow from the self-duality and idempotency of
		$\OpenMed{W_\ell}$ (Proposition~\ref{prop:self_dual_opening_props}
		and Theorem~\ref{thm:self_dual_cnn_opening}) applied at each
		scale $\ell = 1,\ldots,n$.
		At each level, the composition
		$(\DilMed{W_\ell})^{*\uparrow R} \circ (\ErosMed{W_\ell})^{\downarrow R}$
		is the self-dual opening $\OpenMed{W_\ell}$ on the
		decimated grid (the adjunction in the median lattice transfers
		to the pyramid by \Cref{prop:adjoint_pyramid}, with the
		median erosion replacing the max-plus erosion).
		Self-duality: by Theorem~\ref{thm:self_dual_cnn_opening}(i)
		at each scale and the assumption that $\SigSpec_\ell$
		commutes with negation, the full encoder--decoder chain is
		self-dual.
	\end{proof}
	
	\begin{remark}[Practical implications for architecture design]
		\label{rem:self_dual_design}
		Propositions~\ref{prop:relu_types} and~\ref{prop:self_dual_unet}
		together suggest a hierarchy of activation choices
		ordered by increasing symmetry with respect to signed signals:
		\begin{itemize}[leftmargin=*]
			\item \emph{Standard ReLU} ($\beta^- = 0$):
			a closing in $(\mathscr{L},\leq)$; not a median-lattice
			dilation; maps all negative activations to zero, losing
			sign information.
			Appropriate only when feature maps are guaranteed
			non-negative (e.g., pixel intensities at the input).
			\item \emph{Leaky/Parametric ReLU} ($0 < \beta^- \leq 1$):
			a proper median-lattice dilation; preserves sign and
			scales negative activations by $\beta^-$; fixed-point
			sets include functions bounded away from zero in $\medOrd$.
			Appropriate for intermediate feature maps with weak sign
			information.
			\item \emph{Self-dual opening} $\Phi^{sd} = \OpenMed{W}$:
			idempotent and self-dual; fixed-point sets are symmetric
			under negation; appropriate for strongly signed signals
			(ResNet residuals, Fourier coefficients, normalised maps).
			Note that the self-dual property applies to the
			median-lattice opening $\OpenMed{W}$ alone; composing
			with a non-symmetric convolution $\SigSpec_\ell$ breaks
			self-duality unless the kernel symmetry condition of
			\Cref{rem:conv_selfdual} is satisfied.
			The precise characterisation of the fixed-point set in
			terms of median MMBB representability, and the connection
			to the learning programme of~\cite{dimitrova2025} in the
			self-dual setting, are developed in
			\Cref{prop:selfdual_fix_mmbb,rem:dimitrova_selfdual}.
		\end{itemize}
		
	\end{remark}


	\begin{figure}[H]
		\centering
		\resizebox{0.82\linewidth}{!}{%
			\begin{tikzpicture}[		every node/.style={font=\small},
				Fbox/.style={draw=blue!55!black, fill=blue!8, rounded corners=4pt,
					inner sep=5pt, line width=0.8pt},
				Pbox/.style={draw=orange!70!black, fill=orange!8, rounded corners=4pt,
					inner sep=5pt, line width=0.8pt},
				Mbox/.style={draw=teal!65!black, fill=teal!8, rounded corners=4pt,
					inner sep=5pt, line width=0.8pt},
				Gbox/.style={draw=gray!60, fill=gray!6, rounded corners=4pt,
					inner sep=5pt, line width=0.7pt},
				arr/.style={->, >=stealth, semithick},
				xarr/.style={->, >=stealth, thick, red!70!black},
				sadj/.style={->, >=stealth, semithick, green!50!black},
				skip/.style={->, >=stealth, semithick, dashed, gray!65},
				rskip/.style={->, >=stealth, semithick, dashed, red!60!black},
				op/.style={font=\small\bfseries, above, inner sep=3pt},
				lat/.style={font=\scriptsize\itshape, below, inner sep=3pt,
					text=gray!60!black},
				title/.style={font=\small\bfseries},
				note/.style={font=\scriptsize, text=gray!55!black}]
				\draw[->] (-3.8,0)--(3.8,0) node[right,font=\scriptsize]{$h$};
				\draw[->] (0,-1.8)--(0,3.8) node[above,font=\scriptsize]{$\sigma(h)$};
				\node[font=\scriptsize] at (-0.22,-0.25){$0$};
				\foreach \x in {-3,-2,-1,1,2,3}{ \draw(\x,.07)--(\x,-.07); }
				\foreach \y in {-1,1,2,3}{ \draw(.07,\y)--(-.07,\y); }
				
				\draw[very thick, orange!70!black] (-3.5,0)--(0,0)--(3.5,3.5);
				\node[Pbox,font=\scriptsize,anchor=west] at (1.5,3.1)
				{ReLU: closing in $(\mathscr{L},\leq)$};
				\node[Pbox,font=\scriptsize,anchor=west] at (1.5,2.65)
				{$\max(0,h)$};
				
				\draw[very thick, teal!65!black] (-3.5,-0.525)--(0,0)--(3.5,3.5);
				\node[Mbox,font=\scriptsize,anchor=east] at (-1.4,-0.32)
				{Leaky/PReLU: dilation in $(\mathscr{L},\medOrd)$};
				
				\draw[very thick, gray!55, domain=-3.5:0, samples=60] plot(\x,{exp(\x)-1});
				\draw[very thick, gray!55] (0,0)--(3.5,3.5);
				\node[Gbox,font=\scriptsize,anchor=east] at (-1.2,-1.05)
				{ELU: smooth, not a lattice op.};
				
				\draw[very thick, gray!40, dotted, domain=-3.5:3.5, samples=80]
				plot(\x,{ln(1+exp(\x))});
				\node[Gbox,font=\scriptsize,anchor=west] at (0.1,0.75)
				{Softplus: smooth approx.\ of ReLU};
				
				\draw[very thick, teal!50!black, dashed] (-3.5,3.5)--(0,0)--(3.5,3.5);
				\node[Mbox,font=\scriptsize,anchor=east] at (-0.6,2.1)
				{$|h|$: self-dual ($\beta^+\!=\!\beta^-\!=\!1$)};
				
				\draw[very thick, blue!60!black, densely dotted] (-3.5,0)--(-1,0);
				\draw[very thick, blue!60!black, densely dotted] (-1,0)--(3.5,2.25);
				\node[Fbox,font=\scriptsize,anchor=west] at (0.4,1.4)
				{$\sigma^{\mathrm{M}}_{\mathcal{B},c}$: erosion-based, cap $c$};
				
				\node[Pbox,font=\scriptsize] at (-2.5,-2.5){Pointwise $(\mathscr{L},\!\leq)$: closing};
				\node[Mbox,font=\scriptsize] at (0.9,-2.5){Median $(\mathscr{L},\!\medOrd)$: dilation};
				\node[Fbox,font=\scriptsize] at (3.1,-2.5){MMBB erosion-based};
			\end{tikzpicture}
		}
		\caption{Six activation functions and their morphological classification.
			Node colours denote the lattice: \textcolor{orange!70!black}{orange} = pointwise,
			\textcolor{teal!65!black}{teal} = median, \textcolor{blue!60!black}{blue} = MMBB.
			\textbf{ReLU} (orange): closing in $(\mathscr{L},\leq)$; global non-pointwise adjoint
			(Proposition~\ref{prop:relu_dilation}).
			\textbf{Leaky/PReLU} (teal solid, $\beta^-=0.15$): dilation in $(\mathscr{L},\medOrd)$
			(Proposition~\ref{prop:relu_types}(i)).
			\textbf{ELU} and \textbf{Softplus} (gray): smooth approximations; no lattice
			characterisation.
			\textbf{Absolute value} $|h|$ (teal dashed): self-dual dilation,
			$\beta^+=\beta^-=1$ (Proposition~\ref{prop:relu_types}(ii)).
			\textbf{MMBB morphological activation} $\sigma^{\mathrm{M}}_{\mathcal{B},c}$ (blue dotted):
			erosion-based with tunable threshold $c$,
			generalising ReLU (Definition~\ref{def:morpho_act}).}
		\label{fig:activations_plot}
	\end{figure}

	\clearpage
	\section{Discussion and Perspectives}
	\label{sec:perspectives}

	\Cref{tab:adjunctions} summarises the morphological adjoint pairs
	established in this paper for standard deep learning operations.
	\Cref{tab:architecture_comparison} compares traditional CNN
	architectures with their morphological counterparts and the
	new designs proposed in this paper.
	
	\begin{table}[h]
		\centering
		\caption{Morphological adjunctions for standard deep learning
			operations. Each row identifies the lattice, the erosion, and
			the corresponding adjoint dilation.
			$^\dagger$ (Lemma~\ref{lem:conv_adjoint}).}
		\label{tab:adjunctions}
		\renewcommand{\arraystretch}{1.5}
		\small
		\begin{tabular}{p{3.4cm}p{2.7cm}p{2.9cm}p{2.2cm}}
			\toprule
			\textbf{DL operation} &
			\textbf{Erosion $\varepsilon$} &
			\textbf{Adjoint dilation $\delta^*$} &
			\textbf{Lattice} \\
			\midrule
			Conv.\ as Fourier erosion ($k \geq 0$) &
			$\varepsilon^{\mathrm{Conv}}_k:\; f \mapsto f * k$ &
			Wiener deconv.\ $\delta^{*,\mathrm{Conv}}_k:\; \hat f \mapsto
			\frac{|K|^2}{|K|^2{+}\epsilon^2}\hat f$ &
			Fourier $(L^n, \leq_F)$ \\
			\addlinespace
			Conv.\ as ptwise increasing map ($k \geq 0$) &
			$\varepsilon^{\mathrm{Conv}}_k:\; f \mapsto f * k$ &
			Only upper/lower bounds$^\dagger$:
			$\min_i g(x{+}x_i)/k(x_i)$ &
			Pointwise $(\mathscr{L},\leq)$ \\
			\addlinespace
			Max-plus erosion--dilation &
			$\varepsilon_b f = \inf_y\{f(x{+}y){-}b(y)\}$ &
			$\delta_{b^*} f = \sup_y\{f(x{-}y){+}b({-}y)\}$ &
			Pointwise $(\mathscr{L},\leq)$ \\
			\addlinespace
			Max-times erosion--dilation &
			$\varepsilon^\times_b f = \inf_y f(x{+}y)/b(y)$ &
			$\delta^\times_{b^*} f = \sup_y f(x{-}y){\cdot}b({-}y)$ &
			Positive $(\mathscr{L}^+,\leq)$ \\
			\addlinespace
			Strided conv.\ (stride $R$) &
			$\varepsilon^{\downarrow R}_{k}(f)$\newline
			(erosion-decimation) &
			Transposed conv.\ $\delta^{*\uparrow R}_{k}(f)$ &
			Pointwise $(\mathscr{L},\leq)$ \\
			\addlinespace
			Max-pooling (pool size $R$) &
			Flat min-erosion $\varepsilon^{\mathrm{MP}}_R(f)
			= \min_{y \in W} f(x{+}y)$ &
			$\MaxPool{R}(f) = \max_{y \in W}f(x{-}y)$ &
			Pointwise $(\mathscr{L},\leq)$ \\
			\addlinespace
			Self-dual erosion--dilation &
			$\ErosMed{W}$: $\preceq$-infimum over $W$ &
			$\DilMed{W}$: $\preceq$-supremum over $W$ &
			Median $(\mathscr{L},\medOrd)$ \\
			\addlinespace
			ReLU (as dilation/closing) &
			Global adjoint erosion:\newline
			$\varepsilon^{\mathrm{ReLU}}(g) = g$ if $g \geq 0$
			everywhere;\newline $-\infty$ otherwise (non-pointwise) &
			$\ReLUop(f) = \max(0,f)$\newline
			(extensive, idempotent: closing) &
			Pointwise $(\mathscr{L},\leq)$ \\
			\bottomrule
		\end{tabular}
	\end{table}
	
	\begin{table}[ht]
		\centering
		\caption{Standard deep learning architectures, their
			morphological interpretations established in this paper,
			and the morphologically-motivated variants proposed here.}
		\label{tab:architecture_comparison}
		\renewcommand{\arraystretch}{1.6}
		\small
		\begin{tabular}{p{2.4cm}p{3.2cm}p{3.4cm}p{3.2cm}}
			\toprule
			\textbf{Architecture} &
			\textbf{Standard form} &
			\textbf{Morphological interpretation} &
			\textbf{Proposed variant} \\
			\midrule
			\textbf{CNN layer} &
			$\MaxPool{R}(\mathrm{ReLU}(f{*}k))$;
			bias $\alpha$, stride $R$ &
			Cross-lattice operator: $f{*}k$ is a Fourier-lattice
			erosion; $\MaxPool{R}$ a pointwise dilation.
			Not idempotent (\Cref{thm:cnn_not_opening}).
			APD factorisation: $\APD_{R;\alpha}(\SigSpec_K(f))$ &
			\textbf{Type-I MMBB layer}:
			$\gamma^{\mathrm{M}}_b = \delta_{b^*}\circ\varepsilon_b$
			(both pointwise); exactly idempotent opening.
			\textbf{Type-II}: $\gamma^{\mathrm{F}}_k$ (Fourier lattice);
			exact spectral projection at $\epsilon{=}0$,
			approximately idempotent for $\epsilon{>}0$ \\
			\addlinespace
			\textbf{ResNet block} &
			$F(f) + f$,\; $F$ a conv stack;\
			skip provides identity path &
			When $F \approx \gamma^{\mathrm{M}}_b - \mathrm{id}$:
			block computes opening $\gamma^{\mathrm{M}}_b(f)$.
			Stacking $n$ such blocks: one-step convergence by
			idempotency (\Cref{thm:convergence}).
			Naive block $\gamma(f)+f$ diverges &
			\textbf{Morphological ResNet}:
			learnable $b$ instead of kernel $k$;
			explicit Type-I opening;
			self-dual variant for signed residuals \\
			\addlinespace
			\textbf{UNet encoder} &
			Strided conv $+$ max-pool at each level;
			transposed conv $+$ upsample in decoder &
			Sampling skeleton = Goutsias--Heijmans pyramid
			$(\varepsilon^{\downarrow R}_{b}, \delta^{*\uparrow R}_{b})$:
			an adjoint pair; skeleton is idempotent.
			Full net (with $\SigSpec$ convs) is not
			(\Cref{prop:skip_adjoint}, \Cref{rem:unet_scope}) &
			\textbf{Morphological UNet}:
			encoder $\varepsilon^{\downarrow R}_b$,
			decoder $\delta^{*\uparrow R}_b$;
			adjoint pair guarantees adjunction structure \\
			\addlinespace
			\textbf{UNet skip connections} &
			Concatenation of encoder and decoder feature maps &
			Detail signal of the morphological pyramid:
			top-hat $\Gamma_b(f) = f - \gamma^{\mathrm{M}}_b(f)$
			at each scale (\Cref{prop:skip_adjoint}) &
			\textbf{UResNet}: residual skip $=$ top-hat
			$\Gamma_b(f)$; algebraically motivated;
			decoder reconstructs from adjoint upsampling $+$ detail \\
			\addlinespace
			\textbf{Activation (ReLU)} &
			$\sigma(f) = \max(0,f)$;
			kills negative activations;\
			treats $f^+$ and $f^-$ asymmetrically &
			Closing in $(\mathscr{L},\leq)$: extensive and idempotent,
			\emph{not} a median-lattice dilation.
			MMBB form: $f = f^+ - f^-$, both parts dilations
			(\Cref{prop:relu_dilation,rem:relu_bb_correct}) &
			\textbf{Leaky/Param ReLU} ($0{<}\beta^-{\leq}1$):
			proper median-lattice dilation.
			\textbf{Self-dual opening} $\OpenMed{W}$:
			idempotent in $(\mathscr{L},\medOrd)$;
			sign-symmetric; for signed feature maps \\
			\bottomrule
		\end{tabular}
	\end{table}

	\subsection{Summary of contributions and corrected principles}
	
	The preceding sections establish the following algebraic
	principles for the analysis of convolutional deep architectures.
	We state them with the precision that the fixed-point analysis
	of \Cref{sec:fixed_points} required.
	
	\textbf{MMBB-increasing basis of convolution.}
	The morphological basis $\Bas(k)$ of a linear convolution
	kernel $k \geq 0$ (normalised, finite support $N$) is
	isomorphic to the hyperplane $\{g \in \R^N : \langle k,g
	\rangle = 0\}$, computed via the characteristic matrix $A_k$
	(Theorems~\ref{thm:conv_basis} and~\ref{thm:virtual_basis}).
	A CNN convolution with a general (signed) kernel decomposes
	as the difference of two such suprema of erosions
	(\Cref{thm:general_kernel}).
	Learnable MMBB layers (\Cref{eq:mmbb_layer}) are a finite
	truncation of this basis and approximate any TI increasing
	operator to arbitrary precision (\Cref{prop:mmbb_approx}).
	
	\textbf{Pooling as morphological pyramid.}
	Down/up-sampling pairs in standard deep architectures are
	adjoint operators in complete inf-semilattices.
	Max-pooling is the Heijmans dilation pyramid with flat
	structuring element (\Cref{thm:maxpool}); strided and dilated
	convolution are the Goutsias--Heijmans erosion pyramid and its
	adjoint synthesis (\Cref{cor:strided}); and the Laplacian
	pyramid is the top-hat of the Gaussian pyramid opening
	(\Cref{prop:laplacian_skeleton}).
	All are special cases of a single adjunction framework.
	
	\textbf{Cross-lattice structure of standard CNN layers.}
	A key finding of this paper is that the standard CNN pipeline
	(linear convolution $+$ ReLU $+$ flat max-pooling) is a
	\emph{cross-lattice operator}: the convolution $f \mapsto f*k$
	is an erosion in the Fourier inf-semilattice $(L^n,\leq_F)$,
	while max-pooling $\MaxPool{R}$ is a dilation in the pointwise
	lattice $(\mathscr{L},\leq)$.
	These two operators live in different lattice structures and
	are adjoint in different senses: the adjoint of convolution
	in $(L^n,\leq_F)$ is the Wiener deconvolution (a linear
	operator); its adjoint in $(\mathscr{L},\leq)$ is a weighted
	infimum (a multiplicative erosion, \Cref{lem:conv_adjoint}); 
	in neither case a max-plus dilation.
	The composition is therefore \emph{not} a morphological opening
	in either lattice (\Cref{thm:cnn_not_opening}), and standard
	CNNs are generically not idempotent.
	
	\textbf{Three idempotent layer designs.}
	Two families of CNN-like layers that \emph{are} genuine
	morphological openings are identified in the pointwise and
	Fourier lattices
	(\Cref{thm:two_openings,cor:cnn_idempotent}); a third in the
	median lattice (\Cref{sec:self_dual}):
	(I)~the \emph{pure morphological} (max-plus) layer
	$\gamma^{\mathrm{M}}_b = \delta_{b^*} \circ \varepsilon_b$,
	where both erosion and dilation use the same structuring
	function in the pointwise lattice; and
	(II)~the \emph{spectral Wiener} layer
	$\gamma^{\mathrm{F}}_k = \delta^{*,\mathrm{Conv}}_k \circ
	\varepsilon^{\mathrm{Conv}}_k$ (convolution followed by
	Tikhonov-regularised Wiener deconvolution), an opening in the
	Fourier lattice, exactly idempotent as $\epsilon\to 0$; and
	(III)~the \emph{self-dual opening} $\OpenMed{W}$ in the median
	inf-semilattice (\Cref{sec:self_dual}).
	Types~(I) and~(III) are exactly idempotent; Type~(II) is exactly
	idempotent only in the limit $\epsilon\to 0$ and approximately
	idempotent for finite $\epsilon>0$.

	\textbf{Fixed points and inversion (adjointness).}
	For layers of type~(I): the adjoint of $\varepsilon_b$ is
	$\delta_{{b}^{*}}$ (max-plus dilation by the transposed
	structuring element); the opening $\gamma^{\mathrm{M}}_b$ is
	idempotent, and the morphological UNet sampling skeleton built
	from such adjoint pairs is idempotent when the spectral
	operators $\SigSpec_\ell$ are absent
	(\Cref{prop:skip_adjoint,def:unet}). For layers of type~(II): the adjoint of
	$\varepsilon^{\mathrm{Conv}}_k$ is $\delta^{*,\mathrm{Conv}}_k$
	(Wiener filter, linear); the opening $\gamma^{\mathrm{F}}_k$
	is an idempotent spectral projection.
	For a standard CNN: neither adjointness holds between its
	convolution and its max-pooling; depth provides genuine
	representational power precisely because the layer is not
	idempotent.

	\textbf{Residuals and skip connections.}
	Residual connections in ResNet compute top-hat transforms:
	if the residual function $F$ approximates
	$\gamma^{\mathrm{M}}_b - \mathrm{Id}$, the block $F(f)+f$
	computes the morphological opening $\gamma^{\mathrm{M}}_b(f)$,
	which is idempotent by type~(I), reaching its fixed point
	in a single block (\Cref{rem:resnet_skip}).
	Skip connections in UNet inject the detail signal of the
	morphological pyramid decomposition, breaking the idempotency
	of the sampling skeleton constructively
	(\Cref{rem:unet_scope}).	
	
	\textbf{Self-dual operators.}
	The median inf-semilattice $(\Fun(E,\R),\medOrd)$ provides
	the Type-III idempotent design: the self-dual opening
	$\OpenMed{W}$ is idempotent and its fixed-point set is closed
	under negation (\Cref{thm:self_dual_cnn_opening}).
	Its fixed-point set equals the image of the opening and
	coincides with the set of signals representable by the
	median-MMBB basis of $W$
	(\Cref{prop:selfdual_fix_mmbb}).

	\subsection{Open problems on morphological basis learning}
	
	The cross-lattice analysis and the MMBB basis theory together
	raise several open questions of direct relevance to the
	computational programme of Maragos and
	collaborators~\cite{fotopoulos2025,dimitriadis2021,
		dimitriadis2023} and to the training analysis of
	Blusseau~\cite{blusseau2024training} and Dimitrova, Blusseau, 
	and Velasco-Forero~\cite{dimitrova2025}.
	
	\begin{enumerate}[leftmargin=*]
		\item \emph{Cross-lattice learning and the role of activation.}
		Fotopoulos and Maragos~\cite{fotopoulos2025} find that
		``linear'' activations are essential for training deep
		morphological networks, and that residual connections improve
		generalisation.
		Our cross-lattice analysis (\Cref{thm:cnn_not_opening})
		suggests a reason: without a nonlinearity that maps the
		Fourier-lattice output of convolution back into the
		pointwise lattice before max-pooling, the composition has
		no adjunction structure.
		Conversely, designing architectures that \emph{stay} in
		a single lattice throughout (type~(I) or type~(II)) avoids
		this mismatch entirely.
		A precise quantification of the idempotency defect of the
		cross-lattice composition as a function of the kernel $k$
		and window $W$ is an open problem.
		
		\item \emph{Basis recovery by gradient descent.}
		Does gradient descent on the MMBB-layer
		loss~\eqref{eq:mmbb_layer} recover elements of the true
		basis $\Bas(k)$, or converge to arbitrary functions in the
		kernel?
		The orthogonality constraint (basis elements are orthogonal
		to $k$ in $\R^N$) suggests a
		natural regularisation term for enforcing this.
		
		\item \emph{Gradient descent convergence to MMBB bases.}
		Dimitrova, Blusseau, and Velasco-Forero
		~\cite{dimitrova2025} show empirically that
		initialisation and layer differentiability strongly
		influence which morphological representations are learned.
		The analysis of \Cref{sec:self_dual}
		(\Cref{prop:selfdual_fix_mmbb,rem:dimitrova_selfdual})
		identifies the precise algebraic content: training a
		morphological layer converges to a structuring function $b$
		(or window $W$) whose MMBB basis represents the training
		data as fixed points.
		Three distinct fixed-point regimes exist (Types~I, II,
		and~III), each with qualitatively different MMBB basis
		geometry; different initialisations select different basins.
		Formalising this as a convergence result for gradient
		descent on morphological layers is an important open problem 
		as well as characterising the basins of attraction for hard (non-smooth)
		layers, where gradient descent stalls at the facet boundaries
		of the MMBB representation.
		
		\item \emph{Adjoint of convolution and morphological
			deconvolution.}
		Lemma~\ref{lem:conv_adjoint} identifies the pointwise
		upper adjoint of convolution as a weighted infimum (a
		multiplicative erosion).
		Designing learnable layers that explicitly implement this
		adjoint, and studying whether such layers can be trained
		stably, connects directly to a problem on
		morphological deconvolution and blind deconvolution.
		
		\item \emph{Sample complexity of MMBB approximation.}
		What is the sample complexity of learning a finite
		sub-basis of $\Bas(k)$ of size $L$ as a function of $L$,
		$N = |\Spt(k)|$, and the approximation gap?
	\end{enumerate}

	\subsection{The Fourier perspective: a companion programme}
	
	The Fourier inf-semilattice $(L^n,\leq_F)$, where linear
	convolution is an erosion and its Wiener-filter deconvolution
	is the adjoint dilation~\cite{keshet2000}, provides the natural algebraic framework for the
	spectral Wiener opening (type~(II)) of this paper, and more
	broadly for the spectral interpretation of CNNs.
	The author's DGMM 2025 contribution~\cite{angulo2025dgmm}
	shows that scattering networks~\cite{mallat2012scattering}
	admit a morphological interpretation via MMBB in this Fourier
	lattice: the wavelet modulus nonlinearity is an erosion in
	$(L^n,\leq_F)$, and the universality of scattering networks
	is a consequence of the MMBB representation theorem.
	This programme is the subject of ongoing research.
	
	A structural observation tying together the three lattice
	orderings of this paper is given in the following proposition.
	
	\begin{proposition}[Unification via complex ordering]
		\label{prop:complex_order}
		The pointwise order $\leq$ on $\Fun(E,\R^+)$, the median
		partial ordering $\preceq$ on $\Fun(E,\R)$
		(\Cref{def:median_order}), and the spectral ordering $\leq_F$
		on $\Fun(\R^n,\R)$ (defined via Fourier
		magnitudes~\cite{angulo2025dgmm}) are all restrictions of
		the complex partial ordering $\leq_{\mathbb{C}}$ on
		$\mathbb{C}$:
		\begin{equation}
			z_1 \leq_{\mathbb{C}} z_2
			\iff
			|z_1| \leq |z_2| \;\text{ and }\;
			\angle z_1 = \angle z_2.
			\label{eq:complex_order}
		\end{equation}
		Restricting to $\mathrm{Im}(z)=0$, $z \geq 0$ gives $\leq$;
		restricting to $\mathrm{Im}(z)=0$ gives the median ordering
		$\preceq$; the pointwise extension of $\leq_{\mathbb{C}}$ to
		Fourier transforms gives $\leq_F$.
		Consequently, the cross-lattice jump in a standard CNN layer
		(from the Fourier lattice after convolution to the pointwise
		lattice at max-pooling) corresponds to restricting
		$\leq_{\mathbb{C}}$ from the full complex plane to the
		non-negative real line, discarding phase and magnitude
		interplay.
		The spectral Wiener opening (type~II) stays within
		$(L^n,\leq_F)$ throughout, avoiding this restriction.
	\end{proposition}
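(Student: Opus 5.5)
The plan is to verify the statement one ordering at a time, after fixing a convention that makes $\leq_{\mathbb{C}}$ a genuine partial order. The phase $\angle 0$ is undefined, so I would first adopt the convention that $0 \leq_{\mathbb{C}} z$ for every $z$: that is, $z_1 \leq_{\mathbb{C}} z_2$ iff $z_1 = 0$, or $z_1,z_2 \neq 0$ with $\angle z_1 = \angle z_2$ and $|z_1| \leq |z_2|$.

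With this convention, reflexivity and transitivity are immediate, since equality of phases is an equivalence relation and $\leq$ on moduli is transitive. Antisymmetry holds because equal modulus together with equal phase gives $z_1 = z_2$. Geometrically, $(\mathbb{C},\leq_{\mathbb{C}})$ is a bundle of chains, namely the open rays $\{re^{i\theta}: r>0\}$ ordered by $r$. These chains are glued at the common bottom $\bot = 0$, exactly as in the structure of \Cref{def:median_order}.

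Next I would check the two real restrictions. For the non-negative reals, take $z_1,z_2 \geq 0$. If both are positive, both have phase $0$ and the condition reduces to $z_1 \leq z_2$. If $z_1 = 0$, the convention gives $0 \leq_{\mathbb{C}} z_2$, matching $0 \leq z_2$. If $z_2 = 0 < z_1$, both orders fail. For the whole real line, nonzero reals have phase $0$ or $\pi$. Hence $s \leq_{\mathbb{C}} t$ iff either $s = 0$, or $s$ and $t$ have the same sign with $|s| \leq |t|$. This is precisely $(0\leq s\leq t)$ or $(t \leq s \leq 0)$, which is \eqref{eq:median_order}. Extending pointwise over $E$ yields $\leq$ on $\Fun(E,\R^+)$ and $\medOrd$ on $\Fun(E,\R)$.

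For $\leq_F$, I would take the definition from~\cite{angulo2025dgmm} and \cite{keshet2000}: $f \leq_F g$ iff $\hat f(\omega) \leq_{\mathbb{C}} \hat g(\omega)$ for all $\omega$. This means equal phase wherever $\hat f(\omega) \neq 0$ and $|\hat f(\omega)| \leq |\hat g(\omega)|$. The claim then holds by definition, since it is the pointwise extension of $\leq_{\mathbb{C}}$ transported through $\mathcal{F}$.

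I expect the main obstacle to be this Fourier step. If the cited paper defines $\leq_F$ via magnitudes alone, without the phase constraint, the identification becomes only an inclusion: $\leq_{\mathbb{C}}$ pointwise implies the magnitude order. In that case I would state the result for the phase-constrained version. I would also remark that this is the version under which convolution by $k$ (multiplication by $K$) is increasing and the Wiener filter is its adjoint, consistent with \Cref{thm:two_openings}(ii).

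The final "consequently" clause is interpretive rather than a formal claim. I would justify it by observing that the pointwise lattice is the sub-poset $\R^+ \subset \mathbb{C}$, which is a single ray. Passing from $(L^n,\leq_F)$ to $(\mathscr{L},\leq)$ therefore forgets every ray except one, so phase information is discarded. The Type-II composition stays in $(L^n,\leq_F)$ throughout and never performs this restriction.
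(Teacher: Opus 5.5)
The paper gives no proof of this proposition. It treats the three identifications as evident and follows the statement only with an interpretive remark, so your direct verification is the natural argument, and it is correct.

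Your proposal also supplies what the statement as written lacks. Since $\angle 0$ is undefined, the literal relation is not even reflexive at $0$. Under the usual convention $\angle 0:=0$, the restriction to $\R$ would give $0\not\leq_{\mathbb{C}} t$ for $t<0$, whereas $0$ is the bottom of $(\R,\preceq)$ in \Cref{def:median_order}. Your convention $0\leq_{\mathbb{C}} z$ for all $z$ (equivalently, $z_1=\lambda z_2$ for some $\lambda\in[0,1]$) is exactly what makes both real restrictions come out right.

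Your phase-constrained reading of $\leq_F$ is also necessary, and your observation that the magnitude-only reading gives only an inclusion already shows why. The extra reason in your closing remark does not help, however. Convolution is increasing under the magnitude-only reading too, since multiplication by $K$ preserves $|\hat f|\leq|\hat g|$. What actually fails there is antisymmetry: a signal and its translate have the same $|\hat f|$, so that relation is only a preorder. Likewise, for fixed $\epsilon>0$ the Wiener filter cannot be an exact adjoint. An adjunction forces $\delta\circ\varepsilon$ to be idempotent, and \Cref{thm:two_openings}(ii) shows $M_\epsilon^2\neq M_\epsilon$; only the $\epsilon\to 0$ limit qualifies.

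One caution on the ``consequently'' clause. Your justification reads $(\mathscr{L},\leq)$ as a sub-poset of $(L^n,\leq_F)$, but the restriction happens only on the value set $\R^+\subset\mathbb{C}$. The coordinates change as well: $\leq$ compares spatial samples $f(x)$, while $\leq_F$ compares spectral coefficients $\hat f(\omega)$. On non-negative signals the two orders are in fact incomparable.

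With unit impulses $\delta_n$ on $\Z$, take $g=\delta_0$ and $f=\tfrac12\delta_0+\tfrac14\delta_1+\tfrac14\delta_{-1}$. Then $\hat f(\omega)=\tfrac12(1+\cos\omega)\,\hat g(\omega)$, so $f\leq_F g$, yet $f(1)>g(1)$. Conversely, $\delta_0\leq\delta_0+\delta_1$ pointwise, but $1+e^{-i\omega}$ has phase $-\omega/2$, so $\delta_0\not\leq_F\delta_0+\delta_1$.

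So present that clause as an analogy between value sets (one ray of $\leq_{\mathbb{C}}$ versus all rays), not as a restriction of orders on functions. The formal content of the proposition is the three identifications, which your argument establishes.
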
	
	
	This perspective provides a unified geometric picture: the
	three lattices correspond to three phase-angle regimes of
	the same complex ordering, and the cross-lattice structure
	of a standard CNN is a jump between regimes that breaks
	adjointness.

	\subsection{Batch normalisation, dropout, and attention}
	
	Batch normalisation re-centres and rescales feature maps; with
	positive learned scale it is a lattice automorphism in the
	pointwise lattice, preserving the fixed-point sets of
	surrounding morphological layers.
	More precisely, it maps every fixed-point set of the form
	$\Image(\gamma^{\mathrm{M}}_b)$ to an affinely rescaled version
	of itself.
	Dropout is a random sub-lattice projection (zeroing activations
	corresponds to projecting onto a random face of the positive
	orthant).
	Attention~\cite{vaswani2017} is a data-dependent weighted
	summation; it is neither an erosion nor a dilation in the
	pointwise or Fourier lattices as defined here, but it may
	admit a morphological interpretation in a data-dependent or
	kernel-weighted lattice. 
	A complete treatment of these operations is left for future
	work.
	
	\subsection{Connection to category theory}
	
	The adjunctions in this paper, $(\varepsilon_b,\delta_{b^*})$
	in the pointwise lattice, $(\varepsilon^{\mathrm{Conv}}_k,
	\delta^{*,\mathrm{Conv}}_k)$ in the Fourier lattice, and the
	pyramid adjoint pairs of \Cref{sec:pyramids}, are concrete
	instances of categorical adjoint functors in the sense
	of~\cite{cruttwell2022categorical,fong2019backprop}.
	The present paper adds to the categorical picture a
	\emph{lattice-specificity} that categorical approaches lack:
	adjointness is not a property of a layer in isolation but of
	a layer \emph{within a specified lattice}.
	The cross-lattice phenomenon of standard CNNs shows that two
	operators can each be individually well-behaved (one an
	erosion, the other a dilation) while their composition lacks
	any adjunction structure because they inhabit different
	lattices.
	Formalising the category of complete lattices with
	lattice-morphisms (operators that preserve a specific lattice
	structure) as a concrete sub-category of the gradient-based
	learning framework~\cite{cruttwell2022categorical} is an
	interesting direction for future work which we preliminarily 
	explore in appendix~\ref{sec:category_theory}.
	
	\section*{Acknowledgements}
	
	I warmly thank my former colleagues at the Center for Mathematical Morphology
	(CMM / Mines Paris, PSL University),
	namely Santiago Velasco-Forero for many years of stimulating collaboration
	on mathematical morphology and deep learning; Samy Blusseau for
	discussions on morphological adjunctions and on the training of morphological networks;
	and Valentin Penaud--Polge for fruitful collaboration on the development of morphological group
	theory for equivariant deep learning.
	
	The first draft of the manuscript was initiated during my academic visiting period to NYU in 2023,
	partially funded by the Fondation MINES Paris.
	The final work has been supported by funding from my AI Chair at
	PR[AI]RIE-PSAI (ANR-23-IACL-0008, France 2030).
	

	

	\clearpage
	\appendix
	
	\section{Elements of the Connection to Category Theory}
	\label{sec:category_theory}
	
	This appendix develops the preliminary connection between the morphological
	lattice framework of this paper and the categorical approaches
	to deep learning of Cruttwell et al.~\cite{cruttwell2022categorical}
	and Fong--Spivak--Tuy\'eras~\cite{fong2019backprop}.
	We show that the morphological adjunctions of this paper are
	concrete instances of categorical adjunctions, and identify
	the precise sense in which the present framework can be seen as a
	\emph{representation-theoretic concretisation} of the abstract
	categorical skeleton.
	
	\subsection{The category of complete lattices with morphisms}
	\label{subsec:cat_lattices}
	
	\begin{definition}[Category $\mathbf{CLat}$]
		\label{def:cat_clat}
		The category $\mathbf{CLat}$ has:
		\begin{itemize}[leftmargin=*]
			\item \emph{Objects}: complete lattices $(\mathcal{L},\leq)$.
			\item \emph{Morphisms}: $\hom(\mathcal{L}_1, \mathcal{L}_2)
			= \{$increasing maps $\Psi : \mathcal{L}_1 \to \mathcal{L}_2\}$.
			\item \emph{Composition}: function composition (increasing maps
			compose as increasing maps).
			\item \emph{Identity}: the identity map $\mathrm{id}_{\mathcal{L}}$.
		\end{itemize}
		$\mathbf{CLat}$ is a concrete category (objects and morphisms
		have underlying sets and functions).
	\end{definition}
	
	Each of the three lattice structures used in this paper defines
	an object of $\mathbf{CLat}$:
	\begin{itemize}[leftmargin=*]
		\item The pointwise lattice $(\Fun(E,\Rbar), \leq)$:
		the standard morphological setting.
		\item The Fourier inf-semilattice $(L^n, \leq_F)$:
		the spectral setting for convolution.
		\item The median inf-semilattice $(\Fun(E,\R), \medOrd)$:
		the self-dual setting for signed activations.
	\end{itemize}
	A standard CNN layer is a morphism in $\mathbf{CLat}$
	from $(\Fun(E,\Rbar),\leq)$ to itself when $k \geq 0$
	(since $f \mapsto \MaxPool{R}(\mathrm{ReLU}(f*k))$ is
	increasing).
	However, as established in \Cref{thm:cnn_not_opening}, this
	morphism is \emph{not} part of an adjunction in
	$\mathbf{CLat}$, because the adjoint of convolution
	is not a morphism in the pointwise lattice
	but in the Fourier lattice.

	\subsection{Adjunctions in $\mathbf{CLat}$ and the morphological
		adjunctions}
	\label{subsec:cat_adjunctions}
	
	\begin{definition}[Adjunction in $\mathbf{CLat}$]
		\label{def:cat_adjunction}
		An \emph{adjunction} $\varepsilon \dashv \delta$ in
		$\mathbf{CLat}$ consists of morphisms
		$\varepsilon : \mathcal{L}_1 \to \mathcal{L}_2$ and
		$\delta : \mathcal{L}_2 \to \mathcal{L}_1$ satisfying, for
		all $f \in \mathcal{L}_1$ and $g \in \mathcal{L}_2$:
		\[
		\varepsilon(f) \leq_2 g
		\;\iff\;
		f \leq_1 \delta(g).
		\]
		This is exactly \Cref{def:adjunction}.
		The unit $\eta = \delta \circ \varepsilon \geq \mathrm{id}$
		is a natural transformation satisfying $\varepsilon \circ \eta
		= \varepsilon$ (absorption), and the counit
		$\epsilon = \varepsilon \circ \delta \leq \mathrm{id}$
		satisfies $\delta \circ \epsilon = \delta$.
	\end{definition}
	
	The morphological adjunctions of this paper are:
	
	\begin{proposition}[Catalogue of adjunctions in $\mathbf{CLat}$]
		\label{prop:cat_adjunctions}
		The following adjunctions $\varepsilon \dashv \delta$ hold in
		$\mathbf{CLat}$:
		\begin{enumerate}[label=(\roman*), leftmargin=*]
			\item \emph{Max-plus erosion--dilation}:
			$\varepsilon_b \dashv \delta_{b^*}$ in the pointwise lattice
			$(\Fun(E,\Rbar),\leq)$.
			Unit: $\delta_{b^*}\varepsilon_b = \gamma^{\mathrm{M}}_b$
			(morphological opening, idempotent).
			Counit: $\varepsilon_b \delta_{b^*} = \varphi^{\mathrm{M}}_b$
			(morphological closing, idempotent).
			\item \emph{Max-times erosion--dilation}:
			$\varepsilon^\times_b \dashv \delta^\times_{b^*}$ in the
			positive lattice $(\Fun(E,(0,+\infty)),\leq)$.
			Unit: $\gamma^\times_b$ (max-times opening, idempotent).
			\item \emph{Convolution--deconvolution}:
			$\varepsilon^{\mathrm{Conv}}_k \dashv
			\delta^{*,\mathrm{Conv}}_k$ in the Fourier lattice
			$(L^n,\leq_F)$.
			Unit: $\gamma^{\mathrm{F}}_k$ (spectral opening, projection
			onto range of $K(\omega)$, idempotent).
			\item \emph{Erosion-decimation--dilation-interpolation}:
			$\varepsilon^{\downarrow R}_b \dashv \delta^{*\uparrow R}_b$
			between $(L^{RM\times RN},\leq_F)$ and $(L^{M\times N},\leq_F)$
			(Goutsias--Heijmans, \Cref{prop:adjoint_pyramid}).
			Unit: $\gamma^{\downarrow R\uparrow}_b$ (pyramid opening).
			\item \emph{Self-dual erosion--dilation}:
			$\ErosMed{W} \dashv \DilMed{W}$ in the median lattice
			$(\Fun(E,\R),\medOrd)$.
			Unit: $\OpenMed{W}$ (self-dual opening, idempotent).
		\end{enumerate}
		In each case, the unit is the morphological opening
		$\gamma = \delta \circ \varepsilon$, the opening is idempotent
		by the categorical identity
		$\gamma \circ \gamma = \delta \varepsilon \delta \varepsilon
		= \delta(\varepsilon\delta)\varepsilon
		= \delta\,\mathrm{id}\,\varepsilon = \delta\varepsilon = \gamma$,
		where the third step uses the counit identity
		$\varepsilon\delta\varepsilon = \varepsilon$ (proof via the
		adjunction triangle identities).
	\end{proposition}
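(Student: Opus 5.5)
The plan is to treat the five items uniformly. For each item I first establish the Galois condition $\varepsilon(f)\leq_2 g \iff f\leq_1\delta(g)$ of \Cref{def:cat_adjunction}, which is \Cref{def:adjunction}. I then derive idempotency of the composites once, abstractly, from that condition alone.

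\emph{Step 1: the adjunction in each case.}
\begin{itemize}
\item For (i), invoke the standard max-plus adjunction recalled in \Cref{subsec:maxplus} and used in \Cref{thm:two_openings}(i).
\item For (ii), invoke \Cref{prop:maxtimes_adjunction}. As a cross-check, transport (i) through the order-isomorphism $\log/\exp$ of \Cref{prop:logexp_isom}; conjugating by an order-isomorphism preserves Galois connections.
\item For (iii), cite the Fourier-lattice adjunction of \Cref{thm:two_openings}(ii) from \cite{keshet2000}.
\item For (iv), use \Cref{prop:adjoint_pyramid}. Composing the adjunction $(\eta,\xi)$ with the decimation/interpolation pair $(\sigma^\downarrow_R,\sigma^{\uparrow,c}_R)$ gives an adjunction, because composites of Galois connections are Galois connections: $\varepsilon_2\varepsilon_1(f)\leq g\iff \varepsilon_1 f\leq\delta_2 g\iff f\leq\delta_1\delta_2 g$.
\item For (v), take the adjunction $(\ErosMed{W},\DilMed{W})$ asserted in the proof of \Cref{prop:self_dual_opening_props}(i). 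I would verify it pointwise by splitting windows into the sign-consistent and mixed-sign cases. In the mixed-sign case $\ErosMed{W}(f)=0=\bot$, and the equivalence must be checked using the self-duality relation~\eqref{eq:median_duality}.
\end{itemize}

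\emph{Step 2: idempotency from the triangle identity.} In the convention of \Cref{def:adjunction}, $\varepsilon$ is the lower adjoint. Setting $g=\varepsilon f$ gives $f\leq\delta\varepsilon f$, and setting $f=\delta g$ gives $\varepsilon\delta g\leq g$.

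Next, $\varepsilon$ is increasing. Indeed, if $f\leq f'$ then $f\leq f'\leq\delta\varepsilon f'$, so $\varepsilon f\leq\varepsilon f'$; the same argument shows $\delta$ is increasing. Combining these facts yields the triangle identity $\varepsilon\delta\varepsilon=\varepsilon$:
\begin{itemize}
\item $\varepsilon\delta(\varepsilon f)\leq\varepsilon f$ is the counit inequality applied to $g=\varepsilon f$;
\item $\varepsilon f\leq\varepsilon(\delta\varepsilon f)$ follows by applying the increasing map $\varepsilon$ to $f\leq\delta\varepsilon f$.
\end{itemize}
Then $\gamma\circ\gamma=\delta(\varepsilon\delta\varepsilon)=\delta\varepsilon=\gamma$. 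Dually, $\delta\varepsilon\delta=\delta$ gives $\varphi\circ\varphi=\varphi$.

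I would also correct the displayed chain in the statement. The step labelled $\delta\,\mathrm{id}\,\varepsilon$ should read $(\delta\varepsilon\delta)\varepsilon=\delta\varepsilon$ or $\delta(\varepsilon\delta\varepsilon)=\delta\varepsilon$, since $\varepsilon\delta$ is not the identity in general. I would further point out that, under this convention, which of $\delta\varepsilon$ and $\varepsilon\delta$ is extensive is fixed by the orientation of the Galois inequality. Idempotency, however, holds regardless of orientation, and idempotency is all that the catalogue claims.

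\emph{Main obstacle.} I expect the difficulty to lie in items (iv) and (v) rather than in the abstract argument. Both $(L^n,\leq_F)$ and the median structure are only complete inf-semilattices, not complete lattices, so they are not literally objects of $\mathbf{CLat}$ as defined in \Cref{def:cat_clat}. Moreover, $\DilMed{W}$ is only partially defined.

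My approach would be to note that Step 2 uses nothing beyond a partial order and the Galois condition on the relevant domains. It therefore survives if we restrict to the sign-consistent image of $\ErosMed{W}$, which is legitimate by \Cref{lem:selfopen_welldefined}. For (iii) and (iv) I would add the caveat that exact idempotency requires $\epsilon=0$, as in \Cref{thm:two_openings}(ii).
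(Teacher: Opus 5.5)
Your proposal is correct and is essentially the paper's own proof: the paper also just cites the earlier results for (i)--(v) and derives idempotency from the triangle identity $\varepsilon\delta\varepsilon=\varepsilon$, writing $\gamma^2=\delta(\varepsilon\delta\varepsilon)=\delta\varepsilon$, which is exactly your correction of the displayed ``$\delta\,\mathrm{id}\,\varepsilon$'' step. You need not fall back on ``idempotency is all that is claimed'': an erosion commutes with infima but not with suprema, so it can only be the upper adjoint, and the pairs in (i)--(v) therefore satisfy $\delta(g)\leq f\iff g\leq\varepsilon(f)$ (Definition~\ref{def:adjunction} states the inequality the other way round from what Proposition~\ref{prop:adjunction_props} requires), so your Step~2 in that orientation gives $\delta\varepsilon\leq\mathrm{id}\leq\varepsilon\delta$ and confirms the opening/closing labels (though $\delta\varepsilon$ is then categorically the counit); likewise, because your Step~2 shows that every Galois connection has idempotent composites, the $\epsilon>0$ Wiener pair in (iii) is not an adjunction at all, so your $\epsilon=0$ caveat restricts the adjunction claim itself and not merely its idempotency.
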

	
	\begin{proof}
		Part~(i): verified in \Cref{sec:lattice}.
		Parts~(ii)--(v): verified in \Cref{sec:lattice,sec:pyramids}
		and Propositions~\ref{prop:maxtimes_adjunction},~\ref{prop:adjoint_pyramid}.
		The idempotency of each unit follows from the categorical
		adjunction triangle identities:
		for any adjunction $\varepsilon \dashv \delta$, one has
		$\varepsilon \circ \eta = \varepsilon$ (where
		$\eta = \delta\varepsilon$ is the unit), which gives
		$\gamma^2 = \delta\varepsilon\delta\varepsilon
		= \delta(\varepsilon\delta\varepsilon) = \delta\varepsilon
		= \gamma$ using the triangle identity
		$\varepsilon\delta\varepsilon = \varepsilon$.
	\end{proof}
	
	\subsection{The Para construction and learnable layers}
	\label{subsec:cat_para}
	
	The categorical framework of Cruttwell et
	al.~\cite{cruttwell2022categorical} uses the
	\emph{Para construction} to model parameterised maps (neural
	network layers with learnable parameters).
	
	\begin{definition}[Para($\mathbf{CLat}$)]
		\label{def:para}
		Given $\mathbf{CLat}$, the \emph{Para construction} produces
		a category $\mathrm{Para}(\mathbf{CLat})$ whose:
		\begin{itemize}[leftmargin=*]
			\item \emph{Objects}: complete lattices (same as $\mathbf{CLat}$).
			\item \emph{Morphisms} from $\mathcal{L}_1$ to $\mathcal{L}_2$:
			pairs $(P, \Psi)$ where $P$ is a parameter space (also a
			complete lattice) and
			$\Psi : \mathcal{L}_1 \times P \to \mathcal{L}_2$ is an
			increasing morphism.
			\item \emph{Composition}: $(Q, \Phi) \circ (P, \Psi) =
			(P \times Q,\; (f,p,q) \mapsto \Phi(\Psi(f,p),q))$.
		\end{itemize}
		A \emph{morphological layer} is a morphism in
		$\mathrm{Para}(\mathbf{CLat})$: a parameterised increasing map.
	\end{definition}
	
	The morphological layers of this paper are instances:
	\begin{itemize}[leftmargin=*]
		\item \textbf{MMBB layer}~\eqref{eq:mmbb_layer}:
		parameter space $P = \Fun(\Z^n, \Rbar)^L$
		(structuring functions $\{g_{i,j}\}$);
		map $\Psi(f; \{g_{i,j}\}, \{w_j\})
		= \sum_j w_j \max_i (\varepsilon_{g_{i,j}} f)$.
		\item \textbf{APMO}~\eqref{eq:apmo}:
		parameter space $P = \Fun(\Z^n,\Rbar)^J \times \R^J$
		(structuring functions $\{b_j\}$ and biases $\{\alpha_j\}$);
		map $\APMO(f; \{b_j\}, \{\alpha_j\})
		= \min_j \{\MaxPool{R,b_j}(f) + \alpha_j\}$.
		\item \textbf{Standard CNN layer}:
		parameter space $P = \Fun(\Z^n,\R)^K \times \R^K$
		(kernels $\{k_i\}$ and weights $\{w_i\}$);
		map $\Psi(f;\{k_i\},\{w_i\}) = \MaxPool{R}(\mathrm{ReLU}
		(\sum_i w_i (f*k_i)))$.
		This morphism is \emph{not} part of an adjunction in
		$\mathrm{Para}(\mathbf{CLat})$ for the reasons identified
		in \Cref{thm:cnn_not_opening}.
	\end{itemize}
	
	\subsection{Backpropagation and the reverse derivative}
	\label{subsec:cat_backprop}
	
	Fong et al.~\cite{fong2019backprop} model backpropagation as a
	functor $R : \mathbf{Learn} \to \mathbf{Learn}$ (the
	\emph{reverse derivative}).
	In the morphological setting, the analogue of the reverse
	derivative of an erosion is the adjoint dilation:
	
	\begin{remark}[Adjoint dilation as categorical reverse
		derivative]
		\label{prop:reverse_deriv}
		In the category $\mathbf{CLat}$, the adjoint dilation
		$\delta^* = R(\varepsilon)$ plays the role of the categorical
		reverse derivative of the erosion $\varepsilon$.
		More precisely:
		\begin{enumerate}[label=(\roman*), leftmargin=*]
			\item For the max-plus erosion $\varepsilon_b$, the reverse
			derivative is $\delta_{b^*} = R(\varepsilon_b)$, and the
			composition $\delta_{b^*} \circ \varepsilon_b = \gamma^{\mathrm{M}}_b$
			is the \emph{projection onto the fixed-point set} of
			$\varepsilon_b$ (the opening), analogous to the projection
			step in gradient descent.
			\item The MMBB representation $\Psi f = \sup_{g \in \Bas(\Psi)}
			\varepsilon_g f$ can be seen as the gradient ascent step of
			$\Psi$ in the lattice ordering: each erosion $\varepsilon_g$
			contributes a ``subgradient direction'' in the lattice, and
			the supremum selects the tightest lower bound.
			\item The Banon--Barrera sup-generating operator
			$\psi_{g^-,g^+}$ has a forward pass (the erosion
			$\varepsilon_{g^-}$ measuring excitation) and a backward pass
			(the anti-dilation $\alpha_{g^+}$ measuring inhibition),
			structurally analogous to the forward/backward passes of a
			neural network layer.
		\end{enumerate}
	\end{remark}
	
	\begin{remark}[Lattice-specificity: what categorical approaches
		lack]
		\label{rem:cat_limitation}
		The categorical frameworks of~\cite{cruttwell2022categorical,
			fong2019backprop} are lattice-agnostic: they model composition
		of morphisms without specifying which lattice each morphism lives
		in.
		The present paper adds \emph{lattice-specificity} as a necessary
		refinement:
		\begin{itemize}[leftmargin=*]
			\item Adjointness is a property of a pair
			$(\varepsilon, \delta)$ \emph{within a fixed lattice}.
			The cross-lattice structure of standard CNNs
			(\Cref{thm:cnn_not_opening}) shows that two morphisms
			$\varepsilon \in \hom(\mathcal{L}_1, \mathcal{L}_2)$ and
			$\delta \in \hom(\mathcal{L}_2, \mathcal{L}_1)$ can be
			individually well-behaved while failing to form an adjunction
			because they inhabit different lattices.
			\item The MMBB representation theorems provide the
			\emph{constructive content} that categorical approaches lack:
			given an object class (TI, increasing or not, USC operators), it
			produces an explicit basis decomposition with computable
			basis functions and quantifiable approximation errors.
			\item The fixed-point theory 
			identifies which morphisms in $\mathrm{Para}(\mathbf{CLat})$
			are idempotent (the adjunction units $\gamma = \delta\varepsilon$)
			and which are not (cross-lattice compositions).
			This is invisible in the abstract categorical framework but
			has direct architectural implications.
		\end{itemize}
		A full formalisation, i.e., identifying $\mathrm{Para}(\mathbf{CLat})$
		as a concrete sub-category of the gradient-based learning
		framework of~\cite{cruttwell2022categorical}, with the MMBB
		basis as the constructive counterpart of the abstract learning
		rule, remains an interesting direction for future work.
	\end{remark}

\end{document}